\documentclass[aos,preprint]{imsart}

\RequirePackage{amsthm,amsmath,amsfonts,amssymb}
\RequirePackage{xcolor}
\RequirePackage{xspace}
\RequirePackage[numbers,sort&compress]{natbib}
\RequirePackage[colorlinks,citecolor=blue,urlcolor=blue,linkcolor=black]{hyperref}
\RequirePackage[nameinlink,capitalize,noabbrev]{cleveref}
\AddToHook{env/theorem/begin}{\crefalias{section}{theorem}}
\AddToHook{env/proposition/begin}{\crefalias{section}{proposition}}
\AddToHook{env/lemma/begin}{\crefalias{section}{lemma}}
\AddToHook{env/corollary/begin}{\crefalias{section}{corollary}}
\AddToHook{env/conjecture/begin}{\crefalias{section}{conjecture}}
\AddToHook{env/assumption/begin}{\crefalias{section}{assumption}}
\AddToHook{env/definition/begin}{\crefalias{section}{definition}}
\AddToHook{env/example/begin}{\crefalias{section}{example}}
\AddToHook{env/remark/begin}{\crefalias{section}{remark}}

\hypersetup{
  pdftitle={Estimating Population-Risk Curves Along Nonconvex Gradient Flows from the Training Sample},
  pdfauthor={Mingzhi Song}
}

\startlocaldefs
\newcommand{\R}{\mathbb{R}}
\newcommand{\E}{\mathbb{E}}
\newcommand{\Pp}{\mathbb{P}}
\newcommand{\cH}{\mathcal{H}}

\newcommand{\eps}{\varepsilon}
\newcommand{\dd}{\,\mathrm{d}}
\newcommand{\op}{\mathrm{op}}
\newcommand{\TV}{\mathrm{TV}}
\newcommand{\LOO}{\mathrm{LOO}}

\newcommand{\Risk}{\mathsf{R}}
\newcommand{\Score}{\mathsf{C}}
\newcommand{\FlowALO}{Flow-ALO\xspace}

\newtheorem{theorem}{Theorem}[section]
\newtheorem{proposition}[theorem]{Proposition}
\newtheorem{lemma}[theorem]{Lemma}
\newtheorem{corollary}[theorem]{Corollary}

\newtheorem{assumption}[theorem]{Assumption}
\theoremstyle{definition}

\newtheorem{example}[theorem]{Example}
\makeatletter
\@ifundefined{th@remark}
  {\theoremstyle{definition}}
  {\theoremstyle{remark}}
\makeatother

\endlocaldefs

\begin{document}

\begin{frontmatter}
\pdfsubject{}
\title{Estimating Population-Risk Curves Along Nonconvex Gradient Flows
from the Training Sample}
\runtitle{Population-Risk Curves from the Training Sample}

\begin{aug}
\author[A]{\fnms{Mingzhi}~\snm{Song}\ead[label=e1]{songmingzhi123@gmail.com}}
\address[A]{Department of Mathematics, The University of Hong Kong\printead[presep={,\ }]{e1}}
\end{aug}

\begin{abstract}
We estimate the conditional population-risk curve of a realized smooth
nonconvex gradient flow from the training sample.  Flow approximate
leave-one-out (Flow-ALO) propagates a deletion response and evaluates
omitted observations at approximate deleted paths.  The risk-curve error
decomposes into response approximation, exact-LOO fluctuation, and
deletion-to-full risk transfer.
On each fixed finite horizon, bounded centered training-loss gradients, a
one-sided Hessian lower bound, locally Lipschitz Hessians, and a strict
tube-closure condition yield an explicit $(n-1)^{-2}$ bound for the
deletion-response error.  Bounded evaluation-loss gradients transfer the
deletion-response bound to the score without requiring the Hessian to be
invertible.
Direct first-order jackknife cancellation and exact-LOO concentration control
deletion-to-full risk transfer and fluctuation, respectively, completing
recovery of the conditional population-risk curve.
For bounded smooth two-layer mean-field networks training both layers, the
score-error bound is uniform in width.
 \end{abstract}

\begin{keyword}[class=MSC]
\kwdgroup[type=primary]{\kwd{62F40}\kwd{62F07}}
\kwdgroup[type=secondary]{\kwd{62J02}\kwd{62C05}}
\end{keyword}

\begin{keyword}
\kwd{approximate leave-one-out}
\kwd{population-risk estimation}
\kwd{training trajectory}
\kwd{nonconvex gradient flow}
\kwd{algorithmic stability}
\end{keyword}
\end{frontmatter}

\section{Introduction}\label{sec:introduction}

Can the fitting sample estimate a fitted model's population-risk curve without
external validation?  Training loss reuses the fitted observations; a
validation split preserves independence but leaves fewer observations for
fitting.  Exact leave-one-out (LOO) cross-validation avoids both problems but
requires $n$ additional training trajectories for the full curve.

Static approximate leave-one-out (ALO) replaces endpoint refits by local
primal--dual approximations or a Newton correction
\cite{wang2018approximate,wilson2020approximate}, while infinitesimal-jackknife
methods linearize fitted procedures with respect to data weights
\cite{giordano2019swiss,giordano2019higher}.  Building on these endpoint
approximations and iterative response methods \cite{luo2023iterative}, we ask
when a continuous-time deletion response estimates the
\emph{sample-conditional population-risk curve of the realized smooth
nonconvex fitted path}.  Estimating the sample-conditional curve requires a
distinct three-part argument:
response approximation, exact-LOO concentration, and comparison of
size-$(n-1)$ deletion learners with the realized size-$n$ learner.

\subsection{Contributions}

The central contribution is the high-probability finite-sample comparison in
\cref{thm:oracle} between the Flow-ALO score and the
sample-conditional risk of the realized full-sample gradient flow, uniformly
over a predetermined compact set of training times and fixed-dimensional
hyperparameters within one model, without an external validation sample.
Absolute exact-LOO concentration recovers risk levels; chained exact-LOO
concentration identifies the curve only up to a common sample-dependent shift.
Under either
route, \cref{cor:regularity-event-oracle} bounds the excess risk of a measurable
approximate Flow-ALO minimizer.

The Flow-ALO-to-exact-LOO comparison is a pathwise approximation for nonconvex
training dynamics.  On a common finite-horizon tube, \cref{ass:tube} allows
indefinite deleted-objective Hessians subject to a uniform lower bound and
local Hessian Lipschitzness, with bounded centered forcing and a strict
tube-closure margin.
Under \cref{ass:tube} and the evaluation-gradient bound in
\eqref{eq:evaluation-gradient-bound}, \cref{thm:finite-horizon} gives a uniform
order-$(n-1)^{-2}$ error between the Flow-ALO and exact-LOO scores.

Exact-LOO concentration and deletion-to-full transfer solve the statistical
problem left by a deletion-response approximation.
Under a deterministic finite loss range, uniform replace-one stability, and
uniform continuity for size-$(n-1)$ learners on a compact index set,
\cref{thm:loo-concentration} gives a uniform high-probability bound between the
exact-LOO score and the average population risk of the deletion fits.
\Cref{thm:chained-loo-concentration} instead controls anchored increments by
mixed stability and metric entropy, replacing absolute level control by one
sample-dependent shift common to all training-time--hyperparameter pairs.
For deletion-to-full transfer, \cref{thm:jackknife-cancellation} combines
cancellation of the average first deletion-chord derivative with a uniform
second-derivative bound to give an order-$(n-1)^{-2}$ cross-size error.

The output-space route expresses the response approximation through a
dynamic-kernel difference; for the two-layer verification, this route yields
constants uniform in width.
\Cref{thm:kernel-mismatch,thm:output-space-deletion}
express prediction and score errors through the difference between the exact
deletion and response dynamic kernels.  For a bounded smooth scalar-output
two-layer mean-field network with both layers trained and nonnegative weight
decay, \cref{thm:two-layer-mean-field} compares the exact and response
prediction paths on each common finite horizon and at each common finite
width.  It bounds the dynamic-kernel difference by $O(n^{-1})$ with constants
uniform in width.
The $O(n^{-1})$ kernel bound gives an $O(n^{-2})$ error between the Flow-ALO
and exact-LOO scores.
Combining that error with single-anchor bounded-difference concentration,
chained increment concentration, and deletion-to-full transfer gives the
finite-width-family risk-curve bound in \cref{cor:mean-field-width-oracle}.

\subsection{Problem formulation}

Let $(\Omega,\mathcal F,\Pp)$ be a probability space,
$(\mathsf Z,\mathcal Z)$ a measurable observation space, and $P_0$ a
probability measure on $(\mathsf Z,\mathcal Z)$.  Let
$(Z_i)_{i\geq1}$ be independent and identically distributed measurable maps
from $(\Omega,\mathcal F)$ to $(\mathsf Z,\mathcal Z)$ with common law $P_0$,
and, for every $n\geq2$, put $[n]:=\{1,\ldots,n\}$ and let
$S_n:\Omega\to\mathsf Z^n$, $S_n=(Z_1,\ldots,Z_n)$, be the
$\mathcal F/\mathcal Z^{\otimes n}$-measurable random sample.  At fixed $n$,
write $S:=S_n$.  A \emph{predetermined} object is deterministic, independent
of the training sample and auxiliary randomness, and may depend on $n$ unless
stated otherwise.
Let $\Theta$ be a finite-dimensional real Hilbert space, let
$d_\lambda\in\{1,2,\ldots\}$ be deterministic and fixed independently of
$n$, let $\Lambda\subset\R^{d_\lambda}$ be a nonempty compact set of
within-model hyperparameter vectors, with scalar weight decay as the
$d_\lambda=1$ specialization, and fix a deterministic
$\theta_0\in\Theta$.  For every
$(\lambda,z)\in\Lambda\times\mathsf Z$, let
$\ell^{\rm tr}_{\lambda,z},\ell^{\rm ev}_{\lambda,z}:\Theta\to\R$ be the
training and evaluation losses, respectively; every loss integral in the manuscript is
assumed measurable and finite.  Fix a deterministic $T\in[0,\infty)$ and a
metric $d_{\cH}:([0,\infty)\times\Lambda)^2\to[0,\infty)$, and let
$\cH\subset[0,T]\times\Lambda$ be a predetermined nonempty
$d_{\cH}$-compact set, equipped with the $d_{\cH}$-Borel $\sigma$-field
$\mathcal B(\cH)$.  Write $\delta_z$ for
the Dirac probability measure at
$z\in\mathsf Z$.  The sample-dependent empirical law on $(\mathsf Z,\mathcal Z)$
is $P_n=n^{-1}\sum_{j=1}^n\delta_{Z_j}$.  For $\lambda\in\Lambda$, define
the full-sample empirical training objective $F_{P_n,\lambda}:\Theta\to\R$;
whenever the gradient flow of $F_{P_n,\lambda}$ from $\theta_0$ exists on
$[0,T]$, denote the gradient flow by $\theta^\lambda:[0,T]\to\Theta$:
\begin{equation}\label{eq:intro-full-flow}
 F_{P_n,\lambda}(\theta)
 =\frac1n\sum_{j=1}^n
    \ell^{\rm tr}_{\lambda,Z_j}(\theta),
 \qquad
 \dot\theta_t^\lambda=-\nabla F_{P_n,\lambda}(\theta_t^\lambda),
 \quad \theta_0^\lambda=\theta_0.
\end{equation}
For $i\in[n]$, let $S_{-i}:\Omega\to\mathsf Z^{n-1}$ be the
$\mathcal F/\mathcal Z^{\otimes(n-1)}$-measurable deleted sample
$(Z_j)_{j\in[n]\setminus\{i\}}$, and let
$P_{-i}=(n-1)^{-1}\sum_{j\in[n]\setminus\{i\}}\delta_{Z_j}$ be the
sample-dependent empirical law of $S_{-i}$ on $(\mathsf Z,\mathcal Z)$.
For $\lambda\in\Lambda$, define the
deleted-sample empirical training objective $F_{P_{-i},\lambda}:\Theta\to\R$ by
$F_{P_{-i},\lambda}(\theta)
=(n-1)^{-1}\sum_{j\in[n]\setminus\{i\}}
\ell^{\rm tr}_{\lambda,Z_j}(\theta)$ for
$\theta\in\Theta$.  Whenever the gradient flow of $F_{P_{-i},\lambda}$ from
$\theta_0$ exists on $[0,T]$, denote the gradient flow by
$\theta_{-i}^\lambda:[0,T]\to\Theta$.

Whenever all full and deleted paths required on $\cH$ exist on $[0,T]$,
define the sample-dependent map
$(\Score_n^{\LOO},\Risk_S^-,\Risk_S):\cH\to\R^3$ by
\begin{align}
 \Score_n^{\LOO}(t,\lambda)
 &=\frac1n\sum_{i=1}^n
   \ell^{\rm ev}_{\lambda,Z_i}(\theta_{-i,t}^\lambda),\label{eq:loo-score}\\
 \Risk_S^-(t,\lambda)
 &=\frac1n\sum_{i=1}^n\int
   \ell^{\rm ev}_{\lambda,z}(\theta_{-i,t}^\lambda)\,\dd P_0(z),\label{eq:minus-risk}\\
 \Risk_S(t,\lambda)
 &=\int\ell^{\rm ev}_{\lambda,z}(\theta_t^\lambda)\,\dd P_0(z).
 \label{eq:conditional-risk}
\end{align}
$\Score_n^{\LOO}$ evaluates observation $Z_i$ under $\theta_{-i}^\lambda$;
$\Risk_S^-$ and $\Risk_S$ instead integrate the deletion and full fits
against $P_0$.  The subscript $S$ records the realized fitting sample, held
fixed in \eqref{eq:minus-risk} and \eqref{eq:conditional-risk}; throughout,
$\Risk_S$ is called the
\emph{sample-conditional risk of the realized fitted path}, or
sample-conditional risk for short.

Flow-ALO replaces each deleted path $\theta_{-i}^\lambda$ by
$\theta^\lambda+d_i^\lambda$, where $d_i^\lambda$ is a linear response
driven along the full path, and averages the omitted losses at those response
paths.  Sections~\ref{sec:setup}--\ref{sec:flow-alo} give the forcing,
response ODE, and score definitions once.

\begin{figure}[t]
\centering
\setlength{\tabcolsep}{3pt}
\renewcommand{\arraystretch}{1.05}
\small
\begin{tabular}{c@{\hspace{.05\linewidth}}c}
\parbox{.42\linewidth}{\centering
  \textit{Parameter-space route}\\
  Exact deletion and tube closure\\[-1pt]
  {\scriptsize
  \cref{prop:exact-deletion,ass:tube,lem:full-path-tube}
  $\Longrightarrow$ \cref{thm:finite-horizon}}}
&
\parbox{.42\linewidth}{\centering
  \textit{Output-space route}\\
  Dynamic-kernel identity and deletion bound\\[-1pt]
  {\scriptsize
  \cref{thm:kernel-mismatch,thm:output-space-deletion}\\[-1pt]
  Two-layer verification: \cref{thm:two-layer-mean-field}}}\\[2pt]
\multicolumn{2}{c}{$\searrow\hspace{5em}\swarrow$}\\[-1pt]
\multicolumn{2}{c}{\textit{Flow-ALO score versus exact-LOO score}}\\[1pt]
\multicolumn{2}{c}{$+$}\\[1pt]
\multicolumn{2}{c}{\parbox{.66\linewidth}{\centering
  \textit{Exact LOO versus the deletion-risk curve}\\[-1pt]
  {\scriptsize Absolute: \cref{thm:loo-concentration};
  chained: \cref{thm:chained-loo-concentration}}}}\\[1pt]
\multicolumn{2}{c}{$+$}\\[1pt]
\multicolumn{2}{c}{\parbox{.66\linewidth}{\centering
  \textit{Deletion-risk curve versus the fitted-path risk curve}\\[-1pt]
  {\scriptsize First-order cancellation:
  \cref{thm:jackknife-cancellation}}}}\\[2pt]
\multicolumn{2}{c}{$\swarrow\hspace{5em}\searrow$}\\[-1pt]
\parbox{.42\linewidth}{\centering
  \textit{General recovery and selection}\\[-1pt]
  {\scriptsize Parameter-space route:
  \cref{thm:oracle,cor:regularity-event-oracle}}}
&
\parbox{.42\linewidth}{\centering
  \textit{Two-layer finite-width risk curves}\\[-1pt]
  {\scriptsize Output-space route:
  \cref{cor:mean-field-width-oracle}}}
\end{tabular}
\caption{Theorem dependencies for the central continuous-time results.
\Cref{thm:oracle,cor:regularity-event-oracle} use the parameter-space response
and either exact-LOO concentration route.  \Cref{cor:mean-field-width-oracle}
uses the two-layer output-space response and chained exact-LOO concentration.
\Cref{thm:jackknife-cancellation} supplies deletion-to-full transfer in
\mbox{\cref{thm:oracle,cor:regularity-event-oracle,cor:mean-field-width-oracle}}.}
\label{fig:theorem-dependency}
\end{figure}

\subsection{Related work}\label{sec:related}

Static endpoint ALO uses local corrections at a fitted regularized estimator
for risk assessment and tuning
\cite{wang2018approximate,rad2020scalable}.  Wilson, Kasy and Mackey
\cite[Theorem~2]{wilson2020approximate} give a deterministic $O(n^{-2})$
approximate-versus-exact-CV assessment bound, uniform over $\lambda$ at
regularized-ERM endpoints under the
curvature, derivative-moment, and Hessian-smoothness conditions stated by
Wilson--Kasy--Mackey.  The Wilson--Kasy--Mackey comparisons concern fitted
endpoints and exact CV rather than population risk along a training path, the
target studied here.

Finite data-weight reweighting is treated by first- and higher-order
infinitesimal-jackknife expansions
\cite{giordano2019swiss,giordano2019higher}.  Training-path influence methods
propagate data sensitivity through paths or iterate sequences
\cite{hara2019data,schioppa2023perspectives,nickl2023memory}, and Wang et al.
\cite[Sections~2--3]{wang2025temporal} formulate iterate-sequence-specific LOO
influence and approximate terminal influence for ordered SGD.  Litman and Guo
\cite[Theorem~G.10 and Corollary~G.13]{litman2026theory} derive an
infinitesimal reweighting derivative and a finite-perturbation Taylor
remainder.  The reweighting and path-influence results provide sensitivity
approximations; \cref{thm:finite-horizon} instead controls the normalized
delete-one nonlinear path and held-out score to second order, uniformly on a
predetermined compact time--hyperparameter set.

Iterative approximate CV propagates delete-one surrogates along optimization
iterates.  For full-batch gradient descent at a fixed objective, Luo, Ren and
Barber \cite[Assumptions~4.1--4.3 and~4.5 and Theorems~4.4
and~4.6]{luo2023iterative} prove iteration-uniform $O(n^{-2})$
IACV-to-exact-LOO parameter and CV-score errors.  The full-batch-GD result of
Luo--Ren--Barber does not assume global convexity, but requires every
deleted-objective Hessian to be uniformly positive and bounded along all
full-data iterates, together with global Hessian Lipschitzness, pathwise
gradient bounds, common initialization, and step- and sample-size restrictions
\cite[Assumptions~4.1--4.3 and~4.5 and Theorem~4.4]{luo2023iterative}.
Luo--Ren--Barber
\cite[Equation~(16), Theorem~4.6, and the discussion after
Theorem~4.6]{luo2023iterative} control the exact iterative LOO score and discuss
using the exact iterative LOO score for early stopping, but give neither
population-risk concentration nor an oracle for the selected iterate.

Direct same-training-data risk work studies optimization iterates.  In
proportional least squares, Patil, Wu and Tibshirani
\cite[Equation~(12) and Theorems~2--3]{patil2024failures} prove uniform
convergence of exact LOOCV along a growing gradient-descent path to
sample-conditional prediction risk and asymptotically risk-optimal early
stopping.  Bellec and Tan
\cite[Theorem~2.1 and Corollary~2.3]{bellec2024uncertainty} construct
root-$n$-accurate conditional-risk estimators and a stopping oracle for a fixed
number of broad updates in proportional Gaussian linear models.  At fixed
iterations, Tan and Bellec
\cite[Theorems~3.6--3.7]{tan2024estimating} recover robust-regression
prediction error up to a tuning-independent noise term.  Han and Xu
\cite[Definition~2.4 and Theorem~3.3]{han2026gradient} give a fully data-driven
estimator of conditional generalization error defined using a fresh covariate
and noise resampled from the realized empirical noises.  Han and Imaizumi
\cite[Theorems~4.2--4.3]{han2025precise} give state-evolution-based same-data
generalization estimates for proportional finite-width networks.

Adjacent continuous-time work addresses different inferential targets.  For
squared-loss gradient flow governed by a fixed positive-semidefinite prediction
operator, Yao et al.
\cite[Theorem~2 and Corollary~1]{yao2026gradientflow} prove asymptotic
risk-ratio optimality of a training-sample REML stopping rule.  Li and
Giessing \cite[Theorems~2--3]{li2026statistical} establish an infinite-horizon
functional central limit theorem and an algorithm-aware covariance estimator
for a parameter path relative to a population flow.  For kernel regression,
Cheng, Chen and Lin \cite[Theorems~4 and~11]{cheng2026optimal} give sup-norm
rates for continuous kernel gradient flow and simultaneous bands over
covariates at scaled times.

Conditioning on the fitted sample changes the target of cross-validation.
Litman and Guo \cite[Lemma~6.1 and Theorem~F.6]{litman2026theory} give
expectation-level cross-validation and generalization-gap identities.  In
Gaussian ordinary least squares, Bates, Hastie and Tibshirani
\cite[Theorems~1--2 and Corollary~2]{bates2024crossvalidation} show that LOOCV
can be asymptotically uncorrelated with fluctuations of realized conditional
error.

Algorithmic stability relates one-sample perturbations to generalization
\cite{bousquet2002stability,hardt2016train}.  For continuous-time
sum-separable optimizers with bounded updates and Lipschitz loss that contract
in a uniformly conditioned Riemannian metric, Kozachkov, Wensing and Slotine
\cite[Section~2.1, Theorem~3, and Remark~1]{kozachkov2023generalization}
obtain an exponentially decaying initialization transient and an $O(n^{-1})$
replace-one stability floor; common initialization removes the transient and
gives the analogous horizon-independent LOO stability.

\pagebreak
For iid Euclidean
data, Avelin and Viitasaari
\cite[Definitions~2.3 and~2.6, Assumption~3.1, Theorem~3.6, and
Remark~3.4]{avelin2023concentration} obtain two-sided pointwise concentration
of the LOO score about the realized full-sample learner's conditional risk
under the log-Sobolev, evaluated-loss dataset-gradient, error-stability, and
expected-loss-growth conditions of Avelin--Viitasaari.
Their model-selection discussion notes that the same bounds can be applied to
the loss difference of two estimators for model selection
\cite[Section~7]{avelin2023concentration}.  Their full-sample centering,
however, assumes an expected deletion-to-full error-stability bound, and the
cited analysis controls exact LOO rather than approximating it or proving a
realized-sample deletion-to-full comparison
\cite[Definition~2.6, Assumption~3.1, Theorem~3.6, and
Remark~3.4]{avelin2023concentration}.

Neural-network verification draws on several dynamic regimes.  Jacot, Gabriel
and Hongler \cite[Theorem~2]{jacot2018neural} prove finite-horizon uniform NTK
convergence to a constant infinite-width kernel.  Chen et al.
\cite[Definition~4, Theorems~2--3, and Corollary~1]{chen2023losspath}
introduce the loss path kernel, represent terminal loss as a general kernel
machine, and derive a Rademacher generalization-gap bound.  Chen et al.
\cite[Assumption~4.2, Lemmas~4.3--4.5, and
Theorem~5.2]{chen2025losspath} give a stability-localized, one-sided
high-probability terminal-time bound whose leading term is computed from the
training-set loss path kernel. Sun and Valaee
\cite[Propositions~3.1--3.3 and Equations~(5)--(13)]{sun2026kernel} derive a
dual NTK-coordinate endpoint-removal influence approximation for a
ridge-regularized linearized model.  Mean-field analyses give fixed-horizon
dimension-free approximation \cite{mei2019meanfield}, evolving two-time
kernels \cite{bordelon2022selfconsistent}, finite-width kernel and prediction
fluctuations \cite{bordelon2023finitewidth}, and finite-to-mean-field control
over structured polynomial horizons
\cite[Theorems~7 and~9]{glasgow2025meanfield}.  Shallow-network stability is
studied by Richards and Kuzborskij and Lei et al.
\cite{richards2021stability,lei2022stability}.  The model verification in
\cref{thm:two-layer-mean-field} instead trains both layers and controls the
same-width delete-one dynamic-kernel mismatch with constants uniform in width.
 \section{Setup and targets}\label{sec:setup}

Section~\ref{sec:introduction} defines the probability space, observation
space, parameter space, hyperparameter set, and loss maps.  An
unsubscripted $\|\cdot\|$ denotes
the norm of the ambient Hilbert or Euclidean space.  For a linear map $A$,
$\|A\|_{\op}=\sup_{\|v\|\leq1}\|Av\|$; for a multilinear map, the supremum
is over one vector of norm at most one in each argument.
$I_E:E\to E$ denotes the identity on a finite-dimensional real Hilbert space
$E$; when a typed operator expression fixes $E$, we abbreviate $I_E$ by $I$.
For each finite signed measure
$P$ on $(\mathsf Z,\mathcal Z)$ with $P(\mathsf Z)=1$ and each
$\lambda\in\Lambda$, define the real-valued objective $F_{P,\lambda}$ on the
set of $\theta\in\Theta$ for which
$z\mapsto\ell^{\rm tr}_{\lambda,z}(\theta)$ is $\mathcal Z$-measurable and
$|P|$-integrable, where $|P|$ is the total-variation measure of $P$, by
\begin{equation}\label{eq:objective-vector-field}
 F_{P,\lambda}(\theta)
 =\int\ell^{\rm tr}_{\lambda,z}(\theta)\,\dd P(z).
\end{equation}
More generally, for any $|P|$-integrable map $g$ from $\mathsf Z$ to a
finite-dimensional real vector space, write
$Pg:=\int g(z)\,\dd P(z)$.  For an integrable family $z\mapsto g_z$, the
notation $Pg_Z$ means $\int g_z\,\dd P(z)$; the capital $Z$ marks the
integration coordinate.
Every population-risk evaluation-loss integrand is assumed
$\mathcal Z$-measurable and $P_0$-integrable.
Equip $\Theta$ and $\Lambda$ with the respective Borel $\sigma$-fields.  The phrase
\emph{one common evaluation rule} means that there are a measurable
output space $(\mathsf V,\mathcal B_{\mathsf V})$, a jointly measurable map
$\mathfrak f:\Lambda\times\Theta\times\mathsf Z\to\mathsf V$, and one fixed measurable
map $L_{\rm ev}:\mathsf V\times\mathsf Z\to\R$ such that
$\ell^{\rm ev}_{\lambda,z}(\theta)=L_{\rm ev}(\mathfrak f(\lambda,\theta,z),z)$.
In the supervised special case, let
$(\mathcal X,\mathcal B_{\mathcal X})$ and
$(\mathcal Y,\mathcal B_{\mathcal Y})$ be measurable spaces, take
$(\mathsf Z,\mathcal Z)=(\mathcal X\times\mathcal Y,
\mathcal B_{\mathcal X}\otimes\mathcal B_{\mathcal Y})$, let
$f_{\lambda,\theta}:\mathcal X\to\mathsf V$ for
$(\lambda,\theta)\in\Lambda\times\Theta$, and let
$\varphi:\mathsf V\times\mathcal Y\to\R$ be measurable.  Then one common
evaluation rule means, for every
$(\lambda,\theta,x,y,v)\in
\Lambda\times\Theta\times\mathcal X\times\mathcal Y\times\mathsf V$,
\[
 \mathfrak f(\lambda,\theta,(x,y))=f_{\lambda,\theta}(x),
 \qquad L_{\rm ev}(v,(x,y))=\varphi(v,y).
\]
When $d_\lambda=1$ and $\Lambda\subset[0,\infty)$, the scalar
weight-decay setup is the special case
$\ell^{\rm tr}_{\lambda,z}=\ell_z+\lambda R$ and
$\ell^{\rm ev}_{\lambda,z}=\ell_z$, where
$R:\Theta\to\R$ and $\ell_z:\Theta\to\R$ for $z\in\mathsf Z$ do not depend
on $\lambda$.
For each mass-one finite signed measure $P$ admitted by
\eqref{eq:objective-vector-field} and $\lambda\in\Lambda$, let
$\theta^{P,\lambda}:[0,T]\to\Theta$ solve
\begin{equation}\label{eq:full-flow}
 \dot\theta_t^{P,\lambda}=-\nabla F_{P,\lambda}(\theta_t^{P,\lambda}),
 \qquad \theta_0^{P,\lambda}=\theta_0\in\Theta.
\end{equation}
All deletions and values of $\lambda$ share the deterministic $\theta_0$ from
\eqref{eq:intro-full-flow}.
\Cref{sec:introduction} defines the predetermined horizon $T$, metric
$d_{\cH}$, and candidate set $\cH$.  For $\lambda\in\Lambda$, write the whole
path as $\theta^\lambda:=\theta^{P_n,\lambda}$.

The normalized measure $P_{-i}$ and the directions
$P_n-\delta_{Z_i}$ satisfy
\begin{equation}\label{eq:measure-deletion}
 P_{-i}=P_n+\frac{P_n-\delta_{Z_i}}{n-1},
 \qquad \frac1n\sum_{i=1}^n(P_n-\delta_{Z_i})=0.
\end{equation}
For $i\in[n]$, $t\in[0,T]$, and
$\lambda\in\Lambda$, whenever the required first and second parameter
derivatives exist, define along the full trajectory the centered gradient
$c_{i,t}^\lambda\in\Theta$ and centered Hessian
$C_{i,t}^\lambda:\Theta\to\Theta$ by
\begin{align}
 c_{i,t}^\lambda
 &=\nabla\ell^{\rm tr}_{\lambda,Z_i}(\theta_t^\lambda)
   -\frac1n\sum_{j=1}^n
      \nabla\ell^{\rm tr}_{\lambda,Z_j}(\theta_t^\lambda),
 \label{eq:centered-gradient}\\
 C_{i,t}^\lambda
 &=\nabla^2\ell^{\rm tr}_{\lambda,Z_i}(\theta_t^\lambda)
   -\frac1n\sum_{j=1}^n
      \nabla^2\ell^{\rm tr}_{\lambda,Z_j}(\theta_t^\lambda).
 \label{eq:centered-hessian}
\end{align}
For $i\in[n]$, $t\in[0,T]$, and $\lambda\in\Lambda$, the deletion identities
along the full path are
\begin{equation}\label{eq:deleted-gradient-hessian}
 \begin{split}
 \nabla F_{P_{-i},\lambda}(\theta_t^\lambda)
   &=\nabla F_{P_n,\lambda}(\theta_t^\lambda)-\frac{c_{i,t}^\lambda}{n-1},\\
 \nabla^2F_{P_{-i},\lambda}(\theta_t^\lambda)
   &=\nabla^2F_{P_n,\lambda}(\theta_t^\lambda)-\frac{C_{i,t}^\lambda}{n-1}.
 \end{split}
\end{equation}

The exact leave-one-out score, average deletion-learner population risk, and
full-sample population risk were defined in
\cref{eq:loo-score,eq:minus-risk,eq:conditional-risk}.
The training observations determine the approximate risk-curve family
$\widetilde\Score_n(t,\lambda)$ in \eqref{eq:flow-alo-score}; the primary
target is the uniform error of $\widetilde\Score_n(t,\lambda)$ as an estimator
of $\Risk_S(t,\lambda)$, either in absolute level or up to one common
sample-dependent shift.  Approximate minimization over $\cH$, as in
\cref{cor:regularity-event-oracle}, remains a consequence.

\begin{table}[t]
\caption{Assumption and notation glossary.  Every bound is required only on
the index set specified by the invoking result.}
\label{tab:assumption-glossary}
\centering
\scriptsize
\setlength{\tabcolsep}{3pt}
\renewcommand{\arraystretch}{1.12}
\begin{tabular}{@{}p{.29\linewidth}p{.64\linewidth}@{}}
\hline
Object or constant & Meaning and principal use \\
\hline
$P_0,P_n,P_{-i}$ & Population, empirical, and normalized delete-one laws;
risk target and deletion chords. \\
$\theta^\lambda,\theta_{-i}^\lambda,d_i^\lambda$ & Full flow, deleted flow,
and linear deletion response; \cref{prop:exact-deletion,thm:finite-horizon}. \\
$\widetilde\Score_n,\Score_n^{\LOO},\Risk_S^-,\Risk_S$ & Approximate score,
exact-LOO score, average deletion risk, and realized full-sample risk
in the three transfers of \cref{fig:theorem-dependency}. \\
$\|\cdot\|,\|\cdot\|_{\op},\|\cdot\|_{\TV},d_{\cH}$ & Parameter,
operator, signed-measure, and candidate-index geometries for dynamic bounds,
jackknife chords, and covering arguments. \\
$M,L_3,\kappa,G_{\rm out}$ & Centered-gradient bound, Hessian-Lipschitz
constant, one-sided-curvature parameter, and evaluation-gradient bound;
in \cref{ass:tube,thm:finite-horizon}. \\
$C_{PP}$ & Bound for the population-risk second derivative along each
deletion chord after division by $\|h_i\|_{\TV}^2$;
\cref{thm:jackknife-cancellation}. \\
$B_{n-1}^{\rm rng},\beta_{n-1},\omega_{n-1}$ & Deterministic loss-range
length, replace-one stability, and candidate modulus at learner size $n-1$
in \cref{thm:loo-concentration}. \\
$L_{n-1}^{\rm hyp},\beta_{n-1}^{\rm mix}$ & Candidate Lipschitz and mixed
replace-one stability constants in \cref{thm:chained-loo-concentration}. \\
$D_{\rm dyn}$ & Composed dynamic-kernel mismatch in
\cref{thm:kernel-mismatch,thm:output-space-deletion}. \\
\hline
\end{tabular}
\end{table}
 \section{Individual-deletion response and \FlowALO}\label{sec:flow-alo}

For $i\in[n]$, $\lambda\in\Lambda$, and any interval
$\mathfrak I\subseteq[0,T]$ on which the full and deleted flows exist, define
the exact deletion displacement $\Delta_i^\lambda:\mathfrak I\to\Theta$ by
$\Delta_{i,t}^\lambda=\theta_{-i,t}^\lambda-\theta_t^\lambda$ for
$t\in\mathfrak I$.
Integrating the deleted-objective Hessian along the segment from
$\theta_t^\lambda$ to $\theta_{-i,t}^\lambda$ yields the exact dynamics of
$\Delta_{i,t}^\lambda$.

\begin{proposition}[Exact deletion dynamics]\label[proposition]{prop:exact-deletion}
Fix $i\in[n]$, $\lambda\in\Lambda$, and $T'\in[0,T]$, and suppose both
flows exist on $[0,T']$.  Suppose $F_{P_{-i},\lambda}$ is twice continuously
differentiable on the line segments joining $\theta_t^\lambda$ to
$\theta_{-i,t}^\lambda$ for every $t\in[0,T']$.  Then, for $0\leq t\leq T'$,
\begin{equation}\label{eq:exact-deletion-ode}
 \dot\Delta_{i,t}^\lambda
 =-\overline B_{i,t}^\lambda\Delta_{i,t}^\lambda
   +\frac{c_{i,t}^\lambda}{n-1},
 \qquad \Delta_{i,0}^\lambda=0,
\end{equation}
where the secant operator $\overline B_{i,t}^\lambda:\Theta\to\Theta$ is
\begin{equation}\label{eq:secant-hessian}
 \overline B_{i,t}^\lambda
 =\int_0^1
   \nabla^2F_{P_{-i},\lambda}
      (\theta_t^\lambda+u\Delta_{i,t}^\lambda)\,\dd u.
\end{equation}
\end{proposition}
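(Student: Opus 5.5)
The plan is to differentiate $\Delta_{i,t}^\lambda=\theta_{-i,t}^\lambda-\theta_t^\lambda$ directly using the two flow equations. Then I would rewrite the difference of deleted-objective gradients through the fundamental theorem of calculus along the segment. Since both flows are $C^1$ solutions on $[0,T']$ of \eqref{eq:full-flow} with $P=P_{-i}$ and $P=P_n$, and they share the initial point $\theta_0$, we get $\Delta_{i,0}^\lambda=0$ and
\[
 \dot\Delta_{i,t}^\lambda
 =-\nabla F_{P_{-i},\lambda}(\theta_{-i,t}^\lambda)
  +\nabla F_{P_n,\lambda}(\theta_t^\lambda).
\]
I would add and subtract $\nabla F_{P_{-i},\lambda}(\theta_t^\lambda)$. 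The first identity in \eqref{eq:deleted-gradient-hessian} gives $\nabla F_{P_n,\lambda}(\theta_t^\lambda)-\nabla F_{P_{-i},\lambda}(\theta_t^\lambda)=c_{i,t}^\lambda/(n-1)$. This produces the forcing term.

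The remaining term is $-\bigl(\nabla F_{P_{-i},\lambda}(\theta_t^\lambda+\Delta_{i,t}^\lambda)-\nabla F_{P_{-i},\lambda}(\theta_t^\lambda)\bigr)$. I would fix $t$ and set $g(u)=\nabla F_{P_{-i},\lambda}(\theta_t^\lambda+u\Delta_{i,t}^\lambda)$ for $u\in[0,1]$. Because $F_{P_{-i},\lambda}$ is twice continuously differentiable on the segment, $g$ is $C^1$ with $g'(u)=\nabla^2F_{P_{-i},\lambda}(\theta_t^\lambda+u\Delta_{i,t}^\lambda)\Delta_{i,t}^\lambda$. The fundamental theorem of calculus in the finite-dimensional space $\Theta$ then gives $g(1)-g(0)=\overline B_{i,t}^\lambda\Delta_{i,t}^\lambda$, with $\overline B_{i,t}^\lambda$ as in \eqref{eq:secant-hessian}. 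Since $\Delta_{i,t}^\lambda$ does not depend on $u$, it can be pulled out of the Bochner integral of the continuous operator-valued integrand. Combining the two pieces yields \eqref{eq:exact-deletion-ode}.

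The main point needing care is regularity. Twice continuous differentiability on the segments is being used in the sense of a neighborhood of the segment, or of one-sided derivatives at the endpoints. That is enough for the chain rule along $u$ and for continuity of the Hessian integrand. The deletion identity \eqref{eq:deleted-gradient-hessian} needs the individual gradients at $\theta_t^\lambda$ to exist; this is implicit in the definition of $c_{i,t}^\lambda$. If the hypothesis only gives $C^2$ regularity on the closed segment, I would instead apply the mean-value argument componentwise via $\langle e_k,g(u)\rangle$ for an orthonormal basis $(e_k)$ of $\Theta$. Everything else is routine, so I expect no substantive obstacle.
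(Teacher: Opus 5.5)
Your proposal is correct and follows the paper's proof essentially verbatim. You subtract the two flow equations, add and subtract $\nabla F_{P_{-i},\lambda}(\theta_t^\lambda)$, identify the secant term $\overline B_{i,t}^\lambda\Delta_{i,t}^\lambda$ by the fundamental theorem of calculus along the segment, and obtain the forcing $c_{i,t}^\lambda/(n-1)$ from the deletion identity \eqref{eq:deleted-gradient-hessian}. One small wording fix: your componentwise fallback should be the componentwise fundamental theorem of calculus, not a mean-value argument, because the mean-value theorem gives a different intermediate point for each component rather than the averaged Hessian in \eqref{eq:secant-hessian}.
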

Supplement Section~S.A proves \cref{prop:exact-deletion}.
\FlowALO replaces the unknown
secant operator in
\cref{eq:exact-deletion-ode} by the deleted Hessian evaluated on the observed
full-sample path,
\begin{equation}\label{eq:resummed-B}
 B_{i,t}^\lambda=\nabla^2F_{P_{-i},\lambda}(\theta_t^\lambda)
        =\nabla^2F_{P_n,\lambda}(\theta_t^\lambda)
         -\frac{C_{i,t}^\lambda}{n-1},
\end{equation}
and define the Flow-ALO response by
\begin{equation}\label{eq:flow-alo-response}
 \dot d_{i,t}^\lambda=-B_{i,t}^\lambda d_{i,t}^\lambda
   +\frac{c_{i,t}^\lambda}{n-1},
 \qquad d_{i,0}^\lambda=0.
\end{equation}
With simultaneous staging on the full-sample path, a forward-Euler
discretization of \eqref{eq:flow-alo-response} matches the full-batch IACV
recursion of Luo, Ren and Barber
\cite[Section~2.3, Equation~(12)]{luo2023iterative}, after accounting for the
present objective normalization.
For $i\in[n]$, $\lambda\in\Lambda$, and $0\leq s\leq t\leq T$, let
$U_i^\lambda(t,s):\Theta\to\Theta$ be the fundamental solution defined by
\begin{equation}\label{eq:flow-alo-propagator}
 \partial_tU_i^\lambda(t,s)=-B_{i,t}^\lambda U_i^\lambda(t,s),
 \qquad U_i^\lambda(s,s)=I.
\end{equation}
The response in \eqref{eq:flow-alo-response} therefore has the Duhamel
representation
\begin{equation}\label{eq:flow-alo-duhamel}
 d_{i,t}^\lambda
 =\frac1{n-1}\int_0^tU_i^\lambda(t,s)c_{i,s}^\lambda\,\dd s.
\end{equation}
The correction $-C_{i,t}^\lambda/(n-1)$ in \cref{eq:resummed-B} acts at
every later propagation time; we call the accumulated centered-Hessian
propagation \emph{resummation}.

The term resummation distinguishes $d_i^\lambda$ from a literal derivative at the
full empirical law.  Put
$H_t^\lambda=\nabla^2F_{P_n,\lambda}(\theta_t^\lambda)$.  Suppose the
$C^1([0,T];\Theta)$-valued map
$\varepsilon\mapsto
\theta^{P_n+\varepsilon(P_n-\delta_{Z_i}),\lambda}$ is defined near zero and
differentiable at zero.  Suppose also that
$(\varepsilon,x)\mapsto
\nabla F_{P_n+\varepsilon(P_n-\delta_{Z_i}),\lambda}(x)$ is continuously
differentiable on a neighborhood of
$\{0\}\times\{\theta_t^\lambda:0\leq t\leq T\}$.  Define the scaled derivative
of the empirical-weight path
\[
 g_{i,t}^\lambda
 =\frac1{n-1}\left.\frac{\partial}{\partial\varepsilon}
 \theta_t^{P_n+\varepsilon(P_n-\delta_{Z_i}),\lambda}
 \right|_{\varepsilon=0}.
\]
Differentiating the flow gives the conventional full-Hessian influence ODE
\begin{equation}\label{eq:full-hessian-influence-response}
 \dot g_{i,t}^\lambda=-H_t^\lambda g_{i,t}^\lambda
    +\frac{c_{i,t}^\lambda}{n-1},
 \qquad g_{i,0}^\lambda=0.
\end{equation}
Hence $g_i^\lambda$, not $d_i^\lambda$, is the scaled data-weight derivative
at $P_n$.  Flow-ALO instead incorporates the centered-Hessian correction in
\eqref{eq:resummed-B} throughout propagation.

The resulting same-sample \FlowALO score is
\begin{equation}\label{eq:flow-alo-score}
 \widetilde\Score_n(t,\lambda)
   =\frac1n\sum_{i=1}^n
      \ell^{\rm ev}_{\lambda,Z_i}
        (\theta_t^\lambda+d_{i,t}^\lambda).
\end{equation}
\Cref{eq:flow-alo-score} evaluates each omitted loss at the approximate
deleted parameter $\theta_t^\lambda+d_{i,t}^\lambda$ and, unlike a
first-order loss correction, retains quadratic and higher-order effects when
the training-point evaluation gradient vanishes at interpolation.
 \section{Deterministic approximation theory}\label{sec:deterministic}

\Cref{eq:flow-alo-response} approximates the exact deletion displacement in
\cref{eq:exact-deletion-ode} by replacing the secant Hessian with
$B_{i,t}^\lambda=\nabla^2F_{P_{-i},\lambda}(\theta_t^\lambda)$ on the full
path; \cref{thm:finite-horizon} bounds the error uniformly on $[0,T]$.
Pathwise constants in Section~\ref{sec:deterministic} refer to the realized
sample $S$ unless declared deterministic.
For $x\in\Theta$ and $r>0$, write
$B(x,r)=\{y\in\Theta:\|y-x\|<r\}$ for the open radius-$r$ ball.

For $\kappa\geq0$, define the nonnegative propagation-gain functions
$\phi_\kappa,\mathcal J_\kappa:[0,\infty)\to[0,\infty)$ by
\begin{equation}\label{eq:phi-J-def}
 \phi_\kappa(t)=
 \begin{cases}(e^{\kappa t}-1)/\kappa,&\kappa>0,\\t,&\kappa=0,\end{cases}
 \qquad
 \mathcal J_\kappa(t)=
 \begin{cases}
 \{e^{2\kappa t}-1-2\kappa t e^{\kappa t}\}/\kappa^3,&\kappa>0,\\
 t^3/3,&\kappa=0.
 \end{cases}
\end{equation}

\begin{assumption}[Full-path tube regularity]\label[assumption]{ass:tube}
For every $\lambda\in\Lambda$, the full path $\theta_t^\lambda$ exists on
$[0,T]$.  There are common constants
$r_{\rm tube}>0$ and $\kappa,L_3,M\in[0,\infty)$ such that, with the open full-path tube
\begin{equation}\label{eq:full-path-R-tube}
 \mathfrak T_{r_{\rm tube}}^\lambda\subset\Theta,\qquad
 \mathfrak T_{r_{\rm tube}}^\lambda
 =\bigcup_{0\leq t\leq T}B(\theta_t^\lambda,r_{\rm tube}),
\end{equation}
$F_{P_{-i},\lambda}$ is twice continuously differentiable on
$\mathfrak T_{r_{\rm tube}}^\lambda$ for every $i\in[n]$ and
$\lambda\in\Lambda$.  Uniformly over $i\in[n]$ and $\lambda\in\Lambda$,
\begin{align}
 \nabla^2F_{P_{-i},\lambda}(x)&\succeq-\kappa I,
 &&x\in\mathfrak T_{r_{\rm tube}}^\lambda,
 \label{eq:one-sided-curvature}\\
 \|\nabla^2F_{P_{-i},\lambda}(x)
       -\nabla^2F_{P_{-i},\lambda}(y)\|_{\op}
     &\leq L_3\|x-y\|,
 &&x,y\in B(\theta_t^\lambda,r_{\rm tube}),\quad 0\leq t\leq T.
 \label{eq:hessian-lipschitz}
\end{align}
Moreover,
\begin{equation}\label{eq:full-path-tube-closure-condition}
 \sup_{\substack{1\leq i\leq n,\ 0\leq t\leq T\\\lambda\in\Lambda}}
 \|c_{i,t}^\lambda\|\leq M,
 \qquad
 (n-1)^{-1}M\phi_\kappa(T)<r_{\rm tube}.
\end{equation}
\end{assumption}

A uniform Hessian bound on a common neighborhood of the tubes gives
\eqref{eq:one-sided-curvature}, a uniform third-derivative bound gives
\eqref{eq:hessian-lipschitz}, and the strict margin in
\eqref{eq:full-path-tube-closure-condition} keeps the deleted path and response
inside the tube.

\begin{lemma}[Full-path tube closure]\label[lemma]{lem:full-path-tube}
Under \cref{ass:tube}, for every $i\in[n]$ and $\lambda\in\Lambda$ the deleted flow
$\theta_{-i,t}^\lambda$ and the response $d_{i,t}^\lambda$ exist uniquely on
$[0,T]$.
Moreover, for $0\leq t\leq T$,
\begin{equation}\label{eq:full-path-tube-closure-bound}
 \max\{\|\theta_{-i,t}^\lambda-\theta_t^\lambda\|,\|d_{i,t}^\lambda\|\}
 \leq(n-1)^{-1}M\phi_\kappa(t)<r_{\rm tube}.
\end{equation}
Consequently, at each time $t$, the line segments joining any two of
$\theta_t^\lambda$, $\theta_{-i,t}^\lambda$, and
$\theta_t^\lambda+d_{i,t}^\lambda$ are contained in
$B(\theta_t^\lambda,r_{\rm tube})$.
\end{lemma}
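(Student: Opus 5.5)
The plan is a continuation (bootstrap) argument that combines the exact deletion dynamics of \cref{prop:exact-deletion} with a one-sided energy estimate. I treat the deleted flow and the response separately, but the core estimate is the same for both.

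\emph{Existence and uniqueness.} Since $F_{P_{-i},\lambda}$ is $C^2$ on the open tube $\mathfrak T^\lambda_{r_{\rm tube}}$, its gradient is locally Lipschitz there. Picard--Lindel\"of therefore gives a unique maximal solution $\theta_{-i}^\lambda$ starting at $\theta_0$, which lies in the tube because $\theta^\lambda_0=\theta_0$. The solution persists as long as it stays in a compact subset of the tube. The response equation \eqref{eq:flow-alo-response} is linear. Its coefficient $t\mapsto B^\lambda_{i,t}$ is continuous, being $\nabla^2F_{P_{-i},\lambda}$, which is continuous on the tube, composed with the continuous full path. Its forcing $c^\lambda_{i,t}$ is continuous as well. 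So $d_i^\lambda$ exists uniquely on all of $[0,T]$, and only the bound needs proof.

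\emph{Core estimate.} Let $y$ solve $\dot y=-A_t y+c_t/(n-1)$ with $y_0=0$, where $A_t\succeq-\kappa I$ and $\|c_t\|\le M$. Then
\[
\tfrac{d}{dt}\tfrac12\|y_t\|^2=-\langle y_t,A_ty_t\rangle+\langle y_t,c_t\rangle/(n-1)\le\kappa\|y_t\|^2+M\|y_t\|/(n-1).
\]
Working with $\sqrt{\|y\|^2+\epsilon}$ and letting $\epsilon\downarrow0$ avoids the nondifferentiability of the norm at $0$, and gives $\tfrac{d}{dt}\|y_t\|\le\kappa\|y_t\|+M/(n-1)$ in the upper-Dini sense. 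Gr\"onwall then yields
\[
\|y_t\|\le (n-1)^{-1}M\int_0^te^{\kappa(t-s)}\,ds=(n-1)^{-1}M\phi_\kappa(t).
\]

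\emph{Applying it to $d_i^\lambda$.} Here $A_t=B^\lambda_{i,t}=\nabla^2F_{P_{-i},\lambda}(\theta^\lambda_t)$. This point lies on the full path, hence in the tube, so \eqref{eq:one-sided-curvature} gives $A_t\succeq-\kappa I$. The bound on $\|c^\lambda_{i,t}\|$ comes from \eqref{eq:full-path-tube-closure-condition}.

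\emph{Applying it to $\Delta_i^\lambda$.} This is the main obstacle, because the secant operator $\overline B^\lambda_{i,t}$ is only controlled while the segment from $\theta_t^\lambda$ to $\theta^\lambda_{-i,t}$ stays in the tube. I would handle it by continuation. Let $\tau$ be the supremum of times $t'\le T$ such that the deleted flow exists on $[0,t']$ and $\|\Delta^\lambda_{i,t}\|<r_{\rm tube}$ for all $t\le t'$; then $\tau>0$ by continuity.

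On $[0,\tau)$ each segment lies in the ball $B(\theta^\lambda_t,r_{\rm tube})$, which is convex and contained in the tube. So \cref{prop:exact-deletion} applies, and averaging \eqref{eq:one-sided-curvature} over the segment gives $\overline B^\lambda_{i,t}\succeq-\kappa I$. The core estimate then yields
\[
\|\Delta^\lambda_{i,t}\|\le(n-1)^{-1}M\phi_\kappa(t)\le(n-1)^{-1}M\phi_\kappa(T)=:\rho<r_{\rm tube}.
\]
This is where the strict margin is used. On $[0,\tau)$ the deleted path therefore stays in the compact set $\bigcup_t\overline B(\theta^\lambda_t,\rho)$. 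This set is compact as the image of $[0,T]\times\overline B(0,\rho)$ under a continuous map, and it lies inside the open tube.

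Two consequences follow. First, the solution cannot blow up or leave the tube, so it extends past $\tau$ whenever $\tau<T$. Second, by continuity $\|\Delta^\lambda_{i,\tau}\|\le\rho<r_{\rm tube}$, so the defining property holds at $\tau$ and slightly beyond. This contradicts maximality of $\tau$ unless $\tau=T$, with the property holding on all of $[0,T]$. Hence \eqref{eq:full-path-tube-closure-bound} holds for both $\Delta_i^\lambda$ and $d_i^\lambda$.

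\emph{Segment statement.} All three points $\theta^\lambda_t$, $\theta^\lambda_{-i,t}$ and $\theta^\lambda_t+d^\lambda_{i,t}$ lie in the convex ball $B(\theta^\lambda_t,r_{\rm tube})$. Each segment joining two of them therefore lies in that ball too.
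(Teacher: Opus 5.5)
Your proof is correct and follows essentially the same route as the paper. For $d_i^\lambda$ you use existence for the linear ODE; for both curves you use a one-sided energy (Dini) estimate closed by scalar comparison to get $(n-1)^{-1}M\phi_\kappa(t)$; for the deleted flow you use a continuation argument in which the strict margin confines it to a compact subset of the open tube. The only differences are cosmetic: your $\sqrt{\|y\|^2+\epsilon}$ regularization replaces the paper's explicit case analysis of $D^+\|d_{i,t}^\lambda\|$ at $d_{i,t}^\lambda=0$, and your supremum-based bootstrap replaces the paper's first-exit stopping time followed by the continuation criterion.
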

Supplement Section~S.A proves \cref{lem:full-path-tube}.
\Cref{lem:full-path-tube} localizes
the exact and response curves in $B(\theta_t^\lambda,r_{\rm tube})$, enabling
full-horizon comparison of the two curves.

\begin{theorem}[Uniform finite-horizon error]\label{thm:finite-horizon}
Under \cref{ass:tube},
\begin{equation}\label{eq:deletion-response-size}
 \sup_{\substack{1\leq i\leq n\\\lambda\in\Lambda}}
 \max\{\|\Delta_{i,t}^\lambda\|,\|d_{i,t}^\lambda\|\}
 \leq (n-1)^{-1}M\phi_\kappa(t),\qquad 0\leq t\leq T,
\end{equation}
and
\begin{equation}\label{eq:parameter-response-error}
 \sup_{\substack{1\leq i\leq n,\ 0\leq t\leq T\\\lambda\in\Lambda}}
 \|\Delta_{i,t}^\lambda-d_{i,t}^\lambda\|
 \leq \frac{L_3M^2}{2}(n-1)^{-2}\mathcal J_\kappa(T).
\end{equation}
If, in addition, there is a finite $G_{\rm out}\geq0$ such that
\begin{equation}\label{eq:evaluation-gradient-bound}
 \sup_{\substack{1\leq i\leq n,\ 0\leq t\leq T,\ \lambda\in\Lambda\\
                   x\in B(\theta_t^\lambda,r_{\rm tube})}}
 \|\nabla\ell^{\rm ev}_{\lambda,Z_i}(x)\|\leq G_{\rm out},
\end{equation}
then
\begin{equation}\label{eq:score-approximation-bound}
 \sup_{(t,\lambda)\in\cH}
 |\widetilde\Score_n(t,\lambda)-\Score_n^{\LOO}(t,\lambda)|
 \leq\frac{G_{\rm out}L_3M^2}{2}(n-1)^{-2}\mathcal J_\kappa(T).
\end{equation}
\end{theorem}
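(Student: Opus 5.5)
The first bound \eqref{eq:deletion-response-size} is exactly \cref{lem:full-path-tube}, taken as given. The substantive part is \eqref{eq:parameter-response-error}. Put $e_t=\Delta_{i,t}^\lambda-d_{i,t}^\lambda$ and subtract \eqref{eq:flow-alo-response} from \eqref{eq:exact-deletion-ode}. The forcing terms $c_{i,t}^\lambda/(n-1)$ cancel, leaving
\[
 \dot e_t=-B_{i,t}^\lambda e_t-(\overline B_{i,t}^\lambda-B_{i,t}^\lambda)\Delta_{i,t}^\lambda,
 \qquad e_0=0 .
\]
\Cref{lem:full-path-tube} shows that the segment from $\theta_t^\lambda$ to $\theta_{-i,t}^\lambda$ lies in $B(\theta_t^\lambda,r_{\rm tube})$. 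On that segment \eqref{eq:hessian-lipschitz} applies, and averaging over the segment gives
\[
 \|\overline B_{i,t}^\lambda-B_{i,t}^\lambda\|_{\op}\le\int_0^1L_3u\|\Delta_{i,t}^\lambda\|\dd u=\tfrac{L_3}{2}\|\Delta_{i,t}^\lambda\|.
\]
The inhomogeneity therefore has norm at most
\[
 \tfrac{L_3}{2}\|\Delta_{i,t}^\lambda\|^2\le\tfrac{L_3M^2}{2}(n-1)^{-2}\phi_\kappa(t)^2 .
\]

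Next I would use the propagator $U_i^\lambda$ of \eqref{eq:flow-alo-propagator}. The curvature bound \eqref{eq:one-sided-curvature} holds at $\theta_t^\lambda$, so $B_{i,t}^\lambda\succeq-\kappa I$. For any solution $v$ of $\dot v=-B_{i,t}^\lambda v$ this gives $\tfrac{\dd}{\dd t}\|v\|^2=-2\langle v,B_{i,t}^\lambda v\rangle\le2\kappa\|v\|^2$, hence $\|U_i^\lambda(t,s)\|_{\op}\le e^{\kappa(t-s)}$. The same estimate is what yields $\phi_\kappa$ in the lemma. Duhamel's formula then gives
\[
 \|e_t\|\le\frac{L_3M^2}{2(n-1)^2}\int_0^te^{\kappa(t-s)}\phi_\kappa(s)^2\dd s .
\]
It remains to show that this integral equals $\mathcal J_\kappa(t)$. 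For $\kappa=0$ it is $\int_0^ts^2\dd s=t^3/3$, which is correct. For $\kappa>0$, expand $(e^{\kappa s}-1)^2e^{\kappa(t-s)}=e^{\kappa t}(e^{\kappa s}-2+e^{-\kappa s})$ and integrate:
\[
 \kappa^{-2}e^{\kappa t}\bigl[(e^{\kappa t}-1)/\kappa-2t+(1-e^{-\kappa t})/\kappa\bigr]=\kappa^{-3}\{e^{2\kappa t}-1-2\kappa te^{\kappa t}\},
\]
which matches \eqref{eq:phi-J-def}. The integrand is nonnegative, so $\mathcal J_\kappa$ is nondecreasing and $t\le T$ may be replaced by $T$. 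All constants are uniform over $i$ and $\lambda$, so the supremum in \eqref{eq:parameter-response-error} follows.

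For the score bound \eqref{eq:score-approximation-bound}, fix $(t,\lambda)\in\cH$ and bound each summand. The definitions \eqref{eq:flow-alo-score} and \eqref{eq:loo-score} give
\[
 |\widetilde\Score_n-\Score_n^{\LOO}|\le n^{-1}\sum_i|\ell^{\rm ev}_{\lambda,Z_i}(\theta_t^\lambda+d_{i,t}^\lambda)-\ell^{\rm ev}_{\lambda,Z_i}(\theta_t^\lambda+\Delta_{i,t}^\lambda)| .
\]
By \cref{lem:full-path-tube}, the segment joining these two points lies in $B(\theta_t^\lambda,r_{\rm tube})$. The mean value inequality with \eqref{eq:evaluation-gradient-bound} bounds each term by $G_{\rm out}\|d_{i,t}^\lambda-\Delta_{i,t}^\lambda\|$, and \eqref{eq:parameter-response-error} finishes. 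This argument implicitly requires $\ell^{\rm ev}_{\lambda,Z_i}$ to be differentiable on the ball, which the gradient bound presupposes. No invertibility of any Hessian is used.

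The main obstacle is the comparison step. One must make sure the one-sided curvature bound controls the propagator of $B$ evaluated on the full path, and that the Lipschitz bound is applied only inside a single ball $B(\theta_t^\lambda,r_{\rm tube})$, which is all \eqref{eq:hessian-lipschitz} provides. Both are delivered by \cref{lem:full-path-tube}, so the remaining risk is only the exact-constant computation of $\mathcal J_\kappa$ checked above.
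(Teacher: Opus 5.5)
Your proposal is correct and follows the paper's proof essentially step for step: the same remainder equation, the Hessian-Lipschitz bound $\tfrac{L_3}{2}\|\Delta_{i,t}^\lambda\|^2$ on the segment supplied by \cref{lem:full-path-tube}, the energy estimate $\|U_i^\lambda(t,s)\|_{\op}\leq e^{\kappa(t-s)}$, Duhamel, the explicit evaluation of $\mathcal J_\kappa$, and the mean-value step for the score. The only differences are that you cite \cref{lem:full-path-tube} for \eqref{eq:deletion-response-size} where the paper re-derives it by the Dini comparison argument, and one small precision point: nonnegativity of the integrand alone does not make $t\mapsto\mathcal J_\kappa(t)$ nondecreasing, because the kernel $e^{\kappa(t-s)}$ also depends on $t$; add that this kernel is nondecreasing in $t$ for $\kappa\geq0$, or use the paper's identity $\mathcal J_\kappa'=\phi_\kappa^2+\kappa\mathcal J_\kappa\geq0$.
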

Supplement Section~S.A proves \cref{thm:finite-horizon}.
For full-batch gradient descent, Luo, Ren and Barber
\cite[Assumptions~4.1--4.3 and~4.5 and Theorems~4.4 and~4.6]{luo2023iterative}
establish all-iteration $O(n^{-2})$ IACV parameter and CV-score accuracy under
common initialization of the full, deleted, and IACV iterates.  Their
conditions also include two-sided Hessian-eigenvalue bounds,
individual-gradient and Hessian-Lipschitz bounds, a held-out-loss condition,
and step- and sample-size restrictions.  \Cref{thm:finite-horizon} gives the continuous-time finite-horizon
counterpart under one-sided tube curvature and makes the dependence on
$\mathcal J_\kappa(T)$ explicit.
\Cref{thm:finite-horizon} allows $\nabla^2F_{P_{-i},\lambda}\succeq-\kappa I$ on a fixed
continuous-time tube, with horizon dependence in $\mathcal J_\kappa(T)$.
Population-risk comparison also uses
\cref{thm:loo-concentration,thm:chained-loo-concentration,thm:jackknife-cancellation}.
The parameter and score radii in \eqref{eq:parameter-response-error} and
\eqref{eq:score-approximation-bound} are
dimension-independent whenever $L_3,M,G_{\rm out}$, and
$\mathcal J_\kappa(T)$ are common scalar envelopes.

Under \cref{ass:tube} and the empirical-weight-path differentiability
conditions used to derive \eqref{eq:full-hessian-influence-response}, an extra envelope
$\sup_{1\leq i\leq n,\,0\leq t\leq T,\,\lambda\in\Lambda}
\|C_{i,t}^\lambda\|_{\op}\leq K_C<\infty$ yields the
conventional full-Hessian response comparison
\begin{equation}\label{eq:full-hessian-response-comparison}
 \begin{aligned}
 \sup_{1\leq i\leq n,\,\lambda\in\Lambda}\sup_{0\leq t\leq T}
 \|\Delta_{i,t}^\lambda-g_{i,t}^\lambda\|
 &\leq\frac1{(n-1)^2}\int_0^T
 e^{\{\kappa+K_C/(n-1)\}(T-s)}\\[-2pt]
 &\quad\times\left\{K_CM\phi_\kappa(s)
       +\frac{L_3M^2}{2}\phi_\kappa(s)^2\right\}\,\dd s.
 \end{aligned}
\end{equation}
Supplement Section~S.A derives \eqref{eq:full-hessian-response-comparison}.
The comparison radius in \eqref{eq:full-hessian-response-comparison} is
$O(n^{-2})$ along sequences for which
$T,\kappa,K_C,L_3$, and $M$ remain bounded.  Resummation removes the separate
$K_C$ premise; the resummed response is also the formulation used for the width-uniform
output-space verification in \cref{sec:defect}.

Affine deleted gradients eliminate the secant-Hessian remainder entirely.

\begin{corollary}[Quadratic exactness]\label[corollary]{cor:quadratic-exactness}
Fix $T'\in[0,T]$ and suppose every full and deleted flow indexed by
$i\in[n]$ and $\lambda\in\Lambda$ exists on $[0,T']$.  If
$\nabla F_{P_{-i},\lambda}$ is affine in $\theta$
for every $i\in[n]$ and $\lambda\in\Lambda$, then
$d_{i,t}^\lambda=\Delta_{i,t}^\lambda$ for every $i\in[n]$,
$\lambda\in\Lambda$, and $t\in[0,T']$.  Thus, for every
$(t,\lambda)\in\cH\cap([0,T']\times\Lambda)$,
\(
 \widetilde\Score_n(t,\lambda)=\Score_n^{\LOO}(t,\lambda)
\)
exactly.
\end{corollary}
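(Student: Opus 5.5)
If $\nabla F_{P_{-i},\lambda}$ is affine, I will show that the exact displacement $\Delta_i^\lambda$ and the response $d_i^\lambda$ solve the same linear initial-value problem. Uniqueness then forces them to coincide.

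First, fix $i\in[n]$ and $\lambda\in\Lambda$. Write $\nabla F_{P_{-i},\lambda}(\theta)=A_{i}^\lambda\theta+b_{i}^\lambda$ with a fixed linear map $A_i^\lambda:\Theta\to\Theta$ and a vector $b_i^\lambda\in\Theta$. Then $F_{P_{-i},\lambda}$ is $C^\infty$ on all of $\Theta$ and its Hessian is the constant $A_i^\lambda$. In particular, the twice-differentiability hypothesis of \cref{prop:exact-deletion} holds on every segment joining $\theta_t^\lambda$ to $\theta_{-i,t}^\lambda$, for $t\in[0,T']$. The proposition therefore gives the exact deletion ODE \eqref{eq:exact-deletion-ode} on $[0,T']$, and the secant operator \eqref{eq:secant-hessian} is the integral of a constant:
\[
\overline B_{i,t}^\lambda=\int_0^1A_i^\lambda\,\dd u=A_i^\lambda .
\]

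Second, the resummed operator \eqref{eq:resummed-B} is $B_{i,t}^\lambda=\nabla^2F_{P_{-i},\lambda}(\theta_t^\lambda)=A_i^\lambda$ as well. One point needs care here: \eqref{eq:resummed-B} also writes $B_{i,t}^\lambda$ as $\nabla^2F_{P_n,\lambda}-C_{i,t}^\lambda/(n-1)$, which requires the individual Hessians to exist. I will use only the first expression, $\nabla^2F_{P_{-i},\lambda}(\theta_t^\lambda)$, which is the definition used in \eqref{eq:flow-alo-response}. Likewise, the forcing $c_{i,t}^\lambda/(n-1)$ is the same continuous function of $t$ in both equations. By \eqref{eq:deleted-gradient-hessian} it equals $\nabla F_{P_n,\lambda}(\theta_t^\lambda)-\nabla F_{P_{-i},\lambda}(\theta_t^\lambda)$, which is continuous along the existing full path.

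Third, $\Delta_i^\lambda$ and $d_i^\lambda$ therefore both solve
\[
\dot x_t=-A_i^\lambda x_t+\frac{c_{i,t}^\lambda}{n-1},\qquad x_0=0,
\]
on $[0,T']$. This is a linear ODE with constant coefficient and continuous forcing, so it has a unique solution, for instance by Duhamel's formula \eqref{eq:flow-alo-duhamel} with $U_i^\lambda(t,s)=e^{-(t-s)A_i^\lambda}$. Hence $d_i^\lambda$ exists on $[0,T']$ and equals $\Delta_i^\lambda$ there.

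Finally, for $(t,\lambda)\in\cH\cap([0,T']\times\Lambda)$ we get $\theta_t^\lambda+d_{i,t}^\lambda=\theta_t^\lambda+\Delta_{i,t}^\lambda=\theta_{-i,t}^\lambda$ for every $i$. Substituting into \eqref{eq:flow-alo-score} and comparing termwise with \eqref{eq:loo-score} gives $\widetilde\Score_n(t,\lambda)=\Score_n^{\LOO}(t,\lambda)$.

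The argument has no real obstacle. The only step needing care is checking that the hypotheses of \cref{prop:exact-deletion} hold without \cref{ass:tube}, and that the response is defined through the deleted Hessian. Both follow from the smoothness of affine gradients.
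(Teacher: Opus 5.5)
Your proposal is correct and follows essentially the paper's argument. The paper subtracts the two equations to get $\partial_t(\Delta_{i,t}^\lambda-d_{i,t}^\lambda)=-B_{i,t}^\lambda(\Delta_{i,t}^\lambda-d_{i,t}^\lambda)$ from a zero initial value and invokes uniqueness, which is the same as your observation that both paths solve one linear initial-value problem. Your caution about the individual Hessians is harmless but unnecessary: $F_{P_n,\lambda}=n^{-1}\sum_{i=1}^nF_{P_{-i},\lambda}$ and $\ell^{\rm tr}_{\lambda,Z_i}=nF_{P_n,\lambda}-(n-1)F_{P_{-i},\lambda}$ also have affine gradients, so $c_{i,t}^\lambda$ and $C_{i,t}^\lambda$ are well defined and continuous in $t$.
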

Supplement Section~S.A proves \cref{cor:quadratic-exactness}.
The exactness is independent of propagator size.

\section{A long-time obstruction}\label{sec:long-time-obstruction}

We construct a one-dimensional pitchfork that exhibits logarithmic-time
instability under smoothness.

\begin{example}[Long-time amplification at a pitchfork]\label[example]{prop:pitchfork}
For each even $n$, specialize
$(\mathsf Z,\mathcal Z)=(\{-1,+1\},2^{\{-1,+1\}})$ and $\Theta=\R$, and
take the deterministic sample having $n/2$ observations
equal to $+1$ and $n/2$ equal to $-1$.  Fix an additive loss offset
$C\in\R$.  For $z\in\{-1,+1\}$ let
\begin{equation}\label{eq:pitchfork-loss}
 \ell_z(x)=\frac14(x^2-1)^2-zx+C,
 \qquad R(x)=\frac12x^2.
\end{equation}
Set $d_\lambda=1$, fix $\lambda\in(0,1)$, take
$\Lambda=\{\lambda\}$ and the common initialization $\theta_0=x_0=0$, specialize
$\ell^{\rm tr}_{\lambda,z}=\ell_z+\lambda R$ and
$\ell^{\rm ev}_{\lambda,z}=\ell_z$, and let $P_0$ be uniform on
$\{-1,+1\}$.  For every fixed $\vartheta\in(0,\sqrt{1-\lambda})$ and all
sufficiently large even $n$, define a logarithmic time and risk gap by
\begin{equation}\label{eq:pitchfork-risk-gap}
 \begin{gathered}
 T_{n,\vartheta}^\lambda
 =\frac{\log\{1+(n-1)(1-\lambda-\vartheta^2)\vartheta\}}
        {1-\lambda-\vartheta^2}
 =O_{\vartheta,\lambda}(\log n),\\
 \left|
 \frac1n\sum_{i=1}^n
    \int\ell_z(\theta_{-i,T_{n,\vartheta}^\lambda}^\lambda)\,\dd P_0(z)
 -\int\ell_z(\theta_{T_{n,\vartheta}^\lambda}^\lambda)\,\dd P_0(z)
 \right|\geq \frac{\vartheta^2}{4}.
 \end{gathered}
\end{equation}
At $t=0$, the absolute risk difference in
\eqref{eq:pitchfork-risk-gap} is zero.
\end{example}

Supplement Section~S.A verifies \cref{prop:pitchfork}.  Deleting a $+1$ or
$-1$ observation changes the
forcing from $0$ to $\mp(n-1)^{-1}$.  The full path stays at the unstable
equilibrium $0$, while each deleted path reaches distance at least
$\vartheta$ at $T_{n,\vartheta}^\lambda$.  Thus
\eqref{eq:pitchfork-risk-gap} gives an $O_{\vartheta,\lambda}(\log n)$ time
and a risk gap at least $\vartheta^2/4$, explaining the explicit horizon in
\cref{thm:finite-horizon}.
Equation~\eqref{eq:pitchfork-risk-gap} is a deterministic balanced-sample
statement; an iid fixed-confidence lower bound would require a separate
random-sample argument.
 \section{Exact output-space error representations}\label{sec:defect}

The parameter-space bound in \cref{thm:finite-horizon} can inherit
width-dependent derivative envelopes for networks.  This section instead expresses the
prediction error through an output-space dynamic-kernel mismatch and verifies
the resulting dimension-free criterion for a smooth two-layer mean-field
network with both layers included in the gradient flow.

Fix a baseline training probability measure $P$ on the underlying measurable
observation space $\mathsf Z$, a finite signed perturbation direction
$h$ on $(\mathsf Z,\mathcal Z)$ with $h(\mathsf Z)=0$, and a perturbation
scale $\eps\in\R$.
Write $|h|$ for the total-variation measure of $h$, define the perturbed
unit-mass finite signed measure
$P^\eps=P+\eps h$, and, for each $\lambda\in\Lambda$, define the
signed-measure forcing map
\[
 g_h^\lambda:\Theta\to\Theta,\qquad
 g_h^\lambda(\theta)
 =\int\nabla\ell^{\rm tr}_{\lambda,z}(\theta)\,\dd h(z).
\]
Throughout the shared output-space development, differentiation under the integral sign is valid for
the loss integrals with respect to $P$, $P^\eps$, and $h$.
In the shared output-space development, a Bochner integral is the norm limit of simple-function
integrals; for finite-dimensional $\Theta$, the Bochner integral is simply
coordinatewise Lebesgue integration
\cite[Chapter~II, Section~2]{diestel1977vector}.  Each result in the shared development
assumes that all required flows exist on $[0,T]$ and all required Bochner
integrals are well defined, except when flow existence is part of the
conclusion.
For each $\lambda\in\Lambda$, let
$\theta^{P,\lambda},\theta^{P^\eps,\lambda}:[0,T]\to\Theta$ be the flows
trained under $P$ and $P^\eps$, respectively, from the common deterministic
$\theta_0\in\Theta$ in \eqref{eq:intro-full-flow}, and define the
perturbation displacement path
\(
 \Delta^\lambda:[0,T]\to\Theta,\qquad \Delta_t^\lambda
 =\theta_t^{P^\eps,\lambda}-\theta_t^{P,\lambda}
\).
For $\lambda\in\Lambda$ and $t\in[0,T]$, define the
perturbed-objective Hessian operator
$B_t^{0,\lambda}:\Theta\to\Theta$ along the unperturbed flow by
\[
 B_t^{0,\lambda}
 =\nabla^2F_{P^\eps,\lambda}(\theta_t^{P,\lambda}).
\]
For each $\lambda\in\Lambda$, let the perturbation-response path
$d^\lambda:[0,T]\to\Theta$ solve
\begin{equation}
\label{eq:generic-resummed-response}
 \dot d_t^\lambda=-B_t^{0,\lambda}d_t^\lambda
 -\eps g_h^\lambda(\theta_t^{P,\lambda}),
 \qquad d_0^\lambda=0.
\end{equation}
For sample deletion and $i\in[n]$, take $P=P_n$, $h=P_n-\delta_{Z_i}$, and
$\eps=(n-1)^{-1}$.  Then
$(\theta_t^{P,\lambda},\theta_t^{P^\eps,\lambda},d_t^\lambda)
=(\theta_t^\lambda,\theta_{-i,t}^\lambda,d_{i,t}^\lambda)$ and
$-g_h^\lambda(\theta_t^\lambda)=c_{i,t}^\lambda$.

For the output-space results, let the output dimension
$k\in\{1,2,\ldots\}$, let
$(\mathcal X,\mathcal B_{\mathcal X})$ and
$(\mathcal Y,\mathcal B_{\mathcal Y})$ be measurable spaces, and specialize
$(\mathsf Z,\mathcal Z)$ to the product $\sigma$-field.  Write
$z=(x_z,y_z)\in\mathcal X\times\mathcal Y$.  Let
$f:\Theta\times\mathcal X\to\R^k$ and
$\varphi:\R^k\times\mathcal Y\to\R$, with
$f_\theta(x):=f(\theta,x)$.  Specialize the common output space and
evaluation rule by setting
$(\mathsf V,\mathcal B_{\mathsf V})=(\R^k,\mathcal B(\R^k))$,
$\mathfrak f(\lambda,\theta,(x,y))=f_\theta(x)$, and
$L_{\rm ev}(v,(x,y))=\varphi(v,y)$ for
$\lambda\in\Lambda$, $\theta\in\Theta$, $v\in\R^k$, and
$(x,y)\in\mathcal X\times\mathcal Y$.  Set
$\ell^{\rm ev}_{\lambda,z}(\theta)=:\ell_z(\theta)
=\varphi(f_\theta(x_z),y_z)$.  Suppose that, for every $\lambda\in\Lambda$,
there is a map $r_\lambda:\Theta\to\R$ such that
$\ell^{\rm tr}_{\lambda,z}=\ell_z+r_\lambda$ for every $z\in\mathsf Z$.
For $\lambda\in\Lambda$, $t\in[0,T]$, $x\in\mathcal X$, and
$z\in\mathsf Z$, whenever the prediction Jacobian and outer-loss gradient
exist, define
\[
 J_t^\lambda(x)=D_\theta f_{\theta_t^{P,\lambda}}(x):\Theta\to\R^k,
 \qquad
 a_t^\lambda(z)
 =\nabla_f\varphi(f_{\theta_t^{P,\lambda}}(x_z),y_z)\in\R^k.
\]
Here $\nabla_f$ is the Euclidean gradient in the first argument of $\varphi$,
and $^*$ denotes the Hilbert adjoint, so $J_t^\lambda(x)^*:\R^k\to\Theta$.
For $\lambda\in\Lambda$ and $0\leq s\leq t\leq T$, define the
segment-averaged Hessian $\overline B_t^\lambda:\Theta\to\Theta$, and let
$\overline U^\lambda(t,s):\Theta\to\Theta$ be the fundamental solution
generated by $-\overline B_t^\lambda$:
\begin{equation}\label{eq:generic-secant-propagator}
 \overline B_t^\lambda=\int_0^1
   \nabla^2F_{P^\eps,\lambda}
     (\theta_t^{P,\lambda}+u\Delta_t^\lambda)\,\dd u,
 \qquad
 \partial_t\overline U^\lambda(t,s)
 =-\overline B_t^\lambda\overline U^\lambda(t,s),
 \quad \overline U^\lambda(s,s)=I.
\end{equation}
Let $U^{0,\lambda}(t,s):\Theta\to\Theta$ be the unperturbed propagator,
generated by $-B_t^{0,\lambda}$ and initialized by
$U^{0,\lambda}(s,s)=I$.  For $\lambda\in\Lambda$, $t\in[0,T]$, and
$x\in\mathcal X$, define the chord Jacobians
$\overline J_t^{\Delta,\lambda}(x),
\overline J_t^{d,\lambda}(x):\Theta\to\R^k$ by
\begin{align}
 \overline J_t^{\Delta,\lambda}(x)
 &=\int_0^1D_\theta
   f_{\theta_t^{P,\lambda}+u\Delta_t^\lambda}(x)\,\dd u,
 &
 \overline J_t^{d,\lambda}(x)
 &=\int_0^1D_\theta
   f_{\theta_t^{P,\lambda}+ud_t^\lambda}(x)\,\dd u.
 \label{eq:chord-jacobians}
\end{align}

\begin{theorem}[Exact dynamic-kernel mismatch identity]
\label{thm:kernel-mismatch}
Fix $\lambda\in\Lambda$ and $x\in\mathcal X$.  Suppose
$F_{P^\eps,\lambda}$ is twice continuously
differentiable near every point
$\theta_s^{P,\lambda}+u\Delta_s^\lambda$ and
$f_\theta(x)$ is continuously differentiable near every point in
$\{\theta_t^{P,\lambda}+u\Delta_t^\lambda,
    \theta_t^{P,\lambda}+ud_t^\lambda\}$,
for $s,t\in[0,T]$ and $u\in[0,1]$.
Suppose, for $|h|$-almost every $z$ and every $s\in[0,T]$, that
$\theta\mapsto f_\theta(x_z)$ is differentiable at
$\theta_s^{P,\lambda}$ and $\varphi(\cdot,y_z)$ is differentiable at
$f_{\theta_s^{P,\lambda}}(x_z)$.
For $0\leq s\leq t\leq T$ and $|h|$-almost every $z\in\mathsf Z$, define the
chord and resummed dynamic-kernel operators
$\mathcal K^{\rm ch,\lambda}(t,s;x,z),
\mathcal K^{\rm res,\lambda}(t,s;x,z):
\R^k\to\R^k$ by
\begin{align}
 \mathcal K^{\rm ch,\lambda}(t,s;x,z)
 &=\overline J_t^{\Delta,\lambda}(x)
   \overline U^\lambda(t,s)J_s^\lambda(x_z)^*,
 \label{eq:chord-kernel}\\
 \mathcal K^{\rm res,\lambda}(t,s;x,z)
 &=\overline J_t^{d,\lambda}(x)
   U^{0,\lambda}(t,s)J_s^\lambda(x_z)^*.
 \label{eq:resummed-kernel}
\end{align}
Then
\begin{align}
 f_{\theta_t^{P^\eps,\lambda}}(x)-f_{\theta_t^{P,\lambda}}(x)
 &=-\eps\int_0^t\int
   \mathcal K^{\rm ch,\lambda}(t,s;x,z)a_s^\lambda(z)
   \,\dd h(z)\,\dd s,
 \label{eq:true-prediction-response}\\
 f_{\theta_t^{P,\lambda}+d_t^\lambda}(x)
  -f_{\theta_t^{P,\lambda}}(x)
 &=-\eps\int_0^t\int
   \mathcal K^{\rm res,\lambda}(t,s;x,z)a_s^\lambda(z)
   \,\dd h(z)\,\dd s.
 \label{eq:alo-prediction-response}\\
 f_{\theta_t^{P^\eps,\lambda}}(x)
  -f_{\theta_t^{P,\lambda}+d_t^\lambda}(x)
 &=-\eps\int_0^t\int
   \{\mathcal K^{\rm ch,\lambda}
      -\mathcal K^{\rm res,\lambda}\}(t,s;x,z)
   a_s^\lambda(z)\,\dd h(z)\,\dd s.
 \label{eq:kernel-mismatch-identity}
\end{align}
for every $t\in[0,T]$.
\end{theorem}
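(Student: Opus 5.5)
The plan is to reduce each prediction identity to a Duhamel representation of the corresponding parameter displacement, followed by the fundamental theorem of calculus along the chord in $\theta$.

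\emph{Step 1: Duhamel representations in parameter space.} As in \cref{prop:exact-deletion}, but for general $(P,h,\eps)$, we differentiate $\Delta_t^\lambda=\theta_t^{P^\eps,\lambda}-\theta_t^{P,\lambda}$. Using $\nabla F_{P^\eps,\lambda}=\nabla F_{P,\lambda}+\eps g_h^\lambda$ (differentiation under the integral sign is assumed), we get
\[
\dot\Delta_t^\lambda=-\{\nabla F_{P^\eps,\lambda}(\theta_t^{P,\lambda}+\Delta_t^\lambda)-\nabla F_{P^\eps,\lambda}(\theta_t^{P,\lambda})\}-\eps g_h^\lambda(\theta_t^{P,\lambda})
=-\overline B_t^\lambda\Delta_t^\lambda-\eps g_h^\lambda(\theta_t^{P,\lambda}).
\]
The second equality uses the integral mean-value formula, which is justified because $F_{P^\eps,\lambda}$ is $C^2$ near the segment. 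The map $t\mapsto\overline B_t^\lambda$ is continuous, so the propagator $\overline U^\lambda$ exists, and variation of constants with $\Delta_0^\lambda=0$ gives
\[
\Delta_t^\lambda=-\eps\int_0^t\overline U^\lambda(t,s)g_h^\lambda(\theta_s^{P,\lambda})\,\dd s .
\]
The same argument applied to \eqref{eq:generic-resummed-response} gives
\[
d_t^\lambda=-\eps\int_0^tU^{0,\lambda}(t,s)g_h^\lambda(\theta_s^{P,\lambda})\,\dd s .
\]

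\emph{Step 2: output-space forcing.} Since $\ell^{\rm tr}_{\lambda,z}=\ell_z+r_\lambda$ and $h(\mathsf Z)=0$, the regularizer contributes nothing to the forcing: $\int\nabla r_\lambda\,\dd h=\nabla r_\lambda\, h(\mathsf Z)=0$. The chain rule, valid at $\theta_s^{P,\lambda}$ for $|h|$-a.e.\ $z$, then gives
\[
g_h^\lambda(\theta_s^{P,\lambda})=\int J_s^\lambda(x_z)^*a_s^\lambda(z)\,\dd h(z).
\]
This is a Bochner integral in the finite-dimensional space $\Theta$. A bounded linear operator commutes with it, so we may pull $\overline U^\lambda(t,s)$ or $U^{0,\lambda}(t,s)$ inside the $h$-integral.

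\emph{Step 3: map to predictions.} Because $f_\theta(x)$ is $C^1$ near the relevant segments, the fundamental theorem of calculus gives
\[
f_{\theta_t^{P,\lambda}+v}(x)-f_{\theta_t^{P,\lambda}}(x)=\int_0^1 D_\theta f_{\theta_t^{P,\lambda}+uv}(x)\,\dd u\;v .
\]
With $v=\Delta_t^\lambda$ the chord Jacobian is $\overline J_t^{\Delta,\lambda}(x)$, and with $v=d_t^\lambda$ it is $\overline J_t^{d,\lambda}(x)$. The chord Jacobian is a fixed linear map, so we insert the Duhamel formulas from Step 1 and move it inside the $s$- and $z$-integrals. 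This produces exactly \eqref{eq:true-prediction-response} and \eqref{eq:alo-prediction-response} with the kernels \eqref{eq:chord-kernel} and \eqref{eq:resummed-kernel}. Subtracting the two identities gives \eqref{eq:kernel-mismatch-identity}. Splitting that difference into two separate integrals is legitimate because each integral is finite, which is the standing well-definedness assumption.

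\emph{Expected main obstacle.} The difficulty is technical: justifying the interchange of the $h$-integral with the operators and the $s$-integral, and the measurability of $(s,z)\mapsto\mathcal K a$. I would handle this using the standing assumption that all Bochner integrals exist, together with the continuity in $s$ of $\overline U^\lambda$ and $U^{0,\lambda}$ (each is the propagator of a continuous generator). Everything else in the argument is linear ODE algebra.
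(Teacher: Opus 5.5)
Your proposal is correct and follows the paper's proof essentially step for step: Duhamel representations of $\Delta_t^\lambda$ and $d_t^\lambda$ through $\overline U^\lambda$ and $U^{0,\lambda}$, then the chain-rule rewriting $g_h^\lambda(\theta_s^{P,\lambda})=\int J_s^\lambda(x_z)^*a_s^\lambda(z)\,\dd h(z)$ with the regularizer dropping out because $h(\mathsf Z)=0$, then the chord fundamental theorem of calculus followed by subtraction. Your remarks on continuity of the generators and on moving the linear maps inside the Bochner integrals only spell out what the paper covers by its standing assumption that all required integrals are well defined.
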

Supplement Section~S.B proves \cref{thm:kernel-mismatch}.
Sun and Valaee
\cite[Propositions~3.1--3.3 and Equations~(5)--(13)]{sun2026kernel} reduce an
endpoint-removal influence system from parameter to dataset--output dimension
for their ridge-regularized linearized model under the stated convexity and
rank conditions.  Here \eqref{eq:kernel-mismatch-identity} exactly
compares nonlinear deleted and response trajectories, and
\cref{thm:two-layer-mean-field} verifies the $O(n^{-1})$ finite-horizon kernel
mismatch in \eqref{eq:mean-field-kernel-mismatch}.

Fix a realized sample.  For deletion $i$, write $Z_i=(x_i,y_i)$ and
$h_i=P_n-\delta_{Z_i}$, and put $\mathbf h=(h_i)_{i=1}^n$.  Let
$\mathcal K_i^{\rm ch,\lambda}$ and
$\mathcal K_i^{\rm res,\lambda}$ denote the kernels in
\cref{eq:chord-kernel,eq:resummed-kernel} specialized to deletion $i$.

$|h_i|$ denotes the total-variation measure of $h_i$.  For a measurable map
$\eta:\mathsf Z\to[0,\infty]$,
the $|h_i|$-essential supremum is the least $C\in[0,\infty]$ satisfying
$|h_i|(\{z\in\mathsf Z:\eta(z)>C\})=0$.  For empirical $h_i$, the essential supremum is the
maximum over atoms of positive $|h_i|$-mass, with value zero when $|h_i|=0$.
Since, for $i\in[n]$,
$h_i=n^{-1}\sum_{j\in[n]\setminus\{i\}}(\delta_{Z_j}-\delta_{Z_i})$ and
$\|h_i\|_{\TV}=|h_i|(\mathsf Z)$, the triangle inequality gives
$\|h_i\|_{\TV}\leq2(n-1)/n\leq2$.
For a nonempty set $\varnothing\ne\mathcal X_{\rm ev}\subseteq\mathcal X$,
let $\mathcal L_i^\lambda(t,s;x,z):\R^k\to\R^k$ be a linear operator for every
$i\in[n]$, $(t,\lambda)\in\cH$, $0\leq s\leq t$,
$x\in\mathcal X_{\rm ev}$, and $z\in\mathsf Z$.  Assume
$z\mapsto\|\mathcal L_i^\lambda(t,s;x,z)\|_{\op}$ is
$\mathcal Z$-measurable for every such $(i,t,s,\lambda,x)$ and define
the extended nonnegative uniform kernel seminorm
$\|\mathcal L\|_{\infty,\op;\mathcal X_{\rm ev},\mathbf h}\in[0,\infty]$ by
\begin{equation}\label{eq:uniform-output-kernel-norm}
 \|\mathcal L\|_{\infty,\op;\mathcal X_{\rm ev},\mathbf h}
 =\sup_{\substack{1\leq i\leq n,\ (t,\lambda)\in\cH,\ 0\leq s\leq t\\
                   x\in\mathcal X_{\rm ev}}}
   \operatorname*{ess\,sup}_{z\in\mathsf Z}^{|h_i|}
   \|\mathcal L_i^\lambda(t,s;x,z)\|_{\op}.
\end{equation}
For the chord and response kernels, set
$\mathcal L_i^\lambda=\mathcal K_i^{\rm ch,\lambda}
-\mathcal K_i^{\rm res,\lambda}$,
so that \cref{eq:uniform-output-kernel-norm} bounds the kernel difference
$\mathcal K_i^{\rm ch,\lambda}-\mathcal K_i^{\rm res,\lambda}$ uniformly over
$i$, $(t,\lambda)\in\cH$, $0\leq s\leq t$, $x\in\mathcal X_{\rm ev}$, and
$|h_i|$-almost every $z$.

\begin{theorem}[Output-space deletion bound]
\label{thm:output-space-deletion}
Fix a realized sample, $\cH\subset[0,T]\times\Lambda$, and a nonempty set of
evaluation inputs $\varnothing\ne\mathcal X_{\rm ev}\subseteq\mathcal X$.
Suppose the identities in \cref{thm:kernel-mismatch} hold for every
$i\in[n]$, $(t,\lambda)\in\cH$, and $x\in\mathcal X_{\rm ev}$.
Suppose there are finite nonnegative constants $A_{\rm tr}$ and
$D_{\rm dyn}$ such that
\begin{equation}\label{eq:output-kernel-envelopes}
 \sup_{\substack{1\leq i\leq n,\ (t,\lambda)\in\cH\\0\leq s\leq t}}
 \operatorname*{ess\,sup}_{z\in\mathsf Z}^{|h_i|}
 \|a_s^\lambda(z)\|\leq A_{\rm tr},
 \qquad
 \|\mathcal K^{\rm ch,\lambda}-\mathcal K^{\rm res,\lambda}\|
       _{\infty,\op;\mathcal X_{\rm ev},\mathbf h}\leq D_{\rm dyn}.
\end{equation}
Then
\begin{equation}\label{eq:output-deletion-bound}
 \sup_{\substack{1\leq i\leq n,\ (t,\lambda)\in\cH\\x\in\mathcal X_{\rm ev}}}
 \|f_{\theta_{-i,t}^\lambda}(x)
     -f_{\theta_t^\lambda+d_{i,t}^\lambda}(x)\|
 \leq\frac{2T}{n}A_{\rm tr}D_{\rm dyn}.
\end{equation}
If $x_i\in\mathcal X_{\rm ev}$ for every $i\in[n]$ and
$\varphi(\cdot,y_i)$ is continuously differentiable on a neighborhood of the
line segment joining $f_{\theta_t^\lambda+d_{i,t}^\lambda}(x_i)$ and
$f_{\theta_{-i,t}^\lambda}(x_i)$ for every $i$ and $(t,\lambda)\in\cH$, define
\[
 \bar a_{i,t}^{\rm out,\lambda}
 =\int_0^1\nabla_f\varphi\!\left(
 f_{\theta_t^\lambda+d_{i,t}^\lambda}(x_i)
 +u\{f_{\theta_{-i,t}^\lambda}(x_i)
      -f_{\theta_t^\lambda+d_{i,t}^\lambda}(x_i)\},y_i
 \right)\,\dd u.
\]
If, for a finite $A_{\rm ev}\geq0$,
$\|\bar a_{i,t}^{\rm out,\lambda}\|\leq A_{\rm ev}$ uniformly over
$1\leq i\leq n$ and $(t,\lambda)\in\cH$, then
\begin{equation}\label{eq:output-score-bound}
 \sup_{(t,\lambda)\in\cH}
 |\Score_n^{\LOO}(t,\lambda)-\widetilde\Score_n(t,\lambda)|
 \leq\frac{2T}{n}A_{\rm ev}A_{\rm tr}D_{\rm dyn}.
\end{equation}
\end{theorem}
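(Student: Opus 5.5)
The plan is to apply the mismatch identity \eqref{eq:kernel-mismatch-identity} of \cref{thm:kernel-mismatch} to each deletion and take norms inside the double integral. Specialize \cref{thm:kernel-mismatch} to deletion $i$, with $P=P_n$, $h=h_i=P_n-\delta_{Z_i}$ and $\eps=(n-1)^{-1}$. As recorded after \eqref{eq:generic-resummed-response}, this gives $\theta_t^{P^\eps,\lambda}=\theta_{-i,t}^\lambda$ and $\theta_t^{P,\lambda}+d_t^\lambda=\theta_t^\lambda+d_{i,t}^\lambda$. Hence, for $(t,\lambda)\in\cH$ and $x\in\mathcal X_{\rm ev}$,
\[
 f_{\theta_{-i,t}^\lambda}(x)-f_{\theta_t^\lambda+d_{i,t}^\lambda}(x)
 =-\frac{1}{n-1}\int_0^t\int\{\mathcal K_i^{\rm ch,\lambda}-\mathcal K_i^{\rm res,\lambda}\}(t,s;x,z)\,a_s^\lambda(z)\,\dd h_i(z)\,\dd s .
\]

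To bound the right-hand side, use $\|\int g\,\dd h_i\|\le\int\|g\|\,\dd|h_i|$. The integrand is bounded $|h_i|$-a.e. by the operator-norm product $\|\mathcal K_i^{\rm ch,\lambda}-\mathcal K_i^{\rm res,\lambda}\|_{\op}\|a_s^\lambda(z)\|$. By the two envelopes in \eqref{eq:output-kernel-envelopes}, this product is at most $D_{\rm dyn}A_{\rm tr}$ for $|h_i|$-a.e. $z$. The inner integral is therefore at most $D_{\rm dyn}A_{\rm tr}\|h_i\|_{\TV}$. The text before \eqref{eq:uniform-output-kernel-norm} gives the sharper bound $\|h_i\|_{\TV}\le 2(n-1)/n$, and we use it here rather than the cruder bound $2$. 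Integrating over $s\in[0,t]\subseteq[0,T]$ then gives
\[
 \frac{1}{n-1}\cdot T\cdot\frac{2(n-1)}{n}A_{\rm tr}D_{\rm dyn}=\frac{2T}{n}A_{\rm tr}D_{\rm dyn}.
\]
The right-hand side is uniform in $i$, $(t,\lambda)$ and $x$, so this is \eqref{eq:output-deletion-bound}.

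For the score bound, apply the fundamental theorem of calculus to $u\mapsto\varphi(v+u(w-v),y_i)$, with $v=f_{\theta_t^\lambda+d_{i,t}^\lambda}(x_i)$ and $w=f_{\theta_{-i,t}^\lambda}(x_i)$. This is valid because $\varphi(\cdot,y_i)$ is $C^1$ near the segment, and it gives $\ell^{\rm ev}_{\lambda,Z_i}(\theta_{-i,t}^\lambda)-\ell^{\rm ev}_{\lambda,Z_i}(\theta_t^\lambda+d_{i,t}^\lambda)=\langle\bar a_{i,t}^{\rm out,\lambda},w-v\rangle$. By Cauchy--Schwarz, $\|\bar a_{i,t}^{\rm out,\lambda}\|\le A_{\rm ev}$, and \eqref{eq:output-deletion-bound} at $x=x_i\in\mathcal X_{\rm ev}$, each summand is at most $(2T/n)A_{\rm ev}A_{\rm tr}D_{\rm dyn}$. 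Averaging over $i$ in \eqref{eq:loo-score} and \eqref{eq:flow-alo-score} and taking the supremum over $\cH$ gives \eqref{eq:output-score-bound}.

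The only delicate step is the measure-theoretic norm estimate $\|\int g\,\dd h_i\|\le\|h_i\|_{\TV}\,\operatorname{ess\,sup}^{|h_i|}\|g\|$ for vector-valued Bochner integrals against a signed measure. Since $h_i$ is a finite combination of atoms, I would verify it directly: the integral is the sum $\sum_j h_i(\{Z_j\})\,g(Z_j)$ over atoms, and the triangle inequality bounds its norm by $\sum_j|h_i(\{Z_j\})|\max_j\|g(Z_j)\|$, which is the required estimate since only atoms of positive $|h_i|$-mass contribute. The same remark handles null atoms, where the essential supremum ignores values of $g$.
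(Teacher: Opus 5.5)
Your proposal is correct and follows the paper's proof essentially step for step: specialize the kernel-mismatch identity to deletion $i$, bound the integrand by $D_{\rm dyn}A_{\rm tr}$ for $|h_i|$-almost every $z$, and use $(n-1)^{-1}\|h_i\|_{\TV}\le 2/n$ together with $t\le T$. The score bound then follows, as in the paper, from the chord fundamental theorem of calculus for $\varphi(\cdot,y_i)$ and averaging over $i$; your explicit treatment of the atomic integral and of null atoms just spells out a step the paper leaves implicit.
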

Supplement Section~S.B proves \cref{thm:output-space-deletion}.

The kernel-difference premise in \eqref{eq:output-kernel-envelopes} admits a
factorwise check.  For deletion $i$, write
$\overline U_i^\lambda$, $\overline J_{i,t}^{\Delta,\lambda}$, and
$\overline J_{i,t}^{d,\lambda}$ for the deletion-specialized objects in
\cref{eq:generic-secant-propagator,eq:chord-jacobians}.  The deletion
specialization of $U^{0,\lambda}$ is the response propagator $U_i^\lambda$
defined in \eqref{eq:flow-alo-propagator}.
Suppose finite nonnegative constants
$J_{\rm tr},G_{\rm ch},J_d,E_J,E_U$ satisfy, uniformly over
$i\in[n]$, $(t,\lambda)\in\cH$, $0\leq s\leq t$, and
$x\in\mathcal X_{\rm ev}$,
\begin{equation}\label{eq:factorwise-kernel-envelopes}
 \begin{gathered}
 \operatorname*{ess\,sup}_{z\in\mathsf Z}^{|h_i|}
 \|J_s^\lambda(x_z)\|_{\op}\leq J_{\rm tr},\qquad
 \|\overline U_i^\lambda(t,s)\|_{\op}\leq G_{\rm ch},\\
 \|\overline J_{i,t}^{d,\lambda}(x)\|_{\op}\leq J_d,\qquad
 \|\overline U_i^\lambda(t,s)-U_i^\lambda(t,s)\|_{\op}\leq E_U,\\
 \|\overline J_{i,t}^{\Delta,\lambda}(x)
       -\overline J_{i,t}^{d,\lambda}(x)\|_{\op}\leq E_J.
 \end{gathered}
\end{equation}
The kernel definitions in \cref{eq:chord-kernel,eq:resummed-kernel}, the
bounds in \eqref{eq:factorwise-kernel-envelopes}, operator-norm
submultiplicativity, and
$\|J_s^\lambda(x_z)^*\|_{\op}=\|J_s^\lambda(x_z)\|_{\op}$ give
\begin{equation}\label{eq:factorwise-kernel-check}
 \begin{gathered}
 \begin{aligned}
 &\mathcal K_i^{\rm ch,\lambda}(t,s;x,z)
   -\mathcal K_i^{\rm res,\lambda}(t,s;x,z)\\
 &\quad=\{\overline J_{i,t}^{\Delta,\lambda}(x)
             -\overline J_{i,t}^{d,\lambda}(x)\}
          \overline U_i^\lambda(t,s)J_s^\lambda(x_z)^*\\
 &\qquad\quad+\overline J_{i,t}^{d,\lambda}(x)
          \{\overline U_i^\lambda(t,s)-U_i^\lambda(t,s)\}
          J_s^\lambda(x_z)^*.
 \end{aligned}\\
 \bigl\|\mathcal K^{\rm ch,\lambda}
          -\mathcal K^{\rm res,\lambda}\bigr\|
       _{\infty,\op;\mathcal X_{\rm ev},\mathbf h}
 \leq J_{\rm tr}\{G_{\rm ch}E_J+J_dE_U\}.
 \end{gathered}
\end{equation}
Equation~\eqref{eq:factorwise-kernel-check} verifies the second inequality in
\eqref{eq:output-kernel-envelopes} with
$D_{\rm dyn}=J_{\rm tr}\{G_{\rm ch}E_J+J_dE_U\}$.
A common $K_{\rm dyn}$ envelope for both kernels permits
$D_{\rm dyn}=2K_{\rm dyn}$; the conditions in
\cref{cor:quadratic-exactness} with $T'=T$ permit $D_{\rm dyn}=0$.

For the scalar-output two-layer mean-field specialization with nonnegative
weight decay, set $d_\lambda=1$, $k=1$, let the input dimension
$d\in\{1,2,\ldots\}$,
$\Lambda\subset[0,\infty)$ be the predetermined nonempty compact set, set
the deterministic $\lambda_{\max}=\max\Lambda\in[0,\infty)$, and specialize $\mathcal X$ to a measurable
subset of $\R^d$ equipped with the Borel sigma-field of $\mathcal X$.  Let the activation be
$\sigma:\R\to\R$.  For each width $m\geq1$, let
$\Theta_m=(\R\times\R^d)^m$ and fix the deterministic array
$q_0^{(m)}=(q_{1,0}^{(m)},\ldots,q_{m,0}^{(m)})\in\Theta_m$.  For the
width-$m$ model, take the generic parameter space and initialization to be
$\Theta=\Theta_m$ and $\theta_0=q_0^{(m)}$.  Identify the generic parameter
$\theta$ with the particle array $q=(q_1,\ldots,q_m)$.  For
$r\in\{1,\ldots,m\}$, write $q_r=(a_r,w_r)\in\R\times\R^d$.  Use
$q_0^{(m)}$ as the common
initialization of every width-$m$ full and deleted flow; each width-$m$
deletion response starts from $0\in\Theta_m$.  When $m$ is fixed, suppress
the superscript $(m)$ on the initial particles.  For $q\in\Theta_m$, let
$f_{m,q}:\mathcal X\to\R$ be the predictor and
$R_m:\Theta_m\to[0,\infty)$ the regularizer.  Equip $\Theta_m$ with the
inner product $\langle\cdot,\cdot\rangle_m:\Theta_m^2\to\R$ and the induced
norm $\|\cdot\|_m:\Theta_m\to[0,\infty)$.  For
$u=(u_r)_{r=1}^m,v=(v_r)_{r=1}^m\in\Theta_m$, define
$f_{m,q}$, $\langle\cdot,\cdot\rangle_m$, $\|\cdot\|_m$, and $R_m$ by
\begin{equation}\label{eq:mean-field-network}
 \begin{aligned}
 f_{m,q}(x)&=\frac1m\sum_{r=1}^m a_r\sigma(w_r^\top x),
 &\langle u,v\rangle_m&=\frac1m\sum_{r=1}^m u_r^\top v_r,\\
 \|u\|_m&=\langle u,u\rangle_m^{1/2},
 &R_m(q)&=\frac12\|q\|_m^2.
 \end{aligned}
\end{equation}
Define $\ell^{\rm ev}_{\lambda,z},\ell^{\rm tr}_{\lambda,z}:\Theta_m\to\R$ by
$\ell^{\rm ev}_{\lambda,z}(q)=:\ell_z(q)
=\varphi(f_{m,q}(x_z),y_z)$ and
$\ell^{\rm tr}_{\lambda,z}(q)=\ell_z(q)+\lambda R_m(q)$.
Although the candidate set remains $\Lambda$, \eqref{eq:mean-field-network}
and the definitions of $\ell^{\rm ev}_{\lambda,z}$ and
$\ell^{\rm tr}_{\lambda,z}$ define the objective and flows for every
$\lambda\in[0,\lambda_{\max}]$; every $\lambda$-derivative in the main paper or
supplement refers to the extended objectives and flows.  All gradients and
adjoints use $\langle\cdot,\cdot\rangle_m$.  Let
$\mathcal K_{m,i}^{\rm ch,\lambda}$ and
$\mathcal K_{m,i}^{\rm res,\lambda}$ denote the
deletion-$i$ kernels in \cref{eq:chord-kernel,eq:resummed-kernel} for the
width-$m$ model in \eqref{eq:mean-field-network}.  For
$\star\in\{{\rm ch},{\rm res}\}$, write
$\mathcal K_m^{\star,\lambda}
:=(\mathcal K_{m,i}^{\star,\lambda})_{i=1}^n$ for the corresponding
deletion-indexed kernel field.  For scalar $g:\R\to\R$, define the extended
sup norm $\|g\|_\infty=\sup_{u\in\R}|g(u)|\in[0,\infty]$, and set
$\sigma^{(0)}=\sigma$.

\begin{theorem}[Two-layer mean-field verification]
\label{thm:two-layer-mean-field}
For $m\geq1$, $n\geq2$, and a realized sample
$S=((x_j,y_j))_{j=1}^n\in\mathsf Z^n$,
\cref{eq:mean-field-network} defines the model and Hilbert geometry.  Fix
$T\in[0,\infty)$ and let
$\cH\subset[0,T]\times\Lambda$.  Suppose
$\sigma\in C^3(\R)$, $\varphi(\cdot,y)\in C^3(\R)$ for every
$y\in\mathcal Y$.  Let
$X,Q_0,(\mathsf S_j)_{j=0}^3$, and $(\mathsf L_j)_{j=1}^3$ be finite nonnegative
deterministic constants.  Suppose
$\max_{1\leq j\leq n}\|x_j\|\leq X$, and
$\max_{1\leq r\leq m}\|q_{r,0}\|\leq Q_0$.  Suppose also that
\begin{equation}\label{eq:mean-field-smoothness}
 \|\sigma^{(j)}\|_\infty\leq \mathsf S_j\quad(0\leq j\leq3),
 \qquad
 \sup_{\substack{u\in\R\\y\in\mathcal Y}}|\partial_u^j\varphi(u,y)|\leq \mathsf L_j
 \quad(1\leq j\leq3).
\end{equation}
Then, for $1\leq i\leq n$ and $\lambda\in\Lambda$, the paths
$\theta_\cdot^\lambda$,
$\theta_{-i,\cdot}^\lambda$, and the deletion response
$d_{i,\cdot}^\lambda$ in \cref{eq:flow-alo-response} exist uniquely on
$[0,T]$.  Moreover, there are
finite deterministic constants
$K_{{\rm ker},T},C_{{\rm ker},T}\in[0,\infty)$, depending only on
$T,\lambda_{\max},X,Q_0,(\mathsf S_j)_{j=0}^3$, and
$(\mathsf L_j)_{j=1}^3$, such that, uniformly for $\lambda\in\Lambda$:
\begin{align}
 \max_{\star\in\{{\rm ch},{\rm res}\}}
 \|\mathcal K_m^{\star,\lambda}\|_{\infty,\op;\{x\in\mathcal X:\|x\|\leq X\},\mathbf h}
 &\leq K_{{\rm ker},T},
 \label{eq:mean-field-kernel-envelope}\\
 \|\mathcal K_m^{\rm ch,\lambda}-\mathcal K_m^{\rm res,\lambda}\|
       _{\infty,\op;\{x\in\mathcal X:\|x\|\leq X\},\mathbf h}
 &\leq\frac{C_{{\rm ker},T}}{n-1}.
 \label{eq:mean-field-kernel-mismatch}
\end{align}
Consequently,
\begin{align}
 \sup_{\substack{1\leq i\leq n,\ (t,\lambda)\in\cH\\
                  x\in\mathcal X,\ \|x\|\leq X}}
 \|f_{m,\theta_{-i,t}^\lambda}(x)
     -f_{m,\theta_t^\lambda+d_{i,t}^\lambda}(x)\|
 &\leq\frac{2T\mathsf L_1C_{{\rm ker},T}}{n(n-1)},
 \label{eq:mean-field-prediction-rate}\\
 \sup_{(t,\lambda)\in\cH}
 |\Score_n^{\LOO}(t,\lambda)-\widetilde\Score_n(t,\lambda)|
 &\leq\frac{2T\mathsf L_1^2C_{{\rm ker},T}}{n(n-1)}.
 \label{eq:mean-field-score-rate}
\end{align}
\end{theorem}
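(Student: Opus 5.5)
The plan is to verify the premises of \cref{thm:kernel-mismatch,thm:output-space-deletion} for the width-$m$ model, with every constant expressed through per-particle quantities measured in the averaged geometry $\|\cdot\|_m$. In this geometry the empirical objective is $F_{P,\lambda}(q)=P\varphi(f_{m,q}(x_Z),y_Z)+\lambda R_m(q)$, and its gradient has particle components
\[
 \nabla_{q_r}F=P\bigl[\partial_u\varphi\,(\sigma(w_r^\top x),a_r\sigma'(w_r^\top x)x)\bigr]+\lambda q_r ,
\]
where the factor $m$ from the inner product cancels the $1/m$ in $f_{m,q}$. Thus the flow is a mean-field particle system in which each particle moves under a field depending on the others only through the $n$ scalar predictions $f_{m,q}(x_j)$.

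\textbf{Step 1: existence and a priori bounds.} The loss-gradient part of $\dot a_r$ is bounded by $\mathsf L_1\mathsf S_0$, and weight decay only contracts, so $|a_r(t)|\le Q_0+\mathsf L_1\mathsf S_0T$ for all $r$ and $t$. Using this, $\|\dot w_r\|\le\mathsf L_1\mathsf S_1X|a_r|$, which gives per-particle bounds $\max_r\|q_r(t)\|\le Q_T$ depending only on the allowed constants. The same argument applies to $P_{-i}$ and, by \eqref{eq:measure-deletion}, to any mass-one measure with $\|P\|_{\TV}\le3$. Local Lipschitzness together with these bounds gives global existence and uniqueness of the full and deleted flows. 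For the linear response $d_i$, existence follows once $\|B_{i,t}\|_{\op}$ is bounded.

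\textbf{Step 2: width-uniform operator bounds.} The Hessian acts as $(Hv)_r=\sum_s m^{-1}\partial^2_{q_rq_s}(\cdot)v_s$ (up to the metric factor). It splits into a block-diagonal part, with entries bounded by $\mathsf L_1$, $\mathsf S_j$, $X$ and $Q_T$, and a Gauss--Newton part $P[\partial_u^2\varphi\,J^*J]$. The operator norm of $J_s(x)$ in $\|\cdot\|_m$ is at most $(m^{-1}\sum_r\|\nabla_{q_r}(a_r\sigma)\|^2)^{1/2}\le C$, so $\|H\|_{\op}\le K$ uniformly in $m$ on a tube of per-particle radius. The same bound holds for third derivatives viewed as trilinear forms in $\|\cdot\|_m$: the diagonal part uses $\mathsf S_3$, and the rank-one parts use $\mathsf L_2,\mathsf L_3$. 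This gives a width-uniform Hessian-Lipschitz constant $L_3$ for the $\|\cdot\|_m$ norm, and Lipschitz Jacobians $\|D_\theta f_\theta(x)-D_\theta f_{\theta'}(x)\|_{\op}\le L_J\|\theta-\theta'\|_m$. The centered gradients satisfy $\|c_{i,t}\|_m\le M$ with $M$ width-free.

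\textbf{Step 3: kernel bounds.} By Gr\"onwall, $\|\overline U_i\|_{\op},\|U_i\|_{\op}\le e^{KT}$, so together with the Jacobian bounds \eqref{eq:mean-field-kernel-envelope} follows. Here the secant Hessian is evaluated at points that remain within the per-particle a priori bounds, because both flows and the segment between them do. For \eqref{eq:mean-field-kernel-mismatch} I will use \eqref{eq:factorwise-kernel-check}. The displacements satisfy $\|\Delta_{i,t}\|_m,\|d_{i,t}\|_m\le M T e^{KT}/(n-1)$. This follows from \eqref{eq:exact-deletion-ode} and \eqref{eq:flow-alo-duhamel} with the two-sided bound $\|\overline B\|\le K$, so no tube-closure margin is needed. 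Jacobian Lipschitzness then gives $E_J\le L_J(\|\Delta\|+\|d\|)/2=O(1/(n-1))$. For $E_U$ I will use the variation-of-constants formula $\overline U-U=-\int_s^t\overline U(t,\tau)(\overline B_\tau-B_\tau)U(\tau,s)\,d\tau$ together with $\|\overline B_\tau-B_\tau\|\le L_3\|\Delta_\tau\|/2$, which gives $E_U\le e^{2KT}TL_3MTe^{KT}/(2(n-1))$. Collecting these yields $C_{{\rm ker},T}$.

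\textbf{Step 4: consequences.} With $A_{\rm tr}=A_{\rm ev}=\mathsf L_1$ and $D_{\rm dyn}=C_{{\rm ker},T}/(n-1)$, \cref{thm:output-space-deletion} gives \eqref{eq:mean-field-prediction-rate} and \eqref{eq:mean-field-score-rate}. The identities of \cref{thm:kernel-mismatch} hold because $\sigma,\varphi\in C^3$ give all the required smoothness.

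\textbf{Main obstacle.} I expect Step 2 to be the main obstacle, specifically showing that the Hessian Lipschitz constant and Hessian operator norm in $\|\cdot\|_m$ do not grow with $m$. The block-diagonal terms are harmless. The off-diagonal Gauss--Newton and third-order cross terms need to be written as $P[\partial^j_u\varphi\,\bigotimes J]$ with $\|J\|_{\op}$ width-free, and their multilinear norms need to be bounded by Cauchy--Schwarz over particles. That requires careful bookkeeping of the $1/m$ factors.
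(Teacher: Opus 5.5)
\textbf{Gap: the Hessian-Lipschitz claim in Step~2 is false at width scale.} Step~2 asserts that $\nabla^2F_{P,\lambda}$ is Lipschitz with respect to $\|\cdot\|_m$ with a width-uniform constant, equivalently that $D^3F_{P,\lambda}$ is width-uniformly bounded as a trilinear form on $(\Theta_m,\|\cdot\|_m)$. This is false, and your bound on $E_U$ in Step~3 depends on it.

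The failing term is the one you call harmless: the block-diagonal contribution $\partial_u\varphi\,D^3f_{m,q}(x)[u,v,w]=\partial_u\varphi\,m^{-1}\sum_r D^3_{q_r}\{a_r\sigma(w_r^\top x)\}[u_r,v_r,w_r]$. It is controlled only by $m^{-1}\sum_r\|u_r\|\|v_r\|\|w_r\|$. For $u=v=w$ supported on one particle, $m^{-1}\sum_r\|u_r\|^3=m^{1/2}\|u\|_m^3$. Generically the form attains this order, for example through $a\sigma'''(w^\top x)(x^\top\delta)^3$.

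Said differently, the $m\to m$ norm of a block-diagonal operator is its largest block norm. Hence $\|\overline B_{i,t}^\lambda-B_{i,t}^\lambda\|_{m\to m}$ is governed by $\max_r\|\Delta_{i,t,r}^\lambda\|$, not by $\|\Delta_{i,t}^\lambda\|_m$. If a single particle moves by $\epsilon$, the $\|\cdot\|_m$ displacement is $\epsilon/\sqrt m$ while the Hessian moves by order $\epsilon$. Your estimate $\|\Delta_{i,t}^\lambda\|_m\le C/(n-1)$ therefore gives only $E_U=O(\sqrt m/(n-1))$, and $C_{{\rm ker},T}$ is not uniform in width.

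The rest of your bookkeeping holds at second order, where Cauchy--Schwarz does work:
\begin{itemize}
\item the Hessian operator bound in $\|\cdot\|_m$ is fine;
\item the $\|\cdot\|_m$-Lipschitz bound on the Jacobian used for $E_J$ is fine, on an output-weight strip;
\item at third order, the Gauss--Newton and rank-one cross terms are fine.
\end{itemize}
So your ``main obstacle'' paragraph locates the difficulty in the wrong place.

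\textbf{Missing idea: also control displacements in the block-maximum norm} $\|u\|_{{\rm b},\infty}=\max_r\|u_r\|$. This is the paper's route.
\begin{itemize}
\item The objective Hessian is width-uniformly bounded as an operator on $(\Theta_m,\|\cdot\|_{{\rm b},\infty})$. The Gauss--Newton part satisfies $|J_q(x)v|\le B_A\|v\|_m\le B_A\|v\|_{{\rm b},\infty}$, and the remaining part acts blockwise.
\item The forcing satisfies $\|c_{i,t}^\lambda\|_{{\rm b},\infty}\le2\mathsf L_1B_{A_T}$, because the weight-decay gradient cancels.
\item A Dini-derivative comparison in this norm, using the two-sided operator bound, then gives $\|\Delta_{i,t}^\lambda\|_{{\rm b},\infty},\|d_{i,t}^\lambda\|_{{\rm b},\infty}\le A_\Delta/(n-1)$ uniformly in $m$.
\item Combine this with the mixed estimate $\|\nabla^2F_{P,\lambda}(q)-\nabla^2F_{P,\lambda}(q')\|_{m\to m}\le\Xi\|q-q'\|_{{\rm b},\infty}$ on an output-weight strip. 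It is width-uniform because the diagonal blocks are compared blockwise.
\item This yields $\|\overline B_{i,t}^\lambda-B_{i,t}^\lambda\|_{m\to m}\le\tfrac12\Xi A_\Delta/(n-1)$, after which your Duhamel argument for $E_U$ goes through.
\end{itemize}
The same per-particle bound on $d_{i,t}^\lambda$ places the response segment $\theta_t^\lambda+ud_{i,t}^\lambda$ in the enlarged strip with bound $A_T+A_\Delta$. The paper uses this for the Jacobian envelopes along that segment.
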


Supplement Section~S.B proves \cref{thm:two-layer-mean-field}.

A direct substitution illustrates the primitive model class.  For
$y\in\{-1,1\}$, $\sigma(v)=\tanh v$, and
$\varphi(u,y)=\log(1+e^{-yu})$, one may take
\[
 (\mathsf S_0,\mathsf S_1,\mathsf S_2,\mathsf S_3)=(1,1,2,6),
 \qquad
 (\mathsf L_1,\mathsf L_2,\mathsf L_3)=(1,1/4,1/4),
\]
with $c_\varphi=\log2$ and $L_0=0$ in the later risk-curve specialization.

The mean-field approximation of Mei, Misiakiewicz and Montanari
\cite{mei2019meanfield} compares finite-width particle dynamics with a
distributional limit, while Bordelon and Pehlevan
\cite{bordelon2022selfconsistent} study evolving two-time kernels at infinite
width and leading finite-width kernel and prediction fluctuations
\cite{bordelon2023finitewidth}.  At each finite $m$,
\eqref{eq:mean-field-kernel-mismatch} compares the
exact-deletion and response kernels, both at width $m$, with a bound uniform in
$m$.

 \section{Estimating population-risk curves from the training sample}\label{sec:statistical}

Call a deterministic anchor $(t_0,\lambda_0)\in\cH$ integrable at sample size
$n$ if
\begin{equation}\label{eq:loo-anchor-integrability}
 P_0|\ell^{\rm ev}_{\lambda_0,Z}
 (\theta_{-i,t_0}^{\lambda_0})|<\infty
 \quad\text{almost surely for every }i\in[n].
\end{equation}
For such an anchor, the
uniform errors are
\begin{align*}
 &\sup_{(t,\lambda)\in\cH}
 |\widetilde\Score_n(t,\lambda)-\Score_n^{\LOO}(t,\lambda)|
 &&\text{\cref{thm:finite-horizon,thm:output-space-deletion}},\\
 &\sup_{(t,\lambda)\in\cH}
 |\Score_n^{\LOO}(t,\lambda)-\Risk_S^-(t,\lambda)|
 &&\text{\cref{thm:loo-concentration}},\\
 &\sup_{(t,\lambda)\in\cH}
 \left|[\Score_n^{\LOO}-\Risk_S^-](t,\lambda)
       -[\Score_n^{\LOO}-\Risk_S^-](t_0,\lambda_0)\right|
 &&\text{\cref{thm:chained-loo-concentration}},\\
 &\sup_{(t,\lambda)\in\cH}
 |\Risk_S^-(t,\lambda)-\Risk_S(t,\lambda)|
 &&\text{\cref{thm:jackknife-cancellation}}.
\end{align*}
\Cref{thm:oracle} gives the explicit regularity-event curve bound
\eqref{eq:risk-curve-recovery}.  \Cref{cor:regularity-event-oracle} gives
observable time-zero centering under chained concentration and the excess-risk
bound \eqref{eq:main-oracle-bound} for a measurable approximate \FlowALO
minimizer.

\subsection{Second-order comparison of deletion and full-sample risks}
\label{sec:second-order-deletion-risk}

Fix $(t,\lambda)\in\cH$.  For a probability measure $P$ on
$(\mathsf Z,\mathcal Z)$, let $\theta_t^{P,\lambda}$ solve
\[
 \dot\theta_t^{P,\lambda}
 =-\nabla F_{P,\lambda}(\theta_t^{P,\lambda}),
 \qquad \theta_0^{P,\lambda}=\theta_0.
\]
The average population risk of the deletion learners $\Risk_S^-$ and the
full-sample risk $\Risk_S$ were defined in
\cref{eq:minus-risk,eq:conditional-risk}.
For $i\in[n]$, put $h_i:=P_n-\delta_{Z_i}$; then
\eqref{eq:measure-deletion} gives
\begin{equation}\label{eq:delete-measure}
 P_{-i}=P_n+\frac{h_i}{n-1},
 \qquad \frac1n\sum_{i=1}^nh_i=0.
\end{equation}
The zero-average deletion-direction identity in \eqref{eq:delete-measure} is
the signed-measure counterpart of the first-order bias cancellation underlying
the classical delete-one jackknife
\cite{quenouille1956bias,miller1974jackknife}.

For a finite signed measure $h$ on $(\mathsf Z,\mathcal Z)$, let $|h|$ be the
total-variation measure of $h$, write
$\|h\|_{\TV}=|h|(\mathsf Z)$, and let $\mathcal M_0$ be the real normed space,
under $\|\cdot\|_{\TV}$, of finite signed measures $h$ on
$(\mathsf Z,\mathcal Z)$ satisfying $h(\mathsf Z)=0$.
For the realized sample $S$, let
$\mathfrak P_S$ be the sample-dependent set of probability measures on
$(\mathsf Z,\mathcal Z)$ forming the deletion chords
\begin{equation}\label{eq:deletion-chords}
 \mathfrak P_S
 =\{P_n+s h_i/(n-1):i\in[n],\ s\in[0,1]\}.
\end{equation}
Let $\mathcal V_S\subset\mathcal M_0$ be the finite-dimensional subspace
\(
 \mathcal V_S=\operatorname{span}\{h_1,\ldots,h_n\}
\)
and, for $i\in[n]$ and $(t,\lambda)\in\cH$, define the sample-dependent chord
map $\psi_{i,t,\lambda}:[0,(n-1)^{-1}]\to\R$ by
\begin{equation}\label{eq:deletion-chord-functional}
 \psi_{i,t,\lambda}(u)
 =P_0\ell^{\rm ev}_{\lambda,Z}
    (\theta_t^{P_n+u h_i,\lambda}).
\end{equation}

With $a=(n-1)^{-1}$, the definition of the deletion chords gives
\begin{equation}\label{eq:jackknife-chord-average}
 \Risk_S^-(t,\lambda)-\Risk_S(t,\lambda)
 =\frac1n\sum_{i=1}^n
   \{\psi_{i,t,\lambda}(a)-\psi_{i,t,\lambda}(0)\}.
\end{equation}

\begin{theorem}[Dynamic jackknife cancellation]\label{thm:jackknife-cancellation}
Suppose the deletion-chord differentiability condition
\begin{equation}\label{eq:deletion-chord-differentiability}
 \psi_{i,t,\lambda}\in C^2([0,(n-1)^{-1}];\R)
 \quad\text{for every }i\in[n]\text{ and }(t,\lambda)\in\cH
\end{equation}
holds.  Suppose also that the direct averaged first-order cancellation condition
\begin{equation}\label{eq:average-first-order-cancellation}
 \sup_{(t,\lambda)\in\cH}
 \left|\frac1n\sum_{i=1}^n\psi_{i,t,\lambda}'(0)\right|=0
\end{equation}
holds, and that there is a finite $C_{PP}\geq0$, possibly depending
on the realized sample $S$, such that \eqref{eq:measure-curvature-bound} holds uniformly over
$i\in[n]$ and $(t,\lambda)\in\cH$:
\begin{equation}\label{eq:measure-curvature-bound}
 \sup_{0\leq u\leq(n-1)^{-1}}|\psi_{i,t,\lambda}''(u)|
 \leq C_{PP}\|h_i\|_{\TV}^2.
\end{equation}
Then,
\begin{equation}\label{eq:cross-size-absolute}
 \sup_{(t,\lambda)\in\cH}
 |\Risk_S^-(t,\lambda)-\Risk_S(t,\lambda)|
 \leq\frac{2C_{PP}}{(n-1)^2}.
\end{equation}
Moreover, for any anchor $(t_0,\lambda_0)\in\cH$,
\begin{equation}\label{eq:cross-size-relative}
 \sup_{(t,\lambda)\in\cH}
 \left|
 [\Risk_S-\Risk_S^-](t,\lambda)
 -[\Risk_S-\Risk_S^-](t_0,\lambda_0)
 \right|
 \leq\frac{4C_{PP}}{(n-1)^2}.
\end{equation}
\end{theorem}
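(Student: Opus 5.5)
The plan is a second-order Taylor expansion of each chord map $\psi_{i,t,\lambda}$ at $u=0$. The first-order terms are killed by the averaged cancellation condition, and the quadratic remainders are controlled by the curvature bound.

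Fix $(t,\lambda)\in\cH$ and put $a=(n-1)^{-1}$. By \eqref{eq:deletion-chord-differentiability}, Taylor's theorem with Lagrange (or integral) remainder gives, for each $i\in[n]$,
\[
 \psi_{i,t,\lambda}(a)-\psi_{i,t,\lambda}(0)
 =a\,\psi_{i,t,\lambda}'(0)+\tfrac{a^2}{2}\psi_{i,t,\lambda}''(\xi_i)
\]
for some $\xi_i\in[0,a]$. Averaging over $i$ and inserting into \eqref{eq:jackknife-chord-average}, the linear term equals $a\cdot n^{-1}\sum_i\psi_{i,t,\lambda}'(0)$, which vanishes by \eqref{eq:average-first-order-cancellation}. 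We are left with
\[
 |\Risk_S^--\Risk_S|(t,\lambda)\le\frac{a^2}{2}\cdot\frac1n\sum_i C_{PP}\|h_i\|_{\TV}^2 .
\]
Here the curvature bound \eqref{eq:measure-curvature-bound} has been applied at $\xi_i$.

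Next, use $\|h_i\|_{\TV}\le 2(n-1)/n\le2$, which was recorded in Section~\ref{sec:defect} from $h_i=n^{-1}\sum_{j\ne i}(\delta_{Z_j}-\delta_{Z_i})$. This gives $\|h_i\|_{\TV}^2\le4$, so the bound becomes $\tfrac{a^2}{2}\cdot4C_{PP}=2C_{PP}(n-1)^{-2}$. Since none of the constants depends on $(t,\lambda)$, taking the supremum over $\cH$ yields \eqref{eq:cross-size-absolute}.

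For \eqref{eq:cross-size-relative}, apply the triangle inequality to the difference between the values at $(t,\lambda)$ and at the anchor $(t_0,\lambda_0)\in\cH$. Each of the two terms is bounded by $2C_{PP}(n-1)^{-2}$ from the absolute bound, which gives $4C_{PP}(n-1)^{-2}$.

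There is no real obstacle here. The only care needed is that the Taylor remainder point lies inside the interval $[0,(n-1)^{-1}]$ on which \eqref{eq:measure-curvature-bound} is assumed, and that the total-variation estimate $\|h_i\|_{\TV}\le2$ is used, rather than a cruder bound.
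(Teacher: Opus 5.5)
Your proposal is correct and follows essentially the same route as the paper's proof. The argument is a second-order Taylor expansion on each deletion chord, cancellation of the averaged linear term, the curvature bound combined with $\|h_i\|_{\TV}\le2$, and the triangle inequality for the anchored bound; the paper writes the remainder in integral form where you use the Lagrange form, which is equivalent here since $\psi_{i,t,\lambda}\in C^2$.
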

Supplement Section~S.C proves \cref{thm:jackknife-cancellation}.
The curvature premise has the following direct functional check.  The factor
$\|h_i\|_{\TV}^2$ in
\eqref{eq:measure-curvature-bound} comes from taking two derivatives in the
signed training-measure direction $h_i$.  A direct sufficient condition is
that, uniformly over $(t,\lambda)\in\cH$, the risk map
\[
 Q\longmapsto P_0\ell^{\rm ev}_{\lambda,Z}(\theta_t^{Q,\lambda})
\]
have a twice Fr\'echet-differentiable extension to a relative TV-norm
neighborhood of the whole deletion-chord set $\mathfrak P_S$ in
$P_n+\mathcal V_S$, and that
\[
 \left|D^2\!\left[Q\mapsto
 P_0\ell^{\rm ev}_{\lambda,Z}(\theta_t^{Q,\lambda})\right](P)[g,g]\right|
 \leq C_{PP}\|g\|_{\TV}^2,
 \qquad P\in\mathfrak P_S,\quad g\in\mathcal V_S.
\]
The chain rule along $P_n+u h_i$ then gives
\[
 \psi_{i,t,\lambda}''(u)
 =D^2[Q\mapsto P_0\ell^{\rm ev}_{\lambda,Z}(\theta_t^{Q,\lambda})]
   (P_n+u h_i)[h_i,h_i].
\]
Since $\mathcal V_S$ is finite dimensional and
$\mathfrak P_S\times\cH$ is compact, joint continuity in $(P,t,\lambda)$ of
the Hessian in its TV-bilinear operator norm is one simple fixed-sample check
for a finite, possibly sample-dependent $C_{PP}$.  Differentiability only at
$P_n$ is not enough, because \eqref{eq:measure-curvature-bound} ranges over
every point of every deletion chord.

Supplement Corollary~S.C.1 verifies
\eqref{eq:average-first-order-cancellation} and
\eqref{eq:measure-curvature-bound} from normalized-direction continuous
measure-response bounds.

Litman and Guo \cite[Lemma~6.1 and Theorem~F.6]{litman2026theory} use exchangeability to
identify the expectation of leave-one-out evaluation with the population
risk of a size-$(n-1)$ learner and express an expected generalization gap
through sample-replacement self-influences.
The twice-differentiable deletion chords,
\eqref{eq:average-first-order-cancellation}, and
\eqref{eq:measure-curvature-bound} in \cref{thm:jackknife-cancellation}
control the distinct realized-sample difference $\Risk_S^- - \Risk_S$.

Example~\ref{ex:quadratic-mean-flow} verifies
\eqref{eq:average-first-order-cancellation} directly and also gives an
absolute curve bound for an unbounded evaluation loss.
\begin{example}[Quadratic mean flow]\label[example]{ex:quadratic-mean-flow}
Let $(\mathsf Z,\mathcal Z)=(\R,\mathcal B(\R))$, $\Theta=\R$,
$d_\lambda=1$, $\Lambda=\{0\}$, $\cH=[0,T]\times\{0\}$, and $\theta_0=0$, and suppress the
singleton hyperparameter from the notation.  Set
\[
 \ell^{\rm tr}_{0,z}(\theta)=\ell^{\rm ev}_{0,z}(\theta)
 =\frac12(\theta-z)^2.
\]
Write $a_t=1-e^{-t}$, $\overline Z_n=P_nZ$, and
$\widehat m_{2,n}=P_nZ^2$.  For every mass-one finite signed measure $P$ with
$|P|Z^2<\infty$, the quantity $PZ$ is finite, and the identity
$\nabla F_P(\theta)=P(\theta-Z)=\theta-PZ$ follows because $P$ has mass one.
Hence
\eqref{eq:full-flow} becomes $\dot\theta_t^P=PZ-\theta_t^P$ and, with
$\theta_0^P=0$, has solution
$\theta_t^P=(1-e^{-t})PZ=a_tPZ$.  Consequently,
\begin{equation}\label{eq:quadratic-mean-deletion-response}
 \theta_{-i,t}-\theta_t=d_{i,t}
 =\frac{a_t(\overline Z_n-Z_i)}{n-1},
 \qquad \widetilde\Score_n(t)=\Score_n^{\LOO}(t).
\end{equation}
If $\mu=P_0Z$ and $m_2=P_0Z^2<\infty$, the risk functional
$\rho_t(P)=\tfrac12P_0(a_tPZ-Z)^2$ satisfies, for $h\in\mathcal V_S$,
\[
 D\rho_t(P_n)[h]=a_t(a_t\overline Z_n-\mu)hZ,
 \qquad D^2\rho_t(P_n)[h,h]=a_t^2(hZ)^2.
\]
The first derivative is linear in $h$, and $n^{-1}\sum_{i=1}^n h_i=0$.
For the fixed horizon $T$, define the finite sample-dependent constant
\begin{equation}\label{eq:quadratic-cpp-bound}
 C_{PP}:=(1-e^{-T})^2
 \max\left(
 \left\{\frac{(\overline Z_n-Z_i)^2}{\|h_i\|_{\TV}^2}:
 1\leq i\leq n,\ h_i\ne0\right\}\cup\{0\}\right).
\end{equation}
For every $i\in[n]$, $u\in[0,(n-1)^{-1}]$, and $t\in[0,T]$,
$h_iZ=\overline Z_n-Z_i$ and
$|\psi_{i,t,0}''(u)|=a_t^2(h_iZ)^2$.  If $h_i\ne0$, the maximum in
\eqref{eq:quadratic-cpp-bound} and $a_t^2\leq(1-e^{-T})^2$ give
\eqref{eq:measure-curvature-bound}.  If $h_i=0$, then $P_n+u h_i=P_n$ and
$|\psi_{i,t,0}''(u)|=0$.  Expanding $\Risk_S^-(t)-\Risk_S(t)$ and
$\widetilde\Score_n(t)-\Risk_S(t)$ gives
\begin{align}
 \Risk_S^-(t)-\Risk_S(t)
 &=\frac{a_t^2}{2n(n-1)^2}
      \sum_{i=1}^n(Z_i-\overline Z_n)^2,
 \label{eq:quadratic-jackknife-gap}\\
 \widetilde\Score_n(t)-\Risk_S(t)
 &=\frac12(\widehat m_{2,n}-m_2)
   +a_t\overline Z_n(\mu-\overline Z_n)\nonumber\\
 &\quad+\left\{\frac{a_t}{n-1}
              +\frac{a_t^2}{2(n-1)^2}\right\}
       (\widehat m_{2,n}-\overline Z_n^2).
 \label{eq:quadratic-risk-identity}
\end{align}
Supplement Section~S.C derives
\eqref{eq:quadratic-mean-deletion-response}--\eqref{eq:quadratic-risk-identity},
including the intervening derivative and curvature formulas.
Thus the individual displacement in
\eqref{eq:quadratic-mean-deletion-response} is ordinarily $O(n^{-1})$, while
the first-order term $D\rho_t(P_n)[h_i]$ cancels after averaging; if the sample
variance is $O(1)$, \eqref{eq:quadratic-jackknife-gap} is $O(n^{-2})$ uniformly
over $0\leq t\leq T$.
\end{example}

\begin{corollary}[Unbounded quadratic-loss risk curve]
\label[corollary]{cor:quadratic-unbounded-risk}
In \cref{ex:quadratic-mean-flow}, suppose deterministic $0<M_2,M_4<\infty$
satisfy $P_0Z^2\leq M_2$ and $P_0Z^4\leq M_4$.  For
$\delta\in(0,1)$ define
\[
 u_{n,\delta}=\sqrt{\frac{2M_2}{n\delta}},
 \qquad
 v_{n,\delta}=\sqrt{\frac{2M_4}{n\delta}}.
\]
With probability at least $1-\delta$,
\begin{multline}\label{eq:quadratic-unbounded-band}
 \sup_{0\leq t\leq T}|\widetilde\Score_n(t)-\Risk_S(t)|
 \leq \frac{v_{n,\delta}}2
 +\bigl(\sqrt{M_2}+u_{n,\delta}\bigr)u_{n,\delta}\\
 {}+\left\{\frac1{n-1}+\frac1{2(n-1)^2}\right\}
       (M_2+v_{n,\delta}).
\end{multline}
For fixed $\delta$, the bound in \eqref{eq:quadratic-unbounded-band} is
$O(n^{-1/2})$.
\end{corollary}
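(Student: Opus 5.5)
The plan is to start from the exact identity \eqref{eq:quadratic-risk-identity} in \cref{ex:quadratic-mean-flow}. It expresses $\widetilde\Score_n(t)-\Risk_S(t)$ through three sample quantities: $\widehat m_{2,n}-m_2$, $\overline Z_n(\mu-\overline Z_n)$, and the sample variance $\widehat m_{2,n}-\overline Z_n^2$. The $t$-dependence enters only through $a_t\in[0,1)$. Bounding $a_t\le1$ and $a_t^2\le1$ termwise with the triangle inequality, the supremum over $t\in[0,T]$ is dominated by
\[
 \tfrac12|\widehat m_{2,n}-m_2|+|\overline Z_n|\,|\mu-\overline Z_n|
 +\Bigl\{\tfrac1{n-1}+\tfrac1{2(n-1)^2}\Bigr\}\widehat m_{2,n}.
\]
Here I use $0\le\widehat m_{2,n}-\overline Z_n^2\le\widehat m_{2,n}$. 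This reduces the corollary to two deviation bounds, one for the sample mean and one for the sample second moment, and no uniformity argument in $t$ is needed.

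Next I apply Chebyshev's inequality to each of these two averages, allotting failure probability $\delta/2$ to each. Since $\operatorname{Var}(Z)\le P_0Z^2\le M_2$, we have $\Pp(|\overline Z_n-\mu|>u_{n,\delta})\le M_2/(n u_{n,\delta}^2)=\delta/2$. Since $\operatorname{Var}(Z^2)\le P_0Z^4\le M_4$, we have $\Pp(|\widehat m_{2,n}-m_2|>v_{n,\delta})\le M_4/(n v_{n,\delta}^2)=\delta/2$. A union bound then puts both events $|\overline Z_n-\mu|\le u_{n,\delta}$ and $|\widehat m_{2,n}-m_2|\le v_{n,\delta}$ in force with probability at least $1-\delta$.

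On that intersection, $|\mu|\le\sqrt{m_2}\le\sqrt{M_2}$ by Jensen, so $|\overline Z_n|\le\sqrt{M_2}+u_{n,\delta}$. The middle term is then at most $(\sqrt{M_2}+u_{n,\delta})u_{n,\delta}$. Also $\widehat m_{2,n}\le m_2+v_{n,\delta}\le M_2+v_{n,\delta}$, which bounds the last term by $\{(n-1)^{-1}+2^{-1}(n-1)^{-2}\}(M_2+v_{n,\delta})$. The first term is at most $v_{n,\delta}/2$. Summing the three gives \eqref{eq:quadratic-unbounded-band}.

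For the rate: with $\delta$ fixed, $u_{n,\delta}$ and $v_{n,\delta}$ are $O(n^{-1/2})$, and the remaining term is $O(n^{-1})$, so the whole bound is $O(n^{-1/2})$. The only delicate point is bookkeeping: each deviation must get exactly half of $\delta$ so that the stated constants $2M_2$ and $2M_4$ match. Using the crude bound $\widehat m_{2,n}-\overline Z_n^2\le\widehat m_{2,n}$ instead of a variance concentration bound is what makes $M_2+v_{n,\delta}$ appear. No step is a real obstacle, since the identity \eqref{eq:quadratic-risk-identity} is taken as given from the example.
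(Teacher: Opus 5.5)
Your proposal is correct and follows the paper's proof step for step. Both arguments apply Chebyshev to $\overline Z_n$ and $\widehat m_{2,n}$ with failure probability $\delta/2$ each, take a union bound, and use $|\overline Z_n|\le\sqrt{M_2}+u_{n,\delta}$ and $0\le\widehat m_{2,n}-\overline Z_n^2\le M_2+v_{n,\delta}$ on the good event. They then substitute into \eqref{eq:quadratic-risk-identity} with $0\le a_t\le1$, which gives the bound uniformly in $t\in[0,T]$.
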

Supplement Section~S.C proves \cref{cor:quadratic-unbounded-risk};
\cref{ex:quadratic-mean-flow,cor:quadratic-unbounded-risk} need no global loss
or stability bound because $P_0Z^2\leq M_2$ and $P_0Z^4\leq M_4$ control the
empirical moments in \eqref{eq:quadratic-risk-identity}.

\subsection{Concentration of the exact LOO path}

For the generic concentration layer, we first introduce notation for a
candidate-indexed learner.
Let $(\mathcal U,\rho)$ be a predetermined nonempty compact metric candidate
space, and equip $\mathcal U$ with the Borel sigma-field generated by $\rho$.
For $N\geq1$ and $u\in\mathcal U$, let
$L_{N,u}:\mathsf Z^N\times\mathsf Z\to\R$ be the scalar loss obtained by
evaluating the learner indexed by $u$ under one common evaluation rule.  Let
$\mathsf s_N,\mathsf s_N'\in\mathsf Z^N$ denote generic size-$N$ samples, and
write $\mathsf s_N\sim \mathsf s_N'$ when the two samples differ in at most one
coordinate.  Assume $P_0L_{n-1,u}(S_{-i},Z)$ is finite for every
$i\in[n]$ and $u\in\mathcal U$, and define the
sample-dependent fields
$\Score_{n,\mathcal U}^{\LOO},\Risk_{S,\mathcal U}^-:\mathcal U\to\R$
by the samplewise formulas
\[
 \Score_{n,\mathcal U}^{\LOO}(u)
 =\frac1n\sum_{i=1}^nL_{n-1,u}(S_{-i},Z_i),\qquad
 \Risk_{S,\mathcal U}^-(u)
 =\frac1n\sum_{i=1}^nP_0L_{n-1,u}(S_{-i},Z).
\]
For exact LOO, the learner sample size is $N=n-1$, and the stability term is
$\beta_{n-1}$ from \eqref{eq:uniform-stability}.  The response in
\eqref{eq:flow-alo-response}, by
contrast, is propagated along the size-$n$ full-sample path through
\eqref{eq:resummed-B}.

The two concentration routes use the joint-measurability condition
\begin{equation}\label{eq:loo-joint-measurability}
 (\mathsf s_{n-1},z,u)\longmapsto
 L_{n-1,u}(\mathsf s_{n-1},z)
 \quad\text{is jointly measurable}.
\end{equation}

Assume there is a deterministic interval $I_{n-1}\subset\mathbb R$ of finite
length $B_{n-1}^{\rm rng}$ such that
\begin{equation}\label{eq:loss-range-length}
 L_{n-1,u}(\mathsf s_{n-1},z)\in I_{n-1}
 \quad\text{for every }u\in\mathcal U,\quad
 \mathsf s_{n-1}\in\mathsf Z^{n-1},\quad z\in\mathsf Z.
\end{equation}
For each $N\geq1$, define the deterministic extended-nonnegative global
uniform replace-one stability $\beta_N\in[0,\infty]$ by
\begin{equation}\label{eq:uniform-stability}
 \beta_N
 =\sup_{\substack{u\in\mathcal U,\ \mathsf s_N,\mathsf s_N'\in\mathsf Z^N,\ \mathsf s_N\sim \mathsf s_N'\\
                   z\in\mathsf Z}}
 |L_{N,u}(\mathsf s_N,z)-L_{N,u}(\mathsf s_N',z)|.
\end{equation}
Equation~\eqref{eq:uniform-stability} is a same-size replace-one form of global
uniform stability.  Bousquet and Elisseeff
\cite[Definition~6 and the following paragraph]{bousquet2002stability} define
the delete-one form, which implies the corresponding replace-one bound with
constant $2\beta$.  Hardt, Recht and Singer
\cite[Definition~2.1]{hardt2016train} use the same-size replace-one formulation
and analyze stochastic gradient methods under loss- and schedule-specific
conditions.
For iid Euclidean data, under their finite-log-Sobolev, evaluated-loss
dataset-gradient, error-stability, and linear-or-quadratic expected-loss-growth
conditions, Avelin and Viitasaari
\cite[Definitions~2.3 and~2.6, Assumption~3.1, Theorem~3.6, and
Remark~3.4]{avelin2023concentration} give two-sided pointwise concentration of
the LOO score about the full-sample learner's conditional risk.
\Cref{thm:loo-concentration,thm:chained-loo-concentration} instead control a
compact-index supremum about average deletion-learner risk before the separate
deletion-to-full transfer in \cref{thm:jackknife-cancellation}.
Suppose also that $\omega_{n-1}:[0,\infty)\to[0,\infty)$ is deterministic and
nondecreasing, $\omega_{n-1}(r)\downarrow0$ as $r\downarrow0$, and the
size-$(n-1)$ evaluated-loss process has, for every $r\geq0$ and
$u,v\in\mathcal U$, the modulus
\begin{equation}\label{eq:hyper-modulus}
 \rho(u,v)\leq r
 \quad\Longrightarrow\quad
 \sup_{\substack{\mathsf s_{n-1}\in\mathsf Z^{n-1}\\z\in\mathsf Z}}
 |L_{n-1,u}(\mathsf s_{n-1},z)-L_{n-1,v}(\mathsf s_{n-1},z)|
 \leq\omega_{n-1}(r).
\end{equation}
For $r>0$, define $\mathcal N(\mathcal U,\rho,r)$ as the minimum cardinality
of a finite $\mathcal T\subset\mathcal U$ such that every $u\in\mathcal U$
has $v\in\mathcal T$ with $\rho(u,v)\leq r$; compactness of $\mathcal U$ makes
$\mathcal N(\mathcal U,\rho,r)$ finite.
Every entropy integral in the manuscript with lower endpoint zero is
understood as an
improper integral: for every $D\in(0,\infty)$ and nonnegative measurable
$g:(0,D]\to[0,\infty]$, write
$\int_0^D g(u)\,\dd u:=\lim_{\varepsilon\downarrow0}
\int_\varepsilon^D g(u)\,\dd u$.

For the single-model flow, take $\mathcal U=\cH$, $u=(t,\lambda)$, and
$\rho=d_{\cH}$.  For $\mathsf s_N=(z_1,\ldots,z_N)$, define the empirical
measure $P_{\mathsf s_N}=N^{-1}\sum_{j=1}^N\delta_{z_j}$.  Then
whenever the corresponding flow exists through $t$, set
$L_{N,(t,\lambda)}(\mathsf s_N,z)=\ell^{\rm ev}_{\lambda,z}
(\theta_t^{P_{\mathsf s_N},\lambda})$.
Under the single-model specialization,
\eqref{eq:loo-joint-measurability} with $\mathcal U=\cH$ and
$u=(t,\lambda)$ is the required total joint-measurability condition.  By
\cref{eq:loo-score,eq:minus-risk},
$\Score_{n,\mathcal U}^{\LOO}(t,\lambda)=\Score_n^{\LOO}(t,\lambda)$ and
$\Risk_{S,\mathcal U}^-(t,\lambda)=\Risk_S^-(t,\lambda)$.
Whenever $\Score_n^{\LOO}(t,\lambda)$ and $\Risk_S^-(t,\lambda)$ are
real-valued, define the sample-dependent LOO
discrepancy process $D_S:\cH\to\R$ by
\begin{equation}\label{eq:absolute-loo-process}
 D_S(t,\lambda)=\Score_n^{\LOO}(t,\lambda)-\Risk_S^-(t,\lambda).
\end{equation}
\begin{theorem}[Uniform concentration of LOO]\label{thm:loo-concentration}
Assume \eqref{eq:loss-range-length}, finite $\beta_{n-1}$ in
\eqref{eq:uniform-stability}, the size-$(n-1)$ modulus
\eqref{eq:hyper-modulus}, and \eqref{eq:loo-joint-measurability}.  For
$r>0$ and $\delta\in(0,1)$, let $s_n(r,\delta),c_n^-\in[0,\infty)$ be the
finite deterministic concentration radius and bounded-difference coefficient
defined in \cref{eq:sn-definition,eq:bounded-difference-c}.  Then, with probability
at least $1-\delta$,
\begin{equation}\label{eq:loo-concentration-bound}
 \sup_{u\in\mathcal U}
 |\Score_{n,\mathcal U}^{\LOO}(u)-\Risk_{S,\mathcal U}^-(u)|
 \leq s_n(r,\delta),
\end{equation}
where
\begin{align}
 s_n(r,\delta)
 &=2\omega_{n-1}(r)
 +c_n^-\sqrt{\frac n2
 \log\frac{2\mathcal N(\mathcal U,\rho,r)}{\delta}},\label{eq:sn-definition}\\
 c_n^-&=\frac{B_{n-1}^{\rm rng}}n+\frac{2(n-1)}n\beta_{n-1}.
 \label{eq:bounded-difference-c}
\end{align}
\end{theorem}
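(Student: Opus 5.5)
The plan is a finite-net discretization, followed by McDiarmid's bounded-difference inequality at each net point and a union bound. Fix $r>0$ and let $\mathcal T\subset\mathcal U$ be a minimal $r$-net, so that $|\mathcal T|=\mathcal N(\mathcal U,\rho,r)$. For any $u\in\mathcal U$, pick $v\in\mathcal T$ with $\rho(u,v)\leq r$. The modulus \eqref{eq:hyper-modulus} holds uniformly over samples and evaluation points. Hence, summandwise,
\[
|\Score_{n,\mathcal U}^{\LOO}(u)-\Score_{n,\mathcal U}^{\LOO}(v)|\leq\omega_{n-1}(r),
\qquad
|\Risk_{S,\mathcal U}^-(u)-\Risk_{S,\mathcal U}^-(v)|\leq\omega_{n-1}(r).
\]
For the second bound, integrate the pointwise bound against $P_0$. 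This gives the term $2\omega_{n-1}(r)$, and reduces the supremum over $\mathcal U$ to a maximum over $\mathcal T$. Joint measurability \eqref{eq:loo-joint-measurability} makes each $D(v):=\Score_{n,\mathcal U}^{\LOO}(v)-\Risk_{S,\mathcal U}^-(v)$ a measurable function of $S$. It also makes the population integrals measurable in the sample, by Fubini applied to the bounded integrand. Boundedness follows from \eqref{eq:loss-range-length}.

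Next, I bound the conditional mean at each fixed $v$. Since the $Z_j$ are iid, $S_{-i}$ is independent of $Z_i$. So $\E L_{n-1,v}(S_{-i},Z_i)=\E P_0L_{n-1,v}(S_{-i},Z)$ for every $i$, and therefore $\E D(v)=0$. This centering is exact, with no stability cost, which is why the target is $\Risk^-$ rather than $\Risk_S$.

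Third, I verify the bounded-difference property of $S\mapsto D(v)$ with coefficient $c_n^-$ in each coordinate. Replace $Z_k$ by $Z_k'$ and consider the $k$-th summand. There the evaluation point changes but the training sample $S_{-k}$ does not, so $L_{n-1,v}(S_{-k},Z_k)$ moves by at most $B^{\rm rng}_{n-1}$. Its population counterpart $P_0L_{n-1,v}(S_{-k},Z)$ does not change at all. Combined, the $k$-th summand of $D(v)$ changes by at most $B^{\rm rng}_{n-1}/n$. For each $i\neq k$, the training sample $S_{-i}$ changes in one coordinate. By \eqref{eq:uniform-stability} with $N=n-1$, both $L_{n-1,v}(S_{-i},Z_i)$ and $P_0L_{n-1,v}(S_{-i},Z)$ then move by at most $\beta_{n-1}$. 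This gives $2\beta_{n-1}/n$ per index, over $n-1$ indices.

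Summing, each coordinate has oscillation at most $c_n^-$ as in \eqref{eq:bounded-difference-c}. This step is the delicate one: it must be checked that the range bound and the stability bound are applied to the correct summands without double counting. McDiarmid's inequality then gives
\[
\Pp\bigl(|D(v)|>\epsilon\bigr)\leq 2\exp\bigl(-2\epsilon^2/(n(c_n^-)^2)\bigr).
\]
Set $\epsilon=c_n^-\sqrt{(n/2)\log(2\mathcal N/\delta)}$ and take a union bound over the $\mathcal N(\mathcal U,\rho,r)$ net points. With probability at least $1-\delta$, $\max_{v\in\mathcal T}|D(v)|\leq\epsilon$. Adding the $2\omega_{n-1}(r)$ discretization error yields \eqref{eq:loo-concentration-bound}.

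The supremum event itself need not be measured directly. It contains the measurable event that the net maximum is at most $\epsilon$, which is all the statement requires.
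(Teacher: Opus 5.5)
Your proposal is correct and matches the paper's proof step for step: bounded differences with coefficient $c_n^-$, exact centering from independence of $Z_i$ and $S_{-i}$, McDiarmid at each net point with a union bound, and the $2\omega_{n-1}(r)$ discretization error. The only differences are minor. The paper treats the degenerate case $c_n^-=0$ separately (there $D(v)\equiv0$), and it proves the supremum event measurable via sample-path continuity and a countable dense subset, whereas you note that containing the measurable net event already suffices.
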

Supplement Section~S.C proves \cref{thm:loo-concentration}.
For the single-model flow specialization, Supplement Proposition~S.C.3 gives
gradient-flow sufficient conditions for both the finite stability premise
$\beta_{n-1}<\infty$ and the modulus in \eqref{eq:hyper-modulus}.  Under
sample-size-uniform metric-comparison, curvature, and primitive bounds at
fixed $T$, Supplement Proposition~S.C.3 gives
$\beta_{n-1}=O((n-1)^{-1})$ and $\omega_{n-1}(r)=O(r)$; thus the choice
$r=n^{-1/2}$ contributes $2\omega_{n-1}(r)=O(n^{-1/2})$ to
\eqref{eq:sn-definition}.  Supplement Theorem~S.C.2 gives a
horizon-independent absolute-route result under uniform positive contraction.
For $\mathcal U=\cH$, \cref{thm:loo-concentration} controls the LOO level;
relative concentration controls anchored increments and hence the risk curve
up to a selection-invariant common shift.  Under the standing
$d_{\cH}$-compactness of $\cH$, for each $N\geq1$ define the deterministic
extended-nonnegative global hyperparameter-Lipschitz constant
$L_N^{\rm hyp}\in[0,\infty]$ and mixed replace-one stability constant
$\beta_N^{\rm mix}\in[0,\infty]$ by
\begin{align}
 L_N^{\rm hyp}
 &=\sup_{\substack{p,p'\in\cH,\ p\ne p'\\
                    \mathsf s_N\in\mathsf Z^N,\ z\in\mathsf Z}}
 \frac{|L_{N,p}(\mathsf s_N,z)-L_{N,p'}(\mathsf s_N,z)|}
      {d_{\cH}(p,p')},\label{eq:hyper-lipschitz-constant}\\
 \beta_N^{\rm mix}
 &=\sup_{\substack{p,p'\in\cH,\ p\ne p'\\
                    \mathsf s_N,\mathsf s_N'\in\mathsf Z^N,\
                    \mathsf s_N\sim \mathsf s_N'\\ z\in\mathsf Z}}
 \frac{\left|\{L_{N,p}(\mathsf s_N,z)-L_{N,p}(\mathsf s_N',z)\}
 -\{L_{N,p'}(\mathsf s_N,z)-L_{N,p'}(\mathsf s_N',z)\}\right|}
 {d_{\cH}(p,p')}.
 \label{eq:mixed-stability}
\end{align}
When $\cH$ is a singleton, both suprema in
\cref{eq:hyper-lipschitz-constant,eq:mixed-stability} are defined to be zero.
Define the deterministic diameter $D_{\cH}\in[0,\infty)$ and entropy
integral $\mathfrak E(\cH,d_{\cH})\in[0,\infty]$ by
\begin{equation}\label{eq:entropy-integral}
 D_{\cH}=\operatorname{diam}(\cH,d_{\cH}),\qquad
 \mathfrak E(\cH,d_{\cH})
 =\int_0^{D_{\cH}}\sqrt{\log \mathcal N(\cH,d_{\cH},u)}\,\dd u.
\end{equation}

\begin{theorem}[Chained relative concentration of LOO]
\label{thm:chained-loo-concentration}
Assume $L_{n-1}^{\rm hyp},\beta_{n-1}^{\rm mix}<\infty$ for the quantities in
\cref{eq:hyper-lipschitz-constant,eq:mixed-stability}, assume
$\mathfrak E(\cH,d_{\cH})<\infty$ for the entropy integral in
\eqref{eq:entropy-integral}, and assume \eqref{eq:loo-joint-measurability}
with $\mathcal U=\cH$ and $u=(t,\lambda)$.  Define the
finite deterministic mixed-increment scale $a_n^{\rm mix}\in[0,\infty)$ by
\begin{equation}\label{eq:mixed-increment-constant}
 a_n^{\rm mix}=L_{n-1}^{\rm hyp}+(n-1)\beta_{n-1}^{\rm mix}.
\end{equation}
For a deterministic anchor $(t_0,\lambda_0)\in\cH$, define the
sample-dependent anchored increment process
$D_S^\circ(\mathord\cdot;t_0,\lambda_0):\cH\to\R$ by
\begin{equation}\label{eq:relative-loo-process}
\begin{aligned}
 D_S^\circ(t,\lambda;t_0,\lambda_0)
 &=\frac1n\sum_{i=1}^n\bigl[
   L_{n-1,(t,\lambda)}(S_{-i},Z_i)
   -L_{n-1,(t_0,\lambda_0)}(S_{-i},Z_i)\\
 &\qquad
   -P_0\{L_{n-1,(t,\lambda)}(S_{-i},Z)
   -L_{n-1,(t_0,\lambda_0)}(S_{-i},Z)\}\bigr].
\end{aligned}
\end{equation}
There is a universal constant $C_{\rm ch}\in(0,\infty)$ such that, for every
deterministic anchor $(t_0,\lambda_0)$ and every $\delta\in(0,1)$, the finite deterministic chained radius
$s_n^{\rm ch}(\delta)\in[0,\infty)$ is defined in
\eqref{eq:chained-loo-relative-bound}, and, with probability at least $1-\delta$,
\begin{equation}\label{eq:chained-loo-relative-bound}
 \sup_{(t,\lambda)\in\cH}|D_S^\circ(t,\lambda;t_0,\lambda_0)|
 \leq s_n^{\rm ch}(\delta)
 :=\frac{C_{\rm ch}a_n^{\rm mix}}{\sqrt n}
 \left\{\mathfrak E(\cH,d_{\cH})
       +D_{\cH}\sqrt{\log\frac4\delta}\right\}.
\end{equation}
If the deterministic anchor satisfies \eqref{eq:loo-anchor-integrability},
then $D_S:\cH\to\R$ is real-valued and
\[
 D_S^\circ(t,\lambda;t_0,\lambda_0)
 =D_S(t,\lambda)-D_S(t_0,\lambda_0),
 \qquad (t,\lambda)\in\cH.
\]
\end{theorem}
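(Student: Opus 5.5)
The plan is to treat $D_S^\circ(\cdot;t_0,\lambda_0)$ as a process indexed by $\cH$ that vanishes at the anchor. We bound its increments in a sub-Gaussian sense through a bounded-difference (McDiarmid) argument in the sample $S=(Z_1,\dots,Z_n)$, and then apply a generic chaining / Dudley entropy bound with a high-probability tail. For fixed $p,p'\in\cH$ write
\[
 G_{p,p'}(S)=\frac1n\sum_{i=1}^n\bigl[\Delta_{p,p'}(S_{-i},Z_i)-P_0\Delta_{p,p'}(S_{-i},Z)\bigr],
 \qquad
 \Delta_{p,p'}=L_{n-1,p}-L_{n-1,p'},
\]
so that $G_{p,p'}=D_S^\circ(p;t_0,\lambda_0)-D_S^\circ(p';t_0,\lambda_0)$. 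Joint measurability \eqref{eq:loo-joint-measurability} makes each $G_{p,p'}$ a measurable function of $S$. Since $\cH$ is compact and $p\mapsto L_{n-1,p}$ is uniformly Lipschitz, the process has continuous paths, so the supremum equals a supremum over a countable dense set and is measurable.

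\textbf{Bounded differences.} Replace $Z_k$ by $Z_k'$ and track how $G_{p,p'}$ changes.
\begin{itemize}
\item The $i=k$ summand changes only through its evaluation point. Its first term is bounded by $2L_{n-1}^{\rm hyp}d_{\cH}(p,p')$ and its centering term by $L_{n-1}^{\rm hyp}d_{\cH}(p,p')$, giving a contribution of order $L_{n-1}^{\rm hyp}d_{\cH}(p,p')/n$.
\item Each of the $n-1$ summands with $i\neq k$ sees a replace-one change of its training sample $S_{-i}$. Both the evaluated term and the $P_0$-centered term change by at most $\beta_{n-1}^{\rm mix}d_{\cH}(p,p')$ by \eqref{eq:mixed-stability}, contributing at most $2(n-1)\beta_{n-1}^{\rm mix}d_{\cH}(p,p')/n$.
\end{itemize}
Hence the bounded-difference constant is $c_k\le C a_n^{\rm mix}d_{\cH}(p,p')/n$ with $a_n^{\rm mix}$ as in \eqref{eq:mixed-increment-constant}. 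McDiarmid then gives
\[
 \Pp\bigl(|G_{p,p'}-\E G_{p,p'}|>x\bigr)\le 2\exp\!\bigl(-c\,n x^2/\{a_n^{\rm mix}d_{\cH}(p,p')\}^2\bigr).
\]

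\textbf{Centering.} The main obstacle is that $\E G_{p,p'}=0$ exactly. This holds by exchangeability: $Z_i$ is independent of $S_{-i}$, so conditioning on $S_{-i}$ and using Fubini (valid because the integrands are bounded in differences) gives $\E[\Delta_{p,p'}(S_{-i},Z_i)\mid S_{-i}]=P_0\Delta_{p,p'}(S_{-i},Z)$. The increments are therefore centered and sub-Gaussian with respect to the metric $(a_n^{\rm mix}/\sqrt n)\,d_{\cH}$. I would verify integrability of each summand pointwise, noting that only differences are needed, so the anchor's integrability is not required for this step.

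\textbf{Chaining and tail.} Because the process vanishes at the anchor, the standard high-probability Dudley bound for sub-Gaussian increment processes (e.g.\ via the Talagrand/van Handel tail form) gives, with probability at least $1-\delta$,
\[
 \sup_{p\in\cH}|D_S^\circ(p)|\le C_{\rm ch}\frac{a_n^{\rm mix}}{\sqrt n}\Bigl\{\mathfrak E(\cH,d_{\cH})+D_{\cH}\sqrt{\log(4/\delta)}\Bigr\}.
\]
The factor $4$ absorbs the two-sided constant. If $\cH$ is a singleton, both sides vanish.

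\textbf{Final identity.} Under \eqref{eq:loo-anchor-integrability}, write
\[
 L_{n-1,p}=L_{n-1,p_0}+(L_{n-1,p}-L_{n-1,p_0}),
\]
where the second term is bounded by $L_{n-1}^{\rm hyp}D_{\cH}$. Each $P_0L_{n-1,p}(S_{-i},Z)$ is then finite, so $D_S$ is real-valued, and linearity of the $P_0$-integral gives $D_S^\circ=D_S(p)-D_S(p_0)$.
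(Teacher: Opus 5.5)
Your proposal is correct and follows essentially the paper's argument: McDiarmid applied to the anchored increments with bounded-difference constant of order $a_n^{\rm mix}d_{\cH}(p,p')/n$, exact centering from the independence of $Z_i$ and $S_{-i}$, chaining from the anchor (where the process vanishes), and Lipschitz sample paths for separability and measurability. The paper simply carries out the dyadic chaining and union bounds explicitly instead of citing the tail form of Dudley's inequality. One small correction: in the $i=k$ summand the centering term does not change at all, because $S_{-k}$ is unaffected, so the bounded-difference constant is exactly $2a_n^{\rm mix}d_{\cH}(p,p')/n$.
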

Supplement Section~S.C proves \cref{thm:chained-loo-concentration}.
Supplement Proposition~S.C.4 gives
finite-horizon gradient-flow conditions under which
$L_N^{\rm hyp}=O(1)$ and $\beta_N^{\rm mix}=O(N^{-1})$, and hence
$a_n^{\rm mix}=O(1)$ under a sample-size-uniform metric-comparison constant
and sample-size-uniform displayed primitives.  The bounds from Supplement
Proposition~S.C.4 are finite-horizon; the constants implicit in
$L_N^{\rm hyp}=O(1)$ and $\beta_N^{\rm mix}=O(N^{-1})$ may depend on $T$.
Supplement
Theorem~S.C.2's horizon-independent result concerns the absolute route.
Equation
\eqref{eq:hyper-lipschitz-constant} propagates anchor integrability.

\subsection{Training-sample recovery of population-risk curves}

Patil, Wu and Tibshirani
\cite[Equation~(12) and Theorems~2--3]{patil2024failures} obtain asymptotically
path-uniform LOOCV consistency in proportional high-dimensional least squares.
Bellec and Tan
\cite[Theorem~2.1 and Corollary~2.3]{bellec2024uncertainty} estimate
conditional risk and justify selection over a fixed number of broad updates
in proportional Gaussian linear models.  At an endpoint, Adusumilli, Kasy and
Wilson \cite[Lemma~4]{adusumilli2026sure} show for ridge and lasso ERM in a
fixed-dimensional local-to-zero regime that $n$-fold CV is
asymptotically uniformly equivalent to SURE up to a data-dependent,
tuning-independent constant.  They also give the corresponding limiting
tuned-loss and truncated-risk conclusions
\cite[Theorem~1 and Corollary~1]{adusumilli2026sure}; see also
\cref{sec:related}.  \Cref{thm:oracle} gives a high-probability finite-sample
bound on a predetermined compact continuous-time domain for a smooth,
possibly nonconvex flow.  \Cref{thm:oracle} separates response approximation,
exact-LOO fluctuation, and cross-size terms and permits either absolute
recovery or recovery up to a sample-dependent common shift.

\begin{theorem}[Finite-sample population-risk curve recovery]
\label{thm:oracle}
Let $\tau_n\in[0,1)$ be deterministic, and let
\(
 \overline M_n,\overline L_{3,n},\overline\kappa_n,
 \overline G_{{\rm out},n},\overline C_{PP,n}
\)
be deterministic nonnegative scalars.  Suppose a measurable event
$\mathcal E_n^{\rm reg}$ satisfies
$\Pp(\mathcal E_n^{\rm reg})\geq1-\tau_n$.  On
$\mathcal E_n^{\rm reg}$, suppose \cref{ass:tube} and
\eqref{eq:evaluation-gradient-bound} hold.  On
$\mathcal E_n^{\rm reg}$, suppose also that
\cref{eq:deletion-chord-differentiability,eq:average-first-order-cancellation}
hold and that, for a finite $C_{PP}\geq0$,
\eqref{eq:measure-curvature-bound} holds uniformly over $i\in[n]$ and
$(t,\lambda)\in\cH$.  On $\mathcal E_n^{\rm reg}$, suppose furthermore
that the constants
$M,L_3,\kappa,G_{\rm out},C_{PP}$ satisfy
\begin{equation}\label{eq:regularity-event-envelopes}
 M\leq\overline M_n,\quad
 L_3\leq\overline L_{3,n},\quad
 \kappa\leq\overline\kappa_n,\quad
 G_{\rm out}\leq\overline G_{{\rm out},n},\quad
 C_{PP}\leq\overline C_{PP,n}.
\end{equation}
Assume globally one of the two enumerated sets of concentration hypotheses.
\begin{enumerate}
\item[\emph{Absolute.}] For $\mathcal U=\cH$ and $\rho=d_{\cH}$, a
deterministic interval $I_{n-1}$ of finite length
$B_{n-1}^{\rm rng}$ satisfies \eqref{eq:loss-range-length},
$\beta_{n-1}<\infty$ for the quantity in \eqref{eq:uniform-stability}, the
deterministic nondecreasing modulus $\omega_{n-1}$ satisfies
$\lim_{v\downarrow0}\omega_{n-1}(v)=0$ and
\eqref{eq:hyper-modulus}, and \eqref{eq:loo-joint-measurability} holds.
Fix $r>0$ with $\mathcal N(\cH,d_{\cH},r)<\infty$.
\item[\emph{Chained.}] The quantities in
\cref{eq:hyper-lipschitz-constant,eq:mixed-stability} satisfy
$L_{n-1}^{\rm hyp},\beta_{n-1}^{\rm mix}<\infty$, the entropy integral in
\eqref{eq:entropy-integral} satisfies
$\mathfrak E(\cH,d_{\cH})<\infty$, and
\eqref{eq:loo-joint-measurability} holds with $\mathcal U=\cH$ and
$u=(t,\lambda)$.  A deterministic anchor $(t_0,\lambda_0)\in\cH$ satisfies
\eqref{eq:loo-anchor-integrability}.
\end{enumerate}
For the absolute route, set the deterministic $C_S=0$ and the finite
deterministic radius $s_n^\star(\delta)=s_n(r,\delta)$; for the chained
route, set the sample-dependent $C_S=D_S(t_0,\lambda_0)$ and the finite
deterministic radius $s_n^\star(\delta)=s_n^{\rm ch}(\delta)$.
Thus $s_n^\star:(0,1)\to[0,\infty)$.  For $\delta\in(0,1-\tau_n)$, define
the finite deterministic risk-curve recovery radius
$\mathfrak r_n:(0,1-\tau_n)\to[0,\infty)$ by
\begin{equation}\label{eq:regularity-event-radius}
 \mathfrak r_n(\delta)
 =\frac{\overline G_{{\rm out},n}\overline L_{3,n}
              \overline M_n^2}{2(n-1)^2}
       \mathcal J_{\overline\kappa_n}(T)
  +s_n^\star(\delta)
  +\frac{2\overline C_{PP,n}}{(n-1)^2}.
\end{equation}
Then there is a measurable event $\mathcal O_{n,\delta}$ with
$\Pp(\mathcal O_{n,\delta})\geq1-\delta-\tau_n$ such that, on
$\mathcal O_{n,\delta}$,
\begin{equation}\label{eq:risk-curve-recovery}
 \sup_{(t,\lambda)\in\cH}
 \big|\widetilde\Score_n(t,\lambda)-\Risk_S(t,\lambda)-C_S\big|
 \leq\mathfrak r_n(\delta).
\end{equation}
\end{theorem}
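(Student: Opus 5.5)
The plan is a triangle-inequality decomposition along the three transfers of \cref{fig:theorem-dependency}, carried out on an intersection of two events. First define $\mathcal A_{n,\delta}$ as the high-probability event supplied by the chosen concentration route. In the absolute route, \cref{thm:loo-concentration} with $\mathcal U=\cH$ and $\rho=d_{\cH}$ gives $\sup_{\cH}|D_S|\leq s_n(r,\delta)$ with probability at least $1-\delta$. In the chained route, \cref{thm:chained-loo-concentration} gives $\sup_{\cH}|D_S^\circ(\cdot;t_0,\lambda_0)|\leq s_n^{\rm ch}(\delta)$ with probability at least $1-\delta$. Anchor integrability \eqref{eq:loo-anchor-integrability} makes $D_S$ real-valued, and then $D_S^\circ=D_S-D_S(t_0,\lambda_0)=D_S-C_S$. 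Set $\mathcal O_{n,\delta}=\mathcal A_{n,\delta}\cap\mathcal E_n^{\rm reg}$. A union bound gives $\Pp(\mathcal O_{n,\delta})\geq1-\delta-\tau_n$.

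On $\mathcal O_{n,\delta}$, I would write
\[
 \widetilde\Score_n-\Risk_S-C_S
 =(\widetilde\Score_n-\Score_n^{\LOO})+(D_S-C_S)+(\Risk_S^--\Risk_S)
\]
pointwise on $\cH$ and bound each term uniformly.

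\begin{itemize}
\item \emph{First term.} On $\mathcal E_n^{\rm reg}$, \cref{ass:tube} and \eqref{eq:evaluation-gradient-bound} hold, so \eqref{eq:score-approximation-bound} bounds it by $G_{\rm out}L_3M^2(n-1)^{-2}\mathcal J_\kappa(T)/2$. To pass to the deterministic envelopes in \eqref{eq:regularity-event-envelopes}, I use that $\mathcal J_\kappa(T)$ is nondecreasing in $\kappa$. This follows from the series $\mathcal J_\kappa(T)=\int_0^T\phi_\kappa(s)^2e^{\kappa(T-s)}\,\dd s$, which is exactly what the proof of \cref{thm:finite-horizon} integrates, and whose integrand is monotone in $\kappa$.
\item \emph{Second term.} This is at most $s_n^\star(\delta)$ by the choice of $\mathcal A_{n,\delta}$.
\item \emph{Third term.} On $\mathcal E_n^{\rm reg}$ the hypotheses of \cref{thm:jackknife-cancellation} hold, so \eqref{eq:cross-size-absolute} gives at most $2C_{PP}(n-1)^{-2}\leq2\overline C_{PP,n}(n-1)^{-2}$.
\end{itemize}

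Summing the three bounds yields \eqref{eq:regularity-event-radius}. Since the absolute bound \eqref{eq:cross-size-absolute} is used in both routes, the cheaper constant $2\overline C_{PP,n}$ suffices and the relative bound \eqref{eq:cross-size-relative} is not needed.

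The main obstacle is measurability and the legitimacy of the events, not the estimates. Two points need care.
\begin{itemize}
\item The concentration theorems are stated as probability statements, so I take $\mathcal A_{n,\delta}$ to be a measurable subevent of probability at least $1-\delta$ on which the bound holds. Their proofs reduce the supremum either to a finite net via the modulus or to a separable chaining argument, using \eqref{eq:loo-joint-measurability}.
\item The quantities $\widetilde\Score_n$ and $\Risk_S$ must be well defined on $\mathcal O_{n,\delta}$. Flow existence there comes from \cref{ass:tube} and \cref{lem:full-path-tube}. The integrability of $\Risk_S$ is part of the standing convention that every population-risk integrand is $P_0$-integrable.
\end{itemize}
If the concentration events are only outer-probability statements, I would instead take $\mathcal O_{n,\delta}$ to be a measurable subset of the intersection with the stated probability, which the theorem's phrasing "there is a measurable event" permits.
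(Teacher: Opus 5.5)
Your proposal is correct and matches the paper's proof. Both set $\mathcal O_{n,\delta}$ equal to the regularity event intersected with the measurable concentration event, then add the score-approximation bound (passing to the envelopes via monotonicity of $\mathcal J_\kappa(T)$ in $\kappa$), the concentration radius, and the absolute cross-size bound \eqref{eq:cross-size-absolute} in both routes. Your outer-probability fallback is not needed, because both concentration proofs establish measurability of the supremum directly; as stated it would also fail, since a set of outer probability at least $1-\delta$ need not contain a measurable subset of probability $1-\delta$.
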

Supplement Section~S.C proves \cref{thm:oracle}.
\begin{corollary}[Time-zero centering and same-sample selection]
\label[corollary]{cor:regularity-event-oracle}
Under the hypotheses and notation of \cref{thm:oracle}, fix
$\delta\in(0,1-\tau_n)$ and let $\mathcal O_{n,\delta}$ be the event supplied
by \cref{thm:oracle}.
\begin{enumerate}
\item If the chained route uses the deterministic integrable anchor
$(t_0,\lambda_0)=(0,\lambda_0)\in\cH$ and all flows start from the common
deterministic initialization in \eqref{eq:full-flow}, then, on
$\mathcal O_{n,\delta}$,
\begin{equation}\label{eq:time-zero-centered-risk-curve}
 \sup_{(t,\lambda)\in\cH}
 \left|\bigl\{\widetilde\Score_n(t,\lambda)
              -\widetilde\Score_n(0,\lambda_0)\bigr\}
       -\bigl\{\Risk_S(t,\lambda)-\Risk_S(0,\lambda_0)\bigr\}\right|
 \leq\mathfrak r_n(\delta).
\end{equation}
\item For either concentration route, let
$\zeta_n:\Omega\to[0,\infty)$ be $\mathcal F$-measurable.  If an
$\mathcal F/\mathcal B(\cH)$-measurable selector
$(\widehat t,\widehat\lambda):\Omega\to\cH$ satisfies
\begin{equation}\label{eq:approximate-score-minimizer}
 \widetilde\Score_n(\widehat t,\widehat\lambda)
 \leq\inf_{(t,\lambda)\in\cH}\widetilde\Score_n(t,\lambda)+\zeta_n,
\end{equation}
then, on $\mathcal O_{n,\delta}$,
\begin{equation}\label{eq:main-oracle-bound}
 \Risk_S(\widehat t,\widehat\lambda)
 -\inf_{(t,\lambda)\in\cH}\Risk_S(t,\lambda)
 \leq2\mathfrak r_n(\delta)+\zeta_n.
\end{equation}
\end{enumerate}
\end{corollary}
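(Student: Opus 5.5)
Both parts follow from \eqref{eq:risk-curve-recovery}, which holds on $\mathcal O_{n,\delta}$ with a constant $C_S$ that does not depend on $(t,\lambda)$. We work throughout on $\mathcal O_{n,\delta}$ and write $E(t,\lambda)=\widetilde\Score_n(t,\lambda)-\Risk_S(t,\lambda)-C_S$, so that $\sup_{\cH}|E|\leq\mathfrak r_n(\delta)$.

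\emph{Part 1.} The key fact is that at $t=0$ every path sits at the common initialization. For every $i$ we have $\theta_{-i,0}^{\lambda_0}=\theta_0^{\lambda_0}=\theta_0$, and the response satisfies $d_{i,0}^{\lambda_0}=0$. This gives
\[
\widetilde\Score_n(0,\lambda_0)=\Score_n^{\LOO}(0,\lambda_0)=\frac1n\sum_{i=1}^n\ell^{\rm ev}_{\lambda_0,Z_i}(\theta_0)
\]
and $\Risk_S^-(0,\lambda_0)=\Risk_S(0,\lambda_0)=P_0\ell^{\rm ev}_{\lambda_0,Z}(\theta_0)$. Two consequences follow. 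First, $C_S=D_S(0,\lambda_0)=\Score_n^{\LOO}(0,\lambda_0)-\Risk_S^-(0,\lambda_0)=\widetilde\Score_n(0,\lambda_0)-\Risk_S(0,\lambda_0)$; the anchor integrability \eqref{eq:loo-anchor-integrability} makes these quantities finite. Second, substituting this expression for $C_S$ into $E(t,\lambda)$ gives exactly the left-hand side of \eqref{eq:time-zero-centered-risk-curve}, so that bound is just \eqref{eq:risk-curve-recovery}.

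One point needs checking. Simply subtracting $E(0,\lambda_0)$ from $E(t,\lambda)$ would cost a factor $2$, so the argument must instead use the identification of $C_S$ above, which costs nothing. This identification is what makes the stated radius $\mathfrak r_n(\delta)$ correct rather than $2\mathfrak r_n(\delta)$.

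\emph{Part 2.} This is the standard selection argument. Fix any $(t,\lambda)\in\cH$. We chain three inequalities:
\begin{itemize}
\item[] $\Risk_S(\widehat t,\widehat\lambda)\leq\widetilde\Score_n(\widehat t,\widehat\lambda)-C_S+\mathfrak r_n(\delta)$, by \eqref{eq:risk-curve-recovery} at the selected point, which is legitimate because the bound is uniform over $\cH$;
\item[] $\widetilde\Score_n(\widehat t,\widehat\lambda)-C_S+\mathfrak r_n(\delta)\leq\widetilde\Score_n(t,\lambda)+\zeta_n-C_S+\mathfrak r_n(\delta)$, by the approximate-minimizer condition \eqref{eq:approximate-score-minimizer};
\item[] $\widetilde\Score_n(t,\lambda)+\zeta_n-C_S+\mathfrak r_n(\delta)\leq\Risk_S(t,\lambda)+2\mathfrak r_n(\delta)+\zeta_n$, by \eqref{eq:risk-curve-recovery} at $(t,\lambda)$.
\end{itemize}
The common shift $C_S$ cancels across the chain, so the argument works for either concentration route. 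Taking the infimum over $(t,\lambda)\in\cH$ then gives \eqref{eq:main-oracle-bound}.

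Two further checks remain. The infimum of $\Risk_S$ over $\cH$ is finite on the event, since $\Risk_S$ is within $\mathfrak r_n(\delta)$ of the real-valued function $\widetilde\Score_n-C_S$ on $\cH$. Measurability of $\widehat t,\widehat\lambda$ and $\zeta_n$ is needed only so the statement is an event inclusion; the inequality itself is pointwise on $\mathcal O_{n,\delta}$.

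I expect no real obstacle. The only delicate step is the identity $C_S=\widetilde\Score_n(0,\lambda_0)-\Risk_S(0,\lambda_0)$ in Part~1, which relies on the common deterministic initialization $\theta_0$ and the zero initial condition of the response.
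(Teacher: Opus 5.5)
Your proposal is correct and follows the paper's proof essentially step for step. In Part~1 you use the common initialization and $d_{i,0}^{\lambda_0}=0$ to identify $C_S=D_S(0,\lambda_0)=\widetilde\Score_n(0,\lambda_0)-\Risk_S(0,\lambda_0)$ and substitute this into \eqref{eq:risk-curve-recovery}, and in Part~2 you use the same three-inequality chain, in which the common shift $C_S$ cancels, before taking the infimum over $\cH$.
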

Supplement Section~S.C proves \cref{cor:regularity-event-oracle}.

In \cref{cor:mean-field-width-oracle}, $d_\lambda=1$ and $\lambda$ is
nonnegative weight decay.
Assume the hypotheses of \cref{thm:two-layer-mean-field} and deterministic
$c_\varphi\in\R$, $L_0\geq0$, and measurable
$\mathsf Z_0\subseteq\mathsf Z$ such that $P_0(\mathsf Z_0)=1$, $\|x\|\leq X$,
and $c_\varphi\leq\varphi(0,y)\leq c_\varphi+L_0$ on $\mathsf Z_0$.
Use the trace $\sigma$-field
$\mathcal Z_0=\{A\cap\mathsf Z_0:A\in\mathcal Z\}$ and
restrict $P_0$ to $\mathsf Z_0$.

For each $n\geq2$, fix a nonempty finite deterministic width set
$\mathcal W_n\subset\{1,2,\ldots\}$ and deterministic initial particles
$q_{r,0}^{(m)}\in\R\times\R^d$ satisfying, for some
deterministic $Q_0<\infty$,
$\max_{m\in\mathcal W_n,\,1\leq r\leq m}\|q_{r,0}^{(m)}\|\leq Q_0$.
For every $m\in\mathcal W_n$, fix a deterministic nonempty compact
$\cH_m\subset[0,T]\times\Lambda$ under
$d_1((t,\lambda),(t',\lambda'))=|t-t'|+|\lambda-\lambda'|$.
Equip $\mathcal U_n=\bigsqcup_{m\in\mathcal W_n}\{m\}\times\cH_m$ with the
finite topological-sum Borel field.  For $N\geq1$ and
$u=(m,t,\lambda)\in\mathcal U_n$,
define
$L_{N,u}(\mathsf s_N,(x,y))=
\varphi(f_{m,\theta_t^{P_{\mathsf s_N},\lambda}}(x),y)$ whenever the flow
exists, and require joint measurability on
$\mathsf Z_0^N\times\mathsf Z_0\times\mathcal U_n$.
Write $\widetilde\Score_{n,m}$ and $\Risk_{S,m}$ for the width-$m$
specializations on $\cH_m$.  For $m\in\mathcal W_n$ and $v>0$, define
$\mathcal N_m(v)=\mathcal N(\cH_m,d_1,v)$ and
$D_m=\operatorname{diam}(\cH_m,d_1)$.  For $\delta\in(0,1)$, set
\[
\begin{aligned}
 \mathfrak E_m
 &=\int_0^{D_m}\sqrt{\log \mathcal N_m(v)}\,\dd v,
 &D_{\mathcal W}&=\max_{m\in\mathcal W_n}D_m,\\
 \mathfrak E_{\mathcal W}
 &=\max_{m\in\mathcal W_n}\mathfrak E_m,
 &\mathfrak L_{n,\mathcal W,\delta}
 &=\log\frac{8|\mathcal W_n|}{\delta}.
\end{aligned}
\]
$\mathcal N_m,D_m,\mathfrak E_m,D_{\mathcal W},\mathfrak E_{\mathcal W}$ and
$\mathfrak L_{n,\mathcal W,\delta}$ may depend on $n$.
\begin{corollary}[Two-layer mean-field risk curves]
\label[corollary]{cor:mean-field-width-oracle}
For the two-layer model in \eqref{eq:mean-field-network} and the deterministic
finite width family $\mathcal W_n$, for $N\geq1$ write
$\mathsf s_N\sim\mathsf s_N'$ when
$\mathsf s_N,\mathsf s_N'\in\mathsf Z_0^N$ differ in at most one coordinate.
For $m\in\mathcal W_n$ and $N\geq1$, define
\begin{align*}
 \beta_{N,m}
 &=\sup_{\substack{(t,\lambda)\in\cH_m,\
                    \mathsf s_N,\mathsf s_N'\in\mathsf Z_0^N,\
                    \mathsf s_N\sim\mathsf s_N'\\ z\in\mathsf Z_0}}
 \big|L_{N,(m,t,\lambda)}(\mathsf s_N,z)
       -L_{N,(m,t,\lambda)}(\mathsf s_N',z)\big|,\\
 \beta_{N,m}^{\rm mix}
 &=\sup_{\substack{(t,\lambda),(t',\lambda')\in\cH_m,\
                    (t,\lambda)\ne(t',\lambda')\\
                    \mathsf s_N,\mathsf s_N'\in\mathsf Z_0^N,\
                    \mathsf s_N\sim\mathsf s_N',\ z\in\mathsf Z_0}}
 \frac{1}{d_1((t,\lambda),(t',\lambda'))}\\[-2pt]
 &\quad{}\times
 \left|\begin{aligned}
  &L_{N,(m,t,\lambda)}(\mathsf s_N,z)
   -L_{N,(m,t,\lambda)}(\mathsf s_N',z)\\
  &\quad-L_{N,(m,t',\lambda')}(\mathsf s_N,z)
   +L_{N,(m,t',\lambda')}(\mathsf s_N',z)
 \end{aligned}\right|.
\end{align*}
When $\cH_m$ is a singleton, define $\beta_{N,m}^{\rm mix}=0$.
There are finite nonnegative deterministic constants
$B_T,C_{\beta,T},C_{{\rm hyp},T}$ and $C_{\Delta,T},C_{PP,T}$, depending only
on $Q_0,X,T,\lambda_{\max},L_0$, $(\mathsf S_j)_{j=0}^3$, and
$(\mathsf L_j)_{j=1}^3$, such that
$L_{N,(m,t,\lambda)}(\mathsf s_N,z)$ lies in one deterministic interval of
length $B_T$, uniformly over $m\in\mathcal W_n$, $N\geq1$,
$(t,\lambda)\in\cH_m$, $\mathsf s_N\in\mathsf Z_0^N$, and $z\in\mathsf Z_0$.
Uniformly over $m\in\mathcal W_n$ and $N\geq1$, the bounds
$\beta_{N,m}\leq C_{\beta,T}/N$ and
$\beta_{N,m}^{\rm mix}\leq C_{\Delta,T}/N$ hold,
\eqref{eq:hyper-modulus} holds on
$(\mathcal U,\rho)=(\cH_m,d_1)$ with
$\omega_{n-1}(r)=C_{{\rm hyp},T}r$, and the
deletion-chord hypotheses of \cref{thm:jackknife-cancellation} hold with
curvature $C_{PP,T}$.  Define the finite deterministic
simultaneous concentration radius
$s_{n,\mathcal W}^{\rm mf,ch}:(0,1)\to[0,\infty)$ by
\begin{equation}\label{eq:mean-field-width-concentration}
 s_{n,\mathcal W}^{\rm mf,ch}(\delta)
 =\frac{B_T+2C_{\beta,T}}{\sqrt{2n}}
       \sqrt{\mathfrak L_{n,\mathcal W,\delta}}
 +\frac{C_{\rm ch}(C_{{\rm hyp},T}+C_{\Delta,T})}{\sqrt n}
  \left\{\mathfrak E_{\mathcal W}
           +D_{\mathcal W}\sqrt{\mathfrak L_{n,\mathcal W,\delta}}\right\}.
\end{equation}
For every $\delta\in(0,1)$, with probability at least $1-\delta$,
\begin{multline}\label{eq:mean-field-width-risk-curve}
 \max_{m\in\mathcal W_n}\sup_{(t,\lambda)\in\cH_m}
 |\widetilde\Score_{n,m}(t,\lambda)-\Risk_{S,m}(t,\lambda)|\leq\frac{2T\mathsf L_1^2C_{{\rm ker},T}}{n(n-1)}
 +\frac{2C_{PP,T}}{(n-1)^2}
 +s_{n,\mathcal W}^{\rm mf,ch}(\delta).
\end{multline}
\end{corollary}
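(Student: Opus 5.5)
Fix $\delta$ and decompose, for each $m\in\mathcal W_n$ and $(t,\lambda)\in\cH_m$,
\[
\widetilde\Score_{n,m}-\Risk_{S,m}
=\{\widetilde\Score_{n,m}-\Score^{\LOO}_{n,m}\}
+\{\Score^{\LOO}_{n,m}-\Risk^-_{S,m}\}
+\{\Risk^-_{S,m}-\Risk_{S,m}\}.
\]
The first term is bounded deterministically by \eqref{eq:mean-field-score-rate} of \cref{thm:two-layer-mean-field}. That bound is uniform in $m$ because $C_{{\rm ker},T}$ does not depend on width.

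The third term is bounded by \eqref{eq:cross-size-absolute} of \cref{thm:jackknife-cancellation}, once we verify its hypotheses with a width-free curvature $C_{PP,T}$.

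The middle term needs an \emph{absolute} bound, not merely one up to a shift. I get it by splitting at an anchor: pick any deterministic $p_m=(t_0,\lambda_0)\in\cH_m$ and write
\[
D_S(p)=D_S(p_m)+D_S^\circ(p;p_m).
\]
For the single-anchor term $D_S(p_m)$ I apply McDiarmid directly. The bounded-difference constant is $c_n^-=B_T/n+2(n-1)\beta_{n-1,m}/n\leq (B_T+2C_{\beta,T})/n$. This is the argument behind \cref{thm:loo-concentration} with $\mathcal U$ a singleton and $\omega\equiv0$. At confidence $\delta/(2|\mathcal W_n|)$ it gives
\[
|D_S(p_m)|\le \frac{B_T+2C_{\beta,T}}{\sqrt{2n}}\sqrt{\log\frac{4|\mathcal W_n|}{\delta}}.
\]
For the increment I apply \cref{thm:chained-loo-concentration} on $(\cH_m,d_1)$ at confidence $\delta/(2|\mathcal W_n|)$. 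There
\[
a^{\rm mix}_n\le C_{{\rm hyp},T}+(n-1)C_{\Delta,T}/(n-1)=C_{{\rm hyp},T}+C_{\Delta,T},
\]
and $\log(4\cdot 2|\mathcal W_n|/\delta)=\mathfrak L_{n,\mathcal W,\delta}$. A union bound over the $2|\mathcal W_n|$ events then gives \eqref{eq:mean-field-width-concentration}, since both logarithms are dominated by $\mathfrak L_{n,\mathcal W,\delta}$. Joint measurability on $\mathsf Z_0$ is assumed, and anchor integrability follows from the bounded range.

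The remaining work, and the main obstacle, is proving the width-uniform primitive constants. I would first establish a priori bounds on the mean-field flow for any empirical (or signed, mass-one, TV-bounded) training law $Q$ on $\mathsf Z_0$, with $\lambda\in[0,\lambda_{\max}]$.

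\textbf{Particle bounds.} Gradient flow in $\langle\cdot,\cdot\rangle_m$ moves each particle by
\[
\dot a_r=-Q[\varphi'\,\sigma(w_r^\top x)]-\lambda a_r,\qquad
\dot w_r=-Q[\varphi'\,a_r\sigma'(w_r^\top x)x]-\lambda w_r,
\]
with no $1/m$ factor because of the scaled inner product. Since $|\varphi'|\le\mathsf L_1$, $\|\sigma\|_\infty\le\mathsf S_0$ and $\|\sigma'\|_\infty\le\mathsf S_1$, Grönwall gives $\max_r\|q_{r,t}\|\le Q_T$, a constant free of $m$ and $N$. This also yields existence of all flows.

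\textbf{Range.} The bound $|f_{m,q}(x)|\le Q_T\mathsf S_0$ together with $c_\varphi\le\varphi(0,y)\le c_\varphi+L_0$ gives an interval of length $B_T=L_0+2\mathsf L_1Q_T\mathsf S_0$.

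\textbf{Time and hyperparameter modulus.} The bound on $|\dot q_r|$ and a Grönwall bound on $\partial_\lambda q_r$ give $C_{{\rm hyp},T}$.

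\textbf{Replace-one stability.} Replacing one sample perturbs the forcing by $O(1/N)$ per particle in sup-over-particles norm. Lipschitz dynamics per particle then give $\sup_r\|q_r-q_r'\|\le C/N$ and hence $\beta_{N,m}\le C_{\beta,T}/N$.

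\textbf{Mixed stability.} Differentiating this difference in $(t,\lambda)$ gives $\beta^{\rm mix}_{N,m}\le C_{\Delta,T}/N$.

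\textbf{Jackknife hypotheses.} Here I would work with the first and second measure-derivatives of $Q\mapsto\theta^{Q,\lambda}_t$ along the chords $P_n+uh_i$. These are linear variational ODEs, and I would bound them in the particle sup-norm using the $C^3$ bounds $\mathsf S_j,\mathsf L_j$. The first derivative $\psi'_{i}(0)$ is linear in $h_i$, so averaging with $\sum_ih_i=0$ gives \eqref{eq:average-first-order-cancellation}. The second derivative is bounded by $C_{PP,T}\|h_i\|_{\TV}^2$ because the risk $P_0\varphi(f_m(\cdot))$ depends on particles only through averages, whose derivatives are width-free.

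The delicate point throughout is to use the $\max_r$ particle norm, not $\|\cdot\|_m$, in every Grönwall estimate. The Hessian of the per-particle dynamics couples particles only through averaged quantities, and these are controlled in $\max_r$ uniformly in $m$.
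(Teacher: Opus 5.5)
Your proposal is correct and follows essentially the same route as the paper's proof. Both use the same three-term decomposition; the same per-width split into a McDiarmid anchor term and a chained increment term, each at level $\delta/(2|\mathcal W_n|)$, combined by a union bound; and the same width-uniform verification of the primitive constants through Gr\"onwall and variational-ODE estimates in the particle block-maximum norm. The only difference is cosmetic: you use the full particle radius $Q_T$ where the paper uses the output-weight radius $A_T$ (for example in $B_T$), which changes the constants but not the argument.
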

Supplement Section~S.C proves \cref{cor:mean-field-width-oracle}.

\clearpage
\appendix
\renewcommand{\thesection}{S.\Alph{section}}
\renewcommand{\thetable}{S.\arabic{table}}
\renewcommand{\thefigure}{S.\arabic{figure}}
\numberwithin{equation}{section}
\section{Proofs for deletion dynamics and deterministic approximation}
\label{app:proofs}
\label{app:proofs-deterministic-approximation}

\begin{proof}[Proof of \cref{prop:exact-deletion}]
By \cref{eq:full-flow} with $P=P_{-i}$, the deleted path satisfies
$\dot\theta_{-i,t}^\lambda
=-\nabla F_{P_{-i},\lambda}(\theta_{-i,t}^\lambda)$, while
\cref{eq:full-flow} with $P=P_n$ gives
$\dot\theta_t^\lambda=-\nabla F_{P_n,\lambda}(\theta_t^\lambda)$.
Subtracting the full-sample equation from the deleted-path equation and using
$\Delta_{i,t}^\lambda=\theta_{-i,t}^\lambda-\theta_t^\lambda$ gives
\[
 \dot\Delta_{i,t}^\lambda
 =-\{\nabla F_{P_{-i},\lambda}
          (\theta_t^\lambda+\Delta_{i,t}^\lambda)
      -\nabla F_{P_{-i},\lambda}(\theta_t^\lambda)\}
   -\{\nabla F_{P_{-i},\lambda}(\theta_t^\lambda)
      -\nabla F_{P_n,\lambda}(\theta_t^\lambda)\}.
\]
The first brace equals
$\overline B_{i,t}^\lambda\Delta_{i,t}^\lambda$ by the fundamental
theorem of calculus.  By \cref{eq:deleted-gradient-hessian}, the second brace
equals $-(n-1)^{-1}c_{i,t}^\lambda$.  Therefore,
\cref{eq:exact-deletion-ode} holds.
\end{proof}

\begin{proof}[Proof of \cref{lem:full-path-tube}]
Fix $i\in[n]$ and $\lambda\in\Lambda$.  Along the full path,
$B_{i,t}^\lambda=\nabla^2F_{P_{-i},\lambda}(\theta_t^\lambda)$ and
$c_{i,t}^\lambda$ are continuous, so the linear response equation
\eqref{eq:flow-alo-response} has a unique solution on $[0,T]$
\cite[Theorem~3.2]{khalil2002nonlinear}.
Define the response-norm path $\gamma:[0,T]\to[0,\infty)$ by
$\gamma(t)=\|d_{i,t}^\lambda\|$.  For $t<T$, the upper-right Dini derivative
of $\gamma$,
$D^+\gamma(t):=\limsup_{\varepsilon\downarrow0}
\{\gamma(t+\varepsilon)-\gamma(t)\}/\varepsilon$ is the upper limiting
right slope and does not require differentiability; see Tao
\cite[Section~1.6]{tao2011measure}.

If $d_{i,t}^\lambda\neq0$, \eqref{eq:flow-alo-response} gives
\[
 D^+\gamma(t)
 =\frac{\langle d_{i,t}^\lambda,
       -B_{i,t}^\lambda d_{i,t}^\lambda
       +c_{i,t}^\lambda/(n-1)\rangle}
       {\|d_{i,t}^\lambda\|}.
\]
The curvature bound
\eqref{eq:one-sided-curvature} and the forcing bound
\eqref{eq:full-path-tube-closure-condition} then give
$D^+\gamma(t)\leq\kappa\gamma(t)+M/(n-1)$.  If $d_{i,t}^\lambda=0$,
\eqref{eq:flow-alo-response} and differentiability give
$d_{i,t+\varepsilon}^\lambda=\varepsilon c_{i,t}^\lambda/(n-1)+o(\varepsilon)$
as $\varepsilon\downarrow0$.  Therefore the
definition of $D^+$ gives
$D^+\gamma(t)=\lim_{\varepsilon\downarrow0}
\|d_{i,t+\varepsilon}^\lambda\|/\varepsilon
=\|c_{i,t}^\lambda\|/(n-1)\leq M/(n-1)$.  Hence
$D^+\gamma(t)\leq\kappa\gamma(t)+M/(n-1)$ holds for every $t<T$.  Apply the
scalar comparison lemma of Khalil
\cite[Lemma~3.4]{khalil2002nonlinear} to the scalar equation
$\dot\eta=\kappa\eta+M/(n-1)$ with $\eta(0)=\gamma(0)=0$, where
$\eta:[0,T]\to[0,\infty)$.  Khalil's Lemma 3.4 gives
$\gamma(t)\leq\eta(t)$ for $t<T$, while solving the scalar equation and using
\eqref{eq:phi-J-def} gives
$\eta(t)=M(n-1)^{-1}\int_0^t e^{\kappa(t-s)}\,ds
=M(n-1)^{-1}\phi_\kappa(t)$.  Continuity of $\gamma$ gives
$\gamma(T)\leq M(n-1)^{-1}\phi_\kappa(T)$.

By \cref{ass:tube}, the deleted vector field
$x\mapsto-\nabla F_{P_{-i},\lambda}(x)$ is continuously differentiable, and
hence locally Lipschitz, on $\mathfrak T_{r_{\rm tube}}^\lambda$.  The deleted
initial-value problem therefore has a unique local solution from $\theta_0$.
Put
$\Delta_{i,t}^\lambda=\theta_{-i,t}^\lambda-\theta_t^\lambda$, and stop the local
solution, if necessary, at the first time
$\|\Delta_{i,t}^\lambda\|=r_{\rm tube}$.  Before that time, the line segment from
$\theta_t^\lambda$ to
$\theta_{-i,t}^\lambda$ lies in $B(\theta_t^\lambda,r_{\rm tube})$, and the
exact deletion equation \eqref{eq:exact-deletion-ode}, the one-sided Hessian bound
\eqref{eq:one-sided-curvature}, and the forcing bound in
\eqref{eq:full-path-tube-closure-condition} yield
\[
 D^+\|\Delta_{i,t}^\lambda\|
 \leq\kappa\|\Delta_{i,t}^\lambda\|+(n-1)^{-1}M.
\]
The differential inequality gives
$\|\Delta_{i,t}^\lambda\|\leq(n-1)^{-1}M\phi_\kappa(t)<r_{\rm tube}$, so the stopping
boundary cannot be reached.  Choose $r$ with
$(n-1)^{-1}M\phi_\kappa(T)<r<r_{\rm tube}$.  Were the maximal deleted solution
to end at or before $T$, the maximal deleted solution would remain in the compact set
\[
 \{\theta_t^\lambda+v:0\leq t\leq T,\ v\in\Theta,\ \|v\|\leq r\}
 \subset\mathfrak T_{r_{\rm tube}}^\lambda.
\]
The continuation criterion for locally Lipschitz finite-dimensional ordinary
differential equations \cite[Theorem~3.3]{khalil2002nonlinear}
would then extend the maximal deleted solution, a contradiction.  Consequently,
the deleted solution exists on $[0,T]$ and
\cref{eq:full-path-tube-closure-bound} holds.  The segment claim follows from
convexity of $B(\theta_t^\lambda,r_{\rm tube})$ for each $t\in[0,T]$.
\end{proof}

\begin{proof}[Proof of \cref{thm:finite-horizon}]
Fix $i$ and $\lambda$.  The lower Hessian bound in
\eqref{eq:one-sided-curvature} and the integral form of the gradient
increment imply
\[
 \left\langle\Delta_{i,t}^\lambda,
 \nabla F_{P_{-i},\lambda}(\theta_t^\lambda+\Delta_{i,t}^\lambda)
 -\nabla F_{P_{-i},\lambda}(\theta_t^\lambda)\right\rangle
 \geq-\kappa\|\Delta_{i,t}^\lambda\|^2.
\]
Taking an upper right derivative of the norm in
\cref{eq:exact-deletion-ode} therefore yields
\(
 D^+\|\Delta_{i,t}^\lambda\|
 \leq\kappa\|\Delta_{i,t}^\lambda\|+(n-1)^{-1}M
\).
Equation~\eqref{eq:flow-alo-response} and the inequality
$B_{i,t}^\lambda\succeq-\kappa I$ from
\eqref{eq:one-sided-curvature} give
\(
 D^+\|d_{i,t}^\lambda\|\leq\kappa\|d_{i,t}^\lambda\|+(n-1)^{-1}M
\).
Let $\eta:[0,T]\to[0,\infty)$ solve
\begin{equation*}
 \tag{S.A.0a}\label{eq:scalar-comparison-ivp}
 \dot\eta(t)=\kappa\eta(t)+\frac{M}{n-1},\qquad \eta(0)=0.
\end{equation*}
Since $\|\Delta_{i,0}^\lambda\|=\|d_{i,0}^\lambda\|=0$, Khalil's
Dini-derivative comparison lemma
\cite[Lemma~3.4]{khalil2002nonlinear} applied to
\eqref{eq:scalar-comparison-ivp} gives
$\|\Delta_{i,t}^\lambda\|\leq\eta(t)$ and
$\|d_{i,t}^\lambda\|\leq\eta(t)$.  The scalar solution has the integral form
\begin{equation*}
 \tag{S.A.0b}\label{eq:scalar-comparison-integral-equation}
 \eta(t)=\frac{Mt}{n-1}+\kappa\int_0^t \eta(s)\,\dd s.
\end{equation*}
Since $\kappa,M\geq0$ by \cref{ass:tube}, the Gr\"onwall--Bellman inequality
\cite[Lemma~A.1]{khalil2002nonlinear} applied to
\eqref{eq:scalar-comparison-integral-equation} gives
\[
 \eta(t)\leq\frac{Mt}{n-1}
 +\frac{\kappa M}{n-1}\int_0^t s e^{\kappa(t-s)}\,\dd s
 =\frac{M}{n-1}\phi_\kappa(t).
\]
The inequalities $\|\Delta_{i,t}^\lambda\|\leq\eta(t)$ and
$\|d_{i,t}^\lambda\|\leq\eta(t)$, together with
$\eta(t)\leq M(n-1)^{-1}\phi_\kappa(t)$, give
\cref{eq:deletion-response-size}.

Define the remainder path $r_{i,\cdot}^\lambda:[0,T]\to\Theta$ by
$r_{i,t}^\lambda=\Delta_{i,t}^\lambda-d_{i,t}^\lambda$.  Subtracting
\eqref{eq:flow-alo-response} from \eqref{eq:exact-deletion-ode} gives
\begin{equation*}
 \tag{S.A.0c}\label{eq:nonlinear-response-error-ode}
 \dot r_{i,t}^\lambda=-B_{i,t}^\lambda r_{i,t}^\lambda
 -\{\nabla F_{P_{-i},\lambda}(\theta_t^\lambda+\Delta_{i,t}^\lambda)
    -\nabla F_{P_{-i},\lambda}(\theta_t^\lambda)
    -B_{i,t}^\lambda\Delta_{i,t}^\lambda\}.
\end{equation*}
By \cref{lem:full-path-tube}, the deletion chord is contained in
$B(\theta_t^\lambda,r_{\rm tube})$.  Using
$B_{i,t}^\lambda=\nabla^2F_{P_{-i},\lambda}(\theta_t^\lambda)$, the integral
Taylor formula and the Hessian Lipschitz bound in
\eqref{eq:hessian-lipschitz} give
\begin{equation*}
\tag{S.A.0d}\label{eq:nonlinear-gradient-remainder-bound}
\begin{aligned}
&
 \bigl\|\nabla F_{P_{-i},\lambda}
       (\theta_t^\lambda+\Delta_{i,t}^\lambda)
      -\nabla F_{P_{-i},\lambda}(\theta_t^\lambda)
      -B_{i,t}^\lambda\Delta_{i,t}^\lambda\bigr\| \\
&\quad=\left\|\int_0^1
   \bigl\{\nabla^2F_{P_{-i},\lambda}
      (\theta_t^\lambda+u\Delta_{i,t}^\lambda)
      -\nabla^2F_{P_{-i},\lambda}(\theta_t^\lambda)\bigr\}
      \Delta_{i,t}^\lambda\,\dd u\right\| \\
&\quad\leq\int_0^1L_3u\|\Delta_{i,t}^\lambda\|^2\,\dd u
 =\frac{L_3}{2}\|\Delta_{i,t}^\lambda\|^2.
\end{aligned}
\end{equation*}
For $v\in\Theta$ and $0\leq s\leq\tau\leq t\leq T$, the initial condition
in \eqref{eq:flow-alo-propagator} gives $U_i^\lambda(s,s)v=v$.  The curvature
bound
\eqref{eq:one-sided-curvature}, integration from $s$ to $t$, and maximization
over $\|v\|=1$ give
\begin{equation*}
 \tag{S.A.0e}\label{eq:deleted-propagator-energy-bound}
 \begin{aligned}
 \frac12\frac{\dd}{\dd\tau}
   \|U_i^\lambda(\tau,s)v\|^2
 &=-\langle U_i^\lambda(\tau,s)v,
       B_{i,\tau}^\lambda U_i^\lambda(\tau,s)v\rangle
 \leq\kappa\|U_i^\lambda(\tau,s)v\|^2,\\
 \|U_i^\lambda(t,s)\|_{\op}
 &\leq e^{\kappa(t-s)}.
 \end{aligned}
\end{equation*}
Since $r_{i,0}^\lambda=0$, variation of constants applied to
\eqref{eq:nonlinear-response-error-ode} gives
\begin{equation*}
\tag{S.A.0f}\label{eq:nonlinear-response-duhamel}
\begin{aligned}
 r_{i,t}^\lambda
 &=-\int_0^tU_i^\lambda(t,s)
 \bigl\{\nabla F_{P_{-i},\lambda}
      (\theta_s^\lambda+\Delta_{i,s}^\lambda)
      -\nabla F_{P_{-i},\lambda}(\theta_s^\lambda)
      -B_{i,s}^\lambda\Delta_{i,s}^\lambda\bigr\}\,\dd s.
\end{aligned}
\end{equation*}
Taking norms in \eqref{eq:nonlinear-response-duhamel} and applying
\eqref{eq:nonlinear-gradient-remainder-bound},
\eqref{eq:deleted-propagator-energy-bound}, and
\eqref{eq:deletion-response-size} gives
\begin{equation*}
\tag{S.A.0g}\label{eq:nonlinear-response-integral-bound}
\begin{aligned}
 \|r_{i,t}^\lambda\|
 &\leq\frac{L_3}{2}\int_0^t
      e^{\kappa(t-s)}\|\Delta_{i,s}^\lambda\|^2\,\dd s \\
 &\leq\frac{L_3M^2}{2}(n-1)^{-2}
      \int_0^t e^{\kappa(t-s)}\phi_\kappa(s)^2\,\dd s.
\end{aligned}
\end{equation*}
Using \eqref{eq:phi-J-def}, direct integration gives
\begin{equation*}
 \tag{S.A.0h}\label{eq:nonlinear-response-gain-integral}
 \int_0^t e^{\kappa(t-s)}\phi_\kappa(s)^2\,\dd s
 =\begin{cases}
 \{e^{2\kappa t}-1-2\kappa t e^{\kappa t}\}/\kappa^3,&\kappa>0,\\
 t^3/3,&\kappa=0,
 \end{cases}
 =\mathcal J_\kappa(t).
\end{equation*}
Differentiating the integral in \eqref{eq:nonlinear-response-gain-integral}
gives
$\mathcal J_\kappa'(t)=\phi_\kappa(t)^2
+\kappa\mathcal J_\kappa(t)\geq0$, so
$\mathcal J_\kappa(t)\leq\mathcal J_\kappa(T)$ for $0\leq t\leq T$.
Substituting
$r_{i,t}^\lambda=\Delta_{i,t}^\lambda-d_{i,t}^\lambda$ into
\eqref{eq:nonlinear-response-integral-bound} and taking the supremum over
$1\leq i\leq n$, $0\leq t\leq T$, and $\lambda\in\Lambda$ gives
\cref{eq:parameter-response-error}.  The mean-value theorem for
$\ell^{\rm ev}_{\lambda,Z_i}$ gives
\[
 |\ell^{\rm ev}_{\lambda,Z_i}
      (\theta_t^\lambda+\Delta_{i,t}^\lambda)
   -\ell^{\rm ev}_{\lambda,Z_i}
      (\theta_t^\lambda+d_{i,t}^\lambda)|
 \leq G_{\rm out}\|\Delta_{i,t}^\lambda-d_{i,t}^\lambda\|.
\]
Average over $i$ to obtain \cref{eq:score-approximation-bound}.
\end{proof}

\begin{proof}[Derivation of \cref{eq:full-hessian-response-comparison}]
Fix $i$ and $\lambda$, and put
$e_{i,t}^\lambda=\Delta_{i,t}^\lambda-g_{i,t}^\lambda$.
Subtracting \eqref{eq:full-hessian-influence-response} from the exact secant
equation gives
\[
 \dot e_{i,t}^\lambda=-H_t^\lambda e_{i,t}^\lambda
 +(H_t^\lambda-\overline B_{i,t}^\lambda)\Delta_{i,t}^\lambda,
 \qquad e_{i,0}^\lambda=0.
\]
Because
$B_{i,t}^\lambda=H_t^\lambda-C_{i,t}^\lambda/(n-1)$,
\cref{ass:tube} and the extra Hessian envelope imply
\[
 H_t^\lambda\succeq-\{\kappa+K_C/(n-1)\}I.
\]
The identity
$B_{i,t}^\lambda=H_t^\lambda-C_{i,t}^\lambda/(n-1)$, the integral definition of
$\overline B_{i,t}^\lambda$, and \eqref{eq:hessian-lipschitz} give
\[
 \|H_t^\lambda-\overline B_{i,t}^\lambda\|_{\op}
 \leq \frac{K_C}{n-1}
      +\frac{L_3}{2}\|\Delta_{i,t}^\lambda\|.
\]
The propagator generated by $-H_t^\lambda$ therefore has operator norm at
most $\exp[\{\kappa+K_C/(n-1)\}(t-s)]$.  Variation of constants, followed by
\eqref{eq:deletion-response-size}, yields
\begin{align*}
 \|e_{i,t}^\lambda\|
 &\leq\int_0^t e^{\{\kappa+K_C/(n-1)\}(t-s)}
 \left\{\frac{K_C}{n-1}
      +\frac{L_3}{2}\|\Delta_{i,s}^\lambda\|\right\}
 \|\Delta_{i,s}^\lambda\|\,\dd s\\
 &\leq\frac1{(n-1)^2}\int_0^t
 e^{\{\kappa+K_C/(n-1)\}(t-s)}
 \left\{K_CM\phi_\kappa(s)
       +\frac{L_3M^2}{2}\phi_\kappa(s)^2\right\}\,\dd s.
\end{align*}
Increasing the upper limit and replacing $t$ by $T$ in the nonnegative
exponential factor proves \cref{eq:full-hessian-response-comparison}.
\end{proof}

\begin{proof}[Proof of \cref{cor:quadratic-exactness}]
Fix $T'$ from \cref{cor:quadratic-exactness}, $i\in[n]$, and
$\lambda\in\Lambda$.
If the deleted objective gradient is affine, then
\[
 \nabla F_{P_{-i},\lambda}(\theta_t^\lambda+\Delta_{i,t}^\lambda)
 -\nabla F_{P_{-i},\lambda}(\theta_t^\lambda)
 -B_{i,t}^\lambda\Delta_{i,t}^\lambda=0.
\]
The difference $\Delta_{i,t}^\lambda-d_{i,t}^\lambda$ therefore starts from
zero and satisfies
\[
 \partial_t(\Delta_{i,t}^\lambda-d_{i,t}^\lambda)
 =-B_{i,t}^\lambda(\Delta_{i,t}^\lambda-d_{i,t}^\lambda).
\]
Since $t\mapsto B_{i,t}^\lambda$ is continuous on $[0,T']$, the vector field
$\xi\mapsto-B_{i,t}^\lambda\xi$ on $\Theta$ is globally Lipschitz in $\xi$,
uniformly in $t$.
Khalil \cite[Theorem~3.2]{khalil2002nonlinear} therefore gives uniqueness on
$[0,T']$ for the initial-value problem
$\dot\xi_t=-B_{i,t}^\lambda\xi_t$, $\xi_0=0$.  Since $\xi_t=0$ is a solution,
$\Delta_{i,t}^\lambda=d_{i,t}^\lambda$ for every $t\in[0,T']$.
\end{proof}

\begin{proof}[Verification of \cref{prop:pitchfork}]
By \eqref{eq:pitchfork-loss},
$\partial_x\ell_z(x)=x^3-x-z$ and $R'(x)=x$.  For the full empirical measure
or any deleted empirical measure $P$, write the scalar empirical mean as
$\bar z_P:=\int_{\{-1,1\}}z\,\dd P(z)\in\R$.  Averaging the loss
derivative and negating the resulting objective gradient give
\begin{equation}\label{eq:pitchfork-flow-proof}
 \begin{aligned}
 \partial_xF_{P,\lambda}(x)
 &=P\partial_x\ell_Z(x)+\lambda R'(x)
   =x^3-(1-\lambda)x-\bar z_P,\\
 \dot x_t^{P,\lambda}
 &=-\partial_xF_{P,\lambda}(x_t^{P,\lambda})
   =(1-\lambda)x_t^{P,\lambda}-(x_t^{P,\lambda})^3+\bar z_P.
 \end{aligned}
\end{equation}
For each fixed $P$, the drift $x\mapsto(1-\lambda)x-x^3+\bar z_P$ in
\eqref{eq:pitchfork-flow-proof} is polynomial in $x$, hence locally Lipschitz.
For the balanced full sample,
$\bar z_{P_n}=0$ and $x_0^{P_n,\lambda}=0$ by \cref{prop:pitchfork}.  At
$x=0$, the drift equals
$(1-\lambda)0-0^3+\bar z_{P_n}=0$.  Thus the constant path solves the
full-sample initial-value problem, and local Lipschitz continuity makes the
constant solution unique: $x_t^{P_n,\lambda}=0$ for every $t$.

Deleting a $+1$ observation gives
$\bar z_{P_{-i}}=-(n-1)^{-1}$, and deleting a $-1$ observation gives
$\bar z_{P_{-i}}=(n-1)^{-1}$.  By symmetry, the magnitudes of all deleted
trajectories equal the maximal solution
$\mathfrak x_{n,\lambda}:[0,\tau_{\max})\to\R$, with
$\tau_{\max}\in(0,\infty]$, of
\[
\begin{aligned}
 \dot{\mathfrak x}_{n,\lambda}(t)
 &=b_{n,\lambda}(\mathfrak x_{n,\lambda}(t)),
 &\mathfrak x_{n,\lambda}(0)&=0,\\
 b_{n,\lambda}&:\R\to\R,
 &b_{n,\lambda}(r)&=(1-\lambda)r-r^3+(n-1)^{-1}.
\end{aligned}
\]
The polynomial $b_{n,\lambda}$ is locally Lipschitz, so the initial-value problem has a
unique maximal solution.  For all sufficiently large $n$,
\[
 b_{n,\lambda}(0)=(n-1)^{-1}>0,
 \qquad
 b_{n,\lambda}(1)=-\lambda+(n-1)^{-1}<0.
\]
Thus $b_{n,\lambda}(0)>0$ points the trajectory into $[0,1]$ at the lower boundary,
while $b_{n,\lambda}(1)<0$ points the trajectory into $[0,1]$ at the upper boundary.
A first-exit argument using uniqueness gives
$\mathfrak x_{n,\lambda}(t)\in[0,1]$ throughout the
maximal interval of existence.  If the maximal interval had a finite right
endpoint, boundedness of $b_{n,\lambda}$ on $[0,1]$ would make
$\mathfrak x_{n,\lambda}$ uniformly Lipschitz
up to the endpoint and hence give a limiting value in $[0,1]$.  Local
existence from the limiting value would extend the solution past the finite
endpoint, a contradiction.  Hence the maximal interval is $[0,\infty)$.

Fix $\vartheta\in(0,\sqrt{1-\lambda})$ and define the finite deterministic
positive lower-growth rate
$c_{\vartheta,\lambda}:=1-\lambda-\vartheta^2>0$.  While
$\mathfrak x_{n,\lambda}(t)\leq\vartheta$,
$\dot{\mathfrak x}_{n,\lambda}(t)
\geq c_{\vartheta,\lambda}\mathfrak x_{n,\lambda}(t)+(n-1)^{-1}$.
Until $\mathfrak x_{n,\lambda}(t)=\vartheta$, the integrating-factor
calculation gives
\begin{align*}
 \frac{\dd}{\dd t}\{e^{-c_{\vartheta,\lambda}t}\mathfrak x_{n,\lambda}(t)\}
 &\geq\frac{e^{-c_{\vartheta,\lambda}t}}{n-1},\\
 \mathfrak x_{n,\lambda}(t)
 &\geq\frac{e^{c_{\vartheta,\lambda}t}-1}
 {(n-1)c_{\vartheta,\lambda}}.
\end{align*}
The lower bound equals $\vartheta$ at
\begin{equation}\label{eq:pitchfork-time}
 T_{n,\vartheta}^\lambda
 =\frac1{c_{\vartheta,\lambda}}
   \log\{1+(n-1)c_{\vartheta,\lambda}\vartheta\}
 =O_{\vartheta,\lambda}(\log n).
\end{equation}
Thus $\mathfrak x_{n,\lambda}$ reaches $\vartheta$ no later than
$T_{n,\vartheta}^\lambda$.
Substituting $c_{\vartheta,\lambda}=1-\lambda-\vartheta^2$ into
\eqref{eq:pitchfork-time} gives the
formula for $T_{n,\vartheta}^\lambda$ in \eqref{eq:pitchfork-risk-gap}.
At $\mathfrak x_{n,\lambda}(t)=\vartheta$,
$b_{n,\lambda}(\vartheta)
=c_{\vartheta,\lambda}\vartheta+(n-1)^{-1}>0$, so
$\mathfrak x_{n,\lambda}$ cannot subsequently cross $\vartheta$
downwards.

Under the balanced population law, for $\vartheta\leq |x|\leq1$,
\[
 P_0\ell_Z(x)-P_0\ell_Z(0)
 =-\frac{x^2}{4}(2-x^2)\leq-\frac{\vartheta^2}{4}.
\]
At $T_{n,\vartheta}^\lambda$, every deleted trajectory has magnitude in
$[\vartheta,1]$,
whereas the full trajectory is zero.  Therefore
\[
 \frac1n\sum_{i=1}^n
 \left\{P_0\ell_Z(\theta_{-i,T_{n,\vartheta}^\lambda}^\lambda)
             -P_0\ell_Z(0)\right\}
 \leq-\frac{\vartheta^2}{4}.
\]
The contrast
$\Risk_S^-(T_{n,\vartheta}^\lambda,\lambda)
 -\Risk_S(T_{n,\vartheta}^\lambda,\lambda)$ in
\eqref{eq:pitchfork-risk-gap} is therefore at most $-\vartheta^2/4$, so
\eqref{eq:pitchfork-risk-gap} follows.  At time zero
all trajectories coincide, completing the verification.
\end{proof}

\section{Proofs for the output-space and two-layer mean-field results}
\label{app:proofs-output-space}

\begin{proof}[Proof of \cref{thm:kernel-mismatch}]
The exact deletion difference satisfies
\[
 \dot\Delta_t^\lambda
 =-\overline B_t^\lambda\Delta_t^\lambda
  -\eps g_h^\lambda(\theta_t^{P,\lambda}),
\]
and hence
\[
 \Delta_t^\lambda
 =-\eps\int_0^t\overline U^\lambda(t,s)
   g_h^\lambda(\theta_s^{P,\lambda})\,\dd s.
\]
For every $s\in[0,T]$ and for $|h|$-almost every $z$, the chain rule gives
\[
 \nabla\ell_z(\theta_s^{P,\lambda})
 =J_s^\lambda(x_z)^*a_s^\lambda(z).
\]
The $z$-independent difference between $\ell^{\rm tr}_{\lambda,z}$ and
$\ell_z$ integrates to zero because $h(\mathsf Z)=0$.  Integrating the
pointwise chain-rule identity with respect to $h$ therefore gives
\[
 g_h^\lambda(\theta_s^{P,\lambda})
 =\int J_s^\lambda(x_z)^*a_s^\lambda(z)\,\dd h(z).
\]
The chord fundamental theorem of calculus gives
\(
 f_{\theta_t^{P^\eps,\lambda}}(x)-f_{\theta_t^{P,\lambda}}(x)
 =\overline J_t^{\Delta,\lambda}(x)\Delta_t^\lambda
\), proving \cref{eq:true-prediction-response}.  Applying the chord argument
to the response $d_t^\lambda$ and the propagator representation of
$d_t^\lambda$ in
\cref{eq:generic-resummed-response} gives
\cref{eq:alo-prediction-response}.  Subtracting
\eqref{eq:true-prediction-response} and \eqref{eq:alo-prediction-response} gives
\cref{eq:kernel-mismatch-identity}.

\end{proof}

\begin{proof}[Proof of \cref{thm:output-space-deletion}]
The deletion normalization gives
\[
 (n-1)^{-1}\|h_i\|_{\TV}\leq\frac2n.
\]
By \cref{eq:kernel-mismatch-identity}, the kernel-mismatch and training-gradient envelope then give, for every
$x\in\mathcal X_{\rm ev}$,
\[
 \|f_{\theta_{-i,t}^\lambda}(x)
     -f_{\theta_t^\lambda+d_{i,t}^\lambda}(x)\|
 \leq(n-1)^{-1}\int_0^t
       D_{\rm dyn}A_{\rm tr}\|h_i\|_{\TV}\,\dd s
 \leq\frac{2T}{n}A_{\rm tr}D_{\rm dyn}.
\]
Thus, \cref{eq:output-deletion-bound} holds.  The scalar chord fundamental
theorem of calculus for the omitted-point loss gives
\[
 |\ell_{Z_i}(\theta_{-i,t}^\lambda)
   -\ell_{Z_i}(\theta_t^\lambda+d_{i,t}^\lambda)|
 \leq A_{\rm ev}
       \|f_{\theta_{-i,t}^\lambda}(x_i)
          -f_{\theta_t^\lambda+d_{i,t}^\lambda}(x_i)\|.
\]
Average over $i$ and take the supremum over $(t,\lambda)\in\cH$ to obtain
\cref{eq:output-score-bound}.
\end{proof}

\begin{proof}[Proof of \cref{thm:two-layer-mean-field}]
Fix $m\in\{1,2,\ldots\}$.
For a block vector $u=(u_1,\ldots,u_m)\in\Theta_m$, the normalized Hilbert
norm $\|u\|_m=(m^{-1}\sum_{r=1}^m\|u_r\|^2)^{1/2}$ is the root-mean-square particle
norm, whereas
$\|u\|_{{\rm b},\infty}=\max_{1\leq r\leq m}\|u_r\|$ is the largest
particle-block norm.
For a linear map $H:\Theta_m\to\Theta_m$, we use the induced operator norms
\[
 \|H\|_{m\to m}=\sup_{u\in\Theta_m\setminus\{0\}}\frac{\|Hu\|_m}{\|u\|_m},
 \qquad
 \|H\|_{{\rm b},\infty\to{\rm b},\infty}
 =\sup_{u\in\Theta_m\setminus\{0\}}\frac{\|Hu\|_{{\rm b},\infty}}
                         {\|u\|_{{\rm b},\infty}}.
\]
For $A\geq0$, the \emph{output-weight strip with bound $A$} is
\[
 \{q=(q_r)_{r=1}^m\in\Theta_m:\max_{1\leq r\leq m}|a_r|\leq A\},
 \qquad q_r=(a_r,w_r).
\]
The output-weight strip bounds every scalar output weight $a_r$ and leaves
the hidden weights $w_r$ unrestricted.

For $x\in\mathcal X$ with $\|x\|\leq X$ and one particle (one hidden unit)
$p=(a,w)\in\R\times\R^d$, the scalar
network contribution of $p$ is $a\sigma(w^\top x)$.  The Euclidean gradient
with respect to $p$ is
\[
 \nabla_p\{a\sigma(w^\top x)\}
 =\bigl(\sigma(w^\top x),a\sigma'(w^\top x)x\bigr).
\]
The Euclidean Hessian with respect to $p$, viewed as a linear map on
$\R^{d+1}$, is
\[
 \nabla_p^2\{a\sigma(w^\top x)\}
 =\begin{pmatrix}
    0&\sigma'(w^\top x)x^\top\\
    \sigma'(w^\top x)x&a\sigma''(w^\top x)xx^\top
  \end{pmatrix}.
\]
The deterministic map
$A\mapsto(B_A,D_A,E_A):[0,\infty)\to[0,\infty)^3$ is defined by
\begin{equation}\label{eq:mean-field-proof-constants}
 B_A=\mathsf S_0+A\mathsf S_1X,\qquad
 D_A=2\mathsf S_1X+A\mathsf S_2X^2,\qquad
 E_A=3\mathsf S_2X^2+A\mathsf S_3X^3.
\end{equation}
The quantity $B_A$ bounds the one-particle gradient norm, $D_A$ bounds the
one-particle Hessian norm, and $E_A$ bounds the one-particle Hessian Lipschitz
modulus.  Indeed, for another particle
$p'=(a',w')$, if $\|x\|\leq X$ and the
output-weight coordinate of every point $(1-u)p+up'$, $0\leq u\leq1$, has
absolute value at most $A$, then
\begin{equation}\label{eq:mean-field-particle-derivatives}
 \begin{aligned}
 \|\nabla_p\{a\sigma(w^\top x)\}\|&\leq B_A,\\
 \|\nabla_p^2\{a\sigma(w^\top x)\}\|_{\op}&\leq D_A,\\
 \|\nabla_p^2\{a\sigma(w^\top x)\}
    -\nabla_{p'}^2\{a'\sigma({w'}^\top x)\}\|_{\op}
 &\leq E_A\|p-p'\|.
 \end{aligned}
\end{equation}
The gradient, Hessian, and Hessian-Lipschitz bounds in
\cref{eq:mean-field-particle-derivatives} depend on the
output-weight bound $A$ and are uniform over all hidden weights $w,w'\in\R^d$.
For a full particle array
$q=(q_1,\ldots,q_m)\in\Theta_m$, write
$J_q(x)=D_qf_{m,q}(x):\Theta_m\to\R$ for the derivative of the network output
with respect to all particles.  The adjoint
$J_q(x)^*:\R\to\Theta_m$ is characterized, for $u\in\Theta_m$ and $c\in\R$,
by
$\langle J_q(x)^*c,u\rangle_m=cJ_q(x)u$ and is therefore taken in the
normalized Hilbert inner product.  The second derivative
$D^2f_{m,q}(x):\Theta_m\times\Theta_m\to\R$ is a continuous bilinear form.
For
$u,v\in\Theta_m$ and $c\in\R$, we have
 \begin{align}
 J_q(x)u
 &=\frac1m\sum_{r=1}^m
   \nabla_{q_r}\{a_r\sigma(w_r^\top x)\}^\top u_r,
& J_q(x)^*c
&=\bigl(\nabla_{q_r}\{a_r\sigma(w_r^\top x)\}c\bigr)_{r=1}^m,
 \label{eq:mean-field-jacobian-formula}\\
 D^2f_{m,q}(x)[u,v]
 &=\frac1m\sum_{r=1}^m u_r^\top
   \nabla_{q_r}^2\{a_r\sigma(w_r^\top x)\}v_r.
 \label{eq:mean-field-output-hessian}
\end{align}
For full arrays $q,q'$ in the output-weight strip with bound $A$ and
$x\in\mathcal X$ with $\|x\|\leq X$,
\cref{eq:mean-field-jacobian-formula,eq:mean-field-output-hessian} give
\begin{align}
 \|J_q(x)\|_{\op}&\leq B_A,&
 \|J_q(x)-J_{q'}(x)\|_{\op}
 &\leq D_A\|q-q'\|_{{\rm b},\infty},
 \label{eq:mean-field-jacobian-bounds}\\
 \|D^2f_{m,q}(x)\|_{\op}&\leq D_A,&
 \|D^2f_{m,q}(x)-D^2f_{m,q'}(x)\|_{\op}
 &\leq E_A\|q-q'\|_{{\rm b},\infty}.
 \label{eq:mean-field-output-hessian-bounds}
\end{align}

For $j\in\{1,2\}$, $\partial_1^j\varphi$ denotes the $j$th derivative of
$\varphi$ with respect to the scalar first argument.  For $z=(x,y)\in\mathsf Z$ with $\|x\|\leq X$,
$q\in\Theta_m$, and $v\in\Theta_m$, the gradient and Hessian of one
observation's loss satisfy
\begin{equation}\label{eq:mean-field-full-hessian}
 \begin{aligned}
 \nabla\ell_z(q)
 &=\bigl(\partial_1\varphi(f_{m,q}(x),y)
   \nabla_{q_r}\{a_r\sigma(w_r^\top x)\}\bigr)_{r=1}^m,\\
 \nabla^2\ell_z(q)v
 &=\bigl(\partial_1^2\varphi(f_{m,q}(x),y)\{J_q(x)v\}
   \nabla_{q_r}\{a_r\sigma(w_r^\top x)\}\\
 &\qquad+\partial_1\varphi(f_{m,q}(x),y)
   \nabla_{q_r}^2\{a_r\sigma(w_r^\top x)\}v_r\bigr)_{r=1}^m.
 \end{aligned}
\end{equation}
The Hessian of the weight-decay term is $\lambda I$.
For $A\geq0$, $q\in\Theta_m$ in the output-weight strip with bound $A$,
$\lambda\in\Lambda$, and every probability measure $P$ on
$(\mathsf Z,\mathcal Z)$ satisfying
$P\{(x,y)\in\mathsf Z:\|x\|\leq X\}=1$,
Equations~\eqref{eq:mean-field-particle-derivatives}--
\eqref{eq:mean-field-full-hessian} imply
\begin{equation}\label{eq:mean-field-hessian-bound}
 \max\left\{
 \|\nabla^2F_{P,\lambda}(q)\|_{m\to m},
 \|\nabla^2F_{P,\lambda}(q)\|_{{\rm b},\infty\to{\rm b},\infty}
 \right\}
 \leq\mathsf L_2B_A^2+\mathsf L_1D_A+\lambda_{\max}.
\end{equation}
In \cref{eq:mean-field-hessian-bound}, the $m\to m$ norm uses the
root-mean-square particle norm, whereas the
${\rm b},\infty\to{\rm b},\infty$ norm uses the largest particle-block norm.
The $m\to m$ and ${\rm b},\infty\to{\rm b},\infty$ Hessian bounds in
\cref{eq:mean-field-hessian-bound} are uniform over
$q\in\Theta_m$ in the output-weight strip with bound $A$,
$\lambda\in\Lambda$, and probability measures $P$ satisfying
$P\{(x,y)\in\mathsf Z:\|x\|\leq X\}=1$.

Fix $i\in[n]$ and $\lambda\in\Lambda$; the coordinate identities in
\eqref{eq:mean-field-coordinate-flows} hold for $1\leq r\leq m$ and $0\leq t\leq T$.  Write the particle blocks of
the full and deleted paths as
$\theta_t^\lambda=((a_{r,t}^\lambda,w_{r,t}^\lambda))_{r=1}^m$ and
$\theta_{-i,t}^\lambda=((a_{-i,r,t}^\lambda,w_{-i,r,t}^\lambda))_{r=1}^m$.
For a locally Lipschitz function $g:[0,T]\to\R$ and $0\leq t<T$, write
$D^+g(t)=\limsup_{\varepsilon\downarrow0}
\{g(t+\varepsilon)-g(t)\}/\varepsilon$ for the upper-right Dini
derivative of $g$.  Here $P_n$ is the full empirical measure, $P_{-i}$ is the
deletion-$i$ empirical measure.  For every measurable Bochner-integrable map
$\eta_{\rm obs}:(\mathsf Z,\mathcal Z)\to E$ into a finite-dimensional real vector space
$E$,
$P_n[\eta_{\rm obs}]=n^{-1}\sum_{j=1}^n\eta_{\rm obs}(Z_j)$ and
$P_{-i}[\eta_{\rm obs}]=(n-1)^{-1}\sum_{j\in[n]\setminus\{i\}}
\eta_{\rm obs}(Z_j)$.  The
full-path and deletion-$i$ coordinate equations are
\begin{equation*}
\tag{\ref*{eq:mean-field-particle-radius}a}
\label{eq:mean-field-coordinate-flows}
\begin{aligned}
 \dot a_{r,t}^\lambda
 &=-P_n\!\left[
   \partial_1\varphi(f_{m,\theta_t^\lambda}(x),y)
   \sigma((w_{r,t}^\lambda)^\top x)\right]-\lambda a_{r,t}^\lambda,\\
 \dot w_{r,t}^\lambda
 &=-P_n\!\left[
   \partial_1\varphi(f_{m,\theta_t^\lambda}(x),y)
   a_{r,t}^\lambda\sigma'((w_{r,t}^\lambda)^\top x)x\right]
   -\lambda w_{r,t}^\lambda,\\
 \dot a_{-i,r,t}^\lambda
 &=-P_{-i}\!\left[
   \partial_1\varphi(f_{m,\theta_{-i,t}^\lambda}(x),y)
   \sigma((w_{-i,r,t}^\lambda)^\top x)\right]
   -\lambda a_{-i,r,t}^\lambda,\\
 \dot w_{-i,r,t}^\lambda
 &=-P_{-i}\!\left[
   \partial_1\varphi(f_{m,\theta_{-i,t}^\lambda}(x),y)
   a_{-i,r,t}^\lambda\sigma'((w_{-i,r,t}^\lambda)^\top x)x\right]
   -\lambda w_{-i,r,t}^\lambda.
\end{aligned}
\end{equation*}
Applying $D^+$ to the maxima over $1\leq r\leq m$ of
$|a_{r,t}^\lambda|$, $\|w_{r,t}^\lambda\|$,
$|a_{-i,r,t}^\lambda|$, and $\|w_{-i,r,t}^\lambda\|$, and using
\cref{eq:mean-field-coordinate-flows}, \cref{eq:mean-field-smoothness}, and
nonnegative weight decay gives
\begin{equation*}
\tag{\ref*{eq:mean-field-particle-radius}b}
\label{eq:mean-field-coordinate-dini-bounds}
\begin{aligned}
 D^+\max_{1\leq r\leq m}|a_{r,t}^\lambda|
 &\leq\mathsf L_1\mathsf S_0,&
 D^+\max_{1\leq r\leq m}\|w_{r,t}^\lambda\|
 &\leq\mathsf L_1\mathsf S_1X\max_{1\leq r\leq m}|a_{r,t}^\lambda|,\\
 D^+\max_{1\leq r\leq m}|a_{-i,r,t}^\lambda|
 &\leq\mathsf L_1\mathsf S_0,&
 D^+\max_{1\leq r\leq m}\|w_{-i,r,t}^\lambda\|
 &\leq\mathsf L_1\mathsf S_1X\max_{1\leq r\leq m}|a_{-i,r,t}^\lambda|.
\end{aligned}
\end{equation*}
For every $1\leq i\leq n$, $\lambda\in\Lambda$, and $0\leq t\leq T$,
integrating \cref{eq:mean-field-coordinate-dini-bounds} gives
\begin{equation}\label{eq:mean-field-particle-radius}
 \begin{split}
 \max\!\left\{\max_{1\leq r\leq m}|a_{r,t}^\lambda|,
                 \max_{1\leq r\leq m}|a_{-i,r,t}^\lambda|\right\}
 &\leq A_T:=Q_0+\mathsf L_1\mathsf S_0T,\\
 \max\!\left\{\max_{1\leq r\leq m}\|w_{r,t}^\lambda\|,
                 \max_{1\leq r\leq m}\|w_{-i,r,t}^\lambda\|\right\}
 &\leq Q_0+\mathsf L_1\mathsf S_1X
       \left(Q_0T+\frac{\mathsf L_1\mathsf S_0}{2}T^2\right).
 \end{split}
\end{equation}
$A_T\in[0,\infty)$ is the finite deterministic output-weight radius for all
full and deleted paths through time $T$; $A_T$ depends only on the fixed
mean-field primitives and $T$, not on $m$, $n$, or the realized sample.
The maps $q\mapsto-\nabla F_{P_n,\lambda}(q)$ and
$q\mapsto-\nabla F_{P_{-i},\lambda}(q)$ in
\cref{eq:mean-field-coordinate-flows} are locally Lipschitz.
Equation~\eqref{eq:mean-field-particle-radius} keeps each maximal full-data or
deletion-$i$ trajectory in a fixed compact ball before time $T$.  If either
$\theta_\cdot^\lambda$ or $\theta_{-i,\cdot}^\lambda$ had a finite maximal time
$t_{\max}\leq T$, boundedness of
$q\mapsto-\nabla F_{P_n,\lambda}(q)$ or
$q\mapsto-\nabla F_{P_{-i},\lambda}(q)$ would make
$\theta_\cdot^\lambda$ or $\theta_{-i,\cdot}^\lambda$ Cauchy as
$t\uparrow t_{\max}$.  Local existence from the limiting particle array would
then extend the maximal full-data or deletion-$i$ trajectory beyond
$t_{\max}$, contradicting maximality.  Hence $\theta_\cdot^\lambda$ and
$\theta_{-i,\cdot}^\lambda$ extend uniquely through $[0,T]$.  In
\cref{eq:flow-alo-response},
$B_{i,t}^\lambda:\Theta_m\to\Theta_m$ is the Hessian coefficient and
$c_{i,t}^\lambda\in\Theta_m$ is the centered sample-gradient forcing: one
training-loss gradient minus the empirical average of the training-loss
gradients.  Since $\sigma\in C^3$ and $\varphi(\,\cdot\,,y)\in C^3$, the
definitions in \cref{eq:resummed-B,eq:centered-gradient} imply that
$t\mapsto B_{i,t}^\lambda$ and $t\mapsto c_{i,t}^\lambda$ are continuous on
$[0,T]$ along the full path.
Consequently, the affine response vector field
$v\mapsto-B_{i,t}^\lambda v+c_{i,t}^\lambda/(n-1)$ on $\Theta_m$ from
\cref{eq:flow-alo-response} is globally Lipschitz in $v$, uniformly for
$0\leq t\leq T$.  Khalil's global-Lipschitz existence-and-uniqueness theorem
\cite[Theorem~3.2]{khalil2002nonlinear} therefore gives a unique deletion
response $d_{i,\cdot}^\lambda$ on $[0,T]$.
Define
\begin{equation}\label{eq:mean-field-base-strip-constants}
\begin{aligned}
 B_{{\rm jac},T}&=B_{A_T},\\
 \Gamma&=\mathsf L_2B_{{\rm jac},T}^2+\mathsf L_1D_{A_T}+\lambda_{\max},\\
 \Xi&=\mathsf L_3B_{{\rm jac},T}^3
      +3\mathsf L_2B_{{\rm jac},T}D_{A_T}+\mathsf L_1E_{A_T}.
\end{aligned}
\end{equation}
$B_{{\rm jac},T}$ bounds the network Jacobian, $\Gamma$ bounds the objective Hessian, and
$\Xi$ bounds the Hessian Lipschitz modulus throughout the output-weight strip
with bound $A_T$, which contains the full path and every deletion-$i$ path.
The constants $B_{{\rm jac},T}$, $\Gamma$, and $\Xi$ are finite,
nonnegative, and deterministic functions only of the fixed mean-field
primitives and $T$; none of $B_{{\rm jac},T}$, $\Gamma$, and $\Xi$ depends on
$m$, $n$, or the realized sample.
Differentiating the full Hessian in
\cref{eq:mean-field-full-hessian} and comparing $q$ with $q'$ produces four
contributions.  The change in $\partial_1^2\varphi$ contributes
$\mathsf L_3B_{{\rm jac},T}^3$.  The differences in the two network-Jacobian factors in the
$\partial_1^2\varphi$ term
contribute $2\mathsf L_2B_{{\rm jac},T}D_{A_T}$.  The change in
$\partial_1\varphi$ contributes $\mathsf L_2B_{{\rm jac},T}D_{A_T}$.  The network-output
Hessian contributes $\mathsf L_1E_{A_T}$.  Fix $q,q'\in\Theta_m$,
$\lambda\in\Lambda$, and a probability measure $P$ on
$(\mathsf Z,\mathcal Z)$ satisfying
$P\{(x,y)\in\mathsf Z:\|x\|\leq X\}=1$.  If every
array $(1-u)q+uq'$, $0\leq u\leq1$, lies in the output-weight strip with bound
$A_T$, the $m\to m$ and ${\rm b},\infty\to{\rm b},\infty$ induced operator
norms satisfy
\begin{equation*}
\tag{\ref*{eq:mean-field-base-strip-constants}a}
\label{eq:mean-field-base-strip-hessian-lipschitz}
\begin{aligned}
 \|\nabla^2F_{P,\lambda}(q)-\nabla^2F_{P,\lambda}(q')\|_{m\to m}
 &\leq\Xi\|q-q'\|_{{\rm b},\infty},\\
 \|\nabla^2F_{P,\lambda}(q)-\nabla^2F_{P,\lambda}(q')\|
 _{{\rm b},\infty\to{\rm b},\infty}
 &\leq\Xi\|q-q'\|_{{\rm b},\infty}.
\end{aligned}
\end{equation*}
The propagation gain
$\phi_\Gamma(T)=\int_0^T e^{\Gamma s}\,\dd s$ is defined in
\cref{eq:phi-J-def}.  Set the deletion-displacement scale
\begin{equation}\label{eq:mean-field-deletion-constant}
 A_\Delta=2\mathsf L_1B_{{\rm jac},T}\phi_\Gamma(T).
\end{equation}
$A_\Delta\in[0,\infty)$ is a finite deterministic deletion-displacement
scale.  $A_\Delta$ depends only on the fixed mean-field primitives and $T$, not on
$m$, $n$, or the realized sample.
For the centered sample gradient $c_{i,t}^\lambda$ in
\cref{eq:centered-gradient}, the
weight-decay gradient $\lambda\theta_t^\lambda$ cancels.  The data-loss
gradient bound in \cref{eq:mean-field-full-hessian} therefore gives
$\|c_{i,t}^\lambda\|_{{\rm b},\infty}\leq2\mathsf L_1B_{{\rm jac},T}$.  Write
$\Delta_{i,t}^\lambda=\theta_{-i,t}^\lambda-\theta_t^\lambda$ for the exact
deletion displacement.  \Cref{eq:exact-deletion-ode} has the factor
$1/(n-1)$ established in \cref{eq:deleted-gradient-hessian}.  Convexity of
the output-weight strip with
bound $A_T$ places
$\{\theta_t^\lambda+u\Delta_{i,t}^\lambda:0\leq u\leq1\}$ in the
output-weight strip with bound $A_T$.  The Hessian bound $\Gamma$ from
\cref{eq:mean-field-hessian-bound,eq:mean-field-base-strip-constants},
together with $\Delta_{i,0}^\lambda=0$, gives
\begin{equation*}
\tag{\ref*{eq:mean-field-particle-deletion}a}
\label{eq:mean-field-deletion-dini-bound}
 D^+\|\Delta_{i,t}^\lambda\|_{{\rm b},\infty}
 \leq\Gamma\|\Delta_{i,t}^\lambda\|_{{\rm b},\infty}
       +\frac{2\mathsf L_1B_{{\rm jac},T}}{n-1}.
\end{equation*}
Khalil's scalar comparison lemma
\cite[Lemma~3.4]{khalil2002nonlinear}, applied to
\cref{eq:mean-field-deletion-dini-bound} and the scalar path
$\eta:[0,T]\to[0,\infty)$ satisfying
$\dot\eta=\Gamma\eta+2\mathsf L_1B_{{\rm jac},T}/(n-1)$ with
$\eta(0)=0$, then gives
\begin{equation}\label{eq:mean-field-particle-deletion}
 \sup_{\substack{1\leq i\leq n,\ 0\leq t\leq T\\ \lambda\in\Lambda}}
 \|\Delta_{i,t}^\lambda\|_{{\rm b},\infty}
 \leq(n-1)^{-1}A_\Delta.
\end{equation}
Fix $1\leq i\leq n$ and $\lambda\in\Lambda$.
In \cref{eq:flow-alo-response}, the Hessian coefficient
$B_{i,t}^\lambda$ has induced block-maximum operator norm at most $\Gamma$,
and the forcing $c_{i,t}^\lambda/(n-1)$ has norm at most
 $2\mathsf L_1B_{{\rm jac},T}/(n-1)$.  For $0\leq s\leq t\leq T$, the homogeneous
response propagator $U_i^\lambda(t,s):\Theta_m\to\Theta_m$ solves
$\partial_tU_i^\lambda(t,s)=-B_{i,t}^\lambda U_i^\lambda(t,s)$ with
$U_i^\lambda(s,s)=I$.  For every $v\in\Theta_m$, the response-propagator
generator equation gives
\begin{equation*}
\tag{\ref*{eq:mean-field-particle-response}a}
\label{eq:mean-field-response-propagator-dini}
 D^+\|U_i^\lambda(t,s)v\|_{{\rm b},\infty}
 \leq\Gamma\|U_i^\lambda(t,s)v\|_{{\rm b},\infty}.
\end{equation*}
The variation-of-constants formula in
\cref{eq:flow-alo-duhamel} reads
\[
 d_{i,t}^\lambda
 =\frac1{n-1}\int_0^t
    U_i^\lambda(t,s)c_{i,s}^\lambda\,\dd s.
\]
Applying Khalil's comparison lemma
\cite[Lemma~3.4]{khalil2002nonlinear} to
\cref{eq:mean-field-response-propagator-dini}, followed by substitution in
\cref{eq:flow-alo-duhamel} using
$\|c_{i,s}^\lambda\|_{{\rm b},\infty}\leq2\mathsf L_1B_{{\rm jac},T}$, gives
\begin{equation}\label{eq:mean-field-particle-response}
 \begin{aligned}
 \|U_i^\lambda(t,s)\|_{{\rm b},\infty\to{\rm b},\infty}
 &\leq e^{\Gamma(t-s)},\qquad 0\leq s\leq t\leq T,\\
 \sup_{\substack{1\leq j\leq n,\ 0\leq t\leq T\\ \lambda'\in\Lambda}}
 \|d_{j,t}^{\lambda'}\|_{{\rm b},\infty}
 &\leq(n-1)^{-1}A_\Delta.
 \end{aligned}
\end{equation}

For $\lambda\in\Lambda$ and $0\leq s\leq t\leq T$, the propagators
$\overline U^\lambda(t,s),U^{0,\lambda}(t,s):\Theta_m\to\Theta_m$ in
\cref{eq:generic-secant-propagator} are generated by
$-\overline B_t^\lambda$ and $-B_t^{0,\lambda}$.  The secant Hessian
$\overline B_t^\lambda$ averages the deleted-objective Hessian along the exact
deletion segment, whereas $B_t^{0,\lambda}$ is the deleted-objective Hessian on
the full path.  Define the enlarged-strip constants $A_+$, $B_+$, and $D_+$ by
\begin{equation}\label{eq:mean-field-response-strip-constants}
 A_+=A_T+A_\Delta,\qquad
 B_+=B_{A_+},\qquad D_+=D_{A_+}.
\end{equation}
$A_+$ is the response-strip output-weight radius, while $B_+$ and $D_+$ are
the network-Jacobian and output-Hessian envelopes on the $A_+$ strip.  The
constants $A_+$, $B_+$, and $D_+$ are finite
nonnegative deterministic functions only of the fixed mean-field primitives
and $T$, independent of $m$, $n$, and the sample.
For $q,q'\in\Theta_m$ and $\lambda\in\Lambda$, whenever
$(1-u)q+uq'$ lies in the output-weight strip with bound $A_+$ for every
$0\leq u\leq1$, the chain rule and
Equations~\eqref{eq:mean-field-jacobian-bounds}--
\eqref{eq:mean-field-output-hessian-bounds} give, uniformly in every
probability measure $P$ on
$(\mathsf Z,\mathcal Z)$ satisfying
$P\{(x,y)\in\mathsf Z:\|x\|\leq X\}=1$,
\begin{equation}\label{eq:mean-field-hessian-path-lipschitz}
 \|\nabla^2F_{P,\lambda}(q)-\nabla^2F_{P,\lambda}(q')\|_{m\to m}
 \leq\{\mathsf L_3B_+^3+3\mathsf L_2B_+D_+
          +\mathsf L_1E_{A_+}\}\|q-q'\|_{{\rm b},\infty}.
\end{equation}
The block-maximum induced operator norm satisfies
\begin{equation}\label{eq:mean-field-mixed-lipschitz}
 \|\nabla^2F_{P,\lambda}(q)-\nabla^2F_{P,\lambda}(q')\|
 _{{\rm b},\infty\to{\rm b},\infty}
 \leq\{\mathsf L_3B_+^3+3\mathsf L_2B_+D_+
          +\mathsf L_1E_{A_+}\}\|q-q'\|_{{\rm b},\infty}.
\end{equation}
The change in $\partial_1^2\varphi$ contributes $\mathsf L_3B_+^3$, the two
Jacobian differences contribute $2\mathsf L_2B_+D_+$, the change in
$\partial_1\varphi$ contributes $\mathsf L_2B_+D_+$, and the network-output
Hessian difference contributes $\mathsf L_1E_{A_+}$ to the constants in
\cref{eq:mean-field-hessian-path-lipschitz,eq:mean-field-mixed-lipschitz}.

For the remaining chord and kernel estimates, fix $i\in[n]$,
$\lambda\in\Lambda$, $0\leq s\leq t\leq T$, an evaluation input
$x\in\mathcal X$ with $\|x\|\leq X$, and
$z=(x_z,y_z)\in\mathsf Z$ with $\|x_z\|\leq X$.
The chord Jacobians
$\overline J_t^{\Delta,\lambda}(x),
\overline J_t^{d,\lambda}(x):\Theta_m\to\R$ in
\cref{eq:chord-jacobians} are network
derivatives averaged along two explicit segments:
$\overline J_t^{\Delta,\lambda}(x)$ uses
$\{\theta_t^\lambda+u\Delta_{i,t}^\lambda:0\leq u\leq1\}$, whereas
$\overline J_t^{d,\lambda}(x)$ uses
$\{\theta_t^\lambda+u d_{i,t}^\lambda:0\leq u\leq1\}$.
The full and deleted paths lie in the output-weight strip with bound $A_T$, so
convexity places the exact-deletion segment in the output-weight strip with
bound $A_T$.
\Cref{eq:mean-field-particle-response} places the response segment in the
output-weight strip with bound $A_+$.  The exact-deletion and response segments
therefore lie in the output-weight strip with bound $A_+$; by convexity, for
each $0\leq u\leq1$, the segment joining
$\theta_t^\lambda+u\Delta_{i,t}^\lambda$ to
$\theta_t^\lambda+u d_{i,t}^\lambda$ also lies in the output-weight strip with
bound $A_+$.
\Cref{eq:mean-field-particle-deletion,eq:mean-field-particle-response} imply
$\|\Delta_{i,t}^\lambda-d_{i,t}^\lambda\|_{{\rm b},\infty}
 \leq2(n-1)^{-1}A_\Delta$, and the integral of the chord parameter $u$ over
$[0,1]$ is $1/2$; hence
\begin{equation}\label{eq:mean-field-chord-jacobian-difference}
 \|\overline J_t^{\Delta,\lambda}(x)
     -\overline J_t^{d,\lambda}(x)\|_{\op}
 \leq(n-1)^{-1}A_\Delta D_+.
\end{equation}
The exact-deletion segment lies in the output-weight strip with bound $A_T$.
The definitions of
$\overline B_t^\lambda,B_t^{0,\lambda}$ in
\cref{eq:generic-secant-propagator} give
\begin{equation}\label{eq:mean-field-secant-hessian-difference}
 \begin{aligned}
 \overline B_t^\lambda-B_t^{0,\lambda}
 &=\int_0^1\!\left\{
   \nabla^2F_{P_{-i},\lambda}
      (\theta_t^\lambda+u\Delta_{i,t}^\lambda)
   -\nabla^2F_{P_{-i},\lambda}(\theta_t^\lambda)
  \right\}\,\dd u,\\
 \|\overline B_t^\lambda-B_t^{0,\lambda}\|_{\op}
 &\leq\Xi\int_0^1u\|\Delta_{i,t}^\lambda\|_{{\rm b},\infty}\,\dd u\\
 &\leq\tfrac12(n-1)^{-1}A_\Delta\Xi.
 \end{aligned}
\end{equation}
The operator identity in \eqref{eq:mean-field-secant-hessian-difference}
follows from \eqref{eq:mean-field-particle-deletion}.  The operator-norm
inequality in \eqref{eq:mean-field-secant-hessian-difference} follows from
\eqref{eq:mean-field-base-strip-hessian-lipschitz}.
The exact-deletion segment and full path lie in the output-weight strip with
bound $A_T$, so \cref{eq:mean-field-hessian-bound} gives
$\|\overline B_t^\lambda\|_{\op},
  \|B_t^{0,\lambda}\|_{\op}\leq\Gamma$.
For every $v\in\Theta_m$, the generator equations in
\cref{eq:generic-secant-propagator} give
\begin{equation*}
\tag{\ref*{eq:mean-field-propagator-difference}a}
\label{eq:mean-field-secant-propagator-dini}
 \begin{aligned}
 D^+\|\overline U^\lambda(t,s)v\|_m
 &\leq\Gamma\|\overline U^\lambda(t,s)v\|_m,\\
 D^+\|U^{0,\lambda}(t,s)v\|_m
 &\leq\Gamma\|U^{0,\lambda}(t,s)v\|_m.
 \end{aligned}
\end{equation*}
Khalil's comparison lemma
\cite[Lemma~3.4]{khalil2002nonlinear}, applied to
\cref{eq:mean-field-secant-propagator-dini}, gives
\begin{equation*}
\tag{\ref*{eq:mean-field-propagator-difference}b}
\label{eq:mean-field-secant-propagator-envelope}
 \max\!\left\{\|\overline U^\lambda(t,s)\|_{\op},
                \|U^{0,\lambda}(t,s)\|_{\op}\right\}
 \leq e^{\Gamma(t-s)}.
\end{equation*}
The propagator-difference variation-of-constants identity and norm bound are
\begin{equation}\label{eq:mean-field-propagator-difference}
 \begin{aligned}
 \overline U^\lambda(t,s)-U^{0,\lambda}(t,s)
 &=-\int_s^t\overline U^\lambda(t,r)
    (\overline B_r^\lambda-B_r^{0,\lambda})
    U^{0,\lambda}(r,s)\,\dd r,\\
 \|\overline U^\lambda(t,s)-U^{0,\lambda}(t,s)\|_{\op}
 &\leq\tfrac12(n-1)^{-1}A_\Delta\Xi(t-s)e^{\Gamma(t-s)}.
 \end{aligned}
\end{equation}
The inequality in \cref{eq:mean-field-propagator-difference} follows by
substituting the secant-Hessian bound in
\cref{eq:mean-field-secant-hessian-difference} into the
variation-of-constants identity and applying the $\overline U^\lambda$ and
$U^{0,\lambda}$ propagator bounds in
\cref{eq:mean-field-secant-propagator-envelope}.
Only the response parameter segment can leave the output-weight strip with
bound $A_T$, and \cref{eq:mean-field-particle-response} keeps
$\{\theta_t^\lambda+u d_{i,t}^\lambda:0\leq u\leq1\}$ in the output-weight
strip with bound $A_+$.  The kernel formulas in
\cref{eq:chord-kernel,eq:resummed-kernel}, the Jacobian bounds in
\cref{eq:mean-field-jacobian-bounds}, and the propagator envelope in
\cref{eq:mean-field-secant-propagator-envelope} therefore give the
width-independent uniform bound in \cref{eq:mean-field-kernel-envelope} with
\begin{equation}\label{eq:mean-field-explicit-kernel-envelope}
 K_{{\rm ker},T}=B_+B_{{\rm jac},T} e^{\Gamma T}.
\end{equation}
The product difference in
\cref{eq:chord-kernel,eq:resummed-kernel} is
\begin{equation*}
\tag{\ref*{eq:mean-field-explicit-kernel-envelope}a}
\label{eq:mean-field-kernel-product-difference}
\begin{aligned}
 \mathcal K_{m,i}^{\rm ch,\lambda}(t,s;x,z)
 -\mathcal K_{m,i}^{\rm res,\lambda}(t,s;x,z)
 &=\bigl\{\overline J_t^{\Delta,\lambda}(x)
          -\overline J_t^{d,\lambda}(x)\bigr\}
      \overline U^\lambda(t,s)J_{\theta_s^\lambda}(x_z)^*\\
 &\quad+\overline J_t^{d,\lambda}(x)
      \bigl\{\overline U^\lambda(t,s)-U^{0,\lambda}(t,s)\bigr\}
      J_{\theta_s^\lambda}(x_z)^*.
\end{aligned}
\end{equation*}
The three operator-factor bounds are
\begin{equation*}
\tag{\ref*{eq:mean-field-explicit-kernel-envelope}b}
\label{eq:mean-field-kernel-factor-bounds}
\begin{aligned}
 \|\overline U^\lambda(t,s)\|_{\op}&\leq e^{\Gamma(t-s)},&
 \|\overline J_t^{d,\lambda}(x)\|_{\op}&\leq B_+,\\
 \|J_{\theta_s^\lambda}(x_z)\|_{\op}&\leq B_{{\rm jac},T}.
\end{aligned}
\end{equation*}
The $\overline U^\lambda(t,s)$ bound in
\cref{eq:mean-field-kernel-factor-bounds} uses
\cref{eq:mean-field-secant-propagator-envelope}.  The
$\overline J_t^{d,\lambda}(x)$ and $J_{\theta_s^\lambda}(x_z)$ bounds use
\cref{eq:mean-field-jacobian-bounds} together with the response- and full-path
strip bounds in
\cref{eq:mean-field-particle-response,eq:mean-field-particle-radius}.
Combining \cref{eq:mean-field-kernel-factor-bounds} with
\cref{eq:mean-field-kernel-product-difference,eq:mean-field-chord-jacobian-difference,eq:mean-field-propagator-difference}
gives
\[
 \|\mathcal K_{m,i}^{\rm ch,\lambda}(t,s;x,z)
    -\mathcal K_{m,i}^{\rm res,\lambda}(t,s;x,z)\|_{\op}
 \leq(n-1)^{-1}A_\Delta B_{{\rm jac},T} e^{\Gamma T}
       \left(D_++\frac T2B_+\Xi\right).
\]
Thus the kernel-mismatch constant in \cref{eq:mean-field-kernel-mismatch} can
be taken as
\begin{equation}\label{eq:mean-field-explicit-kernel-constant}
 C_{{\rm ker},T}
 =A_\Delta B_{{\rm jac},T} e^{\Gamma T}
       \left(D_++\frac T2B_+\Xi\right).
\end{equation}
Equation \eqref{eq:mean-field-smoothness} also gives the uniform training- and
evaluation-loss output-gradient bounds
$A_{\rm tr}=A_{\rm ev}=\mathsf L_1$.  On the evaluation domain
$\mathcal X_{\rm ev}=\{x\in\mathcal X:\|x\|\leq X\}$,
\cref{eq:mean-field-kernel-mismatch} and
Equation~\eqref{eq:mean-field-explicit-kernel-constant}
verify the kernel-difference premise of \cref{thm:output-space-deletion} with
$D_{\rm dyn}=C_{{\rm ker},T}/(n-1)$.  Substituting
$A_{\rm tr}=A_{\rm ev}=\mathsf L_1$ and
$D_{\rm dyn}=C_{{\rm ker},T}/(n-1)$ into
\cref{eq:output-deletion-bound,eq:output-score-bound} gives
\cref{eq:mean-field-prediction-rate,eq:mean-field-score-rate}.
\end{proof}

\section{Proofs for population-risk-curve recovery and selection}
\label{app:proofs-statistical-selection}

\subsection{Deletion-to-full risk and exact-LOO concentration}

\begin{proof}[Proof of \cref{thm:jackknife-cancellation}]
Let $h_i=P_n-\delta_{Z_i}$.  By \cref{eq:delete-measure},
$P_{-i}=P_n+(n-1)^{-1}h_i$ and $n^{-1}\sum_{i=1}^n h_i=0$.  One-dimensional Taylor's
formula on the deletion chord gives
\begin{equation}\label{eq:chord-taylor-proof}
 \psi_{i,t,\lambda}((n-1)^{-1})
 =\psi_{i,t,\lambda}(0)+(n-1)^{-1}\psi_{i,t,\lambda}'(0)
  +\int_0^{(n-1)^{-1}}((n-1)^{-1}-u)\psi_{i,t,\lambda}''(u)\,\dd u.
\end{equation}
The average of the linear terms vanishes by
\eqref{eq:average-first-order-cancellation}.  Averaging
\eqref{eq:chord-taylor-proof}, using \eqref{eq:jackknife-chord-average}, and
then using $\|h_i\|_{\TV}\leq2$ gives
\[
 |\Risk_S^-(t,\lambda)-\Risk_S(t,\lambda)|
 \leq\frac{C_{PP}}{2n(n-1)^2}\sum_{i=1}^n\|h_i\|_{\TV}^2
 \leq\frac{2C_{PP}}{(n-1)^2}.
\]
The averaged remainder bound proves \cref{eq:cross-size-absolute}.  Subtracting
$[\Risk_S-\Risk_S^-](t_0,\lambda_0)$ from
$[\Risk_S-\Risk_S^-](t,\lambda)$ and applying the
$2C_{PP}(n-1)^{-2}$ bound at $(t,\lambda)$ and at $(t_0,\lambda_0)$ proves
\cref{eq:cross-size-relative}.
\end{proof}

\begin{proof}[Calculations for \cref{ex:quadratic-mean-flow}]
For $i\in[n]$, let $h_i=P_n-\delta_{Z_i}$.  Then
$P_{-i}=P_n+(n-1)^{-1}h_i$ and $h_iZ=\overline Z_n-Z_i$.
The exact displacement $\theta_{-i,t}-\theta_t$ and the response $d_{i,t}$
both start from zero and solve
\[
 \dot x_t=-x_t+\frac{h_iZ}{n-1},
\]
because $P_{-i}Z-P_nZ=h_iZ/(n-1)$ and, in the response equation,
$B_{i,t}=1$, $c_{i,t}=h_iZ$, and $C_{i,t}=0$.  Thus
$x_t=a_th_iZ/(n-1)$.  This proves the displacement identity in
\eqref{eq:quadratic-mean-deletion-response}; because
$\theta_t+d_{i,t}=\theta_{-i,t}$, the held-out losses also agree term by term,
which proves the score equality.

For every $t\in[0,T]$, $h\in\mathcal V_S$, and $u\in\R$,
\[
 \rho_t(P_n+uh)
 =\frac12\left\{m_2-2a_t\mu(\overline Z_n+uhZ)
                  +a_t^2(\overline Z_n+uhZ)^2\right\}.
\]
Differentiating the map $u\mapsto\rho_t(P_n+uh)$ at $u=0$ gives
$D\rho_t(P_n)[h]=a_t(a_t\overline Z_n-\mu)hZ$ and
$D^2\rho_t(P_n)[h,h]=a_t^2(hZ)^2$, as stated in
\cref{ex:quadratic-mean-flow}.  Fix $i\in[n]$, $t\in[0,T]$, and
$u\in[0,(n-1)^{-1}]$.  Along the $i$th deletion chord,
$|\psi_{i,t,0}''(u)|=a_t^2(h_iZ)^2$.  If $h_i\ne0$, then
\[
 |\psi_{i,t,0}''(u)|
 \leq (1-e^{-T})^2\frac{(h_iZ)^2}{\|h_i\|_{\TV}^2}
       \|h_i\|_{\TV}^2
 \leq C_{PP}\|h_i\|_{\TV}^2;
\]
if $h_i=0$, then $P_n+u h_i=P_n$ and
$|\psi_{i,t,0}''(u)|=0=C_{PP}\|h_i\|_{\TV}^2$.
The $h_i\ne0$ and $h_i=0$ bounds verify
\eqref{eq:measure-curvature-bound} with $C_{PP}$ defined in
\eqref{eq:quadratic-cpp-bound}.

The quadratic Taylor identity is exact:
\[
 \rho_t(P_{-i})-\rho_t(P_n)
 =\frac{a_t(a_t\overline Z_n-\mu)}{n-1}h_iZ
  +\frac{a_t^2}{2(n-1)^2}(h_iZ)^2.
\]
Averaging, using $n^{-1}\sum_i h_iZ=0$ and
$h_iZ=\overline Z_n-Z_i$, gives
\eqref{eq:quadratic-jackknife-gap}.  Finally,
\[
 P_{-i}Z=\overline Z_n-\frac{Z_i-\overline Z_n}{n-1},
 \qquad
 a_tP_{-i}Z-Z_i
 =(a_t-1)\overline Z_n
  -\left(1+\frac{a_t}{n-1}\right)(Z_i-\overline Z_n).
\]
Since $n^{-1}\sum_i(Z_i-\overline Z_n)=0$,
\begin{align*}
 2\widetilde\Score_n(t)
 &=(1-a_t)^2\overline Z_n^2
   +\left(1+\frac{a_t}{n-1}\right)^2
      (\widehat m_{2,n}-\overline Z_n^2),\\
 2\Risk_S(t)
 &=m_2-2a_t\mu\overline Z_n+a_t^2\overline Z_n^2.
\end{align*}
Subtracting these expressions and collecting powers of $a_t$ gives
\eqref{eq:quadratic-risk-identity}.
\end{proof}

\begin{proof}[Proof of \cref{cor:quadratic-unbounded-risk}]
By Cauchy--Schwarz, $\mu^2\leq P_0Z^2\leq M_2$.  Moreover,
\[
 \operatorname{Var}(\overline Z_n)
 \leq\frac{M_2}{n},
 \qquad
 \operatorname{Var}(\widehat m_{2,n})
 \leq\frac{M_4}{n}.
\]
Define the Chebyshev event
\[
 \mathcal Q_{n,\delta}
 :=\{|\overline Z_n-\mu|\leq u_{n,\delta}\}
   \cap\{|\widehat m_{2,n}-m_2|\leq v_{n,\delta}\}.
\]
Chebyshev's inequality and a union bound give
$\Pp(\mathcal Q_{n,\delta})\geq1-\delta$.  On
$\mathcal Q_{n,\delta}$,
$|\overline Z_n|\leq\sqrt{M_2}+u_{n,\delta}$ and the nonnegative sample
variance satisfies
$\widehat m_{2,n}-\overline Z_n^2\leq\widehat m_{2,n}
 \leq M_2+v_{n,\delta}$.  Since $0\leq a_t\leq1$, substituting
$|\overline Z_n|\leq\sqrt{M_2}+u_{n,\delta}$ and
$\widehat m_{2,n}-\overline Z_n^2\leq M_2+v_{n,\delta}$
into \eqref{eq:quadratic-risk-identity} proves
\eqref{eq:quadratic-unbounded-band}, uniformly over $0\leq t\leq T$.
\end{proof}

\subsubsection{Continuous measure responses}
\label{app:continuous-measure-responses}

The curvature bound in \cref{cor:cpp-bound} uses the first two derivatives of
the flow with respect to an affine perturbation of the training measure.
The real normed space $\mathcal M_0$ from
\cref{sec:second-order-deletion-risk} consists of zero-mass finite signed measures on
$(\mathsf Z,\mathcal Z)$, equipped with
$\|h\|_{\TV}=|h|(\mathsf Z)$.
Fix $\lambda\in\Lambda$, a probability measure $P$ on
$(\mathsf Z,\mathcal Z)$, and
$h\in\mathcal M_0$.  Let $\mathcal I_{P,h}\subset\R$ be an interval
containing zero.  For every $\alpha\in\mathcal I_{P,h}$, assume that
$F_{P+\alpha h,\lambda}$ is defined and that \eqref{eq:full-flow} with
training measure $P+\alpha h$ has a solution on $[0,T]$.
Write $\theta_t^{\alpha,\lambda}=\theta_t^{P+\alpha h,\lambda}$.

For every $t\in[0,T]$, assume that the map
$\alpha\mapsto\theta_t^{\alpha,\lambda}:\mathcal I_{P,h}\to\Theta$ is twice continuously differentiable
on $\mathcal I_{P,h}$, with one-sided derivatives at the endpoints.  Assume
also that the first two $\alpha$-derivatives commute with the time derivative:
\[
 \partial_t\partial_\alpha^k\theta_t^{\alpha,\lambda}
 =\partial_\alpha^k\partial_t\theta_t^{\alpha,\lambda},
 \qquad 0\leq t\leq T,\quad \alpha\in\mathcal I_{P,h},\quad k\in\{1,2\}.
\]
Let $\mathcal O\subset\Theta$ be an open set containing
$\{\theta_s^{\alpha,\lambda}:0\leq s\leq T,\
\alpha\in\mathcal I_{P,h}\}$.  On $\mathcal O$, suppose that, for each
$z\in\mathsf Z$, the map $\theta\mapsto\ell^{\rm tr}_{\lambda,z}(\theta)$ is
$C^3$.  For every
$\theta\in\mathcal O$, $\alpha\in\mathcal I_{P,h}$, and $k=1,2,3$, assume
that the integrals against $P+\alpha h$ and $h$ both exist and that
$\nabla_\theta^kF_{P+\alpha h,\lambda}(\theta)
=\int\nabla_\theta^k\ell^{\rm tr}_{\lambda,z}(\theta)
\,\dd(P+\alpha h)(z)$.  For the evaluation loss, suppose that the map
\[
 \mathcal O\longrightarrow\R,\qquad
 \theta\longmapsto P_0\ell^{\rm ev}_{\lambda,Z}(\theta)
 =\int_{\mathsf Z}\ell^{\rm ev}_{\lambda,z}(\theta)\,\dd P_0(z)
\]
is $C^2$.  Assume that the first two derivatives of
$P_0\ell^{\rm ev}_{\lambda,Z}$ are obtained by differentiating under $P_0$.

At $\alpha=0$, define the response paths
$u_\cdot^\lambda[h],u_\cdot^{(2),\lambda}[h,h]:[0,T]\to\Theta$ by
\[
 u_t^\lambda[h]
 =\left.\partial_\alpha\theta_t^{\alpha,\lambda}\right|_{\alpha=0},
 \qquad
 u_t^{(2),\lambda}[h,h]
 =\left.\partial_\alpha^2\theta_t^{\alpha,\lambda}\right|_{\alpha=0}.
\]
Along the base path, for every $0\leq t\leq T$, define linear operators
$A_t^\lambda,H_{h,t}^\lambda:\Theta\to\Theta$, a vector
$g_{h,t}^\lambda\in\Theta$, and a bilinear map
$\mathsf T_t^\lambda:\Theta\times\Theta\to\Theta$ by
\begin{align*}
 A_t^\lambda
 &=\nabla^2F_{P,\lambda}(\theta_t^{P,\lambda}),\\
 g_{h,t}^\lambda
 &=\int \nabla\ell^{\rm tr}_{\lambda,z}(\theta_t^{P,\lambda})
       \,\dd h(z),\\
 H_{h,t}^\lambda
 &=\int \nabla^2\ell^{\rm tr}_{\lambda,z}(\theta_t^{P,\lambda})
       \,\dd h(z).
\end{align*}
Define $\mathsf T_t^\lambda$ by the Riesz identity
\[
 \langle v,\mathsf T_t^\lambda[u,w]\rangle
 =\nabla_\theta^3F_{P,\lambda}(\theta_t^{P,\lambda})[v,u,w],
 \qquad u,v,w\in\Theta.
\]

For every $\theta\in\mathcal O$ and $\alpha\in\mathcal I_{P,h}$, the affine
dependence of \eqref{eq:objective-vector-field} on the training measure gives
\[
 \nabla F_{P+\alpha h,\lambda}(\theta)
 =\nabla F_{P,\lambda}(\theta)
  +\alpha\int\nabla\ell^{\rm tr}_{\lambda,z}(\theta)\,\dd h(z).
\]
Differentiating \eqref{eq:full-flow} once with respect to $\alpha$ at
$\alpha=0$ gives
\begin{equation}\label{eq:first-data-response}
 \dot u_t^\lambda[h]
 =-A_t^\lambda u_t^\lambda[h]-g_{h,t}^\lambda,
 \qquad u_0^\lambda[h]=0.
\end{equation}

Differentiating
$\nabla F_{P,\lambda}(\theta_t^{\alpha,\lambda})$ twice with respect to
$\alpha$ and evaluating at $\alpha=0$ gives the two chain-rule terms
$A_t^\lambda u_t^{(2),\lambda}[h,h]$ and
$\mathsf T_t^\lambda[u_t^\lambda[h],u_t^\lambda[h]]$.
Differentiating the affine perturbation twice at $\alpha=0$ gives
\[
 \left.\partial_\alpha^2\left\{
  \alpha\int\nabla\ell^{\rm tr}_{\lambda,z}
       (\theta_t^{\alpha,\lambda})\,\dd h(z)
 \right\}\right|_{\alpha=0}
 =2H_{h,t}^\lambda u_t^\lambda[h].
\]
The factor two is the coefficient of the cross term in the second-order
product rule.  Because the common initialization does not depend on $\alpha$,
the second response satisfies
\begin{equation}\label{eq:second-data-response}
\begin{aligned}
 \dot u_t^{(2),\lambda}[h,h]
 &=-A_t^\lambda u_t^{(2),\lambda}[h,h]
   -2H_{h,t}^\lambda u_t^\lambda[h]\\
 &\quad-\mathsf T_t^\lambda[u_t^\lambda[h],u_t^\lambda[h]],
 \qquad u_0^{(2),\lambda}[h,h]=0.
\end{aligned}
\end{equation}

The ordinary chain rule for the $C^2$ population-risk map gives
\begin{align}
 \left.\frac{\dd}{\dd\alpha}
 P_0\ell^{\rm ev}_{\lambda,Z}
   (\theta_t^{P+\alpha h,\lambda})\right|_{\alpha=0}
 &=\langle P_0\nabla\ell^{\rm ev}_{\lambda,Z}
      (\theta_t^{P,\lambda}),u_t^\lambda[h]\rangle,
 \label{eq:risk-first-response}\\
 \left.\frac{\dd^2}{\dd\alpha^2}
 P_0\ell^{\rm ev}_{\lambda,Z}
   (\theta_t^{P+\alpha h,\lambda})\right|_{\alpha=0}
 &=P_0\nabla^2\ell^{\rm ev}_{\lambda,Z}(\theta_t^{P,\lambda})
   [u_t^\lambda[h],u_t^\lambda[h]]\notag\\
 &\quad+\langle P_0\nabla\ell^{\rm ev}_{\lambda,Z}
      (\theta_t^{P,\lambda}),
   u_t^{(2),\lambda}[h,h]\rangle.
 \label{eq:risk-second-response}
\end{align}

At $P=P_n$, the sample-dependent real linear deletion-direction subspace from
\cref{sec:second-order-deletion-risk} is
$\mathcal V_S=\operatorname{span}\{h_1,\ldots,h_n\}\subseteq\mathcal M_0$, where
$h_i=P_n-\delta_{Z_i}$ as in \cref{eq:delete-measure}.
Then, for every $g\in\mathcal V_S$, let $u_\cdot^\lambda[g]:[0,T]\to\Theta$
denote the unique solution of \eqref{eq:first-data-response} with $h=g$.
The forcing $g_{g,t}^\lambda$ is linear in $g$, while
$A_t^\lambda$ is independent of $g$; uniqueness therefore makes
$g\mapsto u_t^\lambda[g]:\mathcal V_S\to\Theta$ linear for each fixed
$(t,\lambda)$.  For each deletion direction $h_i$, the
derivative of the actual chord
$\alpha\mapsto\theta_t^{P_n+\alpha h_i,\lambda}$ solves
\eqref{eq:first-data-response} with $h=h_i$, and hence equals
$u_t^\lambda[h_i]$.  By
\eqref{eq:risk-first-response},
\[
 \psi_{i,t,\lambda}'(0)
 =\left\langle
   P_0\nabla\ell^{\rm ev}_{\lambda,Z}(\theta_t^\lambda),
   u_t^\lambda[h_i]\right\rangle.
\]
The map from $\mathcal V_S$ to $\R$ given by
$g\mapsto\langle P_0\nabla\ell^{\rm ev}_{\lambda,Z}(\theta_t^\lambda),
u_t^\lambda[g]\rangle$ is linear, agrees with
$\psi_{i,t,\lambda}'(0)$ at every $h_i$, and supplies the linear-functional
sufficient condition for \eqref{eq:average-first-order-cancellation}.

For $\kappa\geq0$, define the nonnegative response-gain function
$\chi_\kappa:[0,\infty)\to[0,\infty)$ by
\begin{equation}\label{eq:chi-definition}
 \chi_\kappa(t)=
 \begin{cases}
 (\kappa t e^{\kappa t}-e^{\kappa t}+1)/\kappa^2,&\kappa>0,\\
 t^2/2,&\kappa=0.
 \end{cases}
\end{equation}
\begin{corollary}[A sufficient deletion-chord curvature bound]\label[corollary]{cor:cpp-bound}
Fix $\kappa\in[0,\infty)$.
For every $i\in[n]$, $0\leq\alpha_0\leq(n-1)^{-1}$, and
$P=P_n+\alpha_0 h_i\in\mathfrak P_S$ with $h_i\ne0$, define the
sample-dependent realized normalized deletion direction
$\widetilde h_i:=h_i/\|h_i\|_{\TV}\in\mathcal M_0$.  For every $\lambda\in\Lambda$ and
$t\in[0,T]$, suppose there is an interval containing zero on which the flows
trained under $P+\alpha\widetilde h_i$ exist, the map
$\alpha\mapsto\theta_t^{P+\alpha\widetilde h_i,\lambda}$ is twice
continuously differentiable, and the first and second $\alpha$-derivatives of
$\theta_t^{P+\alpha\widetilde h_i,\lambda}$ commute with the time derivative
and may be passed through the training and population-risk integrals.  Suppose
there are
$G_h,H_h,J_3,G_0,H_0\in[0,\infty)$, possibly depending on the realized sample
$S$, such that, uniformly over
$i\in[n]$ with $h_i\ne0$, $\alpha_0\in[0,(n-1)^{-1}]$, $\lambda\in\Lambda$, and
$t\in[0,T]$,
\[
\begin{gathered}
 \left\|\int\nabla\ell^{\rm tr}_{\lambda,z}(\theta_t^{P,\lambda})
          \,\dd\widetilde h_i(z)\right\|\leq G_h,\quad
 \left\|\int\nabla^2\ell^{\rm tr}_{\lambda,z}(\theta_t^{P,\lambda})
          \,\dd\widetilde h_i(z)\right\|_{\op}\leq H_h,\\
 \|\nabla_\theta^3F_{P,\lambda}(\theta_t^{P,\lambda})\|_{\op}\leq J_3,
 \quad \nabla^2F_{P,\lambda}(\theta_t^{P,\lambda})\succeq-\kappa I,\\
 \|P_0\nabla\ell^{\rm ev}_{\lambda,Z}(\theta_t^{P,\lambda})\|
 \leq G_0,\quad
 \|P_0\nabla^2\ell^{\rm ev}_{\lambda,Z}(\theta_t^{P,\lambda})\|_{\op}
 \leq H_0.
\end{gathered}
\]
The averaged first-order cancellation condition
\eqref{eq:average-first-order-cancellation} holds, and the curvature condition
\eqref{eq:measure-curvature-bound} holds with
\begin{equation}\label{eq:cpp-explicit}
 C_{PP}=H_0G_h^2\phi_\kappa(T)^2
 +G_0\{2H_hG_h\chi_\kappa(T)+J_3G_h^2\mathcal J_\kappa(T)\}.
\end{equation}
If $h_i=0$, then $P_n+\alpha_0 h_i=P_n$ for every
$0\leq\alpha_0\leq(n-1)^{-1}$.  Consequently, the scalar chord map
$\alpha_0\mapsto\psi_{i,t,\lambda}(\alpha_0)$ in
\eqref{eq:deletion-chord-functional} is constant, so
$\psi_{i,t,\lambda}''(\alpha_0)=0$.
\end{corollary}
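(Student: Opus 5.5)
The plan is to treat the two conclusions separately. The averaged first-order cancellation \eqref{eq:average-first-order-cancellation} is essentially already in hand from the discussion preceding the corollary. At $P=P_n$, the derivative of the chord $\alpha\mapsto\theta_t^{P_n+\alpha h_i,\lambda}$ solves \eqref{eq:first-data-response} with $h=h_i$. By uniqueness for this linear ODE it equals $u_t^\lambda[h_i]$. The map $g\mapsto u_t^\lambda[g]$ is linear on $\mathcal V_S$, so \eqref{eq:risk-first-response} shows that $\psi_{i,t,\lambda}'(0)=\ell_{t,\lambda}(h_i)$ for one linear functional $\ell_{t,\lambda}$ on $\mathcal V_S$. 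Averaging over $i$ and using $n^{-1}\sum_ih_i=0$ from \eqref{eq:delete-measure} gives zero for every $(t,\lambda)\in\cH$.

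Two bookkeeping points are needed to apply the same machinery in the normalized direction. The stated hypotheses are phrased in $\widetilde h_i$, so for $h_i\neq0$ I would justify them at $\alpha_0=0$ by the scaling below. The case $h_i=0$ is trivial, as noted at the end of the statement.

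For the curvature bound, fix $i$ with $h_i\neq0$, $\alpha_0\in[0,(n-1)^{-1}]$, and put $P=P_n+\alpha_0h_i$. Since $P_n+uh_i=P+\{(u-\alpha_0)\|h_i\|_{\TV}\}\widetilde h_i$, the chain rule gives
\[
 \psi_{i,t,\lambda}''(\alpha_0)=\|h_i\|_{\TV}^2\,
 \frac{\dd^2}{\dd\alpha^2}P_0\ell^{\rm ev}_{\lambda,Z}
 (\theta_t^{P+\alpha\widetilde h_i,\lambda})\Big|_{\alpha=0}.
\]
At the chord endpoints this is a one-sided derivative, covered by the endpoint convention of the continuous-measure-response setup. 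It therefore suffices to bound the right-hand second derivative by $C_{PP}$ in \eqref{eq:cpp-explicit}. I would use the setup of Section~\ref{app:continuous-measure-responses} with base $P$ and direction $\widetilde h_i$, so that \eqref{eq:risk-second-response} expresses this quantity through $u_t:=u_t^\lambda[\widetilde h_i]$ and $u_t^{(2)}:=u_t^{(2),\lambda}[\widetilde h_i,\widetilde h_i]$. The result is
\[
 |\cdot|\le H_0\|u_t\|^2+G_0\|u_t^{(2)}\|.
\]

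The key estimates are as follows. First, the propagator generated by $-A_t^\lambda$ has operator norm at most $e^{\kappa(t-s)}$, by the energy argument of \eqref{eq:deleted-propagator-energy-bound} using $A_t^\lambda\succeq-\kappa I$. Second, Duhamel applied to \eqref{eq:first-data-response} with $\|g_{\widetilde h_i,t}^\lambda\|\le G_h$ gives
\[
 \|u_t\|\le G_h\int_0^te^{\kappa(t-s)}\dd s=G_h\phi_\kappa(t).
\]
Third, Duhamel applied to \eqref{eq:second-data-response}, with $\|H_{\widetilde h_i,s}^\lambda\|_{\op}\le H_h$ and $\|\mathsf T_s^\lambda[u,u]\|\le J_3\|u\|^2$ (from the Riesz definition and $\|\nabla^3F\|_{\op}\le J_3$), gives
\[
 \|u_t^{(2)}\|\le 2H_hG_h\int_0^te^{\kappa(t-s)}\phi_\kappa(s)\dd s
 +J_3G_h^2\int_0^te^{\kappa(t-s)}\phi_\kappa(s)^2\dd s .
\]

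Direct integration identifies the first integral with $\chi_\kappa(t)$ in \eqref{eq:chi-definition}: for $\kappa>0$, $\int_0^t e^{\kappa(t-s)}(e^{\kappa s}-1)/\kappa\,\dd s=(\kappa te^{\kappa t}-e^{\kappa t}+1)/\kappa^2$. The second integral is $\mathcal J_\kappa(t)$ by \eqref{eq:nonlinear-response-gain-integral}. All three gains $\phi_\kappa,\chi_\kappa,\mathcal J_\kappa$ are nondecreasing, since their integrands are nonnegative, so each may be evaluated at $T$. Combining the three bounds yields exactly \eqref{eq:cpp-explicit}, uniformly over $i$, $\alpha_0$, $\lambda$, and $t$.

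I expect the main obstacle to be justification rather than estimation. One must verify that the $\alpha$-derivatives of the actual chord flow coincide with the solutions of \eqref{eq:first-data-response}--\eqref{eq:second-data-response} at the shifted base point $P\neq P_n$. This requires the commuting and differentiation-under-the-integral hypotheses, which the statement supplies, together with uniqueness for these linear inhomogeneous ODEs with continuous coefficients. I would also check that the envelope hypotheses are imposed along the base path $\theta^{P,\lambda}$ for every chord point, which is how they are stated.
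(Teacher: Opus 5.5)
Your proposal is correct and follows the paper's proof essentially step for step. The shared steps are: linearity of $g\mapsto u_t^\lambda[g]$ on $\mathcal V_S$ together with $n^{-1}\sum_i h_i=0$ for the first-order cancellation; then the $e^{\kappa(t-s)}$ energy bound on the propagator; Duhamel estimates giving $G_h\phi_\kappa$ and $2H_hG_h\chi_\kappa+J_3G_h^2\mathcal J_\kappa$; and finally the $\|h_i\|_{\TV}^2$ rescaling of the chord.

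The only small imprecision is your monotonicity argument for $\chi_\kappa$ and $\mathcal J_\kappa$. Nonnegative integrands alone do not suffice, since the integrand also depends on $t$ through $e^{\kappa(t-s)}$. Monotonicity instead uses $\kappa\ge0$, for example via $\chi_\kappa'=\phi_\kappa+\kappa\chi_\kappa\ge0$ and $\mathcal J_\kappa'=\phi_\kappa^2+\kappa\mathcal J_\kappa\ge0$, as the paper notes.
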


\begin{proof}[Proof of \cref{cor:cpp-bound}]
For every $(t,\lambda)\in\cH$, the forcing
$g\mapsto g_{g,t}^\lambda$ is linear on $\mathcal V_S$, so uniqueness in
\eqref{eq:first-data-response} makes
$g\mapsto u_t^\lambda[g]$ linear on $\mathcal V_S$.  Equations
\eqref{eq:risk-first-response} and \eqref{eq:delete-measure} then give
\[
 \frac1n\sum_{i=1}^n\psi_{i,t,\lambda}'(0)
 =\left\langle
   P_0\nabla\ell^{\rm ev}_{\lambda,Z}(\theta_t^\lambda),
   u_t^\lambda\!\left[\frac1n\sum_{i=1}^nh_i\right]
  \right\rangle
 =0,
\]
which proves \eqref{eq:average-first-order-cancellation}.

Fix $i\in[n]$, $\alpha_0\in[0,(n-1)^{-1}]$, $\lambda\in\Lambda$, and
$0\leq t\leq T$.  If $h_i=0$, then the map $\psi_{i,t,\lambda}$ in
\eqref{eq:deletion-chord-functional} is constant on
$[0,(n-1)^{-1}]$, so $\psi_{i,t,\lambda}''(\alpha_0)=0$.  Suppose $h_i\ne0$
and set
$P=P_n+\alpha_0h_i$ and $\widetilde h_i=h_i/\|h_i\|_{\TV}$.
For $0\leq r\leq s\leq t$, let
$U^\lambda(s,r):\Theta\to\Theta$ solve
\[
 \partial_sU^\lambda(s,r)=-A_s^\lambda U^\lambda(s,r),
 \qquad U^\lambda(r,r)=I.
\]
For every $v\in\Theta$, the lower Hessian bound
$A_s^\lambda\succeq-\kappa I$ gives
\[
 \frac12\frac{\dd}{\dd s}\|U^\lambda(s,r)v\|^2
 =-\langle U^\lambda(s,r)v,
          A_s^\lambda U^\lambda(s,r)v\rangle
 \leq\kappa\|U^\lambda(s,r)v\|^2.
\]
Gronwall's inequality and $U^\lambda(r,r)=I$ yield
$\|U^\lambda(s,r)\|_{\op}\leq e^{\kappa(s-r)}$.

At the base measure $P$ and in the direction $\widetilde h_i$, the common
flow initialization is independent of the perturbation parameter, so the
initial values in \cref{eq:first-data-response,eq:second-data-response} are
zero.  Integrating \cref{eq:first-data-response,eq:second-data-response}
against $U^\lambda(s,r)$ from time $0$ to $s$ gives
\[
\begin{aligned}
 u_s^\lambda[\widetilde h_i]
 &=-\int_0^sU^\lambda(s,r)g_{\widetilde h_i,r}^\lambda\,\dd r,\\
 u_s^{(2),\lambda}[\widetilde h_i,\widetilde h_i]
 &=-\int_0^sU^\lambda(s,r)
   \Bigl\{2H_{\widetilde h_i,r}^\lambda
              u_r^\lambda[\widetilde h_i]
   +\mathsf T_r^\lambda
      [u_r^\lambda[\widetilde h_i],u_r^\lambda[\widetilde h_i]]
   \Bigr\}\,\dd r.
\end{aligned}
\]
For $0\leq r\leq t$, the normalized direction satisfies
\[
 \|g_{\widetilde h_i,r}^\lambda\|\leq G_h,
 \qquad
 \|H_{\widetilde h_i,r}^\lambda\|_{\op}\leq H_h,
 \qquad
 \|\mathsf T_r^\lambda[v,v]\|\leq J_3\|v\|^2,
 \qquad v\in\Theta.
\]
Consequently, for $0\leq s\leq t$,
\[
 \|u_s^\lambda[\widetilde h_i]\|
 \leq G_h\int_0^s e^{\kappa(s-r)}\,\dd r
 =G_h\phi_\kappa(s).
\]
The integral formula for
$u_t^{(2),\lambda}[\widetilde h_i,\widetilde h_i]$,
$\|u_r^\lambda[\widetilde h_i]\|\leq G_h\phi_\kappa(r)$, and
\eqref{eq:chi-definition} and \eqref{eq:phi-J-def} give
\[
\begin{aligned}
 \|u_t^{(2),\lambda}[\widetilde h_i,\widetilde h_i]\|
 &\leq2H_hG_h\int_0^t
       e^{\kappa(t-r)}\phi_\kappa(r)\,\dd r\\
 &\quad+J_3G_h^2\int_0^t
       e^{\kappa(t-r)}\phi_\kappa(r)^2\,\dd r\\
 &=2H_hG_h\chi_\kappa(t)+J_3G_h^2\mathcal J_\kappa(t).
\end{aligned}
\]
For $t\geq0$,
$\phi_\kappa'(t)=e^{\kappa t}$,
$\chi_\kappa'(t)=\phi_\kappa(t)+\kappa\chi_\kappa(t)$, and
$\mathcal J_\kappa'(t)=\phi_\kappa(t)^2+\kappa\mathcal J_\kappa(t)$ are
nonnegative, so the gains are nondecreasing.  Applying
\eqref{eq:risk-second-response} gives
\[
\begin{aligned}
 &\left|\left.\frac{\dd^2}{\dd\alpha^2}
 P_0\ell^{\rm ev}_{\lambda,Z}
   (\theta_t^{P+\alpha\widetilde h_i,\lambda})
 \right|_{\alpha=0}\right|\\
 &\quad\leq
 H_0\|u_t^\lambda[\widetilde h_i]\|^2
 +G_0\|u_t^{(2),\lambda}[\widetilde h_i,\widetilde h_i]\|\\
 &\quad\leq H_0G_h^2\phi_\kappa(T)^2
 +G_0\{2H_hG_h\chi_\kappa(T)
          +J_3G_h^2\mathcal J_\kappa(T)\}
 =C_{PP}.
\end{aligned}
\]

For every $\eta$ such that
$\alpha_0+\eta\in[0,(n-1)^{-1}]$,
\[
 \psi_{i,t,\lambda}(\alpha_0+\eta)
 =P_0\ell^{\rm ev}_{\lambda,Z}
   \bigl(\theta_t^{P+\eta\|h_i\|_{\TV}\widetilde h_i,\lambda}\bigr).
\]
Twice differentiating at $\eta=0$, with one-sided derivatives when
$\alpha_0$ is a chord endpoint, gives
\[
\begin{aligned}
 |\psi_{i,t,\lambda}''(\alpha_0)|
 &=\|h_i\|_{\TV}^2
 \left|\left.\frac{\dd^2}{\dd\alpha^2}
 P_0\ell^{\rm ev}_{\lambda,Z}
   (\theta_t^{P+\alpha\widetilde h_i,\lambda})
 \right|_{\alpha=0}\right|\\
 &\leq C_{PP}\|h_i\|_{\TV}^2.
\end{aligned}
\]
Taking the supremum over $i$, $\alpha_0$, $t$, and $\lambda$ proves
\eqref{eq:measure-curvature-bound}.
\end{proof}

\subsubsection{Exact-LOO concentration}

\begin{proof}[Proof of \cref{thm:loo-concentration}]
Fix $r>0$ with $\mathcal N(\mathcal U,\rho,r)<\infty$ and
$\delta\in(0,1)$.  Define the sample-dependent real-valued process
$D_{S,\mathcal U}:\mathcal U\to\R$ by
$D_{S,\mathcal U}(u)=\Score_{n,\mathcal U}^{\LOO}(u)
-\Risk_{S,\mathcal U}^-(u)$, and fix
$u\in\mathcal U$.
Fix $j\in[n]$ and an arbitrary deterministic $z_j'\in\mathsf Z$.  Replace
$Z_j$ by $z_j'$ and let $S':\Omega\to\mathsf Z^n$ be the resulting
measurable random sample.
Because $S_{-j}'=S_{-j}$, the difference between the $i=j$ LOO contributions
for $S$ and $S'$ satisfies
$n^{-1}|L_{n-1,u}(S_{-j},Z_j)-L_{n-1,u}(S_{-j},z_j')|
\leq B_{n-1}^{\rm rng}/n$.
Each of the other $n-1$ LOO-score summands changes by at most
$\beta_{n-1}/n$.  The corresponding population-risk term
$n^{-1}P_0L_{n-1,u}(S_{-j},Z)$ is unchanged, and each of the other
population-risk summands changes by at most $\beta_{n-1}/n$.
Thus, since $j\in[n]$ and $z_j'\in\mathsf Z$ were arbitrary,
\[
 \left|D_{S,\mathcal U}(u)-D_{S',\mathcal U}(u)\right|
 \leq \frac{B_{n-1}^{\rm rng}}n
      +2(n-1)\frac{\beta_{n-1}}n
 =c_n^-.
\]

Moreover, $\E D_{S,\mathcal U}(u)=0$: conditional on $S_{-i}$, the omitted observation
$Z_i$ has law $P_0$.  If $c_n^-=0$, every coordinate oscillation is zero, so
$D_{S,\mathcal U}(u)$ is almost surely constant and the centering gives
$D_{S,\mathcal U}(u)=0$ almost
surely.  For $c_n^->0$ and $x>0$, McDiarmid's bounded-differences inequality
\cite[Lemma~1.2]{mcdiarmid1989bounded} gives
\begin{equation}\label{eq:loo-pointwise-tail}
 \Pp\{|D_{S,\mathcal U}(u)|>x\}
 \leq2\exp\left\{-\frac{2x^2}{n(c_n^-)^2}\right\}.
\end{equation}
Let $\mathcal T_r\subset\mathcal U$ be a deterministic finite $r$-net with
$|\mathcal T_r|\leq\mathcal N(\mathcal U,\rho,r)$.  For any
$u\in\mathcal U$, choose $v\in\mathcal T_r$ with $\rho(u,v)\leq r$.
The modulus in \eqref{eq:hyper-modulus} gives
\begin{equation}\label{eq:loo-net-score-risk-modulus}
\begin{aligned}
 &\left|\Score_{n,\mathcal U}^{\LOO}(u)
             -\Score_{n,\mathcal U}^{\LOO}(v)\right|
 \leq\frac1n\sum_{i=1}^n
 \left|L_{n-1,u}(S_{-i},Z_i)-L_{n-1,v}(S_{-i},Z_i)\right|
 \leq\omega_{n-1}(r),\\
 &\left|\Risk_{S,\mathcal U}^-(u)-\Risk_{S,\mathcal U}^-(v)\right|
 \leq\frac1n\sum_{i=1}^nP_0
 \left|L_{n-1,u}(S_{-i},Z)-L_{n-1,v}(S_{-i},Z)\right|
 \leq\omega_{n-1}(r).
\end{aligned}
\end{equation}
The triangle inequality and \eqref{eq:loo-net-score-risk-modulus} imply
\begin{equation}\label{eq:loo-net-process-modulus}
 |D_{S,\mathcal U}(u)-D_{S,\mathcal U}(v)|
 \leq
 \left|\Score_{n,\mathcal U}^{\LOO}(u)
             -\Score_{n,\mathcal U}^{\LOO}(v)\right|
 +\left|\Risk_{S,\mathcal U}^-(u)-\Risk_{S,\mathcal U}^-(v)\right|
 \leq2\omega_{n-1}(r).
\end{equation}
Set
\begin{equation}\label{eq:loo-net-threshold}
 q_{n,r,\delta}
 =c_n^-\left\{\frac n2
   \log\frac{2\mathcal N(\mathcal U,\rho,r)}\delta\right\}^{1/2}.
\end{equation}
Thus $q_{n,r,\delta}\in[0,\infty)$ is a finite deterministic net threshold.
If $c_n^->0$, apply \eqref{eq:loo-pointwise-tail} with
$x=q_{n,r,\delta}$, sum over $v\in\mathcal T_r$, and use
$|\mathcal T_r|\leq\mathcal N(\mathcal U,\rho,r)$ to obtain
\begin{equation}\label{eq:loo-net-event-probability}
 \Pp\left\{\max_{v\in\mathcal T_r}|D_{S,\mathcal U}(v)|
   >q_{n,r,\delta}\right\}
 \leq2\mathcal N(\mathcal U,\rho,r)
 \exp\left\{-\frac{2q_{n,r,\delta}^2}{n(c_n^-)^2}\right\}
 =\delta.
\end{equation}
If $c_n^-=0$, every $D_{S,\mathcal U}(v)$ for $v\in\mathcal T_r$ is zero almost surely,
and $q_{n,r,\delta}=0$ by \eqref{eq:loo-net-threshold}, so the probability in
\eqref{eq:loo-net-event-probability} is zero.  Thus
the event
$\{\max_{v\in\mathcal T_r}|D_{S,\mathcal U}(v)|
\leq q_{n,r,\delta}\}$ has probability at
least $1-\delta$ in both cases.  On the event
$\{\max_{v\in\mathcal T_r}|D_{S,\mathcal U}(v)|
\leq q_{n,r,\delta}\}$,
\eqref{eq:loo-net-process-modulus} gives
\begin{equation}\label{eq:loo-net-supremum-bound}
 \sup_{u\in\mathcal U}|D_{S,\mathcal U}(u)|
 \leq q_{n,r,\delta}+2\omega_{n-1}(r)=s_n(r,\delta),
\end{equation}
where the equality in \eqref{eq:loo-net-supremum-bound} uses
\eqref{eq:sn-definition}.
Moreover, \eqref{eq:hyper-modulus} and
$\omega_{n-1}(s)\downarrow0$ as $s\downarrow0$
make both score and deletion-risk sample paths continuous in $u$.  Fix a
deterministic countable dense subset $\mathcal U_0\subset\mathcal U$.  The joint
measurability assumed in \cref{thm:loo-concentration} makes
$D_{S,\mathcal U}(v)$ a
measurable function of $S$ for every $v\in\mathcal U_0$, while continuity
gives
$\sup_{u\in\mathcal U}|D_{S,\mathcal U}(u)|
=\sup_{v\in\mathcal U_0}|D_{S,\mathcal U}(v)|$.  The map
$S\mapsto\sup_{u\in\mathcal U}|D_{S,\mathcal U}(u)|$ is therefore measurable,
so the event in \eqref{eq:loo-concentration-bound} is measurable.  The
probability bound in \eqref{eq:loo-net-event-probability} for $c_n^->0$, the
almost-sure equalities $D_{S,\mathcal U}(v)=0$ for $v\in\mathcal T_r$ when
$c_n^-=0$, and
\eqref{eq:loo-net-supremum-bound} give
\[
 1-\delta
 \leq\Pp\left\{\max_{v\in\mathcal T_r}|D_{S,\mathcal U}(v)|
          \leq q_{n,r,\delta}\right\}
 \leq\Pp\left\{\sup_{u\in\mathcal U}|D_{S,\mathcal U}(u)|
          \leq s_n(r,\delta)\right\},
\]
which proves \eqref{eq:loo-concentration-bound}.
\end{proof}

\begin{proof}[Proof of \cref{thm:chained-loo-concentration}]
For $p=(t,\lambda)\in\cH$, write
$L_{n-1,p}=L_{n-1,(t,\lambda)}$.  For $\mathsf s_n\in\mathsf Z^n$, let
$\mathfrak D_p(\mathsf s_n)$ be the expression in
\cref{eq:relative-loo-process} with $S=\mathsf s_n$.  The resulting
deterministic measurable map is
$\mathfrak D_p:(\mathsf Z^n,\mathcal Z^{\otimes n})
\to(\R,\mathcal B(\R))$.  Put
$X_p:=\mathfrak D_p(S):\Omega\to\R$ for the resulting measurable random
variable.  For $p,q\in\cH$, define the deterministic
evaluated-loss increment map
\[
\begin{aligned}
 L_{p,q}^{\rm inc}:(\mathsf Z^{n-1}\times\mathsf Z,
 \mathcal Z^{\otimes(n-1)}\otimes\mathcal Z)
 &\longrightarrow(\R,\mathcal B(\R)),\\
 L_{p,q}^{\rm inc}(\mathsf a,z)
 &=L_{n-1,p}(\mathsf a,z)-L_{n-1,q}(\mathsf a,z),
 \quad (\mathsf a,z)\in\mathsf Z^{n-1}\times\mathsf Z.
\end{aligned}
\]
$L_{p,q}^{\rm inc}$ is measurable.
The LOO contrast between $p$ and $q$ consists of an evaluation term and a
population-centering term:
\[
 X_p-X_q
 =\frac1n\sum_{i=1}^n\left[
   \underbrace{L_{p,q}^{\rm inc}(S_{-i},Z_i)}_{\text{LOO evaluation term}}
   -\underbrace{P_0L_{p,q}^{\rm inc}(S_{-i},Z)}_
        {\text{population-centering term}}
 \right].
\]
For every $\mathsf a\in\mathsf Z^{n-1}$ and $z\in\mathsf Z$,
\cref{eq:hyper-lipschitz-constant} gives
\[
 |L_{p,q}^{\rm inc}(\mathsf a,z)|\leq L_{n-1}^{\rm hyp}d_{\cH}(p,q).
\]
Conditional independence of $Z_i$ and $S_{-i}$ then gives
\[
 \E\{L_{p,q}^{\rm inc}(S_{-i},Z_i)\mid S_{-i}\}
 =P_0L_{p,q}^{\rm inc}(S_{-i},Z),
 \qquad \E(X_p-X_q)=0.
\]

Fix $j\in[n]$ and an arbitrary deterministic replacement $z_j'\in\mathsf Z$,
and let
\[
 S^{(j)}=(Z_1,\ldots,Z_{j-1},z_j',Z_{j+1},\ldots,Z_n):
 \Omega\to\mathsf Z^n.
\]
$S^{(j)}$ is a measurable random element under
$\mathcal Z^{\otimes n}$.  Let
$S_{-i}^{(j)}:\Omega\to\mathsf Z^{n-1}$ be the measurable random element
obtained by deleting observation $i$ from $S^{(j)}$.  Since
$S_{-j}^{(j)}=S_{-j}$, the changes of the $i=j$ LOO
evaluation term and the $i=j$ population-centering term are, respectively,
\begin{align*}
 &\frac1n\left|
 L_{p,q}^{\rm inc}(S_{-j},Z_j)
 -L_{p,q}^{\rm inc}(S_{-j}^{(j)},z_j')\right|
 =\frac1n\left|
 L_{p,q}^{\rm inc}(S_{-j},Z_j)-L_{p,q}^{\rm inc}(S_{-j},z_j')\right|
 \leq \frac{2L_{n-1}^{\rm hyp}}n d_{\cH}(p,q),\\
 &\frac1n\left|P_0\left\{
 L_{p,q}^{\rm inc}(S_{-j},Z)-L_{p,q}^{\rm inc}(S_{-j}^{(j)},Z)
 \right\}\right|=0.
\end{align*}
For $i,j\in[n]$ with $i\ne j$, the samples $S_{-i}$ and
$S_{-i}^{(j)}$ differ in exactly one
coordinate.  Hence \cref{eq:mixed-stability} gives the two separate bounds
\begin{align*}
 &\frac1n\left|
 L_{p,q}^{\rm inc}(S_{-i},Z_i)
 -L_{p,q}^{\rm inc}(S_{-i}^{(j)},Z_i)\right|
 \leq\frac{\beta_{n-1}^{\rm mix}}n d_{\cH}(p,q),\\
 &\frac1n\left|P_0\left\{
 L_{p,q}^{\rm inc}(S_{-i},Z)
 -L_{p,q}^{\rm inc}(S_{-i}^{(j)},Z)\right\}\right|
 \leq\frac{\beta_{n-1}^{\rm mix}}n d_{\cH}(p,q).
\end{align*}
For every $j\in[n]$ and every deterministic replacement $z_j'\in\mathsf Z$,
the coordinate oscillation is
\begin{align}
 &\left|\{\mathfrak D_p(S)-\mathfrak D_q(S)\}
 -\{\mathfrak D_p(S^{(j)})-\mathfrak D_q(S^{(j)})\}\right|
 \label{eq:chained-loo-coordinate-oscillation}\\
 &\quad\leq\frac1n\left|
 L_{p,q}^{\rm inc}(S_{-j},Z_j)
 -L_{p,q}^{\rm inc}(S_{-j},z_j')\right|\notag\\
 &\qquad
 +\sum_{i\in[n]\setminus\{j\}}\frac1n\left|
 L_{p,q}^{\rm inc}(S_{-i},Z_i)
 -L_{p,q}^{\rm inc}(S_{-i}^{(j)},Z_i)\right|\notag\\
 &\qquad
 +\sum_{i\in[n]\setminus\{j\}}\frac1n\left|P_0\left\{
 L_{p,q}^{\rm inc}(S_{-i},Z)
 -L_{p,q}^{\rm inc}(S_{-i}^{(j)},Z)\right\}\right|\notag\\
 &\quad\leq
 \left\{\frac{2L_{n-1}^{\rm hyp}}n
       +\sum_{i\in[n]\setminus\{j\}}\frac{2\beta_{n-1}^{\rm mix}}n\right\}d_{\cH}(p,q)
 \notag\\
 &\quad=\frac{2\{L_{n-1}^{\rm hyp}+(n-1)\beta_{n-1}^{\rm mix}\}}n
 d_{\cH}(p,q)
 =\frac{2a_n^{\rm mix}}n d_{\cH}(p,q).\notag
\end{align}
McDiarmid's bounded-differences moment-generating-function argument
\cite[Lemma~5.8 and the proof of Theorem~6.7]{mcdiarmid1989bounded}, applied
also to the negative increment, now gives, for every $\xi\in\R$,
\begin{equation}\label{eq:chained-loo-subgaussian-increments}
\begin{aligned}
 \log\E\exp\{\xi(X_p-X_q)\}
 &\leq\frac{\xi^2}{8}\sum_{j=1}^n
 \left\{\frac{2a_n^{\rm mix}}n d_{\cH}(p,q)\right\}^2\\
 &=\frac{\xi^2(a_n^{\rm mix})^2
 d_{\cH}(p,q)^2}{2n}.
\end{aligned}
\end{equation}

We now give the finite-level chaining calculation.  If $D_{\cH}=0$, then
$\cH=\{(t_0,\lambda_0)\}$ and
$\sup_{p\in\cH}|X_p|=|X_{(t_0,\lambda_0)}|=0$.  If
$a_n^{\rm mix}=0$, then
\cref{eq:mixed-increment-constant} gives $L_{n-1}^{\rm hyp}=0$, so
$L_{p,q}^{\rm inc}(\mathsf a,z)=0$ by \cref{eq:hyper-lipschitz-constant} and hence
$\sup_{p\in\cH}|X_p|=0$.  Hence assume $D_{\cH}>0$ and
$a_n^{\rm mix}>0$, and set
$\epsilon_k=2^{-k}D_{\cH}$ for $k\in\{0,1,2,\ldots\}$.  Each $\epsilon_k$ is
a finite positive deterministic mesh radius.  Put the deterministic set
$\mathcal T_0=\{(t_0,\lambda_0)\}$ and define
$\Pi_0:\cH\to\mathcal T_0$ by $\Pi_0p=(t_0,\lambda_0)$.  The diameter gives
\[
 d_{\cH}(p,\Pi_0p)\leq D_{\cH}=\epsilon_0,\qquad p\in\cH.
\]
For $k\geq1$, choose a deterministic finite $\epsilon_k$-net $\mathcal T_k$ with
$|\mathcal T_k|\leq \mathcal N(\cH,d_{\cH},\epsilon_k)$ and a deterministic map
$\Pi_k:\cH\to\mathcal T_k$ satisfying
\[
 d_{\cH}(p,\Pi_kp)=\min_{v\in\mathcal T_k}d_{\cH}(p,v)\leq\epsilon_k,
 \qquad p\in\cH.
\]
The level-$k$ projection-link set is
\[
 \mathcal L_k^{\rm proj}
 :=\{(\Pi_kp,\Pi_{k-1}p):p\in\cH\}
 \subseteq\mathcal T_k\times\mathcal T_{k-1}.
\]
Thus $\mathcal L_k^{\rm proj}$ is a finite deterministic set for every
$k\geq1$.
Consequently, for every $p\in\cH$, the link lengths and the number of
distinct links satisfy
\begin{align*}
 d_{\cH}(\Pi_kp,\Pi_{k-1}p)
 &\leq d_{\cH}(\Pi_kp,p)+d_{\cH}(p,\Pi_{k-1}p)
 \leq\epsilon_k+\epsilon_{k-1}
 =\frac32\epsilon_{k-1},\\
 |\mathcal L_k^{\rm proj}|&\leq|\mathcal T_k||\mathcal T_{k-1}|.
\end{align*}

For $u,v\in\cH$ with $d_{\cH}(u,v)>0$, $r>0$, and $\xi>0$, Markov's inequality
applied to $\exp\{\xi(X_u-X_v)\}$ and
\cref{eq:chained-loo-subgaussian-increments} gives the exponential, or
Chernoff, bound
\[
 \Pp\{X_u-X_v>r\}
 \leq \exp\left\{-\xi r+
 \frac{\xi^2(a_n^{\rm mix})^2d_{\cH}(u,v)^2}{2n}\right\}.
\]
The exponent is minimized at
$\xi=nr/\{(a_n^{\rm mix})^2d_{\cH}(u,v)^2\}$.  Interchanging $u$ and $v$ and using
$d_{\cH}(v,u)=d_{\cH}(u,v)$ therefore yields
\[
 \max\bigl\{\Pp\{X_u-X_v>r\},\Pp\{X_v-X_u>r\}\bigr\}
 \leq\exp\left\{-\frac{nr^2}
 {2(a_n^{\rm mix})^2d_{\cH}(u,v)^2}\right\}.
\]
Since
$\{|X_u-X_v|>r\}=\{X_u-X_v>r\}\cup\{X_v-X_u>r\}$, a union bound gives
\[
 \Pp\{|X_u-X_v|>r\}
 \leq2\exp\left\{-\frac{nr^2}
 {2(a_n^{\rm mix})^2d_{\cH}(u,v)^2}\right\}.
\]
For $d_{\cH}(u,v)=0$, the metric property instead gives
\[
 d_{\cH}(u,v)=0\quad\Longrightarrow\quad u=v
 \quad\Longrightarrow\quad X_u-X_v=0.
\]
For $k\geq1$, set the finite positive deterministic scalars
$\delta_k=2^{-k}\delta$ and
\[
 b_k:=\frac{3a_n^{\rm mix}\epsilon_{k-1}}{2\sqrt n}
 \sqrt{2\log\frac{2|\mathcal T_k||\mathcal T_{k-1}|}{\delta_k}}.
\]
The threshold $b_k$ depends only on the deterministic mesh, covering numbers,
$n$, and $\delta$.
For each $(u,v)\in\mathcal L_k^{\rm proj}$ with $d_{\cH}(u,v)>0$, the inequality
$d_{\cH}(u,v)\leq3\epsilon_{k-1}/2$ yields
\begin{align*}
 \Pp\{|X_u-X_v|>b_k\}
 &\leq2\exp\left\{-\frac{nb_k^2}
 {2(a_n^{\rm mix})^2d_{\cH}(u,v)^2}\right\}\\
 &\leq2\exp\left\{-\frac{nb_k^2}
 {2(a_n^{\rm mix})^2(3\epsilon_{k-1}/2)^2}\right\}
 =\frac{\delta_k}{|\mathcal T_k||\mathcal T_{k-1}|}.
\end{align*}
Zero-length links have zero increment almost surely.  Thus the union bound at
level $k$ gives
\[
 \Pp\left\{\max_{(u,v)\in\mathcal L_k^{\rm proj}}
 |X_u-X_v|>b_k\right\}
 \leq\sum_{(u,v)\in\mathcal L_k^{\rm proj}}
 \Pp\{|X_u-X_v|>b_k\}
 \leq\delta_k.
\]
Define the sample-dependent simultaneous link event
\[
 \Omega_\delta^{\rm ch}
 :=\bigcap_{k\geq1}
 \left\{\max_{(u,v)\in\mathcal L_k^{\rm proj}}|X_u-X_v|\leq b_k\right\}.
\]
$\Omega_\delta^{\rm ch}$ belongs to $\mathcal F$ because the nets are deterministic and the
finite-level increments are measurable.
A second union bound gives
\[
 \Pp(\Omega_\delta^{\rm ch})
 \geq1-\sum_{k\geq1}\delta_k=1-\delta.
\]
On $\Omega_\delta^{\rm ch}$, the projection increments satisfy
\[
 |X_{\Pi_kp}-X_{\Pi_{k-1}p}|\leq b_k,
 \qquad p\in\cH,\quad k\geq1.
\]

Since the covering numbers increase as the mesh shrinks,
\[
 \log(|\mathcal T_k||\mathcal T_{k-1}|)
 \leq2\log \mathcal N(\cH,d_{\cH},\epsilon_k).
\]
Also,
\[
 \sum_{k\geq1}\epsilon_{k-1}
       \sqrt{\log \mathcal N(\cH,d_{\cH},\epsilon_k)}
 \leq4\int_0^{D_{\cH}}\sqrt{\log \mathcal N(\cH,d_{\cH},u)}\,\dd u,
\]
because the integral on $[\epsilon_{k+1},\epsilon_k]$ is at least
$(\epsilon_k/2)\sqrt{\log \mathcal N(\cH,d_{\cH},\epsilon_k)}$.  The remaining
dyadic term satisfies
\[
 \sum_{k\geq1}\epsilon_{k-1}
 \sqrt{\log(2^{k+1}/\delta)}
 \leq C_{\rm ch}D_{\cH}\sqrt{\log(4/\delta)}
\]
after enlarging, if necessary, the universal constant $C_{\rm ch}$ in
\cref{thm:chained-loo-concentration}.  The definition of $b_k$ and
$\sqrt{x+y}\leq\sqrt{x}+\sqrt{y}$ therefore imply
\[
 \sum_{k\geq1}b_k
 \leq\frac{C_{\rm ch}a_n^{\rm mix}}{\sqrt n}
 \left\{\mathfrak E(\cH,d_{\cH})
 +D_{\cH}\sqrt{\log\frac4\delta}\right\}.
\]
For every deterministic integer $K_{\rm ch}\geq1$, on
$\Omega_\delta^{\rm ch}$,
$X_{\Pi_0p}=X_{(t_0,\lambda_0)}=0$ and the telescoping identity gives,
simultaneously for $p\in\cH$,
\[
 |X_{\Pi_{K_{\rm ch}}p}|
 =\left|\sum_{k=1}^{K_{\rm ch}}
 \{X_{\Pi_kp}-X_{\Pi_{k-1}p}\}\right|
 \leq\sum_{k=1}^{K_{\rm ch}} b_k.
\]

For every $p,q\in\cH$,
\cref{eq:hyper-lipschitz-constant} gives the sample-path bound
\[
 |X_p-X_q|
 \leq\frac1n\sum_{i=1}^n\left\{
 |L_{p,q}^{\rm inc}(S_{-i},Z_i)|
 +P_0|L_{p,q}^{\rm inc}(S_{-i},Z)|\right\}
 \leq2L_{n-1}^{\rm hyp}d_{\cH}(p,q).
\]
Thus $p\mapsto X_p$ is $2L_{n-1}^{\rm hyp}$-Lipschitz and hence continuous
for every sample.  The Lipschitz bound and
$d_{\cH}(p,\Pi_{K_{\rm ch}}p)\leq\epsilon_{K_{\rm ch}}\to0$ give
$X_{\Pi_{K_{\rm ch}}p}\to X_p$ for every $p\in\cH$.
On $\Omega_\delta^{\rm ch}$, letting $K_{\rm ch}\to\infty$ in
$|X_{\Pi_{K_{\rm ch}}p}|\leq\sum_{k=1}^{K_{\rm ch}}b_k$, and then taking the
supremum over $p\in\cH$, gives
\[
 \sup_{p\in\cH}|X_p|
 \leq\sum_{k\geq1}b_k
 \leq s_n^{\rm ch}(\delta).
\]
Since every $\mathcal T_k$ is finite and
$d_{\cH}(p,\Pi_kp)\leq\epsilon_k\downarrow0$, the deterministic set
$\bigcup_{k\geq0}\mathcal T_k$ is countable and dense in $\cH$.  Continuity
therefore gives, samplewise,
$\sup_{p\in\cH}|X_p|
=\sup_{p\in\bigcup_{k\geq0}\mathcal T_k}|X_p|$.
For every fixed $p\in\bigcup_{k\geq0}\mathcal T_k$,
$X_p=\mathfrak D_p(S)$ is measurable, so
$\sup_{p\in\bigcup_{k\geq0}\mathcal T_k}|X_p|$ is a countable supremum of
measurable random variables.  Hence
$\sup_{p\in\cH}|X_p|$ is measurable.  Thus the measurable event
$\{\sup_{p\in\cH}|X_p|\leq s_n^{\rm ch}(\delta)\}$ contains
$\Omega_\delta^{\rm ch}$, and \eqref{eq:chained-loo-relative-bound} follows:
\[
 \Pp\left\{\sup_{p\in\cH}|X_p|\leq s_n^{\rm ch}(\delta)\right\}
 \geq\Pp(\Omega_\delta^{\rm ch})
 \geq1-\delta.
\]

\end{proof}

\subsection{Risk-curve recovery}

\begin{proof}[Proof of \cref{thm:oracle}]
For the absolute route, let $\mathcal C_{n,\delta}$ be a measurable event
supplied by \cref{thm:loo-concentration}, with
$\Pp(\mathcal C_{n,\delta})\geq1-\delta$.  On $\mathcal C_{n,\delta}$,
\[
 \sup_{(t,\lambda)\in\cH}
 \big|\Score_n^{\LOO}(t,\lambda)-\Risk_S^-(t,\lambda)-C_S\big|
 \leq s_n^\star(\delta),
 \qquad C_S=0.
\]
Under the chained route, the integrable anchor and
\cref{thm:chained-loo-concentration} give a measurable event
$\mathcal C_{n,\delta}$ with probability at least $1-\delta$ on which
\[
 \sup_{(t,\lambda)\in\cH}
 \big|\Score_n^{\LOO}(t,\lambda)-\Risk_S^-(t,\lambda)-C_S\big|
 \leq s_n^\star(\delta),
 \quad
 C_S=D_S(t_0,\lambda_0),\quad
 s_n^\star(\delta)=s_n^{\rm ch}(\delta).
\]

On $\mathcal E_n^{\rm reg}$, \cref{eq:score-approximation-bound} and
\eqref{eq:regularity-event-envelopes} give
\[
 \sup_{(t,\lambda)\in\cH}
 |\widetilde\Score_n(t,\lambda)-\Score_n^{\LOO}(t,\lambda)|
 \leq
 \frac{\overline G_{{\rm out},n}\overline L_{3,n}\overline M_n^2}
      {2(n-1)^2}\mathcal J_{\overline\kappa_n}(T).
\]
Here
\(
 \mathcal J_\kappa(T)=\int_0^T
 e^{\kappa(T-s)}\phi_\kappa(s)^2\,\dd s
\)
is nondecreasing in $\kappa\geq0$, because both factors in the integrand are
nondecreasing in $\kappa$.  On $\mathcal E_n^{\rm reg}$,
\cref{eq:cross-size-absolute,eq:regularity-event-envelopes} give
\[
 \sup_{(t,\lambda)\in\cH}
 |\Risk_S^-(t,\lambda)-\Risk_S(t,\lambda)|
 \leq\frac{2\overline C_{PP,n}}{(n-1)^2}.
\]
For every $(t,\lambda)\in\cH$, the triangle inequality gives, on
$\mathcal C_{n,\delta}\cap\mathcal E_n^{\rm reg}$,
\begin{align*}
 &|\widetilde\Score_n(t,\lambda)-\Risk_S(t,\lambda)-C_S|\\
 &\quad\leq
 |\widetilde\Score_n(t,\lambda)-\Score_n^{\LOO}(t,\lambda)|
 +|\Score_n^{\LOO}(t,\lambda)-\Risk_S^-(t,\lambda)-C_S|
 +|\Risk_S^-(t,\lambda)-\Risk_S(t,\lambda)|\\
 &\quad\leq\mathfrak r_n(\delta).
\end{align*}
Taking $\mathcal O_{n,\delta}
=\mathcal C_{n,\delta}\cap\mathcal E_n^{\rm reg}$ proves
\cref{eq:risk-curve-recovery}, since
$\Pp(\mathcal O_{n,\delta})\geq1-\delta-\tau_n$.
\end{proof}

\begin{proof}[Proof of \cref{cor:regularity-event-oracle}]
For the time-zero chained anchor,
\eqref{eq:flow-alo-response} gives $d_{i,0}^{\lambda_0}=0$, and the common
initialization in \eqref{eq:full-flow} gives
\[
 \Score_n^{\LOO}(0,\lambda_0)
 =\widetilde\Score_n(0,\lambda_0),
 \qquad
 \Risk_S^-(0,\lambda_0)=\Risk_S(0,\lambda_0).
\]
Hence
\[
 C_S=D_S(0,\lambda_0)
 =\widetilde\Score_n(0,\lambda_0)-\Risk_S(0,\lambda_0).
\]
Substituting
$C_S=\widetilde\Score_n(0,\lambda_0)-\Risk_S(0,\lambda_0)$ into
\eqref{eq:risk-curve-recovery} proves
\cref{eq:time-zero-centered-risk-curve}.

If \eqref{eq:approximate-score-minimizer} holds, then on
$\mathcal O_{n,\delta}$, for every $(t,\lambda)\in\cH$,
\[
 \Risk_S(\widehat t,\widehat\lambda)
 \leq\widetilde\Score_n(\widehat t,\widehat\lambda)-C_S+\mathfrak r_n(\delta)
 \leq\widetilde\Score_n(t,\lambda)-C_S+\mathfrak r_n(\delta)+\zeta_n
 \leq\Risk_S(t,\lambda)+2\mathfrak r_n(\delta)+\zeta_n.
\]
Taking the infimum over $(t,\lambda)\in\cH$ proves
\cref{eq:main-oracle-bound}.
\end{proof}

If $\zeta_n\equiv\zeta>0$ is deterministic and the map
$Q:\Omega\times\cH\to\mathbb R$ defined by
$Q(\xi,t,\lambda)=[\widetilde\Score_n(t,\lambda)](\xi)$ is
$\mathcal F\otimes\mathcal B(\cH)$-measurable and continuous in
$(t,\lambda)$ for every $\xi\in\Omega$, then an
$\mathcal F/\mathcal B(\cH)$-measurable selector satisfying
\eqref{eq:approximate-score-minimizer} exists.  The compact metric space
$\cH$ is separable, so fix a deterministic dense sequence
$p_k=(t_k,\lambda_k)$, $k\geq1$.  Samplewise
continuity gives, for every $\xi\in\Omega$,
\[
 m(\xi):=\inf_{(t,\lambda)\in\cH}Q(\xi,t,\lambda)
 =\inf_{k\geq1}Q(\xi,p_k).
\]
Each $Q(\mathord\cdot,p_k)$ is $\mathcal F$-measurable, so
$m=\inf_{k\geq1}Q(\mathord\cdot,p_k):\Omega\to\mathbb R$ is measurable.  For
deterministic $\zeta>0$, the definition of the infimum gives, for every
$\xi\in\Omega$, an index $k$ satisfying
$Q(\xi,p_k)<m(\xi)+\zeta$.  Thus, for every $\xi\in\Omega$, the set
$\{k\geq1:Q(\xi,p_k)\leq m(\xi)+\zeta\}$ is nonempty.  Define
\[
 J_\zeta:\Omega\to\{1,2,\ldots\},\qquad
 J_\zeta(\xi)
 =\min\{k\geq1:Q(\xi,p_k)\leq m(\xi)+\zeta\}.
\]
For every $k\geq1$,
\[
 \{J_\zeta=k\}
 =\{Q(\mathord\cdot,p_k)\leq m+\zeta\}
  \cap\bigcap_{j<k}\{Q(\mathord\cdot,p_j)>m+\zeta\}
 \in\mathcal F.
\]
Thus $J_\zeta$ is measurable for the discrete sigma-field.  Define
$\widehat p_\zeta=(\widehat t_\zeta,\widehat\lambda_\zeta)=p_{J_\zeta}$.
For every $A\in\mathcal B(\cH)$,
\[
 \{\widehat p_\zeta\in A\}
 =\bigcup_{\{k:p_k\in A\}}\{J_\zeta=k\}\in\mathcal F.
\]
Therefore $\widehat p_\zeta:\Omega\to\cH$ is
$\mathcal F/\mathcal B(\cH)$-measurable and satisfies
\eqref{eq:approximate-score-minimizer} with $\zeta_n\equiv\zeta$.

\subsubsection*{Tanh--logistic example}
For labels $y\in\{-1,1\}$, activation $\sigma(v)=\tanh(v)$, and outer loss
$\varphi(u,y)=\log(1+e^{-yu})$, \eqref{eq:mean-field-smoothness} holds with
\[
 (\mathsf S_0,\mathsf S_1,\mathsf S_2,\mathsf S_3)=(1,1,2,6),\qquad
 (\mathsf L_1,\mathsf L_2,\mathsf L_3)=(1,1/4,1/4),
 \qquad c_\varphi=\log 2,\quad L_0=0.
\]
Hence the tanh--logistic specification meets the smoothness premise of
\cref{cor:mean-field-width-oracle}.  Under the support, initialization, and
measurability hypotheses of \cref{cor:mean-field-width-oracle}, for width and
candidate families fixed in $n$, \eqref{eq:mean-field-width-risk-curve} is
$O(n^{-1/2})+O(n^{-2})$ at fixed horizon and confidence.

\begin{proof}[Proof of \cref{cor:mean-field-width-oracle}]
Set
\[
 A_T=Q_0+\mathsf L_1\mathsf S_0T,
 \quad
 W_T=Q_0+\mathsf L_1\mathsf S_1X
 \left(Q_0T+\frac{\mathsf L_1\mathsf S_0}{2}T^2\right),
 \quad Q_T=(A_T^2+W_T^2)^{1/2},
 \]
 \[
 B_{{\rm jac},T}=\mathsf S_0+A_T\mathsf S_1X,\quad
 D_{A_T}=2\mathsf S_1X+A_T\mathsf S_2X^2,\quad
 E_{A_T}=3\mathsf S_2X^2+A_T\mathsf S_3X^3,
\]
\[
 \Gamma_0=\mathsf L_2B_{{\rm jac},T}^2+\mathsf L_1D_{A_T},\quad
 \Gamma=\Gamma_0+\lambda_{\max},\quad
 \Xi=\mathsf L_3B_{{\rm jac},T}^3
     +3\mathsf L_2B_{{\rm jac},T}D_{A_T}+\mathsf L_1E_{A_T}.
\]
The constants
\[
 \begin{gathered}
 A_T,\ W_T,\ Q_T,\ B_{{\rm jac},T},\ D_{A_T},\ E_{A_T},\\
 \Gamma_0,\ \Gamma,\ \Xi
 \end{gathered}
\]
are finite nonnegative deterministic functions only of the fixed mean-field
primitives and $T$, independent of $m$, $n$, and the realized sample.  For a
particle array $q=(q_1,\ldots,q_m)\in\Theta_m$ and a linear map
$\mathsf R:\Theta_m\to\Theta_m$, define
\begin{equation}\label{eq:mean-field-array-norms}
 \begin{aligned}
 \|q\|_{{\rm b},\infty}
 &=\max_{1\leq r\leq m}\|q_r\|,\\
 \|\mathsf R\|_{m\to m}
 &=\sup_{\substack{u\in\Theta_m\\\|u\|_m\leq1}}\|\mathsf Ru\|_m,
 &\|\mathsf R\|_{{\rm b},\infty\to{\rm b},\infty}
 &=\sup_{\substack{u\in\Theta_m\\\|u\|_{{\rm b},\infty}\leq1}}
       \|\mathsf Ru\|_{{\rm b},\infty}.
 \end{aligned}
\end{equation}
The proof of \cref{thm:two-layer-mean-field} gives the particle-radius bound
\eqref{eq:mean-field-particle-radius}, the network-derivative bounds
\eqref{eq:mean-field-jacobian-bounds}--\eqref{eq:mean-field-output-hessian-bounds}, and the objective-Hessian and
Hessian-Lipschitz bounds \eqref{eq:mean-field-hessian-bound} and
\eqref{eq:mean-field-base-strip-hessian-lipschitz}.  Thus,
$A_T,W_T,Q_T,B_{{\rm jac},T},D_{A_T},E_{A_T},\Gamma_0,\Gamma$, and $\Xi$
provide the particle-radius,
Jacobian, Hessian, and Hessian-Lipschitz envelopes.  Work on the
full-$P_0$-measure set $\mathsf Z_0$ from the setup of
\cref{cor:mean-field-width-oracle}.  For $N\geq1$, $m\in\mathcal W_n$,
$\mathsf s\in\mathsf Z_0^N$, $\lambda\in\Lambda$, and $0\leq t\leq T$, write
  $q_t^{\mathsf s,\lambda}
  =((a_{r,t}^{\mathsf s,\lambda},w_{r,t}^{\mathsf s,\lambda}))_{r=1}^m$
  for the common-initialization width-$m$ flow trained on $\mathsf s$.  The
  argument establishing \eqref{eq:mean-field-particle-radius} applies to every
  empirical law $P_{\mathsf s}$ and gives
  \[
   \sup_{\substack{N\geq1,\ m\in\mathcal W_n,\
                    \mathsf s\in\mathsf Z_0^N\\
                    0\leq\tau\leq T,\ \lambda\in\Lambda}}
   \max_{1\leq r\leq m}|a_{r,\tau}^{\mathsf s,\lambda}|
   \leq A_T.
  \]
  Fix $m\in\mathcal W_n$ and a particle array
  $q=((a_r,w_r))_{r=1}^m\in\Theta_m$ satisfying
  $\max_{1\leq r\leq m}|a_r|\leq A_T$.  For every $(x,y)\in\mathsf Z_0$,
  Equations~\eqref{eq:mean-field-network}--\eqref{eq:mean-field-smoothness}
  imply
\[
\begin{aligned}
 |f_{m,q}(x)|
 &\leq\frac1m\sum_{r=1}^m|a_r|\,|\sigma(w_r^\top x)|
 \leq \mathsf S_0A_T,\\
 c_\varphi-\mathsf L_1\mathsf S_0A_T
 &\leq\varphi(f_{m,q}(x),y)
 \leq c_\varphi+L_0+\mathsf L_1\mathsf S_0A_T.
\end{aligned}
\]
Set
\begin{equation}\label{eq:mean-field-statistical-constants}
 B_T=L_0+2\mathsf L_1\mathsf S_0A_T,\qquad
 C_{\beta,T}=2\mathsf L_1^2B_{{\rm jac},T}^2\phi_\Gamma(T).
\end{equation}
$B_T$ is the loss-range length and $C_{\beta,T}/N$ is the
width-uniform replace-one stability bound at sample size $N$.  Both constants
are finite, nonnegative, and deterministic functions only of the fixed
mean-field primitives and $T$, independent of $m$, $n$, and the sample.
The bounds in \eqref{eq:mean-field-statistical-constants} place every evaluated
loss directly in the common deterministic interval
$[c_\varphi-\mathsf L_1\mathsf S_0A_T,
  c_\varphi+L_0+\mathsf L_1\mathsf S_0A_T]$ of length $B_T$.

Fix $N\geq1$, $m\in\mathcal W_n$, $\lambda\in\Lambda$, and
$\mathsf s_N,\mathsf s_N'\in\mathsf Z_0^N$ with $\mathsf s_N\sim \mathsf s_N'$.
Define the neighboring-flow difference path
$e^{\mathsf s_N,\mathsf s_N',\lambda}:[0,T]\to\Theta_m$ by
$e_s^{\mathsf s_N,\mathsf s_N',\lambda}
=q_s^{\mathsf s_N,\lambda}-q_s^{\mathsf s_N',\lambda}$, and write $e_s$ for
$e_s^{\mathsf s_N,\mathsf s_N',\lambda}$ in the present neighboring-flow
comparison.
For $0\leq s<T$, define
$D^+\|e_s\|_{{\rm b},\infty}:=\limsup_{\varepsilon\downarrow0}
\{\|e_{s+\varepsilon}\|_{{\rm b},\infty}
-\|e_s\|_{{\rm b},\infty}\}/\varepsilon$
as the upper-right Dini derivative of the map
$s\mapsto\|e_s\|_{{\rm b},\infty}$.
The neighboring vector fields and paths satisfy
\[
\begin{aligned}
 e_0&=0,\\
 \sup_{\substack{q=((a_r,w_r))_{r=1}^m\in\Theta_m\\
                   \max_{1\leq r\leq m}|a_r|\leq A_T}}
 \|\nabla F_{P_{\mathsf s_N},\lambda}(q)
   -\nabla F_{P_{\mathsf s_N'},\lambda}(q)\|_{{\rm b},\infty}
 &\leq\frac{2\mathsf L_1B_{{\rm jac},T}}{N},\\
 D^+\|e_s\|_{{\rm b},\infty}
 &\leq\Gamma\|e_s\|_{{\rm b},\infty}
       +\frac{2\mathsf L_1B_{{\rm jac},T}}{N},\\
 \|e_s\|_{{\rm b},\infty}
 &\leq\frac{2\mathsf L_1B_{{\rm jac},T}}{N}\phi_\Gamma(s)
 \leq\frac{2\mathsf L_1B_{{\rm jac},T}}{N}\phi_\Gamma(T).
\end{aligned}
\]
At every $t$ with $(t,\lambda)\in\cH_m$ and for every $z\in\mathsf Z_0$,
\[
\begin{aligned}
 |L_{N,(m,t,\lambda)}(\mathsf s_N,z)
   -L_{N,(m,t,\lambda)}(\mathsf s_N',z)|
 &\leq\mathsf L_1B_{{\rm jac},T}\|e_t\|_{{\rm b},\infty}
 \leq\frac{C_{\beta,T}}{N}.
\end{aligned}
\]
Taking the suprema in \eqref{eq:uniform-stability} proves
$\beta_{N,m}\leq C_{\beta,T}/N$, uniformly in $m$ and $N$.

To control the hyperparameter increments, let
\[
 G=\mathsf L_1B_{{\rm jac},T},\qquad V_T=G+\lambda_{\max}Q_T,\qquad
 C_{{\rm hyp},T}=G\{V_T+Q_T\phi_\Gamma(T)\}.
\]
$G$, $V_T$, and $C_{{\rm hyp},T}$ are, respectively, finite deterministic
nonnegative loss-gradient, particle-speed, and hyperparameter-modulus
envelopes.  None of $G$, $V_T$, and $C_{{\rm hyp},T}$ depends on anything
other than the fixed mean-field primitives and $T$; in particular, none
depends on $m$, $n$, or the sample.
For every $N\geq1$, $m\in\mathcal W_n$, $\mathsf s_N\in\mathsf Z_0^N$, and
$\lambda,\lambda'\in\Lambda$, the flow bounds give
\[
 \sup_{0\leq t\leq T}\|\dot q_t^{\mathsf s_N,\lambda}\|_{{\rm b},\infty}
 \leq V_T,
 \qquad
 \sup_{0\leq t\leq T}
 \|q_t^{\mathsf s_N,\lambda}-q_t^{\mathsf s_N,\lambda'}\|_{{\rm b},\infty}
 \leq Q_T\phi_\Gamma(T)|\lambda-\lambda'|.
\]
The inequality
\[
 \sup_{0\leq t\leq T}
 \|q_t^{\mathsf s_N,\lambda}-q_t^{\mathsf s_N,\lambda'}\|_{{\rm b},\infty}
 \leq Q_T\phi_\Gamma(T)|\lambda-\lambda'|
\]
follows by coupling the two flows and bounding the additional forcing by
$Q_T|\lambda-\lambda'|$.  The
$B_{{\rm jac},T}$-Lipschitz output map and
$\mathsf L_1$-Lipschitz outer loss prove, with the normalized
candidate-Lipschitz supremum defined to be zero when every
$\cH_m$, $m\in\mathcal W_n$, is a singleton,
\[
 \sup_{\substack{N\geq1,\ m\in\mathcal W_n,\ \mathsf s_N\in\mathsf Z_0^N\\
                 z\in\mathsf Z_0,\
                 (t,\lambda),(t',\lambda')\in\cH_m,\
                 (t,\lambda)\ne(t',\lambda')}}
 \frac{|L_{N,(m,t,\lambda)}(\mathsf s_N,z)
   -L_{N,(m,t',\lambda')}(\mathsf s_N,z)|}
 {d_1((t,\lambda),(t',\lambda'))}
 \leq C_{{\rm hyp},T}.
\]
We next verify mixed stability.  Fix $N\geq1$, $m\in\mathcal W_n$,
neighboring $\mathsf s_N,\mathsf s_N'\in\mathsf Z_0^N$, and
$\lambda\in\Lambda$.  For $\mathsf s\in\{\mathsf s_N,\mathsf s_N'\}$ and
$0\leq t\leq T$, put
$\upsilon_t^{\mathsf s,\lambda}=\partial_\lambda q_t^{\mathsf s,\lambda}\in\Theta_m$.
Set
\[
 C_{q,T}=2G\phi_\Gamma(T),\qquad
 C_{\dot q,T}=\Gamma C_{q,T}+2G,\qquad
 U_{\lambda,T}=Q_T\phi_\Gamma(T).
\]
$C_{q,T}$, $C_{\dot q,T}$, and $U_{\lambda,T}$ are finite deterministic
nonnegative envelopes for neighboring-path displacement, neighboring-speed
difference, and decay sensitivity.  The constants $C_{q,T}$,
$C_{\dot q,T}$, and $U_{\lambda,T}$ depend only on the fixed mean-field
primitives and $T$.
The vector-field comparison with $C_{q,T}=2G\phi_\Gamma(T)$ gives
$\|q_t^{\mathsf s_N,\lambda}-q_t^{\mathsf s_N',\lambda}\|_{{\rm b},\infty}
\leq C_{q,T}/N$.  Comparing the two vector fields once more yields
\begin{equation}\label{eq:mean-field-neighbor-time-derivative}
 \|\dot q_t^{\mathsf s_N,\lambda}-\dot q_t^{\mathsf s_N',\lambda}\|_{{\rm b},\infty}
 \leq\frac{C_{\dot q,T}}N.
\end{equation}
Because the initialization is common across samples and independent of
$\lambda$, each decay sensitivity starts from zero and, for
$\mathsf s\in\{\mathsf s_N,\mathsf s_N'\}$, satisfies
\[
 \dot \upsilon_t^{\mathsf s,\lambda}
 =-\nabla^2F_{P_{\mathsf s},\lambda}(q_t^{\mathsf s,\lambda})
   \upsilon_t^{\mathsf s,\lambda}-q_t^{\mathsf s,\lambda},
 \qquad
 \|\upsilon_t^{\mathsf s,\lambda}\|_{{\rm b},\infty}\leq U_{\lambda,T}.
\]
At a common parameter array, replacing one observation changes the empirical
Hessian by at most $2\Gamma_0/N$ in block-maximum operator norm.  The pathwise
Hessian Lipschitz bound is $\Xi$.  The upper-right Dini derivative of
$\|\upsilon_t^{\mathsf s_N,\lambda}
-\upsilon_t^{\mathsf s_N',\lambda}\|_{{\rm b},\infty}$ gives
\[
 D^+\|\upsilon_t^{\mathsf s_N,\lambda}-\upsilon_t^{\mathsf s_N',\lambda}\|_{{\rm b},\infty}
 \leq\Gamma\|\upsilon_t^{\mathsf s_N,\lambda}-\upsilon_t^{\mathsf s_N',\lambda}\|_{{\rm b},\infty}
 +\frac{(\Xi C_{q,T}+2\Gamma_0)U_{\lambda,T}+C_{q,T}}N.
\]
Define
\[
 C_{u,T}=\{(\Xi C_{q,T}+2\Gamma_0)U_{\lambda,T}+C_{q,T}\}
           \phi_\Gamma(T).
\]
$C_{u,T}\in[0,\infty)$ is the finite deterministic neighboring
decay-sensitivity envelope; $C_{u,T}$ depends only on the fixed mean-field
primitives and $T$.
Then
$\sup_{0\leq t\leq T}\|\upsilon_t^{\mathsf s_N,\lambda}-\upsilon_t^{\mathsf s_N',\lambda}\|_{{\rm b},\infty}
\leq C_{u,T}/N$.

For every $z\in\mathsf Z_0$, every
$q=((a_r,w_r))_{r=1}^m\in\Theta_m$ with
$\max_{1\leq r\leq m}|a_r|\leq A_T$, and every $v,v'\in\Theta_m$, the first
and second parameter derivatives of the evaluated loss obey
\[
 |D\ell_z(q)[v]|\leq G\|v\|_{{\rm b},\infty},\qquad
 |D^2\ell_z(q)[v,v']|
 \leq\Gamma_0\|v\|_{{\rm b},\infty}
                   \|v'\|_{{\rm b},\infty}.
\]
For each $z\in\mathsf Z_0$, define the neighboring-sample loss gap
$g_{\mathsf s_N,\mathsf s_N',z}:[0,T]\times\Lambda\to\R$ by
\[
 g_{\mathsf s_N,\mathsf s_N',z}(t,\lambda)
 =\ell_z(q_t^{\mathsf s_N,\lambda})-\ell_z(q_t^{\mathsf s_N',\lambda}).
\]
For every $(t,\lambda)\in[0,T]\times\Lambda$, the derivative bounds give
\begin{align*}
 |\partial_tg_{\mathsf s_N,\mathsf s_N',z}(t,\lambda)|
 &\leq\frac{C_{t,T}}N,
 &C_{t,T}&=GC_{\dot q,T}+\Gamma_0V_TC_{q,T},\\
 |\partial_\lambda g_{\mathsf s_N,\mathsf s_N',z}(t,\lambda)|
 &\leq\frac{C_{\lambda,T}}N,
 &C_{\lambda,T}&=GC_{u,T}+\Gamma_0U_{\lambda,T}C_{q,T}.
\end{align*}
Endpoint decay derivatives are interpreted one-sided.  The fundamental
theorem of calculus in $t$ and $\lambda$ proves
\begin{equation}\label{eq:mean-field-mixed-stability}
 \beta_{N,m}^{\rm mix}\leq\frac{C_{\Delta,T}}N,\qquad
 C_{\Delta,T}=\max\{C_{t,T},C_{\lambda,T}\}.
\end{equation}
$C_{t,T}$ and $C_{\lambda,T}$ are the finite deterministic nonnegative time-
and decay-increment envelopes, and $C_{\Delta,T}$ is the mixed-stability
maximum of $C_{t,T}$ and $C_{\lambda,T}$.  None of
$C_{t,T},C_{\lambda,T}$, and $C_{\Delta,T}$ depends on anything other than
the fixed mean-field primitives and $T$; in particular, none depends on $m$,
$n$, or the sample.
Only the existing $C^3$ bounds enter through $\Xi$.

We apply exact-LOO concentration to the width-$m$ evaluated-loss processes on
the full-$P_0$-measure space $\mathsf Z_0$.  Membership in the common finite
interval of length $B_T$ makes every deterministic anchor integrable.  For
each $m\in\mathcal W_n$, fix
a deterministic anchor $(t_{0,m},\lambda_{0,m})\in\cH_m$ and let
$D_{S,m}$ and $D_{S,m}^\circ$
denote the width-$m$ processes in
\cref{eq:absolute-loo-process,eq:relative-loo-process}.  Let
$L_{n-1,m}^{\rm hyp}$ and $\beta_{n-1,m}^{\rm mix}$ denote the corresponding
width-$m$ quantities in
\cref{eq:hyper-lipschitz-constant,eq:mixed-stability}, and put
\[
 \delta_\ast=\frac{\delta}{2|\mathcal W_n|},\qquad
 a_{n,m}^{\rm mix}
 =L_{n-1,m}^{\rm hyp}+(n-1)\beta_{n-1,m}^{\rm mix}
 \leq C_{{\rm hyp},T}+C_{\Delta,T}.
\]
Here $\delta_\ast\in(0,1)$ and $a_{n,m}^{\rm mix}\in[0,\infty)$ are finite
deterministic scalars.
Define
\[
\begin{aligned}
 r_{n,\delta}^{\rm anc}
 &=\frac{B_T+2C_{\beta,T}}{\sqrt{2n}}
   \sqrt{\log\frac{4|\mathcal W_n|}{\delta}},\\
 r_{n,m,\delta}^{\rm ch}
 &=\frac{C_{\rm ch}a_{n,m}^{\rm mix}}{\sqrt n}
   \left\{\mathfrak E_m+D_m
   \sqrt{\mathfrak L_{n,\mathcal W,\delta}}\right\}\\
 &\leq\frac{C_{\rm ch}(C_{{\rm hyp},T}+C_{\Delta,T})}{\sqrt n}
   \left\{\mathfrak E_m+D_m
   \sqrt{\mathfrak L_{n,\mathcal W,\delta}}\right\}.
\end{aligned}
\]
The radii $r_{n,\delta}^{\rm anc}$ and $r_{n,m,\delta}^{\rm ch}$ are finite
nonnegative deterministic scalars.  The quantity $\delta_\ast$ depends only
on $n$, $\delta$, and $\mathcal W_n$; $a_{n,m}^{\rm mix}$ depends only on
$n$, $L_{n-1,m}^{\rm hyp}$, and $\beta_{n-1,m}^{\rm mix}$;
$r_{n,\delta}^{\rm anc}$ depends only on $n$, $\delta$, $\mathcal W_n$,
$B_T$, and $C_{\beta,T}$; and $r_{n,m,\delta}^{\rm ch}$ depends only on
$n$, $m$, $\delta$, $\mathcal W_n$, $a_{n,m}^{\rm mix}$, $D_m$, and
$\mathfrak E_m$.  All four quantities are independent of the realized
sample.
Define the measurable map
$\mathfrak D_m^{\rm anc}:(\mathsf Z^n,\mathcal Z^{\otimes n})
\to(\R,\mathcal B(\R))$ as follows.  For
$\mathsf s_n\in\mathsf Z_0^n$, let
$\mathfrak D_m^{\rm anc}(\mathsf s_n)$ equal the expression in
\cref{eq:absolute-loo-process} at $(t_{0,m},\lambda_{0,m})$ with
$S=\mathsf s_n$; set $\mathfrak D_m^{\rm anc}(\mathsf s_n)=0$ for
$\mathsf s_n\in\mathsf Z^n\setminus\mathsf Z_0^n$.  Thus
$D_{S,m}(t_{0,m},\lambda_{0,m})=\mathfrak D_m^{\rm anc}(S)$ almost surely.
For every $i\in[n]$, conditional centering, the common range length $B_T$, and
$\beta_{n-1,m}\leq C_{\beta,T}/(n-1)$ give
\[
\begin{aligned}
 &\E\!\left[
   L_{n-1,(m,t_{0,m},\lambda_{0,m})}(S_{-i},Z_i)
   \mid S_{-i}\right]\\
 &\qquad=P_0L_{n-1,(m,t_{0,m},\lambda_{0,m})}(S_{-i},Z),\\
 \E\mathfrak D_m^{\rm anc}(S)&=0,\\
 \sup_{\substack{\mathsf s_n,\mathsf s_n'\in\mathsf Z_0^n\\
                   \mathsf s_n\sim\mathsf s_n'}}
 |\mathfrak D_m^{\rm anc}(\mathsf s_n)
   -\mathfrak D_m^{\rm anc}(\mathsf s_n')|
 &\leq\frac{B_T}{n}
   +\frac{2(n-1)}n\frac{C_{\beta,T}}{n-1}
 =\frac{B_T+2C_{\beta,T}}n.
\end{aligned}
\]
Apply \cref{thm:chained-loo-concentration} and McDiarmid's inequality
\cite[Lemma~1.2]{mcdiarmid1989bounded}.  For every $m\in\mathcal W_n$,
\[
\begin{aligned}
 \Pp\!\left\{
  \sup_{(t,\lambda)\in\cH_m}
  |D_{S,m}^\circ(t,\lambda;t_{0,m},\lambda_{0,m})|
  >r_{n,m,\delta}^{\rm ch}\right\}
 &\leq\delta_\ast,\\
 \Pp\!\left\{
  |D_{S,m}(t_{0,m},\lambda_{0,m})|
  >r_{n,\delta}^{\rm anc}\right\}
 &\leq\delta_\ast.
\end{aligned}
\]
The failure allocation $\delta_\ast$ satisfies
\[
 \log\frac4{\delta_\ast}
 =\mathfrak L_{n,\mathcal W,\delta},\qquad
 \log\frac2{\delta_\ast}
 =\log\frac{4|\mathcal W_n|}{\delta}
 \leq\mathfrak L_{n,\mathcal W,\delta}.
\]
Define the simultaneous event $\mathcal C_{n,\delta}\in\mathcal F$ by
\begin{equation}\label{eq:mean-field-simultaneous-concentration-event}
 \mathcal C_{n,\delta}
 =\bigcap_{m\in\mathcal W_n}
 \left\{
  |D_{S,m}(t_{0,m},\lambda_{0,m})|
     \leq r_{n,\delta}^{\rm anc},\quad
  \sup_{(t,\lambda)\in\cH_m}
  |D_{S,m}^\circ(t,\lambda;t_{0,m},\lambda_{0,m})|
     \leq r_{n,m,\delta}^{\rm ch}
 \right\}.
\end{equation}
A union bound over the chaining and anchor events gives
\[
 \Pp(\mathcal C_{n,\delta}^c)
 \leq\sum_{m\in\mathcal W_n}(\delta_\ast+\delta_\ast)
 =\delta.
\]
The identity $D_{S,m}^\circ(t,\lambda;t_{0,m},\lambda_{0,m})
=D_{S,m}(t,\lambda)-D_{S,m}(t_{0,m},\lambda_{0,m})$ gives, on
$\mathcal C_{n,\delta}$,
\[
\begin{aligned}
 \max_{m\in\mathcal W_n}\sup_{(t,\lambda)\in\cH_m}
 |D_{S,m}(t,\lambda)|
 &\leq r_{n,\delta}^{\rm anc}
      +\max_{m\in\mathcal W_n}r_{n,m,\delta}^{\rm ch}\\
 &\leq\frac{B_T+2C_{\beta,T}}{\sqrt{2n}}
       \sqrt{\mathfrak L_{n,\mathcal W,\delta}}\\
 &\quad+\frac{C_{\rm ch}(C_{{\rm hyp},T}+C_{\Delta,T})}{\sqrt n}
  \left\{\mathfrak E_{\mathcal W}+D_{\mathcal W}
       \sqrt{\mathfrak L_{n,\mathcal W,\delta}}\right\}\\
 &=s_{n,\mathcal W}^{\rm mf,ch}(\delta).
\end{aligned}
\]

For deletion-chord curvature, fix $m\in\mathcal W_n$ and a realization
$\xi\in\Omega$ such that $S(\xi)=(Z_1(\xi),\ldots,Z_n(\xi))\in\mathsf Z_0^n$;
suppress $\xi$ from the sample coordinates, $P_n$, and $h_i$.  Fix
$i\in[n]$, $\lambda\in\Lambda$, $0\leq\alpha_0\leq(n-1)^{-1}$, and
$P=P_n+\alpha_0 h_i$.  Since
$h_i=P_n-\delta_{Z_i}$,
\[
 P=\frac{1-(n-1)\alpha_0}{n}\delta_{Z_i}
   +\frac{1+\alpha_0}{n}\sum_{j\in[n]\setminus\{i\}}\delta_{Z_j},\qquad
 P(\mathsf Z)=1,\qquad P(\mathsf Z_0)=1.
\]
Write
$\theta_t^{P,\lambda}=((a_{r,t}^{P,\lambda},w_{r,t}^{P,\lambda}))_{r=1}^m$.
The coordinate inequalities in
\eqref{eq:mean-field-coordinate-dini-bounds}, with $P$ in place of $P_n$,
give
\[
 \sup_{0\leq t\leq T}
 \max_{1\leq r\leq m}|a_{r,t}^{P,\lambda}|\leq A_T,\qquad
 \sup_{0\leq t\leq T}
 \max_{1\leq r\leq m}\|w_{r,t}^{P,\lambda}\|\leq W_T,
 \qquad
 \sup_{0\leq t\leq T}
 \|\theta_t^{P,\lambda}\|_{{\rm b},\infty}\leq Q_T.
\]

Suppose $h_i\ne0$ and define the sample-dependent realized normalized
deletion direction
$\widetilde h_i:=h_i/\|h_i\|_{\TV}\in\mathcal M_0$.  Put
\[
 \mathcal I_{i,\alpha_0}
 =[-\alpha_0\|h_i\|_{\TV},
   \{(n-1)^{-1}-\alpha_0\}\|h_i\|_{\TV}].
\]
$\mathcal I_{i,\alpha_0}\subset\R$ is the realized-sample- and chord-base-dependent
interval of admissible perturbations that keeps
$P+\alpha\widetilde h_i$ on the original deletion chord.
For every $\alpha\in\mathcal I_{i,\alpha_0}$ and $q\in\Theta_m$,
\[
 -\nabla F_{P+\alpha\widetilde h_i,\lambda}(q)
 =-\nabla F_{P,\lambda}(q)
  -\alpha\int\nabla\ell^{\rm tr}_{\lambda,z}(q)
                 \,\dd\widetilde h_i(z).
\]
The $C^3$ assumptions in \cref{thm:two-layer-mean-field} give the vector-field
map
\[
 \mathcal I_{i,\alpha_0}\times\Theta_m\to\Theta_m,
 \qquad
 (\alpha,q)\mapsto-\nabla F_{P+\alpha\widetilde h_i,\lambda}(q),
\]
with a jointly $C^2$ extension to an open neighborhood of the
deletion-chord trajectories.  For $0\leq\beta_0\leq(n-1)^{-1}$, the
jointly $C^2$ neighborhood contains
$\theta_t^{P_n+\beta_0 h_i,\lambda}$ for every
$0\leq t\leq T$.  Differentiating in $\alpha$
gives \cref{eq:first-data-response,eq:second-data-response}, with one-sided
derivatives at chord endpoints:
\[
\begin{aligned}
 \dot u_t^\lambda[\widetilde h_i]
 &=-A_t^\lambda u_t^\lambda[\widetilde h_i]
   -g_{\widetilde h_i,t}^\lambda,
 &u_0^\lambda[\widetilde h_i]&=0,\\
 \dot u_t^{(2),\lambda}[\widetilde h_i,\widetilde h_i]
 &=-A_t^\lambda u_t^{(2),\lambda}[\widetilde h_i,\widetilde h_i]
   -2H_{\widetilde h_i,t}^\lambda u_t^\lambda[\widetilde h_i]
   -\mathsf T_t^\lambda[u_t^\lambda[\widetilde h_i],
                         u_t^\lambda[\widetilde h_i]],
 &u_0^{(2),\lambda}[\widetilde h_i,\widetilde h_i]&=0.
\end{aligned}
\]
Because $\widetilde h_i(\mathsf Z)=0$,
$\|\widetilde h_i\|_{\TV}=1$, and
$\ell^{\rm tr}_{\lambda,z}=\ell_z+\lambda R_m$, the penalty terms cancel:
\[
\begin{aligned}
 g_{\widetilde h_i,t}^\lambda
 &=\int\nabla\ell_z(\theta_t^{P,\lambda})
          \,\dd\widetilde h_i(z)
   +\lambda\nabla R_m(\theta_t^{P,\lambda})
          \widetilde h_i(\mathsf Z)
 =\int\nabla\ell_z(\theta_t^{P,\lambda})
          \,\dd\widetilde h_i(z),\\
 H_{\widetilde h_i,t}^\lambda
 &=\int\nabla^2\ell_z(\theta_t^{P,\lambda})
          \,\dd\widetilde h_i(z)
   +\lambda I\widetilde h_i(\mathsf Z)
 =\int\nabla^2\ell_z(\theta_t^{P,\lambda})
          \,\dd\widetilde h_i(z).
\end{aligned}
\]
For $0\leq t\leq T$ and every $v\in\Theta_m$,
Equations~\eqref{eq:mean-field-full-hessian}--\eqref{eq:mean-field-hessian-bound} and
\eqref{eq:mean-field-base-strip-hessian-lipschitz} give
\[
\begin{aligned}
 \|g_{\widetilde h_i,t}^\lambda\|_{{\rm b},\infty}
 &\leq\mathsf L_1B_{{\rm jac},T}=G,\\
 \|H_{\widetilde h_i,t}^\lambda\|_{{\rm b},\infty\to{\rm b},\infty}
 &\leq\mathsf L_2B_{{\rm jac},T}^2+\mathsf L_1D_{A_T}=\Gamma_0,\\
 \|A_t^\lambda\|_{{\rm b},\infty\to{\rm b},\infty}
 &\leq\Gamma_0+\lambda_{\max}=\Gamma,\\
 \|\mathsf T_t^\lambda[v,v]\|_{{\rm b},\infty}
 &\leq(\mathsf L_3B_{{\rm jac},T}^3
        +3\mathsf L_2B_{{\rm jac},T}D_{A_T}+\mathsf L_1E_{A_T})
          \|v\|_{{\rm b},\infty}^2
 =\Xi\|v\|_{{\rm b},\infty}^2.
\end{aligned}
\]
Variation of constants in
\cref{eq:first-data-response,eq:second-data-response} yields
\[
\begin{aligned}
 \|u_t^\lambda[\widetilde h_i]\|_{{\rm b},\infty}
 &\leq\int_0^t e^{\Gamma(t-s)}G\,\dd s
 \leq G\phi_\Gamma(T)=:U_T,\\
 \|u_t^{(2),\lambda}[\widetilde h_i,\widetilde h_i]\|_{{\rm b},\infty}
 &\leq\int_0^t e^{\Gamma(t-s)}
       \{2\Gamma_0U_T+\Xi U_T^2\}\,\dd s\\
 &\leq\{2\Gamma_0U_T+\Xi U_T^2\}\phi_\Gamma(T)
 =:W_T^{\rm resp}.
\end{aligned}
\]
For every $q=((a_r,w_r))_{r=1}^m\in\Theta_m$ with
$\max_{1\leq r\leq m}|a_r|\leq A_T$ and every $v\in\Theta_m$,
integrating the pointwise derivative estimates over $P_0$, using
$P_0(\mathsf Z_0)=1$, gives
\[
 |P_0D\ell_Z(q)[v]|\leq G\|v\|_{{\rm b},\infty},\qquad
 |P_0D^2\ell_Z(q)[v,v]|
 \leq\Gamma_0\|v\|_{{\rm b},\infty}^2.
\]
The $P_0D\ell_Z$ inequality uses
\eqref{eq:mean-field-particle-derivatives} and
\eqref{eq:mean-field-jacobian-bounds}; the $P_0D^2\ell_Z$ inequality uses
\eqref{eq:mean-field-particle-derivatives} and
\eqref{eq:mean-field-full-hessian}.
Define the width-uniform curvature constant
\begin{equation}\label{eq:mean-field-cpp-constant}
 C_{PP,T}=\Gamma_0U_T^2+GW_T^{\rm resp}.
\end{equation}
The constants $U_T$, $W_T^{\rm resp}$, and $C_{PP,T}$ are finite,
nonnegative, and deterministic functions only of the fixed mean-field
primitives and $T$, independent of $m$, $n$, and the sample.
The homogeneity relations and \eqref{eq:risk-second-response} give
\[
\begin{gathered}
 u_t^\lambda[h_i]=\|h_i\|_{\TV}u_t^\lambda[\widetilde h_i],\qquad
 u_t^{(2),\lambda}[h_i,h_i]
 =\|h_i\|_{\TV}^2
   u_t^{(2),\lambda}[\widetilde h_i,\widetilde h_i],\\
 |\psi_{i,t,\lambda}''(\alpha_0)|
 \leq\Gamma_0\|u_t^\lambda[h_i]\|_{{\rm b},\infty}^2
      +G\|u_t^{(2),\lambda}[h_i,h_i]\|_{{\rm b},\infty}
 \leq C_{PP,T}\|h_i\|_{\TV}^2.
\end{gathered}
\]
At $P_n$, uniqueness in \eqref{eq:first-data-response} gives
\[
 u_t^\lambda[\alpha h+\beta h']
 =\alpha u_t^\lambda[h]+\beta u_t^\lambda[h'],
 \qquad \alpha,\beta\in\R,\quad h,h'\in\mathcal V_S,
\]
so \eqref{eq:risk-first-response} supplies the linear-functional sufficient
condition, and hence the direct cancellation in
\eqref{eq:average-first-order-cancellation}.  If $h_i=0$, then
$P_n+\alpha_0 h_i=P_n$ and
$\psi_{i,t,\lambda}''(\alpha_0)=0$.

Equations~\eqref{eq:mean-field-score-rate} and
\eqref{eq:cross-size-absolute}, with $C_{PP}=C_{PP,T}$, give the deterministic
envelopes, for $m\in\mathcal W_n$ and $(t,\lambda)\in\cH_m$,
\[
 \bar a_n(m,t,\lambda)=\frac{2T\mathsf L_1^2C_{{\rm ker},T}}{n(n-1)},
 \qquad
 \bar b_n(m,t,\lambda)=\frac{2C_{PP,T}}{(n-1)^2}.
\]
Put
\[
 \mathfrak c_{n,T}^{\rm mf}
 =\frac{2T\mathsf L_1^2C_{{\rm ker},T}}{n(n-1)}
  +\frac{2C_{PP,T}}{(n-1)^2},\qquad
 \varepsilon_{n,\mathcal W,T}^{\rm mf}(\delta)
 =\mathfrak c_{n,T}^{\rm mf}+s_{n,\mathcal W}^{\rm mf,ch}(\delta).
\]
$\mathfrak c_{n,T}^{\rm mf}\in[0,\infty)$ is the finite deterministic common
modelwise correction, and
$\varepsilon_{n,\mathcal W,T}^{\rm mf}:(0,1)\to[0,\infty)$ is the finite
deterministic simultaneous curve radius.  The correction
$\mathfrak c_{n,T}^{\rm mf}$ may depend on $n$, $T$, and the fixed mean-field
primitives.  The radius
$\varepsilon_{n,\mathcal W,T}^{\rm mf}(\delta)$ may additionally depend on
$\delta$, $\mathcal W_n$, and $(\cH_m)_{m\in\mathcal W_n}$.  Neither
quantity depends on the realized sample.
On $\mathcal C_{n,\delta}$, for every $m\in\mathcal W_n$ and
$(t,\lambda)\in\cH_m$, the \FlowALO--LOO, LOO--deletion-risk, and
deletion--full-risk differences satisfy
\[
\begin{aligned}
 &|\widetilde\Score_{n,m}(t,\lambda)-\Risk_{S,m}(t,\lambda)|\\
 &\quad\leq
 |\widetilde\Score_{n,m}(t,\lambda)-\Score_{n,m}^{\LOO}(t,\lambda)|
 +|\Score_{n,m}^{\LOO}(t,\lambda)-\Risk_{S,m}^-(t,\lambda)|\\
 &\qquad
 +|\Risk_{S,m}^-(t,\lambda)-\Risk_{S,m}(t,\lambda)|
 \leq\bar a_n(m,t,\lambda)
      +s_{n,\mathcal W}^{\rm mf,ch}(\delta)
      +\bar b_n(m,t,\lambda)
 =\varepsilon_{n,\mathcal W,T}^{\rm mf}(\delta).
\end{aligned}
\]
Taking the maximum over $m$ and supremum over $\cH_m$ proves
\eqref{eq:mean-field-width-risk-curve}.
\end{proof}

\subsection{Uniformly contractive risk curves}

Positive curvature of every empirical and deletion-chord objective gives a
model-independent sufficient condition for a risk-curve radius that does not
grow with the time horizon.  The curvature requirement is global over the
sample class because the exact-LOO concentration step uses the uniform
stability in \eqref{eq:uniform-stability}.

For $N\geq1$ and
$\mathsf s_N=(z_1,\ldots,z_N)\in\mathsf Z^N$, define the empirical probability
measure $P_{\mathsf s_N}=N^{-1}\sum_{j=1}^N\delta_{z_j}$.  For every $n\geq2$,
let $\mathfrak P_n^{\rm ctr}$ be the deterministic class of probability
measures
\begin{equation}\label{eq:contractive-measure-class}
\begin{aligned}
 \mathfrak P_n^{\rm ctr}
 &=\{P_{\mathsf s_N}:N\in\{n-1,n\},\ \mathsf s_N\in\mathsf Z^N\}\\
 &\quad\cup\left\{P_{\mathsf s_n}
   +\alpha(P_{\mathsf s_n}-\delta_{z_i}):
   \mathsf s_n=(z_1,\ldots,z_n)\in\mathsf Z^n,\ i\in[n],\
   0\leq\alpha\leq(n-1)^{-1}\right\}.
\end{aligned}
\end{equation}
The second set in \eqref{eq:contractive-measure-class} contains every
normalized deletion chord from a size-$n$ empirical measure to a size-$(n-1)$
empirical measure.

\subsubsection{Conditions for uniformly contractive risk curves}
\label{app:contractive-risk-curve-conditions}

Fix deterministic constants
\[
 \mu>0,\qquad
 G_{\rm tr},H_{\rm tr},J_3,G_{\rm ev},H_{\rm ev},
 L_{\rm tr}^{\lambda},L_{\rm ev}^{\lambda},B_{\rm rng}\in[0,\infty),
\]
and a deterministic convex set $\mathcal K\subseteq\Theta$ containing
$\theta_0$.  Define the deterministic response enlargement
\[
 \mathcal K_\mu^+
 =\{\theta+v:\theta\in\mathcal K,\ v\in\Theta,\
                   \|v\|\leq2G_{\rm tr}/\mu\}.
\]
Assume the following conditions hold for every $n\geq2$, every
$P\in\mathfrak P_n^{\rm ctr}$, and every $\lambda\in\Lambda$.
The flow in \eqref{eq:full-flow} exists on $[0,\infty)$ and remains in
$\mathcal K$.  The objective $F_{P,\lambda}$ is $C^3$ on an open neighborhood
of $\mathcal K$, and, uniformly over $\theta\in\mathcal K$,
\begin{equation}\label{eq:contractive-primitive-bounds}
\begin{gathered}
 \nabla^2F_{P,\lambda}(\theta)\succeq\mu I,
 \qquad
 \|\nabla^3F_{P,\lambda}(\theta)\|_{\op}\leq J_3,\\
 \sup_{z\in\mathsf Z}
 \|\nabla\ell^{\rm tr}_{\lambda,z}(\theta)\|\leq G_{\rm tr},
 \qquad
 \sup_{z\in\mathsf Z}
 \|\nabla^2\ell^{\rm tr}_{\lambda,z}(\theta)\|_{\op}\leq H_{\rm tr}.
\end{gathered}
\end{equation}
Every evaluation loss is $C^2$ in $\theta$ on an open neighborhood of
$\mathcal K_\mu^+$ and satisfies, uniformly over
$(z,\lambda,\theta)\in\mathsf Z\times\Lambda\times\mathcal K_\mu^+$,
\begin{equation}\label{eq:contractive-evaluation-bounds}
 \|\nabla\ell^{\rm ev}_{\lambda,z}(\theta)\|\leq G_{\rm ev},
 \qquad
 \|\nabla^2\ell^{\rm ev}_{\lambda,z}(\theta)\|_{\op}\leq H_{\rm ev}.
\end{equation}
There is one deterministic interval of length $B_{\rm rng}$ containing
$\ell^{\rm ev}_{\lambda,z}(\theta)$ for every
$(z,\lambda,\theta)\in\mathsf Z\times\Lambda\times\mathcal K$.
For every $P\in\mathfrak P_n^{\rm ctr}$,
$\theta\in\mathcal K$, and $\lambda,\lambda'\in\Lambda$, assume
\begin{equation}\label{eq:contractive-lambda-moduli}
\begin{aligned}
 \|\nabla F_{P,\lambda}(\theta)-\nabla F_{P,\lambda'}(\theta)\|
 &\leq L_{\rm tr}^{\lambda}\|\lambda-\lambda'\|,\\
 \sup_{z\in\mathsf Z}
 |\ell^{\rm ev}_{\lambda,z}(\theta)-
   \ell^{\rm ev}_{\lambda',z}(\theta)|
 &\leq L_{\rm ev}^{\lambda}\|\lambda-\lambda'\|.
\end{aligned}
\end{equation}

For every realized deletion chord, assume the twice-continuous
measure-response and differentiation-under-the-integral conditions in
\cref{app:continuous-measure-responses} hold on each finite time interval, including at
every base measure on the chord and in the normalized deletion direction.
For every $n\geq2$, let $T_n\in[0,\infty)$ be deterministic and define
\begin{equation}\label{eq:contractive-intrinsic-metric}
 \mathfrak t_\mu(t)=\int_0^te^{-\mu s}\,\dd s,
 \qquad
 d_\mu((t,\lambda),(t',\lambda'))
 =|\mathfrak t_\mu(t)-\mathfrak t_\mu(t')|
   +\|\lambda-\lambda'\|.
\end{equation}
Let $\cH_n\subseteq[0,T_n]\times\Lambda$ be a predetermined nonempty
$d_\mu$-compact set.  For $0\leq t\leq T_n$, let
$\theta_t^\lambda=\theta_t^{P_n,\lambda}$ and
$\theta_{-i,t}^\lambda=\theta_t^{P_{-i},\lambda}$, define
$\Delta_{i,t}^\lambda=\theta_{-i,t}^\lambda-\theta_t^\lambda$, and let
$d_{i,t}^\lambda$ solve \eqref{eq:flow-alo-response}.  On $\cH_n$, use the
definitions of
$\widetilde\Score_n,\Score_n^{\LOO},\Risk_S^-,\Risk_S$, and
$\psi_{i,t,\lambda}$ from
\cref{eq:flow-alo-score,eq:loo-score,eq:minus-risk,eq:conditional-risk,eq:deletion-chord-functional}.
\[
 L_{N,(t,\lambda)}(\mathsf s_N,z)
 =\ell^{\rm ev}_{\lambda,z}
   (\theta_t^{P_{\mathsf s_N},\lambda}),
 \qquad N\in\{n-1,n\},
\]
defines the evaluated-loss process on
$\mathsf Z^N\times\mathsf Z\times\cH_n$.  Assume the map
\[
 (\mathsf s_{n-1},z,t,\lambda)\longmapsto
 L_{n-1,(t,\lambda)}(\mathsf s_{n-1},z)
\]
is jointly measurable on
$\mathsf Z^{n-1}\times\mathsf Z\times\cH_n$.

\begin{theorem}[Time-uniform risk curves under contraction]
\label[theorem]{thm:contractive-risk-curve}
Suppose the conditions in
\cref{app:contractive-risk-curve-conditions} hold.  Let
$D_\Lambda=\operatorname{diam}(\Lambda,\|\cdot\|)$ and define the finite
deterministic constants
\begin{equation}\label{eq:contractive-risk-constants}
\begin{aligned}
 C_{{\rm ALO},\mu}
 &=\frac{2G_{\rm ev}J_3G_{\rm tr}^2}{\mu^3},\\
 C_{PP,\mu}
 &=\frac{H_{\rm ev}G_{\rm tr}^2}{\mu^2}
   +G_{\rm ev}\left\{
      \frac{2H_{\rm tr}G_{\rm tr}}{\mu^2}
      +\frac{J_3G_{\rm tr}^2}{\mu^3}\right\},\\
 C_{{\rm hyp},\mu}
 &=\max\left\{G_{\rm ev}G_{\rm tr},
       L_{\rm ev}^{\lambda}
       +\frac{G_{\rm ev}L_{\rm tr}^{\lambda}}{\mu}\right\},
 &D_\mu&=\mu^{-1}+D_\Lambda.
\end{aligned}
\end{equation}
For $\delta\in(0,1)$, define
\begin{equation}\label{eq:contractive-risk-radius}
\begin{aligned}
 \mathfrak r_{n,\mu}^{\rm ctr}(\delta)
 &=\frac{C_{{\rm ALO},\mu}+2C_{PP,\mu}}{(n-1)^2}
   +\frac{2C_{{\rm hyp},\mu}}{\sqrt n}\\
 &\quad+\frac{B_{\rm rng}+4G_{\rm ev}G_{\rm tr}/\mu}{\sqrt{2n}}
 \left[(d_\lambda+1)\log\{1+4D_\mu\sqrt n\}
       +\log\frac2\delta\right]^{1/2}.
\end{aligned}
\end{equation}
Then there is a measurable event with probability at least $1-\delta$ on
which
\begin{equation}\label{eq:contractive-risk-curve}
 \sup_{(t,\lambda)\in\cH_n}
 |\widetilde\Score_n(t,\lambda)-\Risk_S(t,\lambda)|
 \leq\mathfrak r_{n,\mu}^{\rm ctr}(\delta).
\end{equation}
The radius in \eqref{eq:contractive-risk-radius} is independent of $T_n$.
\end{theorem}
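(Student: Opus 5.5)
The plan is to reprove the three-term decomposition behind \cref{thm:oracle}, but with the exponential-growth gains $\phi_\kappa,\chi_\kappa,\mathcal J_\kappa$ replaced by contracting gains. I would write
\[
 |\widetilde\Score_n-\Risk_S|\le|\widetilde\Score_n-\Score_n^{\LOO}|+|\Score_n^{\LOO}-\Risk_S^-|+|\Risk_S^--\Risk_S|
\]
and bound each term by a quantity free of $T_n$. The mechanism is the same throughout. Every measure met along the argument (full, deleted, and chord measures) lies in $\mathfrak P_n^{\rm ctr}$. Each corresponding flow stays in the convex set $\mathcal K$, where $\nabla^2F\succeq\mu I$. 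Hence every linear propagator generated by such a Hessian satisfies $\|U(t,s)\|_{\op}\le e^{-\mu(t-s)}$, by the energy argument of (S.A.0e) with $\kappa$ replaced by $-\mu$. Consequently every Duhamel integral $\int_0^te^{-\mu(t-s)}(\cdot)\,\dd s$ is bounded by $\mu^{-1}$ times the supremum of the integrand.

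\emph{Step 1 (response error).} Rerun the proof of \cref{thm:finite-horizon} on $[0,T_n]$. Since $\|c_{i,t}^\lambda\|\le2G_{\rm tr}$, both $\Delta_{i,t}^\lambda$ and $d_{i,t}^\lambda$ have norm at most $2G_{\rm tr}/\{\mu(n-1)\}$. For $\Delta$, the secant inequality uses convexity of $\mathcal K$, which contains both $\theta_t^\lambda$ and $\theta_{-i,t}^\lambda$. For $d$, it uses $B_{i,t}^\lambda=\nabla^2F_{P_{-i},\lambda}(\theta_t^\lambda)\succeq\mu I$. The Taylor remainder is bounded by $\tfrac{J_3}{2}\|\Delta\|^2$, and the remainder Duhamel formula (S.A.0f) then gives
\[
 \|\Delta-d\|\le\frac{J_3}{2}\cdot\frac{4G_{\rm tr}^2}{\mu^2(n-1)^2}\cdot\frac1\mu .
\]
For the evaluation step, any point of the segment from $\theta_t^\lambda+d_{i,t}^\lambda$ to $\theta_t^\lambda+\Delta_{i,t}^\lambda$ has the form $\theta_t^\lambda+v$ with $\|v\|\le2G_{\rm tr}/\mu$. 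Such points lie in $\mathcal K_\mu^+$, so the bound $G_{\rm ev}$ applies, and this yields $C_{{\rm ALO},\mu}/(n-1)^2$.

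\emph{Step 2 (jackknife).} Apply \cref{thm:jackknife-cancellation}, checking its premises as in the proof of \cref{cor:cpp-bound}. First-order cancellation follows from linearity of $g\mapsto u_t^\lambda[g]$. For the curvature bound, work in the normalized direction $\widetilde h_i$ at each chord base $P$. Use $G_h=G_{\rm tr}$ (the measure $\widetilde h_i$ has unit total variation), $H_h=H_{\rm tr}$, and the given $J_3$, together with $G_0=G_{\rm ev}$ and $H_0=H_{\rm ev}$ evaluated at flows in $\mathcal K$. With the contracting propagator, the response bounds become
\[
 \|u\|\le G_{\rm tr}/\mu,\qquad \|u^{(2)}\|\le 2H_{\rm tr}G_{\rm tr}/\mu^2+J_3G_{\rm tr}^2/\mu^3 .
\]
This reproduces $C_{PP,\mu}$ and gives the term $2C_{PP,\mu}/(n-1)^2$.

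\emph{Step 3 (concentration).} Apply \cref{thm:loo-concentration} with $\mathcal U=\cH_n$ and $\rho=d_\mu$.
\begin{itemize}
\item[(a)] Range: the deterministic interval of length $B_{\rm rng}$ supplies $B_{n-1}^{\rm rng}$.
\item[(b)] Stability: replacing one point changes the forcing by at most $2G_{\rm tr}/(n-1)$, and contraction gives $\beta_{n-1}\le2G_{\rm ev}G_{\rm tr}/\{\mu(n-1)\}$. Hence $c_n^-\le(B_{\rm rng}+4G_{\rm ev}G_{\rm tr}/\mu)/n$, which produces the prefactor in \eqref{eq:contractive-risk-radius}.
\item[(c)] Time modulus: along a flow, $\frac{\dd}{\dd s}\nabla F(\theta_s)=-\nabla^2F\,\nabla F$, so $\|\dot\theta_s\|\le e^{-\mu s}\|\nabla F(\theta_0)\|\le G_{\rm tr}e^{-\mu s}$. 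Thus $\|\theta_t-\theta_{t'}\|\le G_{\rm tr}|\mathfrak t_\mu(t)-\mathfrak t_\mu(t')|$.
\item[(d)] Hyperparameter modulus: coupling the flows at $\lambda$ and $\lambda'$ gives $\|\theta_t^\lambda-\theta_t^{\lambda'}\|\le L_{\rm tr}^\lambda\|\lambda-\lambda'\|/\mu$. Adding the evaluation modulus yields $\omega_{n-1}(r)=C_{{\rm hyp},\mu}r$, and the choice $r=n^{-1/2}$ gives the term $2C_{{\rm hyp},\mu}/\sqrt n$.
\item[(e)] Covering: since $\mathfrak t_\mu([0,T_n])\subseteq[0,\mu^{-1}]$, the set $\cH_n$ embeds isometrically in a set of $\ell_1$-diameter at most $D_\mu$ in $\R^{1+d_\lambda}$. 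A volumetric argument, at the cost of halving the radius so that centers lie in $\cH_n$, gives $\mathcal N(\cH_n,d_\mu,n^{-1/2})\le(1+4D_\mu\sqrt n)^{d_\lambda+1}$.
\end{itemize}
Summing the three steps gives \eqref{eq:contractive-risk-radius}. The event is the measurable concentration event from \cref{thm:loo-concentration}; Steps 1 and 2 hold deterministically on every sample.

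The main obstacle I expect is the covering and modulus bookkeeping in the metric $d_\mu$. The volumetric constant must come out exactly in the stated form, with the $\ell_1$ versus Euclidean norm on $\Lambda$ handled and the radius halved. The one-sided endpoint derivatives along chords, whose flows stay in $\mathcal K$ by hypothesis, are routine given \cref{app:continuous-measure-responses}.
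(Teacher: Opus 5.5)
Your proposal is correct and follows the paper's proof essentially step for step. The contracting propagator bound $e^{-\mu(t-s)}$ yields the response remainder $C_{{\rm ALO},\mu}/(n-1)^2$, and the normalized-direction measure responses give $C_{PP,\mu}$ for the jackknife step. \Cref{thm:loo-concentration} is then applied at $r=n^{-1/2}$, using the stability bound $2G_{\rm ev}G_{\rm tr}/(\mu N)$, the gradient-decay time modulus in $\mathfrak t_\mu$, the $\lambda$-coupling bound $L_{\rm tr}^{\lambda}/\mu$, and a volumetric covering.

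The only correction is cosmetic. The isometric image of $\cH_n$ should be measured in the norm $(a,b)\mapsto|a|+\|b\|$ on $\R^{d_\lambda+1}$, not $\ell_1$, and the volumetric count works verbatim for any norm. A maximal $r$-separated subset already gives $(1+2D_\mu/r)^{d_\lambda+1}$ with centers in $\cH_n$, so no radius halving is needed.
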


\begin{proof}[Proof of \cref{thm:contractive-risk-curve}]
Fix $n\geq2$.  Every full, deleted, or neighboring training flow in this proof
remains in the convex set $\mathcal K$ by hypothesis.  For any continuous
Hessian path $H_t\succeq\mu I$ on
$\mathcal K$, let $U_H(t,s)$ be the fundamental solution generated by $-H$.
For $v\in\Theta$ and $0\leq s\leq t$,
\[
 \frac{\dd}{\dd t}\|U_H(t,s)v\|^2
 =-2\langle U_H(t,s)v,H_tU_H(t,s)v\rangle
 \leq-2\mu\|U_H(t,s)v\|^2.
\]
Gronwall's inequality gives the uniform propagator bound
\begin{equation}\label{eq:contractive-propagator}
 \|U_H(t,s)\|_{\op}\leq e^{-\mu(t-s)},
 \qquad 0\leq s\leq t<\infty.
\end{equation}
The convexity of $\mathcal K$ and the first inequality in
\eqref{eq:contractive-primitive-bounds} make
\eqref{eq:contractive-propagator} applicable to every full-path,
exact-deletion secant, response, measure-response, and neighboring-flow
segment propagator used in the proof.

For $i\in[n]$, the centered gradient in \eqref{eq:centered-gradient} satisfies
$\|c_{i,t}^\lambda\|\leq2G_{\rm tr}$.  Variation of constants in
\cref{eq:exact-deletion-ode,eq:flow-alo-response} and
\eqref{eq:contractive-propagator} therefore gives, for every $t\geq0$ and
$\lambda\in\Lambda$,
\begin{equation}\label{eq:contractive-deletion-response-bounds}
 \|\Delta_{i,t}^\lambda\|\vee\|d_{i,t}^\lambda\|
 \leq\frac{2G_{\rm tr}}{n-1}\int_0^te^{-\mu(t-s)}\,\dd s
 \leq\frac{2G_{\rm tr}}{\mu(n-1)}.
\end{equation}
In particular, $\theta_t^\lambda+d_{i,t}^\lambda\in\mathcal K_\mu^+$.
The third-derivative bound in \eqref{eq:contractive-primitive-bounds} and
\cref{eq:secant-hessian,eq:resummed-B} give
\[
 \|\overline B_{i,t}^\lambda-B_{i,t}^\lambda\|_{\op}
 \leq\int_0^1J_3u\|\Delta_{i,t}^\lambda\|\,\dd u
 =\frac{J_3}{2}\|\Delta_{i,t}^\lambda\|.
\]
For $r_{i,t}^\lambda=\Delta_{i,t}^\lambda-d_{i,t}^\lambda$, subtraction of
\eqref{eq:flow-alo-response} from \eqref{eq:exact-deletion-ode} yields
\[
 \dot r_{i,t}^\lambda
 =-B_{i,t}^\lambda r_{i,t}^\lambda
   -(\overline B_{i,t}^\lambda-B_{i,t}^\lambda)
      \Delta_{i,t}^\lambda,
 \qquad r_{i,0}^\lambda=0.
\]
Equations~\eqref{eq:contractive-propagator} and
\eqref{eq:contractive-deletion-response-bounds} imply
\begin{equation}\label{eq:contractive-response-remainder}
 \sup_{t\geq0}\|\Delta_{i,t}^\lambda-d_{i,t}^\lambda\|
 \leq\frac{J_3}{2}\int_0^\infty e^{-\mu s}
       \left\{\frac{2G_{\rm tr}}{\mu(n-1)}\right\}^2\,\dd s
 =\frac{2J_3G_{\rm tr}^2}{\mu^3(n-1)^2}.
\end{equation}
The set $\mathcal K_\mu^+$ is convex.  The evaluation-gradient bound in
\eqref{eq:contractive-evaluation-bounds}, the mean-value formula on the segment
joining $\theta_{-i,t}^\lambda$ to
$\theta_t^\lambda+d_{i,t}^\lambda$, and
\eqref{eq:contractive-response-remainder} give
\begin{equation}\label{eq:contractive-score-approximation-general}
 \sup_{(t,\lambda)\in\cH_n}
 |\widetilde\Score_n(t,\lambda)-\Score_n^{\LOO}(t,\lambda)|
 \leq\frac{C_{{\rm ALO},\mu}}{(n-1)^2}.
\end{equation}

We next bound the deletion-chord curvature.  Fix $i\in[n]$ with $h_i\ne0$,
$0\leq\alpha_0\leq(n-1)^{-1}$, and
$P=P_n+\alpha_0h_i$.  For
$\widetilde h_i=h_i/\|h_i\|_{\TV}$, the bounds in
\eqref{eq:contractive-primitive-bounds} imply, along the path trained under
$P$,
\[
 \left\|\int\nabla\ell^{\rm tr}_{\lambda,z}\,\dd\widetilde h_i(z)\right\|
 \leq G_{\rm tr},\qquad
 \left\|\int\nabla^2\ell^{\rm tr}_{\lambda,z}\,\dd\widetilde h_i(z)\right\|_{\op}
 \leq H_{\rm tr}.
\]
Variation of constants in
\cref{eq:first-data-response,eq:second-data-response} and
\eqref{eq:contractive-propagator} gives, uniformly over $t\geq0$ and
$\lambda\in\Lambda$,
\begin{equation}\label{eq:contractive-measure-response-bounds}
\begin{aligned}
 \|u_t^\lambda[\widetilde h_i]\|
 &\leq\frac{G_{\rm tr}}\mu,\\
 \|u_t^{(2),\lambda}[\widetilde h_i,\widetilde h_i]\|
 &\leq\frac{2H_{\rm tr}G_{\rm tr}}{\mu^2}
       +\frac{J_3G_{\rm tr}^2}{\mu^3}.
\end{aligned}
\end{equation}
The population averages of the two evaluation derivatives in
\eqref{eq:contractive-evaluation-bounds} are bounded by $G_{\rm ev}$ and
$H_{\rm ev}$.  Substitution of
\eqref{eq:contractive-measure-response-bounds} into
\eqref{eq:risk-second-response}, followed by the degree-two homogeneity of the
second response in the signed direction, gives
\begin{equation}\label{eq:contractive-cpp-general}
 \sup_{\substack{(t,\lambda)\in\cH_n\\
                  0\leq\alpha_0\leq(n-1)^{-1}}}
 |\psi_{i,t,\lambda}''(\alpha_0)|
 \leq C_{PP,\mu}\|h_i\|_{\TV}^2.
\end{equation}
For $h_i=0$, the deletion chord is constant and
\eqref{eq:contractive-cpp-general} holds with a zero left-hand side.
The first response in \eqref{eq:first-data-response} is linear in the signed
direction at $P_n$, so the argument in
\cref{app:continuous-measure-responses} verifies
\eqref{eq:average-first-order-cancellation}.  Applying
\cref{thm:jackknife-cancellation} with \eqref{eq:contractive-cpp-general}
yields
\begin{equation}\label{eq:contractive-cross-size-general}
 \sup_{(t,\lambda)\in\cH_n}
 |\Risk_S^-(t,\lambda)-\Risk_S(t,\lambda)|
 \leq\frac{2C_{PP,\mu}}{(n-1)^2}.
\end{equation}

For the exact-LOO concentration term, fix $N\in\{n-1,n\}$ and neighboring
$\mathsf s_N,\mathsf s_N'\in\mathsf Z^N$.  The coupled paths and every segment
joining them lie in $\mathcal K$.  Write
$x_t=\theta_t^{P_{\mathsf s_N},\lambda}$ and
$y_t=\theta_t^{P_{\mathsf s_N'},\lambda}$.  Splitting the gradient difference
at $y_t$ gives
\[
 \dot x_t-\dot y_t
 =-\overline H_t(x_t-y_t)
   -\{\nabla F_{P_{\mathsf s_N},\lambda}(y_t)
       -\nabla F_{P_{\mathsf s_N'},\lambda}(y_t)\},
\]
where the segment Hessian $\overline H_t\succeq\mu I$.  The neighboring
empirical-gradient difference in braces has norm at most $2G_{\rm tr}/N$.
Variation of constants, followed by the evaluation-gradient bound, therefore gives
\begin{equation}\label{eq:contractive-stability-general}
 \sup_{t\geq0}\|x_t-y_t\|
 \leq\frac{2G_{\rm tr}}{\mu N},
 \qquad
 \beta_N\leq\frac{2G_{\rm ev}G_{\rm tr}}{\mu N}.
\end{equation}

For $P=P_{\mathsf s_N}$, define
$v_t^{P,\lambda}=\nabla F_{P,\lambda}(\theta_t^{P,\lambda})$.  Differentiating
along the flow gives
\[
 \dot v_t^{P,\lambda}
 =-\nabla^2F_{P,\lambda}(\theta_t^{P,\lambda})v_t^{P,\lambda}.
\]
Since $\|v_0^{P,\lambda}\|\leq G_{\rm tr}$,
\eqref{eq:contractive-propagator} yields
\begin{equation}\label{eq:contractive-speed-general}
 \|v_t^{P,\lambda}\|\leq G_{\rm tr}e^{-\mu t},
 \qquad t\geq0.
\end{equation}
For $\lambda,\lambda'\in\Lambda$, subtract the two flow equations and use
the segment Hessian of $F_{P,\lambda}$ together with the first inequality in
\eqref{eq:contractive-lambda-moduli}.  Variation of constants gives
\begin{equation}\label{eq:contractive-lambda-flow}
 \sup_{t\geq0}
 \|\theta_t^{P,\lambda}-\theta_t^{P,\lambda'}\|
 \leq\frac{L_{\rm tr}^{\lambda}}\mu
       \|\lambda-\lambda'\|.
\end{equation}
Equations~\eqref{eq:contractive-evaluation-bounds},
\eqref{eq:contractive-lambda-moduli},
\eqref{eq:contractive-speed-general}, and
\eqref{eq:contractive-lambda-flow} imply, for
$p=(t,\lambda),p'=(t',\lambda')\in\cH_n$, every
$\mathsf s_N\in\mathsf Z^N$, and every $z\in\mathsf Z$,
\begin{equation}\label{eq:contractive-loss-modulus}
 |L_{N,p}(\mathsf s_N,z)-L_{N,p'}(\mathsf s_N,z)|
 \leq C_{{\rm hyp},\mu}d_\mu(p,p').
\end{equation}

The transformed time interval in \eqref{eq:contractive-intrinsic-metric} has
length at most $\mu^{-1}$.  The map
$(t,\lambda)\mapsto(\mathfrak t_\mu(t),\lambda)$ takes $\cH_n$ into
$\mathbb R^{d_\lambda+1}$ equipped with the norm
$(a,b)\mapsto|a|+\|b\|$ and preserves $d_\mu$.  The image has diameter at
most $D_\mu$.  For $r>0$, take a maximal $r$-separated subset of the image.
The open norm balls of radius $r/2$ around the selected points are disjoint and
are contained in a ball of radius $D_\mu+r/2$ around any one selected point.
Comparing their $(d_\lambda+1)$-dimensional volumes and using maximality to
obtain an $r$-net whose centers belong to $\cH_n$ gives
\begin{equation}\label{eq:contractive-covering-general}
 \mathcal N(\cH_n,d_\mu,r)
 \leq\left(1+\frac{2D_\mu}{r}\right)^{d_\lambda+1}
 \leq\left(1+\frac{4D_\mu}{r}\right)^{d_\lambda+1}.
\end{equation}
Use \cref{thm:loo-concentration} with $r=n^{-1/2}$,
the range length $B_{\rm rng}$, the stability bound
\eqref{eq:contractive-stability-general}, the modulus
\eqref{eq:contractive-loss-modulus}, and the covering bound
\eqref{eq:contractive-covering-general}.  Equation~\eqref{eq:bounded-difference-c}
then gives
\[
 c_n^-
 \leq\frac{B_{\rm rng}+4G_{\rm ev}G_{\rm tr}/\mu}{n}.
\]
With probability at least $1-\delta$,
\begin{equation}\label{eq:contractive-loo-concentration-general}
\begin{aligned}
 \sup_{(t,\lambda)\in\cH_n}
 |\Score_n^{\LOO}(t,\lambda)-\Risk_S^-(t,\lambda)|
 &\leq\frac{2C_{{\rm hyp},\mu}}{\sqrt n}\\
 &\quad+\frac{B_{\rm rng}+4G_{\rm ev}G_{\rm tr}/\mu}{\sqrt{2n}}\\
 &\qquad\times
 \left[(d_\lambda+1)\log\{1+4D_\mu\sqrt n\}
       +\log\frac2\delta\right]^{1/2}.
\end{aligned}
\end{equation}
On the event in \eqref{eq:contractive-loo-concentration-general}, add the
three bounds
\eqref{eq:contractive-score-approximation-general},
\eqref{eq:contractive-loo-concentration-general}, and
\eqref{eq:contractive-cross-size-general}.  The resulting right-hand side is
\eqref{eq:contractive-risk-radius}, which proves
\eqref{eq:contractive-risk-curve}.
\end{proof}

For fixed $d_\lambda$ and fixed constants in
\eqref{eq:contractive-risk-constants}, the radius in
\eqref{eq:contractive-risk-radius} is
$O(\sqrt{\log n/n}+n^{-2})$.  Removing the finite-net logarithm requires the
additional mixed-stability premise of \cref{thm:chained-loo-concentration};
positive curvature alone supplies only the ordinary stability in
\eqref{eq:contractive-stability-general}.

For a direct verification of the first inequality in
\eqref{eq:contractive-primitive-bounds}, suppose
$\ell^{\rm tr}_{\lambda,z}=\bar\ell^{\rm tr}_{\lambda,z}+R_\lambda$ on
$\mathcal K$, where
$\nabla^2\bar\ell^{\rm tr}_{\lambda,z}\succeq-\Gamma I$ and
$\nabla^2R_\lambda\succeq(\Gamma+\mu)I$ uniformly over
$(z,\lambda,\theta)\in\mathsf Z\times\Lambda\times\mathcal K$.
Every probability measure in \eqref{eq:contractive-measure-class} then gives
$\nabla^2F_{P,\lambda}\succeq\mu I$ on $\mathcal K$.

\subsection{Finite-horizon gradient-flow stability bounds}
For the neighboring-flow and candidate-index checks, fix $N\geq1$ and finite
deterministic $G_{\rm tr},K_{\rm ev}\geq0$.  For
$\mathsf s_N\in\mathsf Z^N$, let $P_{\mathsf s_N}$ denote the empirical
measure of $\mathsf s_N$ and write
$\theta_t^{\mathsf s_N,\lambda}=\theta_t^{P_{\mathsf s_N},\lambda}$ whenever
the flow exists.  For $p=(t,\lambda)\in\cH$, define the map
$L_{N,p}:\mathsf Z^N\times\mathsf Z\to\mathbb R$ by
$L_{N,p}(\mathsf s_N,z)=
\ell^{\rm ev}_{\lambda,z}(\theta_t^{\mathsf s_N,\lambda})$, and let
$\beta_N$ and $L_N^{\rm hyp}$ be the constants of
$\{L_{N,p}:p\in\cH\}$ in
\cref{eq:uniform-stability,eq:hyper-lipschitz-constant}.

\subsubsection{Neighboring-flow conditions}
\label{app:gf-neighboring-flow-conditions}

For every neighboring
$\mathsf s_N,\allowbreak \mathsf s_N'\in\mathsf Z^N$ and $\lambda\in\Lambda$, assume
the coupled flows exist on $[0,T]$ from a common initialization.  Assume that,
for every $t\in[0,T]$, $F_{P_{\mathsf s_N},\lambda}$ is $C^2$ on an open
neighborhood of the segment joining $\theta_t^{\mathsf s_N,\lambda}$ and
$\theta_t^{\mathsf s_N',\lambda}$.  For $0\leq s\leq t\leq T$,
define the segment Hessian and fundamental solution
$\overline H_{\mathsf s_N,\mathsf s_N',\lambda}(t),
\Phi_{\mathsf s_N,\mathsf s_N',\lambda}(t,s):\Theta\to\Theta$ by
\begin{equation}\label{eq:stability-segment-hessian}
\begin{aligned}
 \overline H_{\mathsf s_N,\mathsf s_N',\lambda}(t)
 &=\int_0^1\nabla_\theta^2F_{P_{\mathsf s_N},\lambda}
 \bigl(\theta_t^{\mathsf s_N',\lambda}
       +u(\theta_t^{\mathsf s_N,\lambda}-\theta_t^{\mathsf s_N',\lambda})\bigr)\,\dd u,\\
 \partial_t\Phi_{\mathsf s_N,\mathsf s_N',\lambda}(t,s)
 &=-\overline H_{\mathsf s_N,\mathsf s_N',\lambda}(t)\Phi_{\mathsf s_N,\mathsf s_N',\lambda}(t,s),
 \qquad \Phi_{\mathsf s_N,\mathsf s_N',\lambda}(s,s)=I.
\end{aligned}
\end{equation}
Define the deterministic extended-nonnegative amplification factor
$\mathfrak A_{N,T}\in[0,\infty]$ by
\begin{equation}\label{eq:amplification-factor}
 \mathfrak A_{N,T}
 =\sup_{\substack{\lambda\in\Lambda,\ \mathsf s_N,\mathsf s_N'\in\mathsf Z^N,\ \mathsf s_N\sim \mathsf s_N'\\
                   0\leq t\leq T}}
   \int_0^t\|\Phi_{\mathsf s_N,\mathsf s_N',\lambda}(t,s)\|_{\op}\,\dd s.
\end{equation}

Suppose $G_{\rm tr}$ and $K_{\rm ev}$ bound, uniformly over neighboring
$\mathsf s_N,\mathsf s_N'\in\mathsf Z^N$, $\lambda\in\Lambda$,
$t\in[0,T]$, and $z\in\mathsf Z$, the norm
$\|\nabla\ell^{\rm tr}_{\lambda,z}(\theta_t^{\mathsf s_N,\lambda})\|$
by $G_{\rm tr}$ and the Lipschitz constant of
$\ell^{\rm ev}_{\lambda,z}$ on every coupled segment by $K_{\rm ev}$.

\subsubsection{Candidate-index conditions}
\label{app:gf-candidate-index-conditions}

Assume a finite deterministic $C_{\rm met}\geq0$ satisfies
\begin{equation}\label{eq:candidate-metric-comparison}
 |t-t'|+\|\lambda-\lambda'\|
 \leq C_{\rm met}
 d_{\cH}((t,\lambda),(t',\lambda')),
 \qquad (t,\lambda),(t',\lambda')\in\cH.
\end{equation}
Assume all size-$N$ sample flows exist on $[0,T]$ from a common
$\lambda$-independent initialization.  Assume
\[
 \|\nabla\ell^{\rm tr}_{\lambda,z}
      (\theta_t^{\mathsf s_N,\lambda})\|\leq G_{\rm tr}
\]
uniformly over $\mathsf s_N\in\mathsf Z^N$, $t\in[0,T]$,
$\lambda\in\Lambda$, and $z\in\mathsf Z$.  Uniformly over
$\mathsf s_N\in\mathsf Z^N$, $t,t'\in[0,T]$,
$\lambda,\lambda',\widetilde\lambda\in\Lambda$, and $z\in\mathsf Z$, assume
$\theta\mapsto\ell^{\rm ev}_{\widetilde\lambda,z}(\theta)$ is
$K_{\rm ev}$-Lipschitz on the segment joining
$\theta_t^{\mathsf s_N,\lambda}$ and
$\theta_{t'}^{\mathsf s_N,\lambda'}$.  Suppose finite deterministic
$L_{\rm tr}^{\lambda},L_{\rm ev}^{\lambda}\geq0$ satisfy, uniformly over
$\mathsf s_N\in\mathsf Z^N$, $t\in[0,T]$, $z\in\mathsf Z$, and
$\lambda,\lambda'\in\Lambda$,
\begin{equation}\label{eq:finite-horizon-lambda-moduli}
\begin{aligned}
 \|\nabla F_{P_{\mathsf s_N},\lambda}
      (\theta_t^{\mathsf s_N,\lambda'})
   -\nabla F_{P_{\mathsf s_N},\lambda'}
      (\theta_t^{\mathsf s_N,\lambda'})\|
 &\leq L_{\rm tr}^{\lambda}\|\lambda-\lambda'\|,\\
 |\ell^{\rm ev}_{\lambda,z}(\theta_t^{\mathsf s_N,\lambda'})
   -\ell^{\rm ev}_{\lambda',z}(\theta_t^{\mathsf s_N,\lambda'})|
 &\leq L_{\rm ev}^{\lambda}\|\lambda-\lambda'\|.
\end{aligned}
\end{equation}
Assume $F_{P_{\mathsf s_N},\lambda}$ is $C^2$ around every segment
joining same-sample flows at the same time and two hyperparameter values, and
that, for a deterministic $\underline\mu_{\rm hyp}\in\mathbb R$, the
corresponding segment Hessian satisfies
\begin{equation}\label{eq:hyperparameter-segment-hessian}
 \int_0^1\nabla^2F_{P_{\mathsf s_N},\lambda}
 \bigl(\theta_t^{\mathsf s_N,\lambda'}
       +u(\theta_t^{\mathsf s_N,\lambda}
          -\theta_t^{\mathsf s_N,\lambda'})\bigr)\,\dd u
 \succeq\underline\mu_{\rm hyp}I
\end{equation}
uniformly over $\mathsf s_N\in\mathsf Z^N$, $t\in[0,T]$, and
$\lambda,\lambda'\in\Lambda$.  Define
\begin{equation}\label{eq:flow-modulus-constant}
 C_{{\rm mod},T}
 =C_{\rm met}\max\left\{
 K_{\rm ev}G_{\rm tr},
 L_{\rm ev}^{\lambda}
 +K_{\rm ev}L_{\rm tr}^{\lambda}
   \int_0^T e^{-\underline\mu_{\rm hyp}s}\,\dd s
 \right\}.
\end{equation}

\begin{proposition}[Finite-horizon stability and modulus]
\label[proposition]{prop:gf-stability}
Under the neighboring-flow conditions in
\cref{app:gf-neighboring-flow-conditions},
\begin{equation}\label{eq:beta-flow}
 \beta_N\leq\frac{2K_{\rm ev}G_{\rm tr}}{N}\mathfrak A_{N,T}.
\end{equation}
If, in addition,
$\overline H_{\mathsf s_N,\mathsf s_N',\lambda}(t)
\succeq\underline\mu I$ for a deterministic
$\underline\mu\in\mathbb R$, uniformly over neighboring
$\mathsf s_N,\mathsf s_N'\in\mathsf Z^N$, $\lambda\in\Lambda$, and
$t\in[0,T]$, then
\begin{equation}\label{eq:amplification-mu}
 \mathfrak A_{N,T}\leq\int_0^T e^{-\underline\mu s}\,\dd s.
\end{equation}
Under the candidate-index conditions in
\cref{app:gf-candidate-index-conditions}, the hyperparameter-Lipschitz
constant satisfies
\begin{equation}\label{eq:hyper-lipschitz-flow}
 L_N^{\rm hyp}\leq C_{{\rm mod},T}.
\end{equation}
When $N=n-1$, the modulus condition \eqref{eq:hyper-modulus} holds with
\begin{equation}\label{eq:omega-flow}
 \omega_{n-1}(r)=C_{{\rm mod},T}r,
 \qquad r\geq0.
\end{equation}
\end{proposition}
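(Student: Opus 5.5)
We prove \eqref{eq:beta-flow} by coupling the two neighboring flows, the same way neighboring flows are compared in the proof of \cref{thm:contractive-risk-curve}. We then bound the propagator under the one-sided curvature hypothesis. Finally, the modulus bound splits the candidate increment into a time part and a hyperparameter part.

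\emph{Stability.} Fix neighboring $\mathsf s_N\sim\mathsf s_N'$ and $\lambda\in\Lambda$. Write $x_t=\theta_t^{\mathsf s_N,\lambda}$ and $y_t=\theta_t^{\mathsf s_N',\lambda}$, and put $e_t=x_t-y_t$, so $e_0=0$ by common initialization. Splitting the gradient difference at $y_t$ and applying the fundamental theorem of calculus along the segment gives
\[
 \dot e_t=-\overline H_{\mathsf s_N,\mathsf s_N',\lambda}(t)e_t
 -\{\nabla F_{P_{\mathsf s_N},\lambda}(y_t)-\nabla F_{P_{\mathsf s_N'},\lambda}(y_t)\}.
\]
The two empirical laws differ in one atom of mass $1/N$. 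The forcing therefore equals $N^{-1}\{\nabla\ell^{\rm tr}_{\lambda,z}(y_t)-\nabla\ell^{\rm tr}_{\lambda,z'}(y_t)\}$, whose norm is at most $2G_{\rm tr}/N$ by the gradient envelope evaluated along the $\mathsf s_N'$-flow; that flow is covered because the envelope is uniform over neighboring pairs. Variation of constants with $\Phi$ from \eqref{eq:stability-segment-hessian} then gives
\[
 \|e_t\|\leq\frac{2G_{\rm tr}}{N}\int_0^t\|\Phi(t,s)\|_{\op}\,\dd s\leq\frac{2G_{\rm tr}}{N}\mathfrak A_{N,T}.
\]
Since $\ell^{\rm ev}_{\lambda,z}$ is $K_{\rm ev}$-Lipschitz on the coupled segment, the loss difference is at most $K_{\rm ev}\|e_t\|$. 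Taking suprema over $(\mathsf s_N,\mathsf s_N',t,\lambda,z)$ yields \eqref{eq:beta-flow}.

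\emph{Amplification bound.} Assume $\overline H\succeq\underline\mu I$. The energy identity $\tfrac12\tfrac{\dd}{\dd t}\|\Phi(t,s)v\|^2=-\langle\Phi v,\overline H\Phi v\rangle\leq-\underline\mu\|\Phi v\|^2$, as in \eqref{eq:deleted-propagator-energy-bound}, gives $\|\Phi(t,s)\|_{\op}\leq e^{-\underline\mu(t-s)}$; this holds for any real $\underline\mu$, including negative values. Hence $\int_0^t\|\Phi(t,s)\|_{\op}\dd s\leq\int_0^te^{-\underline\mu u}\dd u\leq\int_0^Te^{-\underline\mu u}\dd u$. The last step holds because the integrand is nonnegative and $t\leq T$, which proves \eqref{eq:amplification-mu}.

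\emph{Modulus.} For $p=(t,\lambda)$ and $p'=(t',\lambda')$, insert the intermediate point $(t',\lambda)$ and bound two increments.

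\emph{Time increment.} $\|\dot\theta^{\mathsf s_N,\lambda}_s\|=\|\nabla F_{P_{\mathsf s_N},\lambda}(\theta_s)\|\leq G_{\rm tr}$, because the empirical gradient is an average of training-loss gradients. Together with the Lipschitz bound on the segment, this gives $|L_{N,(t,\lambda)}-L_{N,(t',\lambda)}|\leq K_{\rm ev}G_{\rm tr}|t-t'|$.

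\emph{Hyperparameter increment.} This has two pieces.
\begin{enumerate}
\item The change of evaluation index contributes $L_{\rm ev}^\lambda\|\lambda-\lambda'\|$ by \eqref{eq:finite-horizon-lambda-moduli}.
\item The parameter change is controlled by $w_s=\theta_s^{\lambda}-\theta_s^{\lambda'}$. It solves $\dot w=-\overline H^{\rm hyp}_s w-\{\nabla F_{\lambda}(\theta_s^{\lambda'})-\nabla F_{\lambda'}(\theta_s^{\lambda'})\}$ with $w_0=0$, by the $\lambda$-independent initialization. The segment Hessian satisfies \eqref{eq:hyperparameter-segment-hessian} and the forcing is bounded by $L_{\rm tr}^\lambda\|\lambda-\lambda'\|$. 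The same propagator bound therefore gives $\|w_t\|\leq L^\lambda_{\rm tr}\|\lambda-\lambda'\|\int_0^Te^{-\underline\mu_{\rm hyp}s}\dd s$, and multiplying by $K_{\rm ev}$ gives this piece's contribution to the loss.
\end{enumerate}

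Summing the time and hyperparameter increments gives the bound $\max\{\cdot,\cdot\}(|t-t'|+\|\lambda-\lambda'\|)$. Applying \eqref{eq:candidate-metric-comparison} then gives \eqref{eq:hyper-lipschitz-flow}. The modulus \eqref{eq:omega-flow} follows by setting $N=n-1$; this $\omega_{n-1}$ is linear, nondecreasing, and vanishes at zero.

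The only delicate point is bookkeeping: each Lipschitz and gradient envelope must be applied on exactly the segment or path where the hypotheses grant it. In particular, the hypotheses cover the mixed segment joining $\theta_t^{\mathsf s_N,\lambda}$ and $\theta_{t'}^{\mathsf s_N,\lambda'}$, and the intermediate point $(t',\lambda)$ lies on such paths.
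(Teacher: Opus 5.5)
Your proposal is correct and follows the paper's proof essentially step for step. It uses the same coupling split at $\theta_t^{\mathsf s_N',\lambda}$ with forcing bounded by $2G_{\rm tr}/N$, the same energy estimate $\|\Phi(t,s)\|_{\op}\leq e^{-\underline\mu(t-s)}$, and the same three-term triangle inequality through $\theta_{t'}^{\mathsf s_N,\lambda}$ and $\theta_{t'}^{\mathsf s_N,\lambda'}$ for the modulus. The one minor difference is that you bound the hyperparameter-flow gap by variation of constants with the propagator of $-\overline H_{{\rm hyp},s}$, whereas the paper uses a Dini-derivative inequality and Khalil's comparison lemma; both give the same constant.
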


\begin{proof}[Proof of \cref{prop:gf-stability}]
Fix $\mathsf s_N\sim \mathsf s_N'$ and $\lambda\in\Lambda$.  After relabeling the
coordinate of a possible replacement, write
\[
 \mathsf s_N=(z_1,z_2,\ldots,z_N),\qquad
 \mathsf s_N'=(z_1',z_2,\ldots,z_N),
\]
and define the neighboring-flow difference path $e^\lambda:[0,T]\to\Theta$
by $e_t^\lambda=\theta_t^{\mathsf s_N,\lambda}
-\theta_t^{\mathsf s_N',\lambda}$.
The segment
fundamental theorem of calculus and
\cref{eq:stability-segment-hessian} give
\[
 \nabla F_{P_{\mathsf s_N},\lambda}(\theta_t^{\mathsf s_N,\lambda})
 -\nabla F_{P_{\mathsf s_N},\lambda}(\theta_t^{\mathsf s_N',\lambda})
 =\overline H_{\mathsf s_N,\mathsf s_N',\lambda}(t)e_t^\lambda.
\]
Along $\theta_t^{\mathsf s_N',\lambda}$, let
$\mathfrak b_t^{\mathsf s_N,\mathsf s_N',\lambda}\in\Theta$ denote the empirical-field
forcing discrepancy:
\begin{align*}
 \mathfrak b_t^{\mathsf s_N,\mathsf s_N',\lambda}
 &:=\nabla F_{P_{\mathsf s_N},\lambda}(\theta_t^{\mathsf s_N',\lambda})
   -\nabla F_{P_{\mathsf s_N'},\lambda}(\theta_t^{\mathsf s_N',\lambda})\\
 &=\frac1N\{\nabla\ell^{\rm tr}_{\lambda,z_1}
                (\theta_t^{\mathsf s_N',\lambda})
             -\nabla\ell^{\rm tr}_{\lambda,z_1'}
                (\theta_t^{\mathsf s_N',\lambda})\},
 &\|\mathfrak b_t^{\mathsf s_N,\mathsf s_N',\lambda}\|&\leq\frac{2G_{\rm tr}}N.
\end{align*}
Consequently,
\begin{equation}\label{eq:gf-stability-error-variation}
 \dot e_t^\lambda
 =-\overline H_{\mathsf s_N,\mathsf s_N',\lambda}(t)e_t^\lambda
   -\mathfrak b_t^{\mathsf s_N,\mathsf s_N',\lambda},
 \qquad e_0^\lambda=0,
 \qquad e_t^\lambda=-\int_0^t
       \Phi_{\mathsf s_N,\mathsf s_N',\lambda}(t,s)
       \mathfrak b_s^{\mathsf s_N,\mathsf s_N',\lambda}\,\dd s.
\end{equation}
\Cref{eq:gf-stability-error-variation} and
\cref{eq:amplification-factor} imply
\[
 \sup_{0\leq t\leq T}\|e_t^\lambda\|
 \leq\frac{2G_{\rm tr}}N\mathfrak A_{N,T}.
\]
The segment joining $\theta_t^{\mathsf s_N,\lambda}$ to
$\theta_t^{\mathsf s_N',\lambda}$ is a coupled segment, so the assumed
$K_{\rm ev}$-Lipschitz bound gives, uniformly over $z\in\mathsf Z$ and
$0\leq t\leq T$,
\[
 |\ell^{\rm ev}_{\lambda,z}(\theta_t^{\mathsf s_N,\lambda})
   -\ell^{\rm ev}_{\lambda,z}(\theta_t^{\mathsf s_N',\lambda})|
 \leq K_{\rm ev}\|e_t^\lambda\|
 \leq\frac{2K_{\rm ev}G_{\rm tr}}N\mathfrak A_{N,T}.
\]
Taking the supremum in \eqref{eq:uniform-stability} proves
\cref{eq:beta-flow}.

To bound the amplification factor, fix $0\leq s\leq t\leq T$
and $v\in\Theta$.  For $r\in[s,t]$, the uniform bound
$\overline H_{\mathsf s_N,\mathsf s_N',\lambda}(r)
\succeq\underline\mu I$ gives
\[
\begin{aligned}
 \frac12\frac{\dd}{\dd r}
 \|\Phi_{\mathsf s_N,\mathsf s_N',\lambda}(r,s)v\|^2
 &=-\big\langle \Phi_{\mathsf s_N,\mathsf s_N',\lambda}(r,s)v,\\
 &\hspace{17mm}\overline H_{\mathsf s_N,\mathsf s_N',\lambda}(r)
 \Phi_{\mathsf s_N,\mathsf s_N',\lambda}(r,s)v\big\rangle\\
 &\leq-\underline\mu
 \|\Phi_{\mathsf s_N,\mathsf s_N',\lambda}(r,s)v\|^2.
\end{aligned}
\]
Multiplying the resulting scalar inequality by
$e^{2\underline\mu(r-s)}$ gives
\[
 \frac{\dd}{\dd r}
 \left\{e^{2\underline\mu(r-s)}
 \|\Phi_{\mathsf s_N,\mathsf s_N',\lambda}(r,s)v\|^2\right\}\leq0.
\]
Since $\Phi_{\mathsf s_N,\mathsf s_N',\lambda}(s,s)=I$, it follows directly
that $\|\Phi_{\mathsf s_N,\mathsf s_N',\lambda}(t,s)\|_{\op}
 \leq e^{-\underline\mu(t-s)}$.  Uniformly in $0\leq t\leq T$,
\[
 \int_0^t e^{-\underline\mu(t-s)}\,\dd s
 \leq
 \begin{cases}
  (1-e^{-\underline\mu T})/\underline\mu,&\underline\mu>0,\\
  T,&\underline\mu=0,\\
  (e^{|\underline\mu|T}-1)/|\underline\mu|,&\underline\mu<0.
 \end{cases}
\]
Taking the supremum over $\lambda\in\Lambda$, neighboring
$\mathsf s_N,\mathsf s_N'\in\mathsf Z^N$, and $0\leq t\leq T$ in
\eqref{eq:amplification-factor} proves
\cref{eq:amplification-mu}.

For the candidate modulus, fix
$\mathsf s_N\in\mathsf Z^N$ and
$p=(t,\lambda),p'=(t',\lambda')\in\cH$.  The empirical objective is the
average of the training losses, so the $G_{\rm tr}$ bound gives
\begin{equation}\label{eq:finite-horizon-flow-speed}
 \|\dot\theta_s^{\mathsf s_N,\lambda}\|
 =\|\nabla F_{P_{\mathsf s_N},\lambda}
        (\theta_s^{\mathsf s_N,\lambda})\|
 \leq G_{\rm tr},
 \qquad 0\leq s\leq T.
\end{equation}
Consequently,
\begin{equation}\label{eq:finite-horizon-time-flow-modulus}
 \|\theta_t^{\mathsf s_N,\lambda}
       -\theta_{t'}^{\mathsf s_N,\lambda}\|
 \leq G_{\rm tr}|t-t'|.
\end{equation}

For the hyperparameter increment, define $q:[0,T]\to\Theta$ by
$q_s=\theta_s^{\mathsf s_N,\lambda}
-\theta_s^{\mathsf s_N,\lambda'}$ and
\[
\begin{aligned}
 \overline H_{{\rm hyp},s}
 &:=\int_0^1\nabla^2F_{P_{\mathsf s_N},\lambda}
 \bigl(\theta_s^{\mathsf s_N,\lambda'}
       +u(\theta_s^{\mathsf s_N,\lambda}
          -\theta_s^{\mathsf s_N,\lambda'})\bigr)\,\dd u
 \in\mathcal L(\Theta),\\
 b_s
 &:=\nabla F_{P_{\mathsf s_N},\lambda}
       (\theta_s^{\mathsf s_N,\lambda'})
   -\nabla F_{P_{\mathsf s_N},\lambda'}
       (\theta_s^{\mathsf s_N,\lambda'})
 \in\Theta.
\end{aligned}
\]
The segment fundamental theorem of calculus and
\eqref{eq:finite-horizon-lambda-moduli} give
\[
 \dot q_s=-\overline H_{{\rm hyp},s}q_s-b_s,
 \qquad
 \|b_s\|\leq L_{\rm tr}^{\lambda}\|\lambda-\lambda'\|.
\]
When $q_s\ne0$, taking the inner
product with $q_s/\|q_s\|$ and using
\eqref{eq:hyperparameter-segment-hessian} gives the required Dini bound.
When $q_s=0$,
$D^+\|q_s\|=\|\dot q_s\|=\|b_s\|
\leq L_{\rm tr}^{\lambda}\|\lambda-\lambda'\|$.  In both cases,
\[
 D^+\|q_s\|
 \leq-\underline\mu_{\rm hyp}\|q_s\|
       +L_{\rm tr}^{\lambda}\|\lambda-\lambda'\|,
 \qquad q_0=0.
\]
Khalil's scalar comparison lemma
\cite[Lemma~3.4]{khalil2002nonlinear} therefore yields, uniformly over
$0\leq s\leq T$,
\begin{equation}\label{eq:finite-horizon-lambda-flow-modulus}
 \|\theta_s^{\mathsf s_N,\lambda}
       -\theta_s^{\mathsf s_N,\lambda'}\|
 \leq L_{\rm tr}^{\lambda}\|\lambda-\lambda'\|
       \int_0^T e^{-\underline\mu_{\rm hyp}u}\,\dd u.
\end{equation}
For $z\in\mathsf Z$, the triangle inequality followed by
\eqref{eq:finite-horizon-lambda-moduli},
\eqref{eq:finite-horizon-time-flow-modulus}, and
\eqref{eq:finite-horizon-lambda-flow-modulus} gives
\begin{align*}
 &|L_{N,p}(\mathsf s_N,z)-L_{N,p'}(\mathsf s_N,z)|\\
 &\quad\leq
 |\ell^{\rm ev}_{\lambda,z}(\theta_t^{\mathsf s_N,\lambda})
   -\ell^{\rm ev}_{\lambda,z}(\theta_{t'}^{\mathsf s_N,\lambda})|\\
 &\qquad+
 |\ell^{\rm ev}_{\lambda,z}(\theta_{t'}^{\mathsf s_N,\lambda})
   -\ell^{\rm ev}_{\lambda,z}(\theta_{t'}^{\mathsf s_N,\lambda'})|\\
 &\qquad+
 |\ell^{\rm ev}_{\lambda,z}(\theta_{t'}^{\mathsf s_N,\lambda'})
   -\ell^{\rm ev}_{\lambda',z}(\theta_{t'}^{\mathsf s_N,\lambda'})|\\
 &\quad\leq K_{\rm ev}G_{\rm tr}|t-t'|
 +\left\{L_{\rm ev}^{\lambda}
 +K_{\rm ev}L_{\rm tr}^{\lambda}
   \int_0^T e^{-\underline\mu_{\rm hyp}u}\,\dd u\right\}
   \|\lambda-\lambda'\|\\
 &\quad\leq C_{{\rm mod},T}d_{\cH}(p,p'),
\end{align*}
where the bound by $C_{{\rm mod},T}d_{\cH}(p,p')$ uses
\eqref{eq:candidate-metric-comparison} and
\eqref{eq:flow-modulus-constant}.  Taking the supremum over
$\mathsf s_N\in\mathsf Z^N$ and $z\in\mathsf Z$ proves
\eqref{eq:hyper-lipschitz-flow}.  At $N=n-1$,
\eqref{eq:hyper-lipschitz-flow} shows that the choice in
\eqref{eq:omega-flow} satisfies \eqref{eq:hyper-modulus}.
\end{proof}

For fixed $T$, suppose the conditions in
\cref{app:gf-neighboring-flow-conditions,app:gf-candidate-index-conditions}
hold uniformly in $N$.  Suppose also that
$\overline H_{\mathsf s_N,\mathsf s_N',\lambda}(t)
\succeq\underline\mu I$ uniformly over $N$, neighboring
$\mathsf s_N,\mathsf s_N'\in\mathsf Z^N$, $\lambda\in\Lambda$, and
$t\in[0,T]$, and that
$C_{\rm met},G_{\rm tr},K_{\rm ev},L_{\rm tr}^{\lambda},L_{\rm ev}^{\lambda},
\underline\mu$, and $\underline\mu_{\rm hyp}$ are independent of $N$.
Then
\eqref{eq:beta-flow}, \eqref{eq:amplification-mu}, and
\eqref{eq:omega-flow} give
$\beta_{n-1}=O((n-1)^{-1})$, $\omega_{n-1}(r)=O(r)$, and
$2\omega_{n-1}(n^{-1/2})=O(n^{-1/2})$ in
\eqref{eq:sn-definition}.  Direct integration gives
\[
 \int_0^T e^{-\underline\mu_{\rm hyp}s}\,\dd s
 =\begin{cases}
  (1-e^{-\underline\mu_{\rm hyp}T})/\underline\mu_{\rm hyp},
    &\underline\mu_{\rm hyp}>0,\\
  T,&\underline\mu_{\rm hyp}=0,\\
  (e^{|\underline\mu_{\rm hyp}|T}-1)/|\underline\mu_{\rm hyp}|,
    &\underline\mu_{\rm hyp}<0.
 \end{cases}
\]
When $\underline\mu_{\rm hyp}>0$,
$(1-e^{-\underline\mu_{\rm hyp}T})/\underline\mu_{\rm hyp}
\leq\underline\mu_{\rm hyp}^{-1}$.

A sufficient derivative bound for the mixed increment in
\eqref{eq:mixed-stability} uses a deterministic nonempty convex set containing
every flow along the hyperparameter chord hull.

\subsubsection{Mixed-increment conditions}
\label{app:gf-mixed-increment-conditions}

Fix $N\geq1$, let $\Lambda_\star=\operatorname{conv}(\Lambda)$, and fix a
deterministic nonempty convex set $\mathcal K\subseteq\Theta$.  Suppose
a finite deterministic $C_{\rm met}\geq0$ satisfies
\eqref{eq:candidate-metric-comparison}.  Suppose the training and evaluation
losses admit extensions to a deterministic common open set
$\mathcal O\subseteq\mathbb R^{d_\lambda}\times\Theta$ containing
$\Lambda_\star\times\mathcal K$.  Fix the extensions, continue to denote them
by $\ell^{\rm tr}_{\lambda,z}$ and $\ell^{\rm ev}_{\lambda,z}$, and suppose
$(\lambda,\theta)\mapsto\nabla_\theta
\ell^{\rm tr}_{\lambda,z}(\theta)$ is continuously differentiable and
$(\lambda,\theta)\mapsto\ell^{\rm ev}_{\lambda,z}(\theta)$ is continuously
differentiable with a continuous second derivative in $\theta$, for every
$z\in\mathsf Z$.  For
$\mathsf s_N=(z_1,\ldots,z_N)\in\mathsf Z^N$ and every
$(\lambda,\theta)\in\mathcal O$, define
\[
 F_{P_{\mathsf s_N},\lambda}(\theta)
 =\frac1N\sum_{j=1}^N\ell^{\rm tr}_{\lambda,z_j}(\theta)
\]
and suppose the gradient flow generated by $F_{P_{\mathsf s_N},\lambda}$
exists on $[0,T]$
from the same $\lambda$-independent initialization $\theta_0$ and remains in
$\mathcal K$, uniformly over $\mathsf s_N$ and
$\lambda\in\Lambda_\star$.  Denote the resulting flow by
$\theta_t^{P_{\mathsf s_N},\lambda}$.

Fix finite deterministic constants
\[
 \underline\mu_{\rm mix}\in\mathbb R,\qquad
 G_{\rm tr},H_{\rm tr},J_3,B_{{\rm tr},\lambda},
 J_{{\rm tr},\lambda},G_{\rm ev},H_{\rm ev},
 B_{{\rm ev},\lambda},M_{{\rm ev},\lambda}\in[0,\infty).
\]
For $\mathsf s_N\in\mathsf Z^N$, $z\in\mathsf Z$,
$\lambda\in\Lambda_\star$, and $\theta,\theta'\in\mathcal K$, assume
\begin{align}
 \nabla_\theta^2F_{P_{\mathsf s_N},\lambda}(\theta)
 &\succeq\underline\mu_{\rm mix}I,
 &\|\nabla_\theta\ell^{\rm tr}_{\lambda,z}(\theta)\|
 &\leq G_{\rm tr},\nonumber\\
 \|\nabla_\theta^2\ell^{\rm tr}_{\lambda,z}(\theta)\|_{\op}
 &\leq H_{\rm tr},
 &\|\nabla_\theta^2\ell^{\rm tr}_{\lambda,z}(\theta)
       -\nabla_\theta^2\ell^{\rm tr}_{\lambda,z}(\theta')\|_{\op}
 &\leq J_3\|\theta-\theta'\|,
 \label{eq:mixed-training-derivative-bounds}
\end{align}
and, with
$A_{\lambda,z}(\theta)
=D_\lambda\nabla_\theta\ell^{\rm tr}_{\lambda,z}(\theta)
\in\mathcal L(\mathbb R^{d_\lambda},\Theta)$,
\begin{align}
 \|A_{\lambda,z}(\theta)\|_{\op}
 &\leq B_{{\rm tr},\lambda},
 &\|A_{\lambda,z}(\theta)-A_{\lambda,z}(\theta')\|_{\op}
 &\leq J_{{\rm tr},\lambda}\|\theta-\theta'\|,
 \label{eq:mixed-training-lambda-bounds}\\
 \|\nabla_\theta\ell^{\rm ev}_{\lambda,z}(\theta)\|
 &\leq G_{\rm ev},
 &\|\nabla_\theta^2\ell^{\rm ev}_{\lambda,z}(\theta)\|_{\op}
 &\leq H_{\rm ev},\nonumber\\
 \|D_\lambda\ell^{\rm ev}_{\lambda,z}(\theta)\|_{\op}
 &\leq B_{{\rm ev},\lambda},
 &\|D_\lambda\ell^{\rm ev}_{\lambda,z}(\theta)
       -D_\lambda\ell^{\rm ev}_{\lambda,z}(\theta')\|_{\op}
 &\leq M_{{\rm ev},\lambda}\|\theta-\theta'\|.
 \label{eq:mixed-evaluation-derivative-bounds}
\end{align}
Here
$D_\lambda\ell^{\rm ev}_{\lambda,z}(\theta)
\in\mathcal L(\mathbb R^{d_\lambda},\mathbb R)$.  The operator norms in
\cref{eq:mixed-training-lambda-bounds,eq:mixed-evaluation-derivative-bounds}
use the Euclidean norm on $\mathbb R^{d_\lambda}$, the ambient norm on
$\Theta$, and absolute value on $\mathbb R$, as appropriate.

Define the finite constants
\begin{equation}\label{eq:mixed-flow-primitive-constants}
\begin{aligned}
 \phi_{{\rm mix},T}
 &=\int_0^T e^{-\underline\mu_{\rm mix}s}\,\dd s,
 &C_{q,T}&=2G_{\rm tr}\phi_{{\rm mix},T},\\
 C_{\dot q,T}&=H_{\rm tr}C_{q,T}+2G_{\rm tr},
 &U_{\lambda,T}&=B_{{\rm tr},\lambda}\phi_{{\rm mix},T},\\
 C_{u,T}
 &=\left\{(J_3C_{q,T}+2H_{\rm tr})U_{\lambda,T}
        +J_{{\rm tr},\lambda}C_{q,T}
        +2B_{{\rm tr},\lambda}\right\}\phi_{{\rm mix},T},\\
 C_{t,T}
 &=G_{\rm ev}C_{\dot q,T}+H_{\rm ev}G_{\rm tr}C_{q,T},\\
 C_{\lambda,T}
 &=G_{\rm ev}C_{u,T}
   +(M_{{\rm ev},\lambda}+H_{\rm ev}U_{\lambda,T})C_{q,T},\\
 C_{{\rm hyp},T}^{\rm der}
 &=C_{\rm met}\max\{G_{\rm ev}G_{\rm tr},
             B_{{\rm ev},\lambda}+G_{\rm ev}U_{\lambda,T}\},\\
 C_{{\rm mix},T}
 &=C_{\rm met}\max\{C_{t,T},C_{\lambda,T}\}.
\end{aligned}
\end{equation}
For $(t,\lambda)\in[0,T]\times\Lambda_\star$, define the extended
evaluated-loss map
$\overline L_{N,(t,\lambda)}:\mathsf Z^N\times\mathsf Z\to\mathbb R$ by
\[
 \overline L_{N,(t,\lambda)}(\mathsf s_N,z)
 =\ell^{\rm ev}_{\lambda,z}
   (\theta_t^{P_{\mathsf s_N},\lambda}).
\]
For $p\in\cH$, set $L_{N,p}=\overline L_{N,p}$.

\begin{proposition}[Finite-horizon mixed gradient-flow increments]
\label[proposition]{prop:gf-mixed-stability}
Under the mixed-increment conditions in
\cref{app:gf-mixed-increment-conditions}, the constants of the process
$\{L_{N,p}:p\in\cH\}$ in
\cref{eq:hyper-lipschitz-constant,eq:mixed-stability} satisfy
\begin{equation}\label{eq:mixed-flow-constants}
 L_N^{\rm hyp}\leq C_{{\rm hyp},T}^{\rm der},
 \qquad
 \beta_N^{\rm mix}\leq\frac{C_{{\rm mix},T}}N.
\end{equation}
In particular, at $N=n-1$,
\begin{equation}\label{eq:mixed-increment-primitive-bound}
 a_n^{\rm mix}\leq C_{{\rm hyp},T}^{\rm der}+C_{{\rm mix},T}.
\end{equation}
\end{proposition}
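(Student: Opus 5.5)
The proof reduces both constants to pathwise derivative envelopes for the flow map $(t,\lambda,\mathsf s_N)\mapsto\theta_t^{P_{\mathsf s_N},\lambda}$. We then integrate along segments in $(t,\lambda)$ and use \eqref{eq:candidate-metric-comparison} to pass from $|t-t'|+\|\lambda-\lambda'\|$ to $d_{\cH}$. The template is the mean-field mixed-stability argument in the proof of \cref{cor:mean-field-width-oracle}, with block norms replaced by the ambient norm and the mean-field envelopes replaced by the primitives in \cref{eq:mixed-training-derivative-bounds,eq:mixed-training-lambda-bounds,eq:mixed-evaluation-derivative-bounds}. The common propagator bound comes from $\nabla^2F\succeq\underline\mu_{\rm mix}I$ on the convex set $\mathcal K$. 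By the energy argument in the proof of \cref{prop:gf-stability}, every fundamental solution generated by $-\nabla^2F_{P_{\mathsf s_N},\lambda}$ along a flow, or by a segment average of it, has norm at most $e^{-\underline\mu_{\rm mix}(t-s)}$. Hence any Duhamel integral with forcing bounded by $c$ has norm at most $c\,\phi_{{\rm mix},T}$.

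\emph{Step 1 (Lipschitz constant).} The flow speed is at most $G_{\rm tr}$, so the time increment of the loss is at most $G_{\rm ev}G_{\rm tr}|t-t'|$. Write $\upsilon_t=\partial_\lambda\theta_t^{P_{\mathsf s_N},\lambda}$. Because the initialization is common and independent of $\lambda$, $\upsilon$ solves $\dot\upsilon=-\nabla^2F\,\upsilon-P_{\mathsf s_N}A_{\lambda,Z}$ with $\upsilon_0=0$, and therefore $\|\upsilon_t\|\le U_{\lambda,T}$. Working on the convex hull $\Lambda_\star$, so that $\lambda$-chords stay admissible, the chain rule gives $|\partial_\lambda L|\le B_{{\rm ev},\lambda}+G_{\rm ev}U_{\lambda,T}$. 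Combining the two increments with \eqref{eq:candidate-metric-comparison} yields $L_N^{\rm hyp}\le C^{\rm der}_{{\rm hyp},T}$.

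\emph{Step 2 (neighbor envelopes).} For neighboring samples, the forcing-discrepancy argument of \cref{prop:gf-stability} gives $\|q_t-q_t'\|\le C_{q,T}/N$. Subtracting the two vector fields then gives $\|\dot q_t-\dot q_t'\|\le H_{\rm tr}C_{q,T}/N+2G_{\rm tr}/N=C_{\dot q,T}/N$. Next, subtract the two $\upsilon$-equations. The error $\upsilon-\upsilon'$ is propagated by $-\nabla^2F_{P_{\mathsf s_N}}(q_t)$, and the forcing collects four contributions:
\begin{itemize}
\item the Hessian change along the path, at most $J_3C_{q,T}U_{\lambda,T}/N$;
\item the one-observation Hessian swap, at most $2H_{\rm tr}U_{\lambda,T}/N$;
\item the change in $A$ along the path, at most $J_{{\rm tr},\lambda}C_{q,T}/N$;
\item the one-observation $A$ swap, at most $2B_{{\rm tr},\lambda}/N$.
\end{itemize}
Duhamel then gives $\|\upsilon_t-\upsilon_t'\|\le C_{u,T}/N$.

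\emph{Step 3 (mixed increment).} Define $g(t,\lambda)=\ell^{\rm ev}_{\lambda,z}(q_t)-\ell^{\rm ev}_{\lambda,z}(q_t')$. For the time derivative, split $\partial_t g=\langle\nabla\ell^{\rm ev}(q_t),\dot q_t-\dot q_t'\rangle+\langle\nabla\ell^{\rm ev}(q_t)-\nabla\ell^{\rm ev}(q_t'),\dot q_t'\rangle$, which gives $|\partial_tg|\le C_{t,T}/N$. For the $\lambda$ derivative, split the same way and add the direct term $D_\lambda\ell^{\rm ev}(q)-D_\lambda\ell^{\rm ev}(q')$, controlled by $M_{{\rm ev},\lambda}$; this gives $|\partial_\lambda g|\le C_{\lambda,T}/N$. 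Integrate $g$ along the piecewise path $(t,\lambda)\to(t',\lambda)\to(t',\lambda')$, with the $\lambda$-leg inside $\Lambda_\star$, and apply \eqref{eq:candidate-metric-comparison}. This gives $\beta_N^{\rm mix}\le C_{{\rm mix},T}/N$. Substituting into \eqref{eq:mixed-increment-constant} at $N=n-1$ yields \eqref{eq:mixed-increment-primitive-bound}.

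\emph{Main obstacle.} The delicate step is justifying $\lambda$-differentiability of the flow and the variational equation for $\upsilon$. This requires the joint $C^1$ regularity of $(\lambda,\theta)\mapsto\nabla_\theta\ell^{\rm tr}$ on $\mathcal O$ together with the standard differentiable-dependence theorem for ODEs; the flow stays in $\mathcal K$, so the bounds apply along it. Checking that each term of $C_{u,T}$ matches the forcing decomposition above is bookkeeping.
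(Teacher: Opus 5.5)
Your proposal is correct and follows the paper's proof essentially step for step. It uses the same neighboring-flow envelopes $C_{q,T},C_{\dot q,T}$, the same four-term forcing for the sensitivity difference giving $C_{u,T}$, the same chain-rule splits for $\partial_t g$ and the $\lambda$-derivative of $g$, and the same integration along a time leg and then a $\lambda$-segment in $\Lambda_\star$ combined with \eqref{eq:candidate-metric-comparison}. The only cosmetic difference is that for $d_\lambda>1$ your scalar $\partial_\lambda$ should be read as the Fr\'echet derivative $D_\lambda$ with operator norms, as in the paper's $u_t^{\mathsf s,\eta}\in\mathcal L(\mathbb R^{d_\lambda},\Theta)$.
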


\begin{proof}[Proof of \cref{prop:gf-mixed-stability}]
Fix neighboring samples
$\mathsf s_N,\mathsf s_N'\in\mathsf Z^N$.  For every
$\eta\in\Lambda_\star$, define paths
$x^\eta,y^\eta,e^\eta:[0,T]\to\Theta$ by
\[
 x_t^\eta=\theta_t^{P_{\mathsf s_N},\eta},\qquad
 y_t^\eta=\theta_t^{P_{\mathsf s_N'},\eta},\qquad
 e_t^\eta=x_t^\eta-y_t^\eta.
\]
The exact empirical-field decomposition is
\[
\begin{aligned}
 \dot x_t^\eta-\dot y_t^\eta
 &=-\{\nabla F_{P_{\mathsf s_N},\eta}(x_t^\eta)
       -\nabla F_{P_{\mathsf s_N},\eta}(y_t^\eta)\}\\
 &\quad-\{\nabla F_{P_{\mathsf s_N},\eta}(y_t^\eta)
       -\nabla F_{P_{\mathsf s_N'},\eta}(y_t^\eta)\}.
\end{aligned}
\]
The lower-Hessian and individual-gradient bounds in
\eqref{eq:mixed-training-derivative-bounds} give
\[
 D^+\|e_t^\eta\|
 \leq-\underline\mu_{\rm mix}\|e_t^\eta\|+\frac{2G_{\rm tr}}N,
 \qquad e_0^\eta=0.
\]
Khalil's scalar comparison lemma
\cite[Lemma~3.4]{khalil2002nonlinear} gives
\[
 \|e_t^\eta\|
 \leq\frac{2G_{\rm tr}}N
       \int_0^t e^{-\underline\mu_{\rm mix}(t-s)}\,\dd s
 \leq\frac{C_{q,T}}N.
\]
The Hessian and individual-gradient bounds in
\eqref{eq:mixed-training-derivative-bounds} therefore give, uniformly over
$\eta\in\Lambda_\star$ and $0\leq t\leq T$,
\begin{equation}\label{eq:mixed-neighbor-flow-bounds}
 \|e_t^\eta\|\leq\frac{C_{q,T}}N,
 \qquad
 \|\dot x_t^\eta-\dot y_t^\eta\|
 \leq H_{\rm tr}\|e_t^\eta\|+\frac{2G_{\rm tr}}N
 \leq\frac{C_{\dot q,T}}N.
\end{equation}

For $\mathsf s\in\{\mathsf s_N,\mathsf s_N'\}$ and
$\eta\in\Lambda_\star$, compactness of the parameter--state path
$\{(\eta,\theta_t^{P_{\mathsf s},\eta}):0\leq t\leq T\}\subset\mathcal O$
and the $C^1$ vector-field extension give nearby flows through time $T$ and
differentiable dependence on an ambient neighborhood of $\eta$
\cite[Theorem~3.5 and Section~3.3]{khalil2002nonlinear}.  Thus the sensitivity
path
$u^{\mathsf s,\eta}:[0,T]\to
\mathcal L(\mathbb R^{d_\lambda},\Theta)$, defined by
$u_t^{\mathsf s,\eta}=D_\eta\theta_t^{P_{\mathsf s},\eta}$, satisfies
\begin{equation}\label{eq:mixed-lambda-sensitivity-equation}
 \dot u_t^{\mathsf s,\eta}
 =-\nabla_\theta^2F_{P_{\mathsf s},\eta}
       (\theta_t^{P_{\mathsf s},\eta})u_t^{\mathsf s,\eta}
  -D_\eta\nabla_\theta F_{P_{\mathsf s},\eta}
       (\theta_t^{P_{\mathsf s},\eta}),
 \qquad u_0^{\mathsf s,\eta}=0.
\end{equation}
Because $\theta_0$ is hyperparameter-independent,
$u_0^{\mathsf s,\eta}=0$ in
\eqref{eq:mixed-lambda-sensitivity-equation}.
Equations~\eqref{eq:mixed-training-derivative-bounds} and
\eqref{eq:mixed-training-lambda-bounds} give
\[
 D^+\|u_t^{\mathsf s,\eta}\|_{\op}
 \leq-\underline\mu_{\rm mix}\|u_t^{\mathsf s,\eta}\|_{\op}
       +B_{{\rm tr},\lambda},
 \qquad u_0^{\mathsf s,\eta}=0.
\]
Khalil's scalar comparison lemma
\cite[Lemma~3.4]{khalil2002nonlinear} therefore gives, uniformly over
$0\leq t\leq T$,
\[
 \|u_t^{\mathsf s,\eta}\|_{\op}
 \leq B_{{\rm tr},\lambda}
       \int_0^t e^{-\underline\mu_{\rm mix}(t-r)}\,\dd r
 \leq U_{\lambda,T}.
\]
Taking the supremum over the sample and hyperparameter indices gives
\begin{equation}\label{eq:mixed-lambda-sensitivity-bound}
 \sup_{\substack{\mathsf s_N\in\mathsf Z^N,\ \eta\in\Lambda_\star\\
                  0\leq t\leq T}}
 \|u_t^{\mathsf s_N,\eta}\|_{\op}
 \leq U_{\lambda,T}.
\end{equation}

Define the neighboring-sensitivity path
$w^\eta:[0,T]\to\mathcal L(\mathbb R^{d_\lambda},\Theta)$ by
$w_t^\eta=u_t^{\mathsf s_N,\eta}-u_t^{\mathsf s_N',\eta}$.  The exact
add-and-subtract decompositions are
\[
\begin{aligned}
 &\nabla_\theta^2F_{P_{\mathsf s_N},\eta}(x_t^\eta)
  -\nabla_\theta^2F_{P_{\mathsf s_N'},\eta}(y_t^\eta)\\
 &\quad=\{\nabla_\theta^2F_{P_{\mathsf s_N},\eta}(x_t^\eta)
          -\nabla_\theta^2F_{P_{\mathsf s_N},\eta}(y_t^\eta)\}
       +\{\nabla_\theta^2F_{P_{\mathsf s_N},\eta}(y_t^\eta)
          -\nabla_\theta^2F_{P_{\mathsf s_N'},\eta}(y_t^\eta)\},\\
 &D_\eta\nabla_\theta F_{P_{\mathsf s_N},\eta}(x_t^\eta)
  -D_\eta\nabla_\theta F_{P_{\mathsf s_N'},\eta}(y_t^\eta)\\
 &\quad=\{D_\eta\nabla_\theta F_{P_{\mathsf s_N},\eta}(x_t^\eta)
          -D_\eta\nabla_\theta F_{P_{\mathsf s_N},\eta}(y_t^\eta)\}
       +\{D_\eta\nabla_\theta F_{P_{\mathsf s_N},\eta}(y_t^\eta)
          -D_\eta\nabla_\theta F_{P_{\mathsf s_N'},\eta}(y_t^\eta)\}.
\end{aligned}
\]
Equations~\eqref{eq:mixed-training-derivative-bounds},
\eqref{eq:mixed-training-lambda-bounds}, and
\eqref{eq:mixed-neighbor-flow-bounds} therefore give
\begin{align*}
 &\|\nabla_\theta^2F_{P_{\mathsf s_N},\eta}(x_t^\eta)
       -\nabla_\theta^2F_{P_{\mathsf s_N'},\eta}(y_t^\eta)\|_{\op}
 \leq\frac{J_3C_{q,T}+2H_{\rm tr}}N,\\
 &\|D_\eta\nabla_\theta F_{P_{\mathsf s_N},\eta}(x_t^\eta)
       -D_\eta\nabla_\theta F_{P_{\mathsf s_N'},\eta}(y_t^\eta)\|_{\op}
 \leq\frac{J_{{\rm tr},\lambda}C_{q,T}
                  +2B_{{\rm tr},\lambda}}N.
\end{align*}
Define
$H_{x,t}^\eta,H_{y,t}^\eta\in\mathcal L(\Theta)$ and
$a_{x,t}^\eta,a_{y,t}^\eta\in
\mathcal L(\mathbb R^{d_\lambda},\Theta)$ by
\[
\begin{aligned}
 H_{x,t}^\eta
 &=\nabla_\theta^2F_{P_{\mathsf s_N},\eta}(x_t^\eta),
 &H_{y,t}^\eta
 &=\nabla_\theta^2F_{P_{\mathsf s_N'},\eta}(y_t^\eta),\\
 a_{x,t}^\eta
 &=D_\eta\nabla_\theta F_{P_{\mathsf s_N},\eta}(x_t^\eta),
 &a_{y,t}^\eta
 &=D_\eta\nabla_\theta F_{P_{\mathsf s_N'},\eta}(y_t^\eta).
\end{aligned}
\]
Subtracting the two equations in
\eqref{eq:mixed-lambda-sensitivity-equation} gives
\[
 \dot w_t^\eta
 =-H_{x,t}^\eta w_t^\eta
  -(H_{x,t}^\eta-H_{y,t}^\eta)
      u_t^{\mathsf s_N',\eta}
  -(a_{x,t}^\eta-a_{y,t}^\eta),
 \qquad w_0^\eta=0.
\]
By \eqref{eq:mixed-training-derivative-bounds}, the same energy estimate and
integrating-factor argument show that the propagator generated by
$-H_{x,t}^\eta$ has operator norm at most
$e^{-\underline\mu_{\rm mix}(t-s)}$.  Variation of constants and
\eqref{eq:mixed-lambda-sensitivity-bound} give
\[
 \|w_t^\eta\|_{\op}
 \leq\frac1N\int_0^t e^{-\underline\mu_{\rm mix}(t-s)}
 \left\{(J_3C_{q,T}+2H_{\rm tr})U_{\lambda,T}
       +J_{{\rm tr},\lambda}C_{q,T}
       +2B_{{\rm tr},\lambda}\right\}\,\dd s
 \leq\frac{C_{u,T}}N.
\]
Hence
\begin{equation}\label{eq:mixed-neighbor-sensitivity-bound}
 \sup_{\substack{\eta\in\Lambda_\star\\0\leq t\leq T}}
 \|w_t^\eta\|_{\op}
 \leq\frac{C_{u,T}}N.
\end{equation}

For $z\in\mathsf Z$, define the neighboring evaluated-loss gap
$g_z:[0,T]\times\Lambda_\star\to\mathbb R$ by
\[
 g_z(t,\eta)
 =\ell^{\rm ev}_{\eta,z}(x_t^\eta)
  -\ell^{\rm ev}_{\eta,z}(y_t^\eta).
\]
The chain rule and
\cref{eq:mixed-training-derivative-bounds,eq:mixed-evaluation-derivative-bounds,eq:mixed-neighbor-flow-bounds,eq:mixed-lambda-sensitivity-bound,eq:mixed-neighbor-sensitivity-bound} give
\begin{align}
 |\partial_tg_z(t,\eta)|
 &\leq G_{\rm ev}\|\dot x_t^\eta-\dot y_t^\eta\|
       +H_{\rm ev}\|e_t^\eta\|\,\|\dot y_t^\eta\|
 \leq\frac{C_{t,T}}N,
 \label{eq:mixed-loss-time-derivative}\\
 \|D_\eta g_z(t,\eta)\|_{\op}
 &\leq G_{\rm ev}\|w_t^\eta\|_{\op}
       +H_{\rm ev}\|e_t^\eta\|\,
          \|u_t^{\mathsf s_N',\eta}\|_{\op}
       +M_{{\rm ev},\lambda}\|e_t^\eta\|
 \leq\frac{C_{\lambda,T}}N.
 \label{eq:mixed-loss-lambda-derivative}
\end{align}
The bound $\|\dot y_t^\eta\|\leq G_{\rm tr}$ used in
\eqref{eq:mixed-loss-time-derivative} follows from the individual-gradient
bound in \eqref{eq:mixed-training-derivative-bounds}.

For $p=(t,\lambda),p'=(t',\lambda')\in\cH$, integrating first in time and
then along the line segment from $\lambda$ to $\lambda'$, which lies in
$\Lambda_\star$, and applying
\eqref{eq:mixed-loss-time-derivative},
\eqref{eq:mixed-loss-lambda-derivative}, and
\eqref{eq:candidate-metric-comparison} gives
\[
 |g_z(p)-g_z(p')|
 \leq\frac1N\{C_{t,T}|t-t'|
                 +C_{\lambda,T}\|\lambda-\lambda'\|\}
 \leq\frac{C_{{\rm mix},T}}N d_{\cH}(p,p').
\]
Taking the suprema in \eqref{eq:mixed-stability} proves
$\beta_N^{\rm mix}\leq C_{{\rm mix},T}/N$ in
\eqref{eq:mixed-flow-constants}.

For one sample, the chain rule gives
\[
\begin{aligned}
 \partial_t\overline L_{N,(t,\lambda)}(\mathsf s_N,z)
 &=\left\langle\nabla_\theta
      \ell^{\rm ev}_{\lambda,z}(\theta_t^{P_{\mathsf s_N},\lambda}),
      \dot\theta_t^{P_{\mathsf s_N},\lambda}\right\rangle,\\
 D_\lambda\overline L_{N,(t,\lambda)}(\mathsf s_N,z)[v]
 &=D_\lambda\ell^{\rm ev}_{\lambda,z}
      (\theta_t^{P_{\mathsf s_N},\lambda})[v]
   +\left\langle
      \nabla_\theta\ell^{\rm ev}_{\lambda,z}
      (\theta_t^{P_{\mathsf s_N},\lambda}),
      u_t^{\mathsf s_N,\lambda}v\right\rangle,
      \qquad v\in\mathbb R^{d_\lambda}.
\end{aligned}
\]
Thus
\begin{equation}\label{eq:mixed-single-sample-derivative-bounds}
\begin{aligned}
 |\partial_t\overline L_{N,(t,\lambda)}(\mathsf s_N,z)|
 &\leq G_{\rm ev}G_{\rm tr},\\
 \|D_\lambda\overline L_{N,(t,\lambda)}(\mathsf s_N,z)\|_{\op}
 &\leq B_{{\rm ev},\lambda}+G_{\rm ev}U_{\lambda,T}.
\end{aligned}
\end{equation}
Integrating the time and hyperparameter derivative bounds in
\eqref{eq:mixed-single-sample-derivative-bounds} first in time and then along
the line segment from $\lambda$ to $\lambda'$, and applying
\eqref{eq:candidate-metric-comparison}, gives
\[
 |\overline L_{N,p}(\mathsf s_N,z)
   -\overline L_{N,p'}(\mathsf s_N,z)|
 \leq C_{{\rm hyp},T}^{\rm der}d_{\cH}(p,p').
\]
Restricting to $p,p'\in\cH$ and taking the supremum in
\eqref{eq:hyper-lipschitz-constant} proves
$L_N^{\rm hyp}\leq C_{{\rm hyp},T}^{\rm der}$ in
\eqref{eq:mixed-flow-constants}.  Substituting $N=n-1$ into
\eqref{eq:mixed-increment-constant} proves
\eqref{eq:mixed-increment-primitive-bound}.
\end{proof}

If $C_{\rm met}$ and the constants in
\cref{eq:mixed-training-derivative-bounds,eq:mixed-training-lambda-bounds,eq:mixed-evaluation-derivative-bounds,eq:mixed-flow-primitive-constants} are bounded uniformly in $N$ at fixed
$T$, then \cref{eq:mixed-flow-constants,eq:mixed-increment-primitive-bound}
give $a_n^{\rm mix}=O(1)$.

\bibliographystyle{imsart-number}
\bibliography{references}

\end{document}